\newcommand{\reversedhat}[1]{\ensuremath{\rotatebox[origin=c]{180}{$\hat{\phantom{#1}}$}\kern-\scriptspace #1}}
\newtheorem{assumption}{Assumption}
\newtheorem{proposition}{Proposition}
\newtheorem{theorem}{Theorem}
\newtheorem*{theorem*}{Theorem}
\newtheorem{remark}{Remark}
\newtheorem*{lemma*}{Lemma}
\newtheorem{corollary}{Corollary}
\newtheorem*{corollary*}{Corollary}
\definecolor{DarkBlue}{rgb}{0.17, 0.34, 0.80}  
\definecolor{DarkerBlue}{rgb}{0.5, 0.0, 0.75}  
\definecolor{MediumBlue}{rgb}{0.0, 0.99, 0.75} 
\definecolor{LightBlue}{rgb}{0.68, 0.85, 0.9} 
\definecolor{LighterBlue}{cmyk}{0.4, 0.2, 0, 0} 
\icmltitlerunning{Efficient Distributed Optimization under Heavy-Tailed Noise}
\begin{document}

\twocolumn[
\icmltitle{Efficient Distributed Optimization under Heavy-Tailed Noise}




\begin{icmlauthorlist}
\icmlauthor{Su Hyeong Lee}{a}
\icmlauthor{Manzil Zaheer}{b}
\icmlauthor{Tian Li}{c}
\end{icmlauthorlist}

\icmlaffiliation{a}{Department of Statistics, University of Chicago}
\icmlaffiliation{b}{Meta}
\icmlaffiliation{c}{Department of Computer Science, University of Chicago}

\icmlcorrespondingauthor{Su Hyeong Lee}{sulee@uchicago.edu}

\icmlkeywords{Distributed Optimization, Adaptive Optimization, Efficiency}

\vskip 0.3in
]



\printAffiliationsAndNotice{}  

\begin{abstract}

Distributed optimization has become the default training paradigm in modern machine learning due to the growing scale of models and datasets. To mitigate communication overhead, local updates are often applied before global aggregation, resulting in a nested optimization approach with inner and outer steps. However, heavy-tailed stochastic gradient noise remains a significant challenge, particularly in attention-based models, hindering effective training. In this work, we propose TailOPT, an efficient framework designed to address heavy-tailed noise by leveraging adaptive optimization and novel clipping techniques. We establish convergence guarantees for the TailOPT framework under heavy-tailed noise with local updates and potentially unbounded gradient variance. 
Among its variants, we propose a memory- and communication-efficient instantiation (named $Bi^2Clip$) that performs coordinate-wise clipping from both above and below at both the inner and outer optimizers. $Bi^2Clip$ brings about benefits of adaptive optimization (e.g., Adam) without the cost of maintaining or transmitting additional gradient statistics. Empirically, TailOPT, including $Bi^2Clip$, demonstrates superior performance on various tasks and models compared with state-of-the-art methods, while being more efficient.
\end{abstract}

\section{Introduction}

The training of deep learning models including large language models (LLMs) has become increasingly resource-intensive, driven by expansive datasets and models with billions of parameters~\cite{billion1,billion2,billion3,billion4}. As the computational demands escalate, 
distributed learning has emerged as the default approach, enabling the parallel activation of training processes across multiple compute nodes such as GPUs or datacenters. However, this paradigm introduces a new bottleneck of communication overhead, especially as the progress in compute power has outpaced that of network infrastructure~\cite{wu2023rethinking,DeepSeek}. 

To mitigate these communication challenges, one promising strategy is the utilization of local updates. By allowing each compute node to perform multiple gradient updates locally before aggregation, the frequency and volume of inter-node communication can be significantly reduced~\cite{smith2018cocoa,stich2018local,mcmahan2017communication,EfficientAdaptiveFederatedOptimization,DiLoCoAsynchronous,OpenDiLoCo}. For instance, the recent DiLoCo algorithm for training LLMs in datacenter environments can apply several hundred local gradient updates prior to aggregation to reduce communication costs~\cite{DiLoCo}. This approach naturally formulates a \textbf{nested} optimization problem, where \textit{inner} optimization occurs within each compute node, and \textit{outer} optimization is orchestrated by the coordinating node(s). 

However, training attention-based models like LLMs introduce an additional challenge due to the properties of their stochastic gradient distributions. Empirical and theoretical investigations have consistently demonstrated that the gradient noise in these models follows a heavy-tailed distribution
~\cite{linearattentionisallyouneed,heavytail1,heavytail2,heavytail3,heavytailedclassimbalance,inprobsupreme}. This heavy-tailed behavior, characterized by high or infinite variance and potentially very large deviations, poses significant challenges to the stability and convergence of existing optimization algorithms~\cite{HeavyTailedNoisePaper,EfficientAdaptiveFederatedOptimization}. Addressing these challenges necessitates the development of novel optimization strategies and a more principled understanding of their theoretical underpinnings.

In this work, we propose TailOPT, an efficient and theoretically principled nested training framework, designed to address the challenges posed by heavy-tailed gradient noise in distributed training with local updates. TailOPT introduces several key strategies, including clipping mechanisms (such as coordinate-wise or $L_2$-clipping) and adaptivity, applied at both inner and outer optimizers, to mitigate the adverse effects of heavy-tailed noise. We note that the preconditioning step in adaptive optimizers~\citep[e.g.,][]{Adagrad_McMahan} may be viewed as a  form of soft clipping.
We analyze the convergence of TailOPT while incorporating such adaptive methods, while allowing for heavy-tailed noise with unbounded variance. Among the various instantiations of the TailOPT framework, we highlight $Bi^2Clip$, a particularly scalable method that applies coordinate-wise clipping to gradients during inner iterations, and to model parameter updates at outer communication rounds, enforcing thresholding from both above and below on a per-coordinate basis.
Our empirical and theoretical results demonstrate that TailOPT is strongly effective in mollifying heavy-tailed noise, enhancing the stability and convergence of the training dynamics across several language benchmarks as well as synthetic data. 

Our contributions may be summarized as follows.
\begin{itemize}
[leftmargin=*, itemsep=0mm, partopsep=5pt,parsep=5pt]
\vspace{-10pt}
    \item We introduce TailOPT, a general  distributed training framework for large-scale models under communication-efficient local updates and heavy-tailed gradient distributions. Among its instantiations, we highlight $Bi^2Clip$, which adjusts to gradient geometry similar to adaptive optimizers  (e.g., Adam~\cite{ADAM})  while avoiding additional memory and communication overhead for maintaining or transmitting preconditioners. 
    
    \item We provide convergence guarantees for a class of TailOPT algorithms that leverage adaptive optimizers and various clipping strategies, effectively addressing heavy-tailed noise with potentially infinite gradient variance. This is achieved using a nested optimization framework, where the inner optimizer employs clipping operations to mitigate heavy-tailed gradient noise, while the outer optimizer utilizes either fully adaptive or 
    efficient approximations of adaptive updates to guide the optimization process.
    
    \item We validate the practicality and effectiveness of TailOPT through extensive experiments on synthetic and real-world datasets in large-scale settings. Our experiments demonstrate that TailOPT produces several algorithmic instantiations that consistently outperform state-of-the-art baselines while being more efficient. 
\end{itemize}

\vspace{-10pt}

\section{Related Works}

We cite the most related work in this section, and provide an extended literature review in Appendix~\ref{OldRelatedWorks}. 

\paragraph{Heavy-Tailed Gradient Noise.} 
Training transformers and LLMs is complicated by heavy-tailed stochastic gradient distributions with very large variance, often theoretically and empirically modeled as L\'{e}vy $\alpha$-stable processes~\cite{linearattentionisallyouneed,heavytail1,heavytail2,heavytail3,inprobsupreme,heavytailedclassimbalance,HighProbAdaGConv}. In such scenarios, vanilla SGD-based 
optimization methods have been shown to destabilize during training in both centralized as well as distributed settings~\cite{inprobsupreme,HeavyTailedNoisePaper,EfficientAdaptiveFederatedOptimization}. 

Recent advancements have explored centralized adaptive optimization techniques and robust gradient aggregation methods to mitigate the adverse effects of heavy-tailed noise, including gradient clipping~\cite{heavytail2, CentralClip, CentralClip5, highprobabilityunboundedvariance, highprobClip, CentralClip1} or adaptive clipping strategies~\cite{HighProbAdaGConv}. 
However, the complexities of handling heavy-tailed noise in nested distributed optimization environments often prevent these algorithms and their convergence bounds from extending to scenarios with multiple nodes training in parallel. 
Additionally, algorithms utilizing adaptive updates generally require preconditioner maintenance that incurs substantial memory costs.
To our knowledge, developing an efficient  distributed algorithm that provably converges under heavy-tailed stochastic gradient noise 
has remained an open challenge. 
For example, although DiLoCo~\cite{DiLoCo,OpenDiLoCo,DiLoCoAsynchronous} is a recent algorithmic development with local updates for communication efficiency that demonstrates competitive empirical performance, it noticeably lacks theoretical guarantees, and incurs non-trivial overheads when using adaptive optimizers.
Our method addresses these gaps by introducing a nested and principled optimization framework, where a particular instantiation ($Bi^2Clip$) brings about benefits of adaptivity without the added overhead of maintaining preconditioners, which also outperforms DiLoCo empirically (Section~\ref{sec:experiments}). 




\vspace{-1mm}
\paragraph{Clipping for Stabilizing Training Dynamics.} 
Due to its success in stabilizing model updates, gradient clipping is a common technique that has been extensively studied empirically~\cite{empirical1,empirical2,empirical3,empirical5} and theoretically~\cite{inprobsupreme,HeavyTailedNoisePaper,HighProbAdaGConv,highprobClip,theoretical1,theoretical4,GradClipCentralGeomPerspective,RevisitGradClipModern,clippedsgd_online_estimate}. 
The majority of results study the centralized setting~\citep[e.g.,][]{CentralClip5,CentralClip4,CentralClip3,CentralClip2,CentralClip1plus,CentralBddVarClip, CentralBddVarClip1}, as moving to the distributed setting with local updates for communication efficiency provides significant added analytical challenges such as multiple inner optimizer updates prior to outer optimizer synchronization. Additionally, it was shown that using a constant clipping threshold can induce gradient bias, preventing the algorithm from ever converging~\cite{GradClipCentralGeomPerspective,RevisitGradClipModern}. Therefore, some works have attempted to circumvent this issue by debiasing via error feedback~\cite{Debias1,Debias2}. Other works in distributed optimization have imposed strong distributional stochastic gradient structures in the analysis. For instance,~\citet{angulardependencerestrictive}
assume a well-behaved angular dependence between the stochastic and deterministic gradients throughout training, and~\citet{symmetricnoise_distributed} assume symmetric gradient noise, almost surely bounded stochastic gradients, as well as homogeneous data.

Our proposed clipping mechanism, realized as an instantiation of TailOPT (i.e., $BiClip$), fundamentally differs from prior approaches by integrating per-coordinate clipping from both above and below. The inner optimization steps employ clipping operations to adapt to the gradient geometry, 
complemented by the outer optimizers which enhance rare signals through adaptivity or adaptive approximations. 
In addition, in the analysis of TailOPT (Section~\ref{sec:convergence}), we do not impose any conditions on the noise nor data distributions except for finite noise $\alpha$-moment for some $\alpha \in (1,2)$. Our algorithm and analysis also accommodate local updates and allow for potentially unbounded stochastic gradient variance.
We provide an extended review of distributed algorithms under heavy-tailed noise in Appendix~\ref{OldRelatedWorks}.

\section{Problem Formulation}\label{problemformulation}
In distributed optimization, the global objective is constructed by taking a weighted average over the local node objectives $F_i(x)$ for model parameters $x \in \mathbb{R}^d$ and node $i$. In scenarios where data sizes at each node are unbalanced or sampling probabilities vary, the objective becomes:
\vspace{-2mm}
\begin{equation}\label{unbalancedcase}
\vspace{-2mm}
    F(x) = \sum_{i=0}^{N-1} p_i F_i(x),
\end{equation}
where $p_i$ is proportional to the local data size of node $i$. Here, $F_i(x)$ is defined as $\mathbb{E}_{\xi \sim \mathcal{D}_{i}}\left[F_i(x, \xi)\right]$, where $F_i(x, \xi) = F_i(x) + \langle \xi, x \rangle$ 
represents the stochastic local objective, and $\mathcal{D}_{i}$ is the noise distribution of node $i$. This term comes from integrating the gradient noise model 
$\nabla F_i(x^{t}_{i},\xi^{t}_{i}) = \nabla F(x^{t}_{i}) + \xi^{t}_{i}$, where $x^{t}_{i}$, $\xi^{t}_{i}$ are the parameter weights and stochastic gradient noise of node $i$ at timestep $t$. In our formulation and theoretical analysis (Section~\ref{sec:convergence}), we allow for both independent and identically distributed (IID) data across $N$ nodes, as commonly observed in datacenter environments, and more challenging \textit{non-IID} data distributions. 
We now present the assumptions used in the convergence analysis. 
\begin{assumption}[$L$-smoothness] \label{assum:smoothness}
    For all $x, y \in \mathcal{X}$ and $i \in [N]$, the local objectives $F_i(x)$ satisfy $F_i(x) \le F_i(y) + \langle x-y, \nabla F_i(y) \rangle + L_i \|x-y\|^2/2$.
\end{assumption}
\begin{assumption}[Bounded $\alpha$-moment] \label{assum:bounded_moment}
    For all nodes $i \in [N]$ with noise distribution $\mathcal{D}_i$, there exists $\alpha_i \in (1,2)$, $B_i>0$ such that $\mathbb{E}[\|\xi_i\|^{\alpha_i}] < B_i^{\alpha_i}$. 
\end{assumption}
Assumption~\ref{assum:bounded_moment} expresses that the noise distribution can be heavy-tailed. In particular, we note that the variance of the noise can be infinite ($\alpha_i = 2$), a setting in which distributed SGD was shown to fail to converge, both empirically and theoretically~\cite{fatclip,EfficientAdaptiveFederatedOptimization} 
This condition on the $\alpha_i$ 
is `optimally weakest', in that sending $\alpha_i \to 1^+$ recovers the integrability condition of the noise, the minimal assumption necessary to form expectations. Furthermore, we note that $\mathbb{E}\|\xi\|^\alpha < \infty \implies \mathbb{E}\|\xi\|^\beta < \infty$ for $\forall \beta < \alpha$, $\alpha \in \mathbb{R}$. Therefore, we let $\alpha:=\min_{i \in [N]} \alpha_i \in (1,2)$ in the proceeding analysis for notational convenience.

We note that this is strictly weaker than a conventional heavy-tailed assumption on the stochastic gradients, which is commonly given~\citep[e.g.,][]{HeavyTailedNoisePaper}) as 
\begin{equation*}
\mathbb{E}[\|\nabla F_i\left(x^t_i, \xi^t_i\right)\|^{\alpha_i}] < B_i^{\alpha_i},
\end{equation*}
which implies that $\nabla F_i\left(x^t_i\right)$ is bounded. By contrast, this cannot be implied by Assumption~\ref{assum:bounded_moment}. 
%
We also note that some works in the literature also define heavy-tailed distributions with \textit{bounded} variance when establishing algorithm convergence bounds~\citep[e.g.,][]{inprobsupreme,CentralBddVarClip, CentralBddVarClip1,clippedsgd_online_estimate}, which differs from our definition. We carry out our convergence proofs which subsumes the more general infinite variance setting, which naturally implies convergence under bounded stochastic gradients or variance.



\section{TailOPT: An Efficient Heavy-Tailed Optimization Framework}\label{BiClipSecton}
In this section, we begin by motivating the heavy-tailed optimization framework (TailOPT), a scalable training framework for heavy-tailed noise. 
SGD is a strong candidate given its simplicity and efficiency, but it has been shown to diverge under heavy-tailed noise in both centralized~\cite{HeavyTailedNoisePaper} and distributed settings~\cite{EfficientAdaptiveFederatedOptimization}. Therefore, modifications are necessary to stabilize the noisy updates.

Gradient clipping is a widely adopted technique to mitigate the impact of large gradients~\cite{theoretical1,theoretical4,GradClipCentralGeomPerspective,RevisitGradClipModern,fatclip}. Typically, the clipping operator rescales the gradient uniformly to ensure its $L_2$ norm remains below a predefined threshold. This procedure is mathematically equivalent to applying a dynamically adjusted, lower learning rate when large stochastic gradients are encountered. Therefore, we first include and analyze the usage of $L_2$ clipping ($L_2Clip$) in TailOPT to stabilize heavy-tailed noisy updates. More specifically, we use $L_2$ clipping on the gradients prior to standard gradient descent updates on each node, while a global model weight projection strategy is utilized on the outer optimizer after synchronizing all the collected updates. For additional clarity, the precise pseudocode (Algorithm~\ref{L2ClipSGD}) and analysis are given in Appendix~\ref{AppendixSectionforConvexConvergenceCrossSilo}. 



\paragraph{Interpolating Adaptivity: $BiClip$.} However, previous works on  $L_2$ clipping of gradients or model updates~\citep[e.g.,][]{fatclip} do not adapt to gradient geometry, as they proportionally and uniformly downscale each gradient coordinate. Therefore, smaller signals become even more difficult to detect and propagate. Adaptive optimizers have consistently demonstrated superior performance for training modern architectures~\cite{HeavyTailedNoisePaper,AdaptiveFederatedOptimization,EfficientAdaptiveFederatedOptimization}. Key among adaptive methods such as Adam~\cite{ADAM} and Adagrad~\cite{AdaGrad,Adagrad_McMahan} is the use of preconditioning, where preconditioners that are estimated from historical gradients effectively result in per-coordinate learning rates. 
This process amplifies rare yet important gradient coordinates, while scales down uninformative gradients, speeding up the convergence. The trade-off, however, lies in the increased systems requirements to compute and maintain preconditioners. For instance, deploying Adam can triple the memory demand to host model parameters during minibatch backpropagation, due to the maintenance of first and second moment exponentially decaying moving average statistics compared to vanilla SGD. 

To take advantage of adaptivity without incurring additional memory or communication overhead, we propose a new clipping mechanism, $BiClip$, that performs coordinate-wise clipping from both above and below. $BiClip$ is motivated by an interpolation between clipped-SGD and adaptive methods. It relies on two clipping thresholds (scalars) to dynamically rescale gradients in a per-coordinate fashion, while eliminating the overhead of preconditioner maintenance.
For a model parameter $x \in \mathbb{R}^m$, parameter coordinate $j \in [m]$, lower clipping threshold $d$, and upper clipping threshold $u$ ($0\leq d \leq u$), $BiClip$ is defined as\footnote{For clarity in notation, we define $0/0 := 0$.\vspace{-5mm}}
\begin{equation}
\begin{aligned} 
&BiClip(u,d,x)_j := \operatorname{sign}(x_j) \ \left[ d \ \chi\left(|x_j|\le d\right)\right]  \label{eq:biclip} \\ 
&+ \operatorname{sign}(x_j)  \left[u \ \chi\left(|x_j|\ge u \right) + |x_j|  \chi\left(d<|x_j|< u\right) \right], 
\end{aligned}
\end{equation}
where $\chi$ is the indicator function.

$BiClip$ draws on the intuition of adaptive methods by selectively amplifying small gradient coordinates ($|x_j|\leq d$) while clipping down large ones ($|x_j| \geq u$). 
In contrast to typical adaptive optimizers, $BiClip$ does not require preconditioner maintenance, with significantly reduced optimizer requirements identical to SGD. 
While our focus is on the distributed setting which aligns with practical applications, we note that $BiClip$ itself can also be beneficial for centralized training (empirically validated in Section~\ref{sec:experiments:centralized}). 
In the next paragraph, we discuss our general TailOPT framework (Algorithm~\ref{Adaptive_BiClip_SGD}), where one instance of TailOPT applies $BiClip$ as both the inner and outer optimizers.

\paragraph{TailOPT.} In the TailOPT framework (Algorithm~\ref{Adaptive_BiClip_SGD}), {the inner optimization strategy, denoted as $TailClip$, refers to either $BiClip$ or $L_2Clip$.} In Line 10, the outer optimization strategy can be either adaptive or non-adaptive methods that incorporate clipping, adaptivity, or momentum on top of $\Delta_t$ by treating them as \textit{pseudogradients}. 
We present multiple instantiations of TailOPT along with their convergence bounds under heavy-tailed noise in Section~\ref{sec:convergence}, as well as in Appendix~\ref{AdaptiveBiClipSGD}. 


{
\begin{algorithm}[H]
\caption{Heavy-Tailed Optimization (TailOPT)}\label{Adaptive_BiClip_SGD}
\begin{algorithmic}[1]
\REQUIRE Initial model $x_1$, 
learning rate schedule $\eta_t$ \\Clipping schedules $u_t \ge d_t \ge 0$, \\ Synchronization timestep $z \in \mathbb{Z}_{>0}$
\FOR{$t = 1, \dots, T$} 
    \FOR{each node $i \in [N]$ in parallel}
    \STATE $x_{i,0}^t \leftarrow x_t$
    \FOR{each local step $k \in [z]$}
        \STATE Draw gradient $g^t_{i,k} = \nabla F_k(x^t_{i,k}, \xi^t_{i,k})$
        \STATE $x^{t}_{i,k+1} \leftarrow  x^t_{i,k} - \eta_t \cdot TailClip(u_t,d_t,g^t_{i,k})$ 
        \ENDFOR
    \ENDFOR
    \STATE $\Delta_t = \frac{1}{N} \sum_{i \in [N]} \left(x_{i,z}^{t} - x_{t-1}\right)$
    \STATE $x_{t} = Outer\_Optimizer \ (x_{t-1},\Delta_t)$
\ENDFOR
\end{algorithmic}
\end{algorithm}\aftergroup\noindent}

\vspace{-1mm}
Among those, we propose and highlight one efficient method that achieves superior empirical performance which utilizes the $BiClip (\cdot)$ operator (Eq.~\eqref{eq:biclip}) 
in both the inner and outer optimizers, called $Bi^2Clip$. The exact pseudocode is presented in Algorithm~\ref{BiSquare_Clip_SGD} (Appendix~\ref{BiSquareAppendix}).
Intuitively, $Bi^2Clip$ mitigates the effects of heavy-tailed noise across all inner as well as outer optimizers, while mimicking adaptive updates to amplify rare gradient signals. 
In Section~\ref{sec:experiments}, we empirically demonstrate that $Bi^2Clip$ outperforms state-of-the-art baselines without transferring or maintaining preconditioners in the distributed setting. 

For clarity, throughout the paper, we  list the outer optimizer followed by the inner optimizer when referencing algorithms. For example, `Adam-$BiClip$' instantiates Adam as the outer optimizer and $BiClip$ as the inner optimizer. Similarly, `RMSProp-$TailClip$' refers to RMSProp as outer optimizer, and $TailClip$ (either $L_2Clip$ or $BiClip$) as the inner optimizer. Finally, `$Bi^2Clip$' refers to the algorithm with $BiClip$ as both inner and outer optimizers.

\section{Convergence of the TailOPT Framework} \label{sec:convergence}
Due to space constraints, we present convergence results for only a subset of TailOPT instantiations in the main text. For a comprehensive analysis, Appendices~\ref{AppendixSectionforConvexConvergenceCrossSilo} and~\ref{AppendixSectionforConvexConvergenceCrossDevice} provide detailed convergence bounds for Avg-$L_2Clip$, and Appendices~\ref{BiSquareAppendix}-\ref{AdamBiClipAppendixSubsection} include additional convergence analyses and precise pseudocodes for various (adaptive) algorithms of the TailOPT framework incorporating Adagrad, RMSProp, or Adam. Additionally, we note that the convergence of $Bi^2Clip$ subsumes that of  Avg-$BiClip$.

While clipping offers the benefit of stabilization, it introduces a non-zero bias on the stochastic gradients, rendering them to be no longer unbiased estimators of the true gradient. 
Theorems~\ref{RMS_Clip_SGD_Conv_Thm} and~\ref{BiSquare_Clip_SGD_Conv_Thm} demonstrate that with appropriately chosen (increasing) upper clipping $u_t$ and (decreasing) learning rate $\eta_t$ and lower clipping $d_t$ schedules, convergence of Algorithm~\ref{Adaptive_BiClip_SGD} is nevertheless attainable. 
Up to $\mathcal{O}(d)$, the presented convergence bounds hold for both gradient-wise clipping as well as coordinate-wise clipping. Generalization to layer-wise clipping with varying thresholds specific to each layer or model weight tensor slice is straightforward.


We carry out our analysis for possibly non-convex problems where the model weights $x_t \in \mathcal{X}$ are contained within a sufficiently large, compact set $\mathcal{X} \subset \mathbb{R}^d$. In such settings, finding the global minimum is known to be NP-Hard, and the standard convergence metric is the stabilization of the minimum gradient. We then obtain the following theorems, where the pseudocode for each algorithm instantiation is detailed in Appendix~\ref{AdaptiveBiClipSGD}.

\begin{theorem}~\label{RMS_Clip_SGD_Conv_Thm} Let Assumptions~\ref{assum:smoothness}-\ref{assum:bounded_moment} hold. Instantiating the outer optimizer in Algorithm~\ref{Adaptive_BiClip_SGD} with RMSProp gives Algorithm~\ref{RMS_BiClip_SGD} (RMSProp-$TailClip$). Let the clipping and learning rate thresholds satisfy $\eta_t = \Theta(t^\omega)$, $\eta_\ell^t = \Theta(t^\nu)$, $d_t = \Theta(t^\gamma)$, and $u_t = \Theta(t^\zeta)$. Impose the rate schedule conditions $\zeta>0$, $\gamma<-1/2$, and $\omega - \zeta - 1/2 > 0$.
Then, we have that
\setlength{\abovedisplayskip}{5pt}
\setlength{\belowdisplayskip}{2pt}
\vspace{-0.1in}
\begin{equation*}
\min_{t \in [T]}\mathbb{E}\left\| \nabla F(x_t)\right\|^2 \le \sum_{i=1}^6 \Psi_i, 
\end{equation*}
where the $\Psi_i$ are upper bounded by
\begin{equation*}
\begin{aligned}
& \Psi_1 \le \mathcal{O}(T^{-\omega + \zeta - \frac{1}{2}}), \quad \Psi_2 \le \mathcal{O}(T^{\omega + 2\nu + 3\zeta + \frac{1}{2}}),\\
& \Psi_3 \le \mathcal{O}(T^{4\zeta + 3\nu + \frac{1}{2}}), \Psi_4 \le \mathcal{O}(T^{2\nu + 2\zeta + \frac{1}{2}}),\\
& \Psi_5 \le \mathcal{O}(T^{\nu + \gamma + \zeta + \frac{1}{2}}), \quad \Psi_6 \le \mathcal{O}(T^{\nu + (2-\alpha)\zeta + \frac{1}{2}}),
\end{aligned}
\end{equation*}
which guarantees convergence via an inversely proportional power law decay with respect to $T$. Here, the exponential moving average parameter of the second pseudogradient moment is fixed within the range $\widetilde{\beta}_2 \in [0,1)$.
\end{theorem}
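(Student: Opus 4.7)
The plan is to combine an $L$-smoothness descent inequality on the outer RMSProp step with a bias--variance decomposition of the pseudogradient $\Delta_t$. By Assumption~\ref{assum:smoothness}, the update $x_{t+1} = x_t - \eta_t \Delta_t / (\sqrt{v_t}+\varepsilon)$ yields $F(x_{t+1}) \le F(x_t) - \eta_t \langle \nabla F(x_t),\, \Delta_t/(\sqrt{v_t}+\varepsilon)\rangle + \tfrac{L \eta_t^2}{2}\|\Delta_t/(\sqrt{v_t}+\varepsilon)\|^2$. Since $\Delta_t = -\eta_\ell^t \sum_{k=0}^{z-1} \tfrac{1}{N}\sum_i TailClip(u_t,d_t,g^t_{i,k})$, I would further write the inner sum as $-\eta_\ell^t z\,[\nabla F(x_t) + b_t + e_t + n_t]$, where $b_t$ is the \emph{deterministic clipping bias} of $TailClip$ evaluated at $x_t$, $e_t$ is the \emph{client-drift} error induced by $x^t_{i,k}\neq x_t$ for $k>0$, and $n_t$ is the zero-mean stochastic remainder coming from clipped noise. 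Rearranging yields a lower bound on $\|\nabla F(x_t)\|^2$ in terms of the telescoping potential $F(x_t)-F(x_{t+1})$, cross-terms involving $b_t$, $e_t$, $n_t$, and a quadratic term scaled by $\eta_t^2$.

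The core technical step is to control each of those errors under Assumption~\ref{assum:bounded_moment}. For the upper clipping bias I would use a Markov-style bound of the form $\|\mathbb{E}[\xi - Clip(u_t,\xi)]\| \lesssim B^{\alpha} u_t^{1-\alpha}$, which after multiplication by $\eta_t \eta_\ell^t z$ produces the $\Psi_6$ term with exponent $\nu+(2-\alpha)\zeta+\tfrac{1}{2}$. For the lower $BiClip$ threshold, the bias at each coordinate is of order $d_t$, yielding the $\Psi_5$ term with exponent $\nu+\gamma+\zeta+\tfrac{1}{2}$. The clipped stochastic noise $n_t$ has second moment of order $u_t^2$ per coordinate, so its contribution through the $\eta_t^2$ second-order piece and through the variance bound on $\langle \nabla F(x_t), n_t\rangle$ gives $\Psi_2$ and $\Psi_4$ after dividing by a lower bound on the preconditioner. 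Client drift $e_t$ is controlled by recursively unrolling $x^t_{i,k}-x_t$ as a partial sum of past clipped gradients, each of norm at most $u_t$; combined with $L$-smoothness and Cauchy--Schwarz this produces a bound of order $z^2(\eta_\ell^t u_t)^2$, giving the $\Psi_3$ term with exponent $4\zeta+3\nu+\tfrac{1}{2}$.

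To handle the RMSProp preconditioner $\sqrt{v_t}+\varepsilon$, I would combine a uniform upper bound from $\|\Delta_t\|\le \eta_\ell^t z u_t \sqrt{d}$ with a uniform lower bound from $\varepsilon$, so that $(\sqrt{v_t}+\varepsilon)^{-1}$ can be factored out of the key inner products up to constants depending on the fixed $\widetilde{\beta}_2 \in [0,1)$. Telescoping the descent inequality from $t=1$ to $T$ and applying $\min_t \mathbb{E}\|\nabla F(x_t)\|^2 \le \bigl(\sum_t \eta_t \eta_\ell^t\bigr)^{-1}\sum_t \eta_t \eta_\ell^t\,\mathbb{E}\|\nabla F(x_t)\|^2$ produces the six summands $\Psi_1,\dots,\Psi_6$. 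The leading term $\Psi_1 \le \mathcal{O}(T^{-\omega+\zeta-\tfrac{1}{2}})$ is the usual $(F(x_1)-F^\star)/\sum_t \eta_t \eta_\ell^t$ after substituting $\eta_t=\Theta(t^\omega)$, $\eta_\ell^t=\Theta(t^\nu)$, $u_t=\Theta(t^\zeta)$, $d_t=\Theta(t^\gamma)$; each displayed condition on the exponents is then exactly the inequality that makes the corresponding $\Psi_i$ exponent negative.

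The main obstacle I anticipate is the coupling between the RMSProp preconditioner and the clipped pseudogradient: because $\sqrt{v_t}$ is itself a stochastic function of past clipped updates, $(\sqrt{v_t}+\varepsilon)^{-1}$ cannot be pulled outside an expectation and must be controlled through the worst-case envelope described above, which is where the EMA parameter $\widetilde{\beta}_2$ enters the constants. A second delicate point is that Assumption~\ref{assum:bounded_moment} only provides the bounded $\alpha$-moment for the \emph{noise} $\xi$, not for the full stochastic gradient $\nabla F_i(x,\xi)$; I therefore must lean on $x_t \in \mathcal{X}$ compact together with $L$-smoothness to obtain a uniform bound on $\|\nabla F(x_t)\|$, so that the cross-terms $\langle \nabla F(x_t), b_t\rangle$ and $\langle \nabla F(x_t), e_t\rangle$ close without a circular dependence on the quantity we are trying to bound.
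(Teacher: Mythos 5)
There is a genuine gap in how you propose to handle the RMSProp preconditioner, and it shows up in your attribution of two of the six $\Psi_i$ terms.

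Your plan replaces $(\sqrt{v_t}+\varepsilon)^{-1}$ by a deterministic two-sided envelope and then factors it out of the key inner products. But the inner product $\langle \nabla F(x_t),\, n_t/(\sqrt{v_t}+\varepsilon)\rangle$ cannot be handled this way: once you replace the random prefactor by its worst case, the zero-mean noise $n_t$ no longer cancels under expectation, and the resulting bound scales like $\eta_t\eta_\ell^t\,\mathbb{E}\|n_t\|$ per round. After summing and normalizing by $\sum_t \eta_t\eta_\ell^t / (\sqrt{v_{t-1}}+\tau)$, this produces a contribution on the order of $T^{\nu+2\zeta+1/2}$, whose exponent is \emph{not} guaranteed to be negative under the theorem's hypotheses (e.g.\ $\nu < -1/2 + (\alpha-2)\zeta$ only gives $\nu + 2\zeta + 1/2 < \alpha\zeta > 0$). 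So the envelope route, as stated, does not close. What the paper does instead is decompose $\Delta_t/(\sqrt{\widetilde{v}_t}+\tau)$ into a piece over $\sqrt{\widetilde{\beta}_2\widetilde{v}_{t-1}}+\tau$ (which is $\mathcal{F}_{t-1}$-measurable, so the conditional expectation of the unbiased noise genuinely vanishes and only the clipping bias survives) plus a correction term proportional to $\Delta_t^3/(\cdots)$. That correction term is precisely what produces $\Psi_3 = \mathcal{O}(T^{4\zeta+3\nu+\frac{1}{2}})$: it comes from $\eta_t\|\Delta_t\|^3 \sim \eta_t(\eta_\ell^t u_t)^3 = T^{\omega+3\nu+3\zeta}$ per round. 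Your write-up attributes $\Psi_3$ to client drift with a claimed bound $z^2(\eta_\ell^t u_t)^2$; even on its own terms that would give exponent $2\nu+3\zeta+\frac{1}{2}$, not $4\zeta+3\nu+\frac{1}{2}$. The actual client-drift term in the paper is $\eta_t(\eta_\ell^t)^2 u_t$ (one power of $u_t$, because $\|x^t_{i,k}-x^t_{i,0}\| \lesssim k\,\eta_\ell^t u_t$ and the bound is then contracted against $\nabla F$ via $L$-smoothness, not squared), which after normalizing yields $\Psi_4 = \mathcal{O}(T^{2\nu+2\zeta+\frac{1}{2}})$. So in the paper's bookkeeping $\Psi_4$ is client drift and $\Psi_3$ is the preconditioner correction, whereas you assign $\Psi_3$ to drift and $\Psi_4$ to ``noise variance''—a term that, in the paper's argument, never appears as a standalone summand because the noise is eliminated by the measurability trick rather than by a variance bound.

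The other pieces of your plan match the paper: $\Psi_1$ from $(F(x_1)-F^\star)/\sum_t \eta_t\eta_\ell^t$, $\Psi_2$ from the $L\eta_t^2\|\Delta_t/(\sqrt{v_t}+\tau)\|^2$ smoothness remainder, $\Psi_5$ from the lower threshold $d_t$, $\Psi_6$ from the Markov-type bound $\|\mathbb{E}[\xi - \mathrm{Clip}(u_t,\xi)]\| \lesssim B^\alpha u_t^{1-\alpha}$, and the uniform bound $\|\nabla F\|\le G$ from compactness of $\mathcal{X}$ to close the cross-terms. But the fix for the RMSProp coupling is the load-bearing step, and ``worst-case envelope'' is not a substitute for the $\widetilde{v}_{t-1}$ decomposition—without it you both lose noise cancellation and miss the $\Delta_t^3$ source of $\Psi_3$.
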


In particular, the proof of this result immediately implies the following summarizing corollary.
\begin{corollary}\label{RMSBiClipCorollary}
 Algorithm~\ref{RMS_BiClip_SGD} (RMSProp-$TailClip$) convergences under heavy-tailed stochastic gradient noise. The convergence rate can be attained for
 \begin{equation*}
\zeta=\frac{1}{2 \alpha}, \quad \nu=-\frac{\alpha+1}{2 \alpha}, \quad \omega=0
\end{equation*}
as stabilizing the minimum expected gradient norm squared at rate $\mathcal{O}(T^{(1-\alpha)/2\alpha})$.
\end{corollary}
 The full proofs of all results in this section are given in Appendix~\ref{AdaptiveBiClipSGD}, which holds for both convex and non-convex functions. 
This achieves convergence in the presence of heavy-tailed noise with local updates where the rate is dependent on the $\alpha$-moment condition, which is consistent with prior convergence results in the heavy-tailed setting (e.g.,~\citet{HeavyTailedNoisePaper}). As $\alpha \to 1^+$, we see that convergence is nullified. We also attain the identical convergence rate for an alternate instantiation (Adagrad-$TailClip$) and provide the exact algorithm in Algorithm~\ref{Adagrad_BiClip_SGD} and convergence result in Theorem~\ref{ServerAdagradConvergenceTheroem} of the appendix. 

When deploying distributed optimization, adaptive optimizers such as Adam can considerably increase the memory requirements on each compute node due to preconditioner storage, which matches the model parameter tensor size. This issue gets magnified when one deploys adaptive optimizers at both local compute notes and outer coordinating nodes, as preconditioners may be transmitted from outer to inner optimizers to maximize performance, incurring additional communication overhead~\cite{wang2021local,sun2023efficient}. 
This naturally motivates us to apply $BiClip$ (Eq.~\eqref{eq:biclip}) at both inner and outer optimizers, resulting in $Bi^2Clip$, retaining the benefits of adaptivity with minimal overhead. In general, all instantiations of our framework do not require to transmit preconditioners or extra gradient statistics. Convergence results of $Bi^2Clip$  are given below.




\begin{theorem}\label{BiSquare_Clip_SGD_Conv_Thm} Let the learning rate and clipping schedules satisfy $\eta_t = \Theta(t^\omega)$, $\eta_\ell^t = \Theta(t^\nu)$, $d_t = \Theta(t^\gamma)$, $u_t = \Theta(t^\zeta)$, $\widetilde{d}_t = \Theta(t^{\widetilde{\gamma}})$, and $u_t = \Theta(t^{\widetilde{\zeta}})$. For $Bi^2Clip$ (Algorithm~\ref{BiSquare_Clip_SGD}), 
we have that the minimum gradient satisfies 
\vspace{-1mm}
\setlength{\abovedisplayskip}{5pt}
\setlength{\belowdisplayskip}{5pt}
\begin{equation*}
\begin{aligned}
\min_{t \in [T]} \mathbb{E}[\|\nabla F(x_{t-1})\|^2] \lesssim \sum_{i=1}^7 \Psi_i ,
\end{aligned}
\end{equation*}
where the $\Psi_i$ are given
\begin{equation*}
\resizebox{0.5\textwidth}{!}{%
\setlength{\abovedisplayskip}{5pt}
\setlength{\belowdisplayskip}{2pt}
$\begin{aligned}
&\Psi_1 = \mathcal{O}\left( T^{-\omega - \nu - 1}\right), \quad \Psi_2 = \mathcal{O}\left(T^{\omega + 2\widetilde{\zeta} - \nu} \right), \quad\Psi_3 = \mathcal{O}\left(T^\gamma\right),\\
&\Psi_4 = \mathcal{O}\left(T^{\widetilde{\gamma} - \nu}  \right), \quad \Psi_5 = \mathcal{O}\left( T^{(\alpha-1)\nu + (1-\alpha)\widetilde{\zeta}} \right),\\
&\Psi_6 = \mathcal{O}\left(T^{(1-\alpha)\zeta}  \right),\quad \Psi_7 = \mathcal{O}\left( T^{\nu + \zeta} \right).
\end{aligned}$%
}
\end{equation*}
To attain convergence, we impose  $\zeta,\widetilde{\zeta} > 0 > \gamma, \widetilde{\gamma}$, for $\omega, \nu \le 0$, as well as $-1< \omega + \nu$. Then, for $\gamma, \widetilde{\gamma}$ small enough and 
$$
\nu=-\frac{\alpha}{4 \alpha-2}, \quad \zeta=\frac{1}{4 \alpha-2}, \quad \omega=-\frac{1}{2},
$$
$Bi^2Clip$ converges with rate $\mathcal{O}(T^{-\sigma})$, where $\sigma=(\alpha-1)/(4 \alpha-2)$ in the limit $\widetilde{\zeta} \to 0^+$.
\end{theorem}

This gives the following corollary.
\begin{corollary}\label{Bi2ClipCorollary}
 Algorithm~\ref{BiSquare_Clip_SGD} ($Bi^2Clip$) converges with respect to heavy-tailed stochastic gradient noise ($\alpha > 1$). For instance, if the moment is further constrained by $\alpha \ge 1.5$, the algorithm converges with a rate of at least $\mathcal{O}(T^{-\sigma})$ for $\sigma = 1/8$. 
\end{corollary}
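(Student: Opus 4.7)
The plan is to start from the $L$-smoothness descent inequality on $F(x_t)$, using the fact that $x_t - x_{t-1} = \eta_t \cdot BiClip(\widetilde{u}_t, \widetilde{d}_t, \Delta_t)$ at the outer optimizer. Rearranging gives
\begin{equation*}
\mathbb{E}[F(x_t)] \le \mathbb{E}[F(x_{t-1})] - \eta_t \, \mathbb{E}\bigl\langle \nabla F(x_{t-1}), BiClip(\widetilde{u}_t,\widetilde{d}_t,\Delta_t)\bigr\rangle + \tfrac{L\eta_t^2}{2}\, \mathbb{E}\|BiClip(\widetilde{u}_t,\widetilde{d}_t,\Delta_t)\|^2.
\end{equation*}
The core algebraic move is to rewrite the inner product by adding and subtracting (i) the (negative) true gradient $-\eta_\ell^t z \nabla F(x_{t-1})$ scaled by the number of inner steps and (ii) the unclipped pseudogradient, so that the deviation between $BiClip(\widetilde{u}_t,\widetilde{d}_t,\Delta_t)$ and $-\eta_\ell^t z\,\nabla F(x_{t-1})$ is split into four controllable pieces: outer-clipping bias (governed by $\widetilde{u}_t,\widetilde{d}_t$), inner-clipping bias (governed by $u_t,d_t$), local-drift bias from $z$ inner steps on heterogeneous noisy gradients, and the variance of the aggregated clipped stochastic signal. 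Telescoping from $t=1$ to $T$ and dividing by $T\eta_\ell^t z$ will then yield the $\min_t \mathbb{E}\|\nabla F(x_{t-1})\|^2$ bound as a sum of seven positive terms, one per error source.

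Next I would bound the inner $BiClip$ bias using Assumption~\ref{assum:bounded_moment}. For each coordinate $j$, the deviation $BiClip(u_t,d_t,g)_j - \nabla F(x)_j$ decomposes by indicator functions over three regions: $|g_j|<d_t$ (contributing bias at most $d_t$, hence $\Psi_3=\mathcal{O}(T^\gamma)$), $|g_j|>u_t$ (a Markov-type estimate $\mathbb{E}[|\xi_j|\chi_{|\xi_j|>u_t}] \le u_t^{1-\alpha}B^\alpha$ yields $\Psi_6=\mathcal{O}(T^{(1-\alpha)\zeta})$), and the mid-range where the operator is the identity and the noise contributes only through its $\alpha$-moment. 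Composing these coordinatewise bounds with the $z$-step unrolling $x_{i,k}^t - x_{t-1} = -\eta_\ell^t \sum_{s<k} BiClip(u_t,d_t,g_{i,s}^t)$ and using that each $BiClip$ output is coordinate-bounded by $u_t$, the local drift contributes an error of order $L\eta_\ell^t z u_t$, producing $\Psi_7=\mathcal{O}(T^{\nu+\zeta})$. The variance of the clipped aggregated signal, together with the $L\eta_t^2/2\,\mathbb{E}\|\cdot\|^2$ quadratic descent term, produces $\Psi_2=\mathcal{O}(T^{\omega+2\widetilde\zeta-\nu})$ (after scaling by the $1/(T\eta_\ell^t z)$ normalization).

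Then I would analyze the outer $BiClip$ the same way, treating $\Delta_t$ as a pseudogradient whose conditional expectation is close to $-\eta_\ell^t z\,\nabla F(x_{t-1})$. Its under-clipping at threshold $\widetilde d_t$ contributes $\Psi_4 = \mathcal{O}(T^{\widetilde\gamma - \nu})$ after normalization, its over-clipping at $\widetilde u_t$ contributes $\Psi_5 = \mathcal{O}(T^{(\alpha-1)\nu + (1-\alpha)\widetilde\zeta})$ using the $\alpha$-moment bound applied to the aggregated pseudogradient noise (whose scale is $\eta_\ell^t$), and the standard telescoping residual $(F(x_0)-F^\star)/(T\eta_\ell^t\eta_t z)$ yields $\Psi_1 = \mathcal{O}(T^{-\omega-\nu-1})$. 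Collecting all seven $\Psi_i$ gives the claimed upper bound, with the parameter constraints $\zeta,\widetilde\zeta>0>\gamma,\widetilde\gamma$, $\omega,\nu\le 0$, $\omega+\nu>-1$, $\nu+\zeta<0$, and $\max\{\omega+2\widetilde\zeta,\widetilde\gamma\}<\nu$ precisely ensuring each exponent is negative.

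For Corollary~\ref{Bi2ClipCorollary} I would treat $r$ as a common decay exponent and solve the linear program in $(\omega,\nu,\zeta,\widetilde\zeta,\gamma,\widetilde\gamma)$ that maximizes $r$ subject to each $\Psi_i \le T^{-r}$. Parametrizing $\widetilde\zeta = 1/8-\widetilde\varepsilon$ and $\nu = -(\alpha-1)/4$ and choosing $\zeta,\gamma,\widetilde\gamma$ at their binding values produces the three-way minimum $r = \min\{(\alpha-1)\alpha/4,\ \widetilde\varepsilon,\ (\alpha-1)/4-(1-\alpha)(1/8-\widetilde\varepsilon)\}$; with $\alpha > 3/2$ all three quantities can be made at least $1/8$ by taking $\widetilde\varepsilon$ close to $1/8$. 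The hardest step, in my view, will be the outer-clipping analysis in the third paragraph: unlike the inner case where $\xi_i^t$ is a clean heavy-tailed noise satisfying Assumption~\ref{assum:bounded_moment}, the pseudogradient noise inside $BiClip(\widetilde u_t,\widetilde d_t,\cdot)$ is a convolution of clipped, drift-biased, heterogeneous contributions across $N$ nodes and $z$ steps, so establishing a usable $\alpha$-moment bound on this aggregated object (without inflating constants in $\alpha$) is the key technical hurdle that governs the exponent of $\Psi_5$ and hence the final rate.
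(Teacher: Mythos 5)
Your overall strategy matches the paper's proof of Theorem~\ref{BiSquare_Clip_SGD_Conv_Thm} essentially step for step: start from the $L$-smoothness descent inequality on $F(x_t)$, insert and subtract both the unclipped pseudogradient $\Delta_t$ and $-K\eta_\ell^t\nabla F(x_{t-1})$ so that the cross term decomposes into outer-clipping bias ($B_1$ in the paper), inner-clipping bias ($B_2$), local-drift bias from unrolling $z$ inner steps via $L$-smoothness ($B_3$), and the quadratic variance term; then bound the two clipping biases by splitting over the three $BiClip$ regions and applying the $\alpha$-moment (Markov-type) estimate $\mathbb{E}[\|g\|\chi(\|g\|\ge u)]\le u^{1-\alpha}\,\mathbb{E}[\|g\|^\alpha]$; telescope, normalize by $\sum_t K\eta_\ell^t\eta_t$, and pass to the asymptotic power-law regime. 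Your identification of which scheduled parameter controls each $\Psi_i$ is also accurate, and you correctly anticipate that the outer-clipping bias $\Psi_5$ requires an $\alpha$-moment bound on the aggregated pseudogradient $\Delta_t$ rather than on the raw noise $\xi_i^t$; the paper handles this by the Jensen-composition bound $\mathbb{E}_t[\|\Delta_t\|^\alpha]\le(\eta_\ell^t)^\alpha K^{\alpha-1}\sum_{i,\nu}p_i(d_t^\alpha + 2^{\alpha-1}(M^\alpha+B^\alpha)) =: (\eta_\ell^t)^\alpha\widetilde M$, which is exactly the kind of estimate you flagged as the main technical hurdle.

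However, the explicit parameter selection you propose for the corollary does not reproduce the paper's rate. You set $\nu=-(\alpha-1)/4$, whereas the paper fixes $\omega=-1/2$, $\nu=-1/4$ (independent of $\alpha$), $\widetilde\zeta=1/8-\widetilde\varepsilon$. Under your choice, the decay rate contributed by $\Psi_5=\mathcal{O}\bigl(T^{(\alpha-1)\nu+(1-\alpha)\widetilde\zeta}\bigr)$ is
\begin{equation*}
-\Bigl[(\alpha-1)\nu+(1-\alpha)\widetilde\zeta\Bigr] \;=\; \frac{(\alpha-1)^2}{4} + (\alpha-1)\Bigl(\tfrac18-\widetilde\varepsilon\Bigr),
\end{equation*}
whereas the paper's third component of $r$ is $(\alpha-1)/4 + (\alpha-1)(1/8-\widetilde\varepsilon)$. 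The leading term is $(\alpha-1)^2/4$ versus $(\alpha-1)/4$; these agree only at $\alpha=2$. In the regime the corollary targets, $\alpha\to(3/2)^+$ and $\widetilde\varepsilon\to(1/8)^-$, your $\Psi_5$ decay rate tends to $1/16$, which is strictly below the claimed $r=1/8$. So the parametrization you wrote down, if taken literally, proves only $r\ge 1/16$ for $\alpha>3/2$, not the corollary's $r=1/8$. The fix is simply to keep $\nu$ at the $\alpha$-independent value $-1/4$ (as the paper does) while still parametrizing $\widetilde\zeta=1/8-\widetilde\varepsilon$; then $\Psi_5$ decays at rate $(\alpha-1)(3/8-\widetilde\varepsilon)\to(\alpha-1)/4\ge1/8$ for $\alpha\ge3/2$, which is the component that actually saturates the bound. (As a side note, the paper's own stated choice $\widetilde\gamma=-1/8-\widetilde\varepsilon$ violates the theorem's constraint $\widetilde\gamma<\nu=-1/4$ for $\widetilde\varepsilon<1/8$, so the displayed tuple in the appendix appears to carry a typo in $\widetilde\gamma$ as well; it is straightforward to take $\widetilde\gamma$ more negative to satisfy $\widetilde\gamma<\nu$ without affecting the three terms that define $r$.)
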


Similar as RMSProp-$TailClip$, the results here hold for both convex and non-convex functions as long as the assumptions are satisfied. The convergence rate given in Corollary~\ref{Bi2ClipCorollary} represents a lower bound on the maximal achievable rate, obtained by a selection of hyperparameters. Interestingly, our empirical results demonstrate that $Bi^2Clip$ outperforms other methods, suggesting that the current convergence bounds could be further refined.

\vspace{-1.5mm}
\paragraph{Discussions.} To ensure convergence and mitigate bias in the derived bound, it is necessary for the upper clipping threshold $u_t \to \infty$ and the lower clipping threshold $d_t \to 0$ as $t \to \infty$, consistent with established counterexamples that occur due to unmitigated clipping bias~\cite{RevisitGradClipModern,GradClipCentralGeomPerspective}. In cases where stochastic gradients are sampled from large-variance distributions, this necessitates a continual warm-up phase that is continuously relaxed, akin to learning rate \textit{warm-up} schemes that conclude after a finite period~\cite{warmup1}.

The clipping schedules prescribed by Theorems~\ref{RMS_Clip_SGD_Conv_Thm},~\ref{BiSquare_Clip_SGD_Conv_Thm} grow polynomially with respect to $t$, which depict the realization of model weights throughout training. This effectively deactivates gradient clipping after an initial warm-up phase that is shaped by the noise distribution's tail behavior and the clipping thresholds. This may help to explain why learning rate warm-ups are observed to significantly improve training~\cite{warmup2,warmup3} in the presence of heavy-tailed stochastic gradients. 
Finally, as the maximal bounded moment condition $\alpha$ approaches the integrability threshold ($\alpha = 1$), or as $\gamma$ nears $0^-$, the convergence bound is mollified. Despite this, in our experiments, we set $\nu = \zeta = \gamma = 0$ (i.e., fixing learning rates and clipping thresholds), which yielded strong empirical performance. Intuitively, this setup corresponds to a continual amplification of informative coordinates and attenuation of uninformative covariates.

\paragraph{Other Instantiations and Extensions.} As noted previously, we extend our analysis to support an Adagrad-based outer optimizer (Algorithm~\ref{Adagrad_BiClip_SGD}) and provide a convergence guarantee under heavy-tailed noise, detailed in Theorem~\ref{ServerAdagradConvergenceTheroem} in Appendix~\ref{AdaptiveBiClipSGD}. 
In Appendix~\ref{AdamBiClipAppendixSubsection}, we further generalize our framework by incorporating momentum into the first-order stochastic pseudogradient statistics, resulting in an outer optimizer Adam instantiation. While we establish that the expected minimum gradient is asymptotically bounded even under restarting (Theorem~\ref{AdamBiClipBoundedTheorem}), proving formal convergence to $0$ remains an open challenge. The difficulty arises from the moving average applied to the first moment, which retains all historical gradient information and complicates the analytical proof structure. We also extend convergence results for certain instantiations to allow for \textit{node drop or failures} at each round (Appendix~\ref{AppendixSectionforConvexConvergenceCrossDevice}). Our bound further highlights that the dominating terms are influenced by the upper clipping threshold $u_r$, which we tune empirically in Section~\ref{sec:experiments} by sweeping over a hyperparameter grid defined in Appendix~\ref{hyperparametergridappendix}. For this result, we extremize the noise tails such that there $\nexists \alpha$ such that the $\alpha$-moment is finite for $\forall \alpha > 1$, under which $u_t$ stabilizes the gradient dynamics.

\section{Experiments} \label{sec:experiments}


We assess the performance of various TailOPT instantiations across a range of empirical tasks, benchmarking them against state-of-the-art algorithms from the literature.  Our experiments include synthetic tasks with heavy-tailed noise injection and real-world benchmarks, including GLUE~\cite{GLUE} for natural language understanding, WMT~\cite{wmt} for machine translation, Qwen2.5~\citep{yang2025qwen3} for question answering, and ViT~\cite{ViT} for image classification. We present a brief summary of each experimental setup in the corresponding subsections. 
Extended details of the experimental setup, dataset descriptions, and extensive hyperparameter tuning procedures (including the best hyperparameters for each method and dataset) are provided in Appendix~\ref{ExperimentSetupAppendix}. Our code are publicly available at \href{https://github.com/sulee3/Heavy_Tails}{github.com/sulee3/Heavy\_Tails}.

\subsection{Convex Models}

\begin{figure}[h!]
\centering
    \begin{subfigure}[b]{0.23\textwidth}
        \centering
        \includegraphics[width=\textwidth]{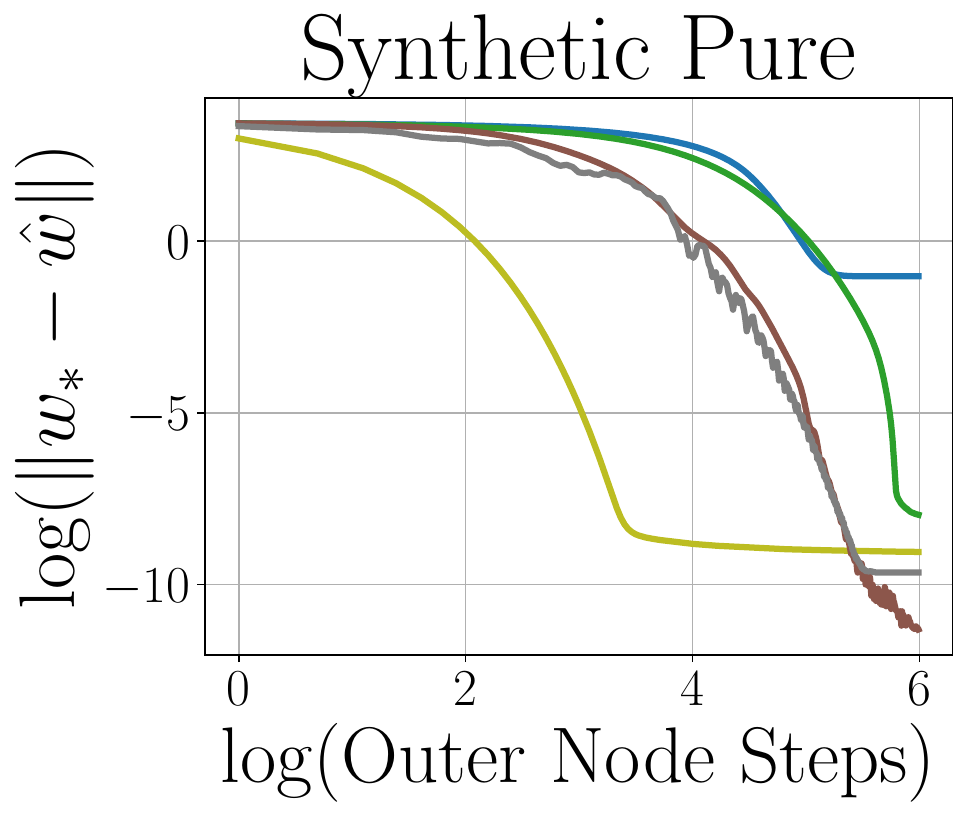}
        \subcaption{No Noise}
    \end{subfigure}
    \begin{subfigure}[b]{0.23\textwidth}
        \centering
        \includegraphics[width=\textwidth]{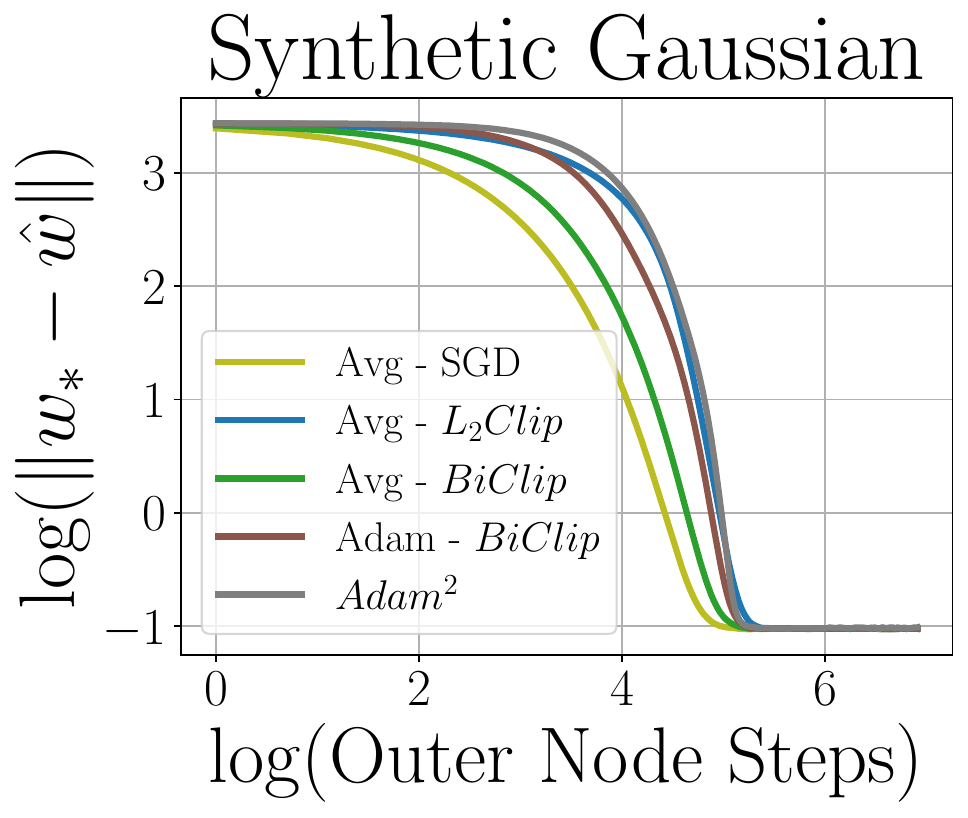}
        \subcaption{Light-Tailed (scale 1)}
    \end{subfigure}
        \begin{subfigure}[b]{0.23\textwidth}
        \centering
        \includegraphics[width=\textwidth]{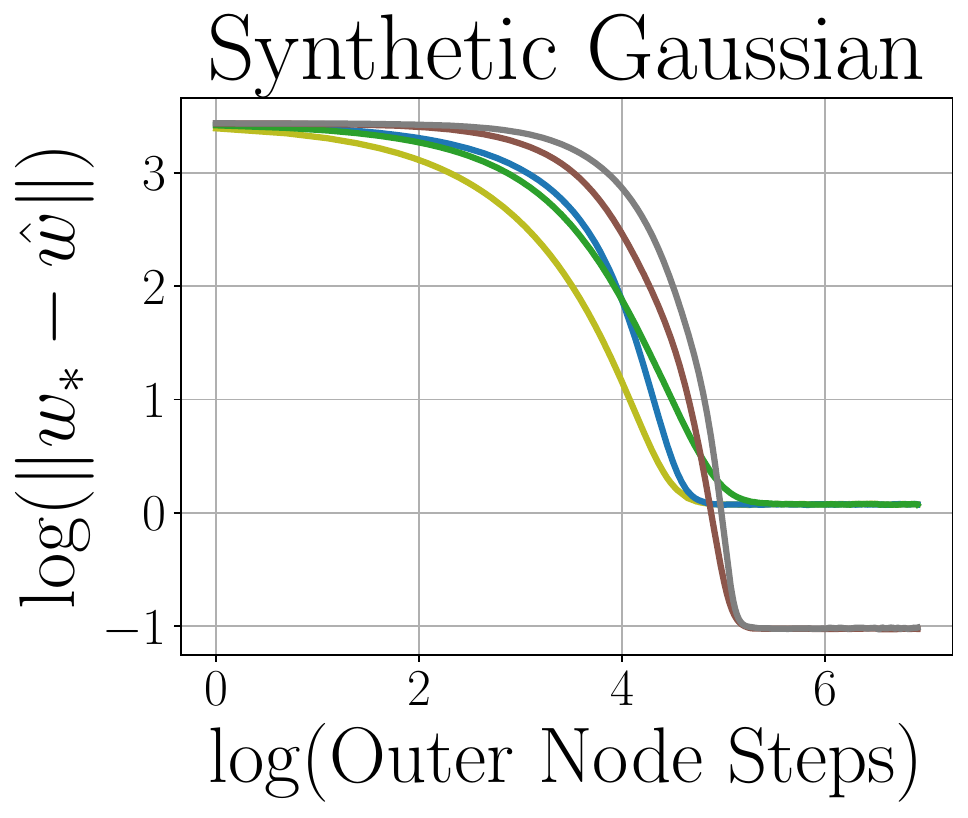}
        \subcaption{Light-Tailed (scale 3) }
    \end{subfigure}
    \begin{subfigure}[b]{0.23\textwidth}
        \centering
        \includegraphics[width=\textwidth]{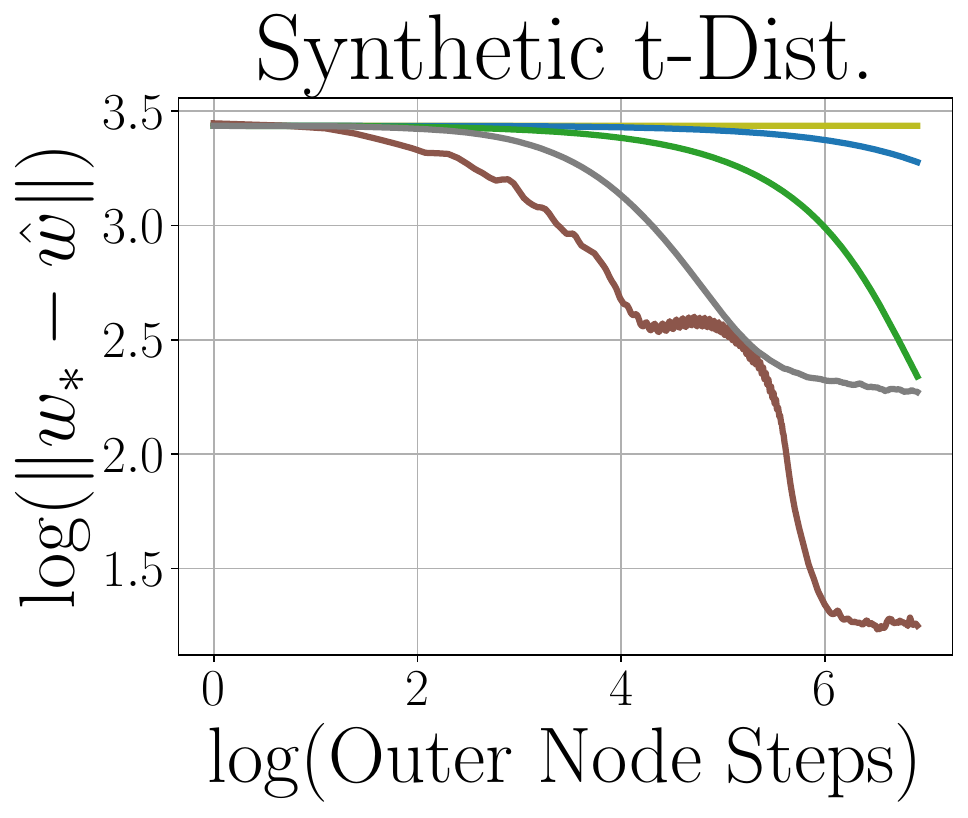}
        \subcaption{Heavy-Tailed}
    \end{subfigure}
    \hfill
    \caption{Impact of heavy-tailed noise on synthetic datasets. Without gradient noise, Avg-SGD achieves good performance (c.f., (a)). As the noise tails grow heavier (from (a) to (d)), the performance of Avg-SGD deteriorates considerably. By contrast, both clipping mechanisms and adaptive updates demonstrate improved performance in learning the ground truth $w_*$, and effectively mitigates the adverse effects of heavy-tailed noise. $BiClip$ as the inner optimizer outperforms other options (Adam, $L_2Clip$, and SGD) with heavy-tailed nose  (see (d)). See Appendix~\ref{AllSyntheticExperimentAppendix} for full results.
    }
\label{InjectedNoise}
\end{figure}

\begin{table*}[h]
\centering
\caption{Test results on the GLUE Benchmark. Metric descriptions are given in Appendix~\ref{GLUEAppendix}, and the full results are reported as Table~\ref{tab:glue-results-appendix} in the appendix. Entries marked with $0.0$ indicate failure to learn. Top \textcolor{DarkerBlue}{\textbf{first}}, \textcolor{blue}{\textbf{second}}, and \textcolor{cyan}{\textbf{third}} best-performing algorithms are highlighted. 
For $Adam^2$, preconditioners are transmitted between the inner and outer optimizers, whereas DiLoCo requires maintaining preconditioners on the inner optimizers, both of which incur significant communication or memory overhead than $Bi^2Clip$. Our experiments show that $Bi^2Clip$ achieves the best performance with the smallest overhead.
} 
\label{tab:glue-results}
\resizebox{\textwidth}{!}{
\begin{tabular}{@{}lccccccccc@{}}
\toprule
\textbf{Algorithm} & \textbf{MNLI} & \textbf{QNLI} & \textbf{QQP (Acc/F1)} & \textbf{RTE} & \textbf{SST-2} & \textbf{MRPC (Acc/F1)} & \textbf{CoLA} & \textbf{STS-B (S/P)} & \textbf{Average} \\
\midrule

\textbf{Avg-SGD}~\cite{mcmahan2017communication} 
& 81.13 
& 83.21 
& 78.71/78.69 
& 57.40 
& 90.94 
& 67.30/80.52 
& 0.0 
& 26.76/28.20 
& 61.17 \\

\textbf{Avg-$L_2Clip$}~\cite{fatclip} 
& 81.82 
& 85.68 
& 80.00/79.82 
& 54.51 
& 91.97 
& 68.38/81.22 
& 0.0 
& 41.27/40.96 
& 64.15 \\

\textbf{Avg-Adagrad} 
& 84.70 
& 88.79 
& 87.09/83.34 
& 64.26 
& 93.34 
& 71.56/82.63 
& 27.72 
& 81.93/81.26 
& 76.97 \\

\textbf{Avg-Adam} 
& 84.97 
& 89.47 
& 87.66/84.09 
& 64.62 
& \textcolor{blue}{\textbf{93.80}} 
& 81.86/87.74 
& 41.41 
& \textcolor{cyan}{\textbf{86.21}}/\textcolor{cyan}{\textbf{86.55}} 
& 80.76 \\

\textbf{Avg-$BiClip$} 
& 85.08 
& 89.45 
& 87.83/84.12 
& 66.06 
& \textcolor{DarkerBlue}{\textbf{94.03}} 
& 71.32/82.45 
& 41.40 
& 84.08/84.48 
& 79.12 \\

\midrule

\textbf{Adagrad-SGD}~\cite{AdaptiveFederatedOptimization} 
& 82.40 
& 86.61 
& 82.51/77.68 
& 71.48 
& 92.08 
& 85.53/89.52 
& 47.80 
& 40.37/42.24 
& 72.69 \\

\textbf{Adagrad-$BiClip$} 
& \textcolor{cyan}{\textbf{85.54}} 
& \textcolor{DarkerBlue}{\textbf{90.02}} 
& 88.60/\textcolor{cyan}{\textbf{85.05}} 
& \textcolor{blue}{\textbf{73.36}} 
& 93.23 
& 85.78/89.86 
& 48.87 
& 84.03/85.90 
& 82.75 \\

\textbf{RMSProp-SGD}~\cite{AdaptiveFederatedOptimization}
& 84.20 
& 88.46 
& 87.12/83.30 
& \textcolor{cyan}{\textbf{72.56}} 
& 91.85 
& 85.50/89.17 
& 52.39 
& 45.72/41.80 
& 74.73 \\

\textbf{RMSProp-$BiClip$} 
& \textcolor{blue}{\textbf{85.56}} 
& \textcolor{cyan}{\textbf{89.82}} 
& 88.50/84.44 
& 70.75 
& \textcolor{cyan}{\textbf{93.69}} 
& 84.80/88.92 
& 50.99 
& \textcolor{DarkerBlue}{\textbf{87.65}}/\textcolor{DarkerBlue}{\textbf{87.79}} 
& \textcolor{cyan}{\textbf{82.99}} \\

\midrule

\textbf{Adam-SGD}~\cite{AdaptiveFederatedOptimization}
& 82.93 
& 86.98 
& 85.99/80.87 
& 66.78 
& 90.71 
& 87.01/90.09 
& 49.93 
& 44.48/41.26 
& 73.37 \\

\textbf{Adam-$L_2Clip$} 
& 82.54 
& 86.69 
& 85.88/80.72 
& 59.92 
& 89.67 
& 85.29/89.90 
& 48.54 
& 69.19/67.16 
& 76.86 \\

\textbf{Adam-$BiClip$} 
& 84.26 
& 89.20 
& \textcolor{cyan}{\textbf{88.64}}/84.74 
& 69.67 
& 92.43 
& 86.52/90.09 
& \textcolor{blue}{\textbf{56.12}} 
& 82.83/79.71 
& 82.20 \\

\textbf{$Adam^2$}~\cite{wang2021local} 
& 85.11 
& 88.87 
& \textcolor{DarkerBlue}{\textbf{89.04}}/\textcolor{DarkerBlue}{\textbf{85.51}} 
& 71.48 
& 92.66 
& \textcolor{cyan}{\textbf{87.50}}/\textcolor{cyan}{\textbf{91.03}} 
& 52.70 
& 84.47/83.82 
& 82.93 \\

\textbf{DiLoCo}~\cite{DiLoCo} 
& \textcolor{DarkerBlue}{\textbf{85.68}} 
& \textcolor{blue}{\textbf{89.87}} 
& \textcolor{blue}{\textbf{88.78}}/\textcolor{blue}{\textbf{85.19}} 
& 67.87 
& 91.89 
& \textcolor{blue}{\textbf{87.99}}/\textcolor{blue}{\textbf{91.20}} 
& \textcolor{cyan}{\textbf{54.77}} 
& 85.93/84.76 
& \textcolor{blue}{\textbf{83.08}} \\

\textbf{$Bi^2Clip$} 
& 85.06 
& 89.73 
& 84.93/83.97 
& \textcolor{DarkerBlue}{\textbf{76.53}} 
& \textcolor{blue}{\textbf{93.80}} 
& \textcolor{DarkerBlue}{\textbf{89.21}}/\textcolor{DarkerBlue}{\textbf{92.44}} 
& \textcolor{DarkerBlue}{\textbf{60.08}} 
& \textcolor{blue}{\textbf{87.07}}/\textcolor{blue}{\textbf{86.89}} 
& \textcolor{DarkerBlue}{\textbf{84.52}} \\

\bottomrule
\end{tabular}
}
\end{table*}

We designed our convex, synthetic dataset setup to explicitly control and inject heavy-tailed noise, enabling a focused study of its effects. In language tasks, the frequencies of words or tokens typically follows a heavy-tailed distribution, where a small subset of tokens occurs with high frequency, while the majority appear infrequently yet carry significant contextual information. To mirror this phenomenon, emulating a similar setup in~\citet{adadps}, we partitioned the input feature space into common and rare features. Specifically, we set the first $p = 10\%$ features (or tokens) from data $X$ as common features, with each feature activated according to a Bernoulli distribution $\text{Bern}(0.9)$. The remaining $90\%$ of the features are configured as rare, each sampled from $\text{Bern}(0.1)$. The weight vector $w_*$ is drawn from a standard multivariate normal distribution, $w_* \sim \mathcal{N}(0, I_m)$, and the labels are generated as $\hat{y} = Xw_* + \xi_{noise}$. A neural network with model weight $\hat{w}$ is then trained to learn the ground truth $w_*$. 
A comprehensive explanation of the dataset construction and experimental setup is provided in Appendix~\ref{AllSyntheticExperimentAppendix}. We inject noise $\xi_{noise}$ sampled from a heavy-tailed distribution, which implies that the induced stochastic gradients must be heavy-tailed under the MSE loss. In Figure~\ref{InjectedNoise}, we sample from the Gaussian and Student $t$ distributions for the non-heavy-tailed and heavy-tailed $\xi_{noise}$, respectively. By default, we multiply the noise by scale $1$ unless otherwise specified.

We observe that while SGD demonstrates strong performance in non-noisy settings, its effectiveness diminishes as noise tails become heavier---a scenario where adaptive methods and $BiClip$ excel. Similarly, $L_2Clip$ shows some ability to mitigate heavy-tailed noise but exhibits a comparable decline in performance under heavy-tailed conditions. 

\subsection{Transformer Encoders}

To evaluate the effectiveness of our approach, we finetune RoBERTa~\cite{RoBERTa} on the General Language Understanding Evaluation (GLUE) benchmark~\cite{GLUE}, a widely-used suite of natural language understanding tasks. The GLUE benchmark includes diverse tasks such as sentiment analysis, sentence similarity, textual entailment, and natural language inference, providing a comprehensive evaluation of model performance across multiple linguistic phenomena. We follow standard finetuning protocols for RoBERTa, leveraging pretrained weights and optimizing task-specific objectives for each dataset in GLUE. Our results demonstrate that $BiClip$ ($Bi^2Clip$) attains competitive or superior performance similar to Adam ($Adam^2$), while only requiring SGD-like memory and compute.

\begin{figure*}[t!]
\centering
\hfill
    \captionsetup[subfigure]{aboveskip=2pt,belowskip=-2pt}
    \begin{subfigure}[b]{0.29\textwidth}
        \centering
        \includegraphics[width=\textwidth]{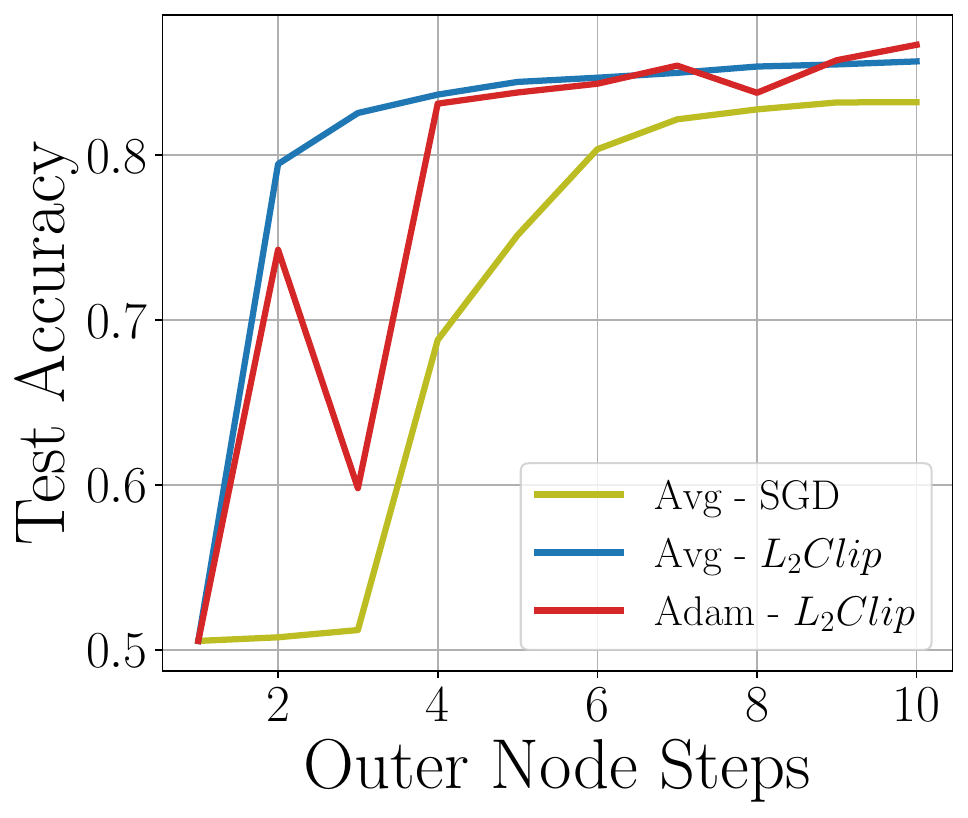} 
        \subcaption{Effects of $L_2$ clipping}
    \end{subfigure}
    \hfill
    \begin{subfigure}[b]{0.29\textwidth}
        \centering
        \includegraphics[width=\textwidth]{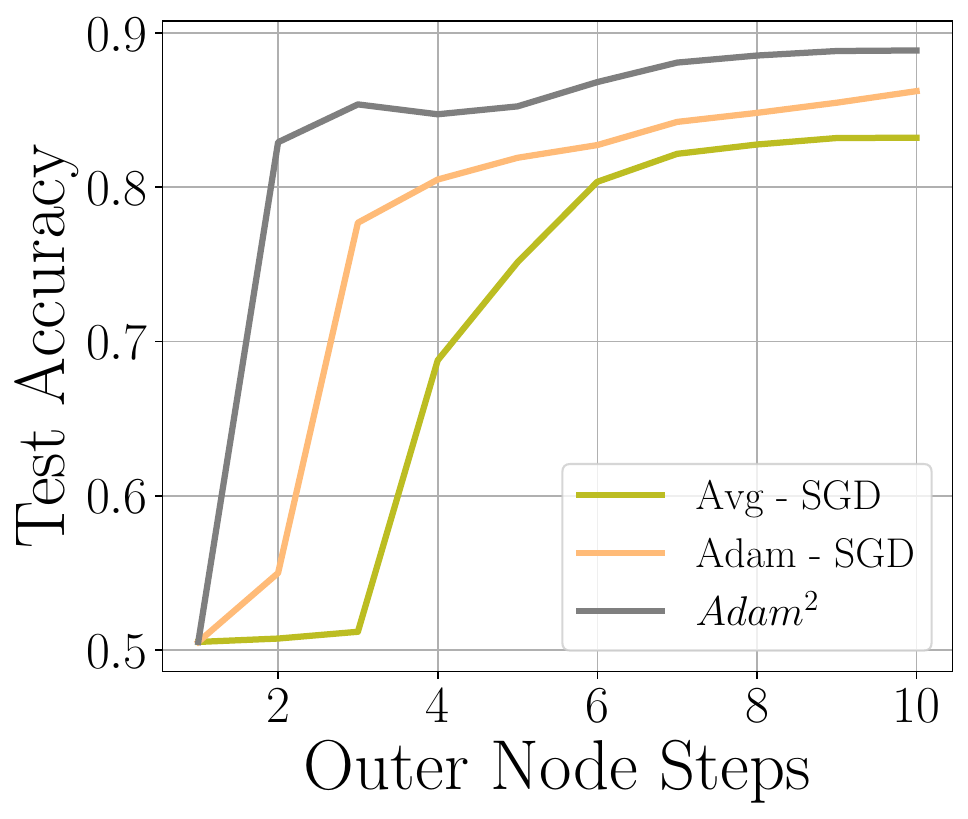} 
        \subcaption{Effects of Adaptivity}
    \end{subfigure}
    \hfill
    \begin{subfigure}[b]{0.29\textwidth}
        \centering
        \includegraphics[width=\textwidth]{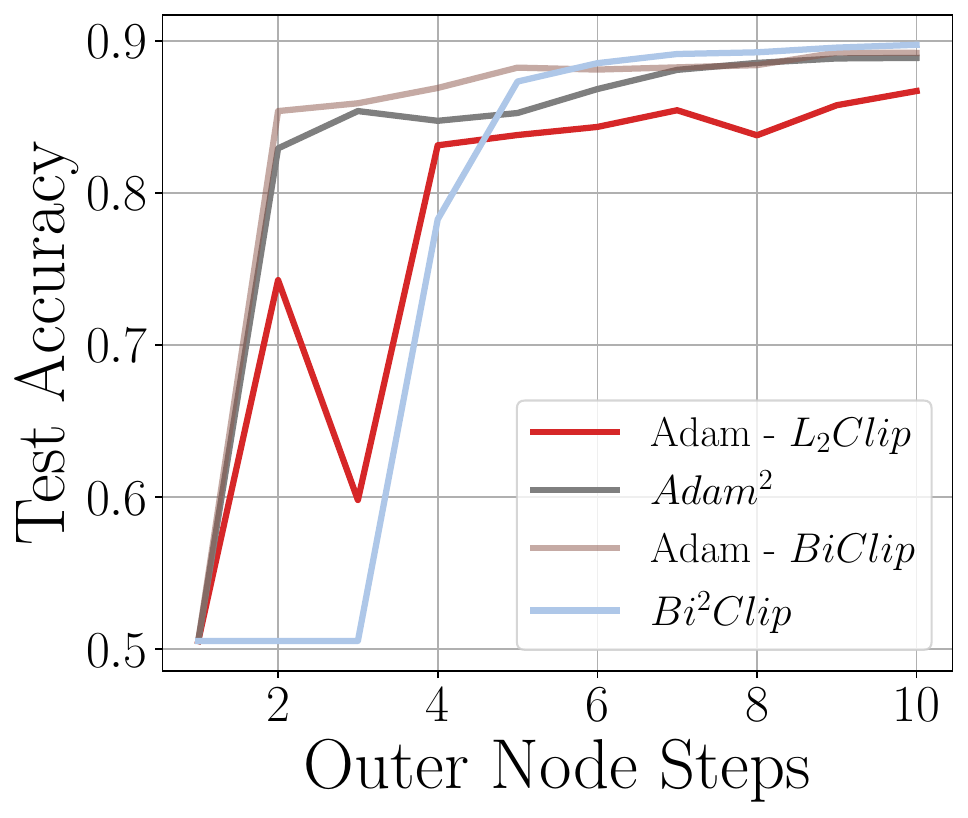} 
        \subcaption{Effects of $BiClip$}
    \end{subfigure}
    \hspace{5mm}
    \caption{Convergence curves on the QNLI dataset. In (a), we see that $L_2Clip$ (one option of $TailClip$) can help to improve performance under different outer optimizers. Middle Figure (b) demonstrates that adaptivity also helps to mitigate the negative effects of heavy-tailed noise. In all three plots (a)-(c), $L_2Clip$ performs worse than adaptive methods, but the coordinate-wise $BiClip$ optimizer performs comparably or even better than adaptive optimization frameworks, manifesting Adam-like performance. We note that the $Adam^2$ baseline, which applies Adam both in inner and outer optimization, requires transmitting preconditioners of the same size as the model weights to inner optimizers, resulting in substantial communication and memory overhead to deploy. By contrast, $Bi^2Clip$ removes the necessity of preconditioner maintenance, sidestepping this bottleneck entirely.}
    \label{visualization}
    \vspace{-1mm}
\end{figure*}

Table~\ref{tab:glue-results} presents the performance of the algorithms of interest on the GLUE benchmark. Our results show that $L_2Clip$ enhances performance on real-world data. Adaptive methods further improve upon these results, consistently outperforming $L_2Clip$ (e.g., convergence curves in Figure~\ref{visualization}). Notably, the newly proposed clipping method in TailOPT, $BiClip$, demonstrates superior performance compared to $L_2Clip$ and, in some cases, even surpasses Adam during test time (c.f., comparing $Bi^2Clip$ and $Adam^2$), highlighting its potential as an efficient and effective optimizer in real-world applications. Additionally, instantiations of TailOPT achieving $\ge$ 80\% average accuracy generally employ adaptive or adaptive-approximating optimizers across all nodes. In particular, adaptivity on the inner optimizer appears crucial for performance, as SGD-based methods perform considerably worse ($\le 75\%$). By contrast, both $BiClip$ or Adam reach $\sim 80\%$  even when combined with a simple averaging outer optimizer strategy. We include full results of more baselines in Appendix~\ref{GLUEAppendix}.

\subsection{Generative Models}
We also evaluate TailOPT on machine translation tasks utilizing the WMT datasets, a widely used benchmark for translation research~\cite{wmt}. Specifically, we finetune the T5~\cite{t5} generative model on the TED Talks and News Commentary parallel training datasets. The TED Talks dataset, originally sourced from IWSLT 2017~\cite{IWSLT2017}, comprises multilingual translations of TED Talk transcripts, while the News Commentary dataset includes parallel text from news articles across various languages. We report both BLEU and METEOR scores across several variants of source and target language translations in Table~\ref{tab:mt-results-main}.  An expanded table with a more extensive evaluation is provided in Table~\ref{tab:mt-results-average} in the appendix.


\begin{table}[h!]
\centering
\caption{Evaluation results on machine translation benchmarks. Metrics reported are formatted as BLEU/METEOR for various language pairs across the TED and News Commentary datasets. The final column represents the average score across all metrics for each algorithm See Table~\ref{tab:mt-results-average} for expanded results. 
}
\label{tab:mt-results-main}
\vspace{-1.5mm}
\resizebox{0.48\textwidth}{!}{
\begin{tabular}{@{}lccccc@{}}
\toprule
\textbf{Algorithm} & \textbf{TED (en-de)} & \textbf{NewsComm (en-fr)} & \textbf{Avg} \\
\midrule
\textbf{Avg-SGD} 
 & 28.02/58.52 
 & 30.07/54.13 
 & 42.68 \\
 
\textbf{Avg-$L_2Clip$} 
 & 28.99/58.94 
 & 31.02/56.73 
 & 43.92 \\

$Adam^2$ 
 & 28.06/58.05 
 & 30.97/55.85 
 & 43.23 \\

 \textbf{$Bi^2Clip$} 
 & \textbf{29.41}/\textbf{59.18} 
 & \textbf{31.79}/\textbf{57.69} 
 & \textbf{44.51} \\
 
\bottomrule
\end{tabular}
}
\vspace{-3mm}
\end{table}

 
 
 
 


\paragraph{Discussions.} For language reasoning benchmarks, the performance differences across algorithmic instantiations are particularly pronounced. While $L_2$ clipping is a common stabilization strategy, it exhibits limited effectiveness. In contrast, coordinate-wise $BiClip$ demonstrates significantly better stability and performance. Moreover, frameworks aiming to utilize or mimic adaptivity in both the inner and outer optimizers generally achieve superior results, surpassing $80\%$ average performance across all benchmarks. Notably, performance is highly sensitive to the choice of inner optimizers, with SGD and $L_2$ clipping yielding the lowest results. For machine translation, however, the performance variance across different optimizer strategies can be relatively smaller when optimal hyperparameters are selected. 

We observe that $Bi^2Clip$ can outperform even $Adam^2$ in some settings. While its design aims to emulate adaptivity under heavy-tailed noise, $BiClip$ exhibits characteristics that can interpolate between non-adaptive and adaptive methods, capturing benefits from both without necessarily fully belonging to either paradigm (Figure~\ref{GradientDistributionComparison} in the appendix). Further, $BiClip$ retains the same memory and compute overheads as standard SGD, which cements a  highly resource-efficient adaptive approximation. 

In addition, federated learning is a special distributed learning paradigm designed to train machine learning models jointly without the transmission of raw data~\cite{mcmahan2017communication,TianReviewPaper,wang2021field}. TailOPT can also be applied to federated learning with limited client participation, especially when the local data shards induce heavy-tailed stochastic gradients. See Appendix~\ref{Appendix_nonIID_experiments} for experiments on effects of TailOPT in this setting.

\subsection{$BiClip$ for Centralized Learning} \label{sec:experiments:centralized}

\begin{figure}[h!]
    \begin{subfigure}{0.232\textwidth}
        \centering
        \includegraphics[width=\textwidth]{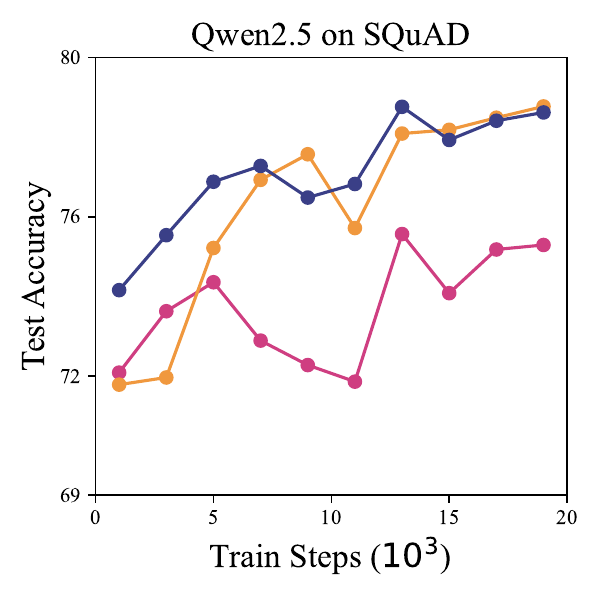} 
    \end{subfigure}
    \begin{subfigure}{0.232\textwidth}
        \centering
        \includegraphics[width=\textwidth]{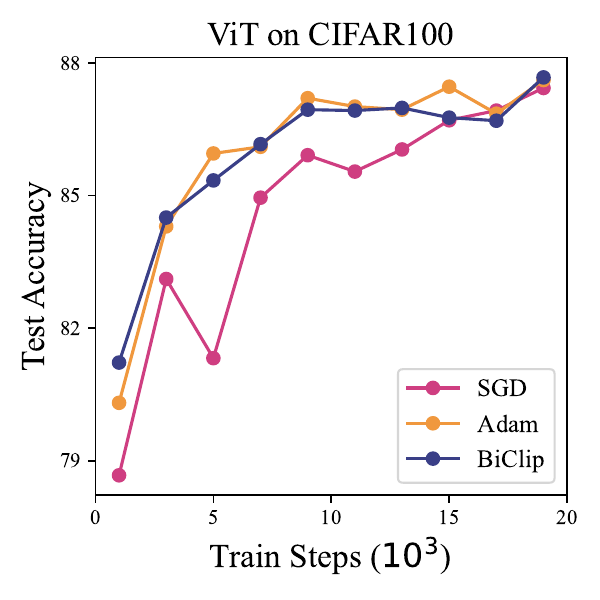} 
    \end{subfigure}
    \caption{Effects of $BiClip$ in centralized training. We see that $BiClip$ (only relying on two clipping thresholds) achieves the same accuracies and convergence speed as Adam without maintaining any  preconditioners or their compressed versions.}
    \label{fig:biclip_centralized}
\end{figure}

While we focus on modern distributed settings, the $BiClip$ optimizer itself can be readily used in centralized learning with similar benefits---achieving Adam-like performance without maintaining any gradient statistics. We empirically demonstrate the convergence and test performance of $BiClip$ compared with SGD and Adam on both text and image datasets. We finetune Qwen2.5~\citep{yang2025qwen3} on the SQuAD dataset~\citep{rajpurkar2016squad}, and finetune a ViT-base model~\citep{ViT} (pretrained on ImageNet) on CIFAR100~\citep{krizhevsky2009learning}. We use the same hyperparameter tuning protocol as in other experiments, discussed in Appendix~\ref{hyperparametergridappendix}. In Figure~\ref{fig:biclip_centralized}, we see that both $BiClip$ and Adam outperform SGD on two tasks in terms of final accuracies and/or convergence speed, whereas $BiClip$ only requires the same memory and compute as SGD, which is significantly more efficient than Adam.

\section{Conclusion}

In this work, we have introduced TailOPT,
a framework for efficient heavy-tailed optimization. We  have proposed
the  $BiClip$ optimizer based on coordinate-wise clipping from above and below, which utilizes nearly identical
memory and compute resources to vanilla SGD yet manifests Adam-like performance. We have established convergence guarantees for our TailOPT under potentially unbounded variance and provide a thorough empirical evaluation with real-world as well as  synthetic datasets. Our experiments indicate that $BiClip$ stabilizes training under heavy-tailed noise and achieves the benefits of efficient adaptive optimization, exceeding the state-of-the-art performance. 

Future work could explore automatic selection of $u_t$ and $d_t$ based on initial statistics or bespoke estimators, which could provide more practical solutions. Alternatively, allowing the clipping thresholds to vary depending on coordinate partition subsets (e.g., across tensor slices) may further enhance performance. 
An extended conclusion with possible future directions is provided in Appendix~\ref{Extended_Conclusion_Appendix}.

\section*{Acknowledgements}

We thank anonymous reviewers for their feedback. We are grateful to Xiyuan Yang for valuable discussions and contributions to Section 6.4 of this paper.

\section*{Impact Statement}
This paper presents work whose goal is to advance the field of 
machine learning. As our work is primarily focusing on optimization algorithms, we do not believe that there are many potential societal consequences 
of our work.

\bibliography{references.bib}

\begin{thebibliography}{83}
\providecommand{\natexlab}[1]{#1}
\providecommand{\url}[1]{\texttt{#1}}
\expandafter\ifx\csname urlstyle\endcsname\relax
  \providecommand{\doi}[1]{doi: #1}\else
  \providecommand{\doi}{doi: \begingroup \urlstyle{rm}\Url}\fi

\bibitem[Ahn et~al.(2024)Ahn, Cheng, Song, Yun, Jadbabaie, and Sra]{linearattentionisallyouneed}
Ahn, K., Cheng, X., Song, M., Yun, C., Jadbabaie, A., and Sra, S.
\newblock Linear attention is (maybe) all you need (to understand transformer optimization).
\newblock \emph{International Conference on Learning Representations}, 2024.

\bibitem[Anil et~al.(2019)Anil, Gupta, Koren, and Singer]{anil2019memory}
Anil, R., Gupta, V., Koren, T., and Singer, Y.
\newblock Memory efficient adaptive optimization.
\newblock \emph{Advances in Neural Information Processing Systems}, 32, 2019.

\bibitem[Caldas et~al.(2018)Caldas, Duddu, Wu, Li, Konecny, McMahan, Smith, and Talwalkar]{LEAF}
Caldas, S., Duddu, S. M.~K., Wu, P., Li, T., Konecny, J., McMahan, H.~B., Smith, V., and Talwalkar, A.
\newblock Leaf: A benchmark for federated settings.
\newblock \emph{Arxiv}, 2018.

\bibitem[Cettolo et~al.(2017)Cettolo, Federico, Bentivogli, Niehues, Stuker, Sudoh, Yoshino, and Federmann]{IWSLT2017}
Cettolo, M., Federico, M., Bentivogli, L., Niehues, J., Stuker, S., Sudoh, K., Yoshino, K., and Federmann, C.
\newblock Overview of the iwslt 2017 evaluation campaign.
\newblock \emph{Proceedings of the 14th International Conference on Spoken Language Translation}, 2017.

\bibitem[Chen et~al.(2020)Chen, Wu, and Hong]{GradClipCentralGeomPerspective}
Chen, X., Wu, Z.~S., and Hong, M.
\newblock Understanding gradient clipping in private sgd: A geometric perspective.
\newblock \emph{Advances in Neural Information Processing Systems}, 2020.

\bibitem[Chezhegov et~al.(2024{\natexlab{a}})Chezhegov, Klyukin, ndrei Semenov, Beznosikov, Gasnikov, Horvath, Takac, and Gorbunov]{HighProbAdaGConv}
Chezhegov, S., Klyukin, Y., ndrei Semenov, Beznosikov, A., Gasnikov, A., Horvath, S.~S., Takac, M., and Gorbunov, E.
\newblock Gradient clipping improves adagrad when the noise is heavy-tailed.
\newblock \emph{ArXiv}, 2024{\natexlab{a}}.

\bibitem[Chezhegov et~al.(2024{\natexlab{b}})Chezhegov, Klyukin, Semenov, Beznosikov, Gasnikov, Horváth, Takac, and Gorbunov]{CentralClipAdaptive}
Chezhegov, S., Klyukin, Y., Semenov, A., Beznosikov, A., Gasnikov, A., Horváth, S., Takac, M., and Gorbunov, E.
\newblock Gradient clipping improves adagrad when the noise is heavy-tailed.
\newblock \emph{Arxiv}, 2024{\natexlab{b}}.

\bibitem[Cutkosky \& Mehta(2021)Cutkosky and Mehta]{highprobClip}
Cutkosky, A. and Mehta, H.
\newblock High-probability bounds for non-convex stochastic optimization with heavy tails.
\newblock \emph{Advances in Neural Information Processing Systems}, 2021.

\bibitem[Das et~al.(2024)Das, Nagaraj, Pal, Suggala, and Varshney]{clippedsgd_online_estimate}
Das, A., Nagaraj, D.~M., Pal, S., Suggala, A., and Varshney, P.
\newblock Near-optimal streaming heavy-tailed statistical estimation with clipped sgd.
\newblock \emph{Advances in Neural Information Processing Systems}, 2024.

\bibitem[Davis et~al.(2021)Davis, Drusvyatskiy, Xiao, and Zhang]{highprob2}
Davis, D., Drusvyatskiy, D., Xiao, L., and Zhang, J.
\newblock From low probability to high confidence in stochastic convex optimization.
\newblock \emph{Journal of Machine Learning Research}, 22:\penalty0 1--38, 2021.

\bibitem[DeepSeek-AI(2024)]{DeepSeek}
DeepSeek-AI.
\newblock Deepseek-v3 technical report.
\newblock \emph{ArXiv}, 2024.

\bibitem[Dehghani et~al.(2023)Dehghani, Djolonga, Mustafa, Padlewski, Heek, Gilmer, Steiner, Caron, Geirhos, Alabdulmohsin, Jenatton, Beyer, Tschannen, Arnab, Wang, Ruiz, Minderer, Puigcerver, Evci, Kumar, Steenkiste, Elsayed, Mahendran, Yu, Oliver, Huot, Bastings, Collier, Gritsenko, Birodkar, Vasconcelos, Tay, Mensink, Kolesnikov, Pavetic, Tran, Kipf, Lucic, Zhai, Keysers, Harmsen, and Houlsby]{billion4}
Dehghani, M., Djolonga, J., Mustafa, B., Padlewski, P., Heek, J., Gilmer, J., Steiner, A.~P., Caron, M., Geirhos, R., Alabdulmohsin, I., Jenatton, R., Beyer, L., Tschannen, M., Arnab, A., Wang, X., Ruiz, C.~R., Minderer, M., Puigcerver, J., Evci, U., Kumar, M., Steenkiste, S.~V., Elsayed, G.~F., Mahendran, A., Yu, F., Oliver, A., Huot, F., Bastings, J., Collier, M., Gritsenko, A.~A., Birodkar, V., Vasconcelos, C.~N., Tay, Y., Mensink, T., Kolesnikov, A., Pavetic, F., Tran, D., Kipf, T., Lucic, M., Zhai, X., Keysers, D., Harmsen, J.~J., and Houlsby, N.
\newblock Scaling vision transformers to 22 billion parameters.
\newblock \emph{International Conference on Machine Learning}, 2023.

\bibitem[Dosovitskiy et~al.(2021)Dosovitskiy, Beyer, Kolesnikov, Weissenborn, Zhai, Unterthiner, Dehghani, Minderer, Heigold, Gelly, Uszkoreit, and Houlsby]{ViT}
Dosovitskiy, A., Beyer, L., Kolesnikov, A., Weissenborn, D., Zhai, X., Unterthiner, T., Dehghani, M., Minderer, M., Heigold, G., Gelly, S., Uszkoreit, J., and Houlsby, N.
\newblock An image is worth 16x16 words: Transformers for image recognition at scale.
\newblock \emph{International Conference on Learning Representations}, 2021.

\bibitem[Douillard et~al.(2024)Douillard, Feng, Rusu, Chhaparia, Donchev, Kuncoro, Ranzato, Szlam, and Shen]{DiLoCo}
Douillard, A., Feng, Q., Rusu, A.~A., Chhaparia, R., Donchev, Y., Kuncoro, A., Ranzato, M., Szlam, A., and Shen, J.
\newblock Diloco: Distributed low-communication training of language models.
\newblock \emph{ICML Workshop on Advancing Neural Network Training}, 2024.

\bibitem[Duchi et~al.(2011)Duchi, Hazan, and Singer]{AdaGrad}
Duchi, J., Hazan, E., and Singer, Y.
\newblock Adaptive subgradient methods for online learning and stochastic optimization.
\newblock \emph{Journal of Machine Learning Research}, 12:\penalty0 2121--2159, 2011.

\bibitem[Foundation(2019)]{wmt}
Foundation, W.
\newblock Shared task: Machine translation of news.
\newblock \emph{Association for Computational Linguistics Conference on Machine Translation}, 2019.

\bibitem[Gehring et~al.(2017)Gehring, Auli, Grangier, Yarats, and Dauphin]{empirical1}
Gehring, J., Auli, M., Grangier, D., Yarats, D., and Dauphin, Y.~N.
\newblock Convolutional sequence to sequence learning.
\newblock \emph{International Conference on Machine Learning}, 2017.

\bibitem[Gorbunov et~al.(2020)Gorbunov, Danilova, and Gasnikov]{inprobsupreme}
Gorbunov, E., Danilova, M., and Gasnikov, A.
\newblock Stochastic optimization with heavy-tailed noise via accelerated gradient clipping.
\newblock \emph{Advances in Neural Information Processing Systems}, 2020.

\bibitem[Gorbunov et~al.(2022)Gorbunov, Danilova, Dobre, Dvurechensky, Gasnikov, and Gidel]{highprob3}
Gorbunov, E., Danilova, M., Dobre, D., Dvurechensky, P., Gasnikov, A., and Gidel, G.
\newblock Clipped stochastic methods for variational inequalities with heavy-tailed noise.
\newblock \emph{Advances in Neural Information Processing Systems}, 2022.

\bibitem[Gorbunov et~al.(2024{\natexlab{a}})Gorbunov, Danilova, Shibaev, Dvurechensky, and Gasnikov]{CentralClip5}
Gorbunov, E., Danilova, M., Shibaev, I., Dvurechensky, P., and Gasnikov, A.
\newblock High-probability complexity bounds for non-smooth stochastic convex optimization with heavy-tailed noise.
\newblock \emph{Journal of Optimization Theory and Applications}, 2024{\natexlab{a}}.

\bibitem[Gorbunov et~al.(2024{\natexlab{b}})Gorbunov, Sadiev, Danilova, Horvath, Gidel, Dvurechensky, Gasnikov, and Richtarik]{highprobabilitydistributed}
Gorbunov, E., Sadiev, A., Danilova, M., Horvath, S., Gidel, G., Dvurechensky, P., Gasnikov, A., and Richtarik, P.
\newblock High-probability convergence for composite and distributed stochastic minimization and variational inequalities with heavy-tailed noise.
\newblock \emph{International Conference on Machine Learning}, 2024{\natexlab{b}}.

\bibitem[Jaghouar et~al.(2024)Jaghouar, Ong, and Hagemann]{OpenDiLoCo}
Jaghouar, S., Ong, J.~M., and Hagemann, J.
\newblock Opendiloco: An open-source framework for globally distributed low-communication training.
\newblock \emph{ArXiv}, 2024.

\bibitem[Juditsky et~al.(2019{\natexlab{a}})Juditsky, Nazin, Nemirovsky, and Tsybakov]{CentralClip}
Juditsky, A., Nazin, A., Nemirovsky, A., and Tsybakov, A.
\newblock Algorithms of robust stochastic optimization based on mirror descent method.
\newblock \emph{Automation and Remote Control}, 80:\penalty0 1607--1627, 2019{\natexlab{a}}.

\bibitem[Juditsky et~al.(2019{\natexlab{b}})Juditsky, Nazin, Nemirovsky, and Tsybakov]{highprob1}
Juditsky, A., Nazin, A., Nemirovsky, A., and Tsybakov, A.
\newblock Algorithms of robust stochastic optimization based on mirror descent method.
\newblock \emph{Automation and Remote Control}, 2019{\natexlab{b}}.

\bibitem[Kalra \& Barkeshli(2024)Kalra and Barkeshli]{warmup2}
Kalra, D.~S. and Barkeshli, M.
\newblock Why warmup the learning rate? underlying mechanisms and improvements.
\newblock \emph{Advances in Neural Information Processing Systems}, 2024.

\bibitem[Karimireddy et~al.(2021)Karimireddy, Jaggi, Kale, Mohri, Reddi, Stich, and Suresh]{MIMEpaper}
Karimireddy, S.~P., Jaggi, M., Kale, S., Mohri, M., Reddi, S., Stich, S.~U., and Suresh, A.~T.
\newblock Breaking the centralized barrier for cross-device federated learning.
\newblock \emph{Advances in Neural Information Processing Systems}, 2021.

\bibitem[Khirirat et~al.(2023)Khirirat, Gorbunov, Horváth, Islamov, Karray, and Richtarik]{Debias1}
Khirirat, S., Gorbunov, E., Horváth, S., Islamov, R., Karray, F., and Richtarik, P.
\newblock Clip21: Error feedback for gradient clipping.
\newblock \emph{ArXiv}, 2023.

\bibitem[Kingma \& Ba(2015)Kingma and Ba]{ADAM}
Kingma, D.~P. and Ba, J.
\newblock Adam: A method for stochastic optimization.
\newblock \emph{International Conference for Learning Representations}, 2015.

\bibitem[Koloskova et~al.(2023)Koloskova, Hendrikx, and Stich]{RevisitGradClipModern}
Koloskova, A., Hendrikx, H., and Stich, S.~U.
\newblock Revisiting gradient clipping: Stochastic bias and tight convergence guarantees.
\newblock \emph{International Conference on Learning Representations}, ArXiv, 2023.

\bibitem[Kosson et~al.(2024)Kosson, Messmer, and Jaggi]{warmup1}
Kosson, A., Messmer, B., and Jaggi, M.
\newblock Analyzing and reducing the need for learning rate warmup in gpt training.
\newblock \emph{Advances in Neural Information Processing Systems}, 2024.

\bibitem[Krizhevsky et~al.(2009)Krizhevsky, Hinton, et~al.]{krizhevsky2009learning}
Krizhevsky, A., Hinton, G., et~al.
\newblock Learning multiple layers of features from tiny images.
\newblock 2009.

\bibitem[Kunstner et~al.(2024)Kunstner, Yadav, Milligan, Schmidt, and Bietti]{heavytailedclassimbalance}
Kunstner, F., Yadav, R., Milligan, A., Schmidt, M., and Bietti, A.
\newblock Heavy-tailed class imbalance and why adam outperforms gradient descent on language models.
\newblock \emph{ArXiv}, 2024.

\bibitem[Lee et~al.(2024)Lee, Sharma, Zaheer, and Li]{EfficientAdaptiveFederatedOptimization}
Lee, S.~H., Sharma, S., Zaheer, M., and Li, T.
\newblock Efficient adaptive federated optimization.
\newblock \emph{ICML Workshop on Advancing Neural Network Training}, 2024.

\bibitem[Li \& Liu(2022)Li and Liu]{CentralBddVarClip1}
Li, S. and Liu, Y.
\newblock High probability guarantees for nonconvex stochastic gradient descent with heavy tails.
\newblock \emph{International Conference on Machine Learning}, 2022.

\bibitem[Li et~al.(2020{\natexlab{a}})Li, Sahu, Talwalkar, and Smith]{TianReviewPaper}
Li, T., Sahu, A.~K., Talwalkar, A., and Smith, V.
\newblock Federated learning: Challenges, methods, and future directions.
\newblock \emph{IEEE Signal Processing Magazine}, 37\penalty0 (3):\penalty0 50--60, 2020{\natexlab{a}}.

\bibitem[Li et~al.(2022)Li, Zaheer, Reddi, and Smith]{adadps}
Li, T., Zaheer, M., Reddi, S.~J., and Smith, V.
\newblock Private adaptive optimization with side information.
\newblock \emph{International Conference on Machine Learning}, 2022.

\bibitem[Li et~al.(2023)Li, Zaheer, Liu, Reddi, McMahan, and Smith]{delayedpreconditioner}
Li, T., Zaheer, M., Liu, Z., Reddi, S., McMahan, B., and Smith, V.
\newblock Differentially private adaptive optimization with delayed preconditioners.
\newblock \emph{International Conference on Learning Representations}, 2023.

\bibitem[Li et~al.(2020{\natexlab{b}})Li, Huang, Yang, Wang, and Zhang]{FedAvgConvOnNonIID}
Li, X., Huang, K., Yang, W., Wang, S., and Zhang, Z.
\newblock On the convergence of fedavg on non-iid data.
\newblock \emph{International Conference on Learning Representations}, 2020{\natexlab{b}}.

\bibitem[Liu et~al.(2024{\natexlab{a}})Liu, Chhaparia, Douillard, Kale, Rusu, Shen, Szlam, and Ranzato]{DiLoCoAsynchronous}
Liu, B., Chhaparia, R., Douillard, A., Kale, S., Rusu, A.~A., Shen, J., Szlam, A., and Ranzato, M.
\newblock Asynchronous local-sgd training for language modeling.
\newblock \emph{ICML Workshop on Advancing Neural Network Training}, 2024{\natexlab{a}}.

\bibitem[Liu et~al.(2022)Liu, Zhuang, Lei, and Liao]{symmetricnoise_distributed}
Liu, M., Zhuang, Z., Lei, Y., and Liao, C.
\newblock A communication-efficient distributed gradient clipping algorithm for training deep neural networks.
\newblock \emph{Advances in Neural Information Processing Systems}, 2022.

\bibitem[Liu et~al.(2019)Liu, Ott, Goyal, Du, Joshi, Chen, Levy, Lewis, Zettlemoyer, and Stoyanov]{RoBERTa}
Liu, Y., Ott, M., Goyal, N., Du, J., Joshi, M., Chen, D., Levy, O., Lewis, M., Zettlemoyer, L., and Stoyanov, V.
\newblock Roberta: A robustly optimized bert pretraining approach.
\newblock \emph{Arxiv}, 2019.

\bibitem[Liu et~al.(2023)Liu, Zhang, and Zhou]{CentralClip2}
Liu, Z., Zhang, J., and Zhou, Z.
\newblock Breaking the lower bound with (little) structure: Acceleration in non-convex stochastic optimization with heavy-tailed noise.
\newblock \emph{Proceedings of Thirty Sixth Conference on Learning Theory}, 195:\penalty0 2266--2290, 2023.

\bibitem[Liu et~al.(2024{\natexlab{b}})Liu, Zhao, Iandola, Lai, Tian, Fedorov, Xiong, Chang, Shi, Krishnamoorthi, Lai, and Chandra]{billion2}
Liu, Z., Zhao, C., Iandola, F., Lai, C., Tian, Y., Fedorov, I., Xiong, Y., Chang, E., Shi, Y., Krishnamoorthi, R., Lai, L., and Chandra, V.
\newblock Mobilellm: Optimizing sub-billion parameter language models for on-device use cases.
\newblock \emph{International Conference on Machine Learning}, 2024{\natexlab{b}}.

\bibitem[Ma \& Yarats(2021)Ma and Yarats]{warmup3}
Ma, J. and Yarats, D.
\newblock On the adequacy of untuned warmup for adaptive optimization.
\newblock \emph{Association for the Advancement of Artificial Intelligence}, 2021.

\bibitem[McMahan et~al.(2017)McMahan, Moore, Ramage, Hampson, and y~Arcas]{mcmahan2017communication}
McMahan, B., Moore, E., Ramage, D., Hampson, S., and y~Arcas, B.~A.
\newblock Communication-efficient learning of deep networks from decentralized data.
\newblock In \emph{International Conference on Artificial Intelligence and Statistics}, 2017.

\bibitem[Menon et~al.(2020)Menon, Rawat, Reddi, and Kumar]{theoretical1}
Menon, A.~K., Rawat, A.~S., Reddi, S.~J., and Kumar, S.
\newblock Can gradient clipping mitigate label noise?
\newblock \emph{International Conference on Learning Representations}, 2020.

\bibitem[Merity et~al.(2018)Merity, Keskar, and Socher]{empirical2}
Merity, S., Keskar, N.~S., and Socher, R.
\newblock Regularizing and optimizing lstm language models.
\newblock \emph{International Conference on Learning Representations}, 2018.

\bibitem[Mikolov(2012)]{empirical5}
Mikolov, T.
\newblock Statistical language models based on neural networks.
\newblock \emph{Ph.D. thesis, Brno University of Technology}, 2012.

\bibitem[Nguyen et~al.(2023{\natexlab{a}})Nguyen, Nguyen, Ene, and Nguyen]{CentralClip1}
Nguyen, T.~D., Nguyen, T.~H., Ene, A., and Nguyen, H.~L.
\newblock High probability convergence of clipped-sgd under heavy-tailed noise.
\newblock \emph{Arxiv}, 2023{\natexlab{a}}.

\bibitem[Nguyen et~al.(2023{\natexlab{b}})Nguyen, Nguyen, Ene, and Nguyen]{CentralClip1plus}
Nguyen, T.~D., Nguyen, T.~H., Ene, A., and Nguyen, H.~L.
\newblock Improved convergence in high probability of clipped gradient methods with heavy tailed noise.
\newblock \emph{Advances in Neural Information Processing Systems}, 2023{\natexlab{b}}.

\bibitem[Nguyen et~al.(2019)Nguyen, Simsekli, Gurbuzbalaban, and Richard]{heavytail1}
Nguyen, T.~H., Simsekli, U., Gurbuzbalaban, M., and Richard, G.
\newblock First exit time analysis of stochastic gradient descent under heavy-tailed gradient noise.
\newblock \emph{Advances in Neural Information Processing Systems}, 2019.

\bibitem[Parletta et~al.(2024)Parletta, Paudice, Pontil, and Salzo]{CentralBddVarClip}
Parletta, D.~A., Paudice, A., Pontil, M., and Salzo, S.
\newblock High probability bounds for stochastic subgradient schemes with heavy tailed noise.
\newblock \emph{SIAM Journal on Mathematics of Data Science}, 6:\penalty0 953--977, 2024.

\bibitem[Peters et~al.(2018)Peters, Neumann, Iyyer, Gardner, Clark, Lee, and Zettlemoyer]{empirical3}
Peters, M.~E., Neumann, M., Iyyer, M., Gardner, M., Clark, C., Lee, K., and Zettlemoyer, L.
\newblock Deep contextualized word representations.
\newblock \emph{International Conference on Learning Representations}, 2018.

\bibitem[Puchkin et~al.(2024)Puchkin, Gorbunov, Kutuzov, and Gasnikov]{CentralClip4}
Puchkin, N., Gorbunov, E., Kutuzov, N., and Gasnikov, A.
\newblock Breaking the heavy-tailed noise barrier in stochastic optimization problems.
\newblock \emph{AISTATS}, 2024.

\bibitem[Qian et~al.(2021)Qian, Wu, Zhuang, Wang, and Xiao]{angulardependencerestrictive}
Qian, J., Wu, Y., Zhuang, B., Wang, S., and Xiao, J.
\newblock Understanding gradient clipping in incremental gradient methods.
\newblock \emph{Proceedings of The 24th International Conference on Artificial Intelligence and Statistics}, 2021.

\bibitem[Raffel et~al.(2020)Raffel, Shazeer, Roberts, Lee, Narang, Matena, Zhou, Li, and Liu]{t5}
Raffel, C., Shazeer, N., Roberts, A., Lee, K., Narang, S., Matena, M., Zhou, Y., Li, W., and Liu, P.~J.
\newblock Exploring the limits of transfer learning with a unified text-to-text transformer.
\newblock \emph{Journal of Machine Learning Research}, 2020.

\bibitem[Rajpurkar et~al.(2016)Rajpurkar, Zhang, Lopyrev, and Liang]{rajpurkar2016squad}
Rajpurkar, P., Zhang, J., Lopyrev, K., and Liang, P.
\newblock Squad: 100,000+ questions for machine comprehension of text.
\newblock \emph{arXiv preprint arXiv:1606.05250}, 2016.

\bibitem[Reddi et~al.(2021)Reddi, Charles, Zaheer, Garrett, Rush, Konecný, Kumar, and McMahan]{AdaptiveFederatedOptimization}
Reddi, S., Charles, Z., Zaheer, M., Garrett, Z., Rush, K., Konecný, J., Kumar, S., and McMahan, B.
\newblock Adaptive federated optimization.
\newblock \emph{International Conference on Learning Representations}, 2021.

\bibitem[Rosa et~al.(2022)Rosa, Bonifacio, Jeronymo, Abonizio, Lotufo, and Nogueira]{billion1}
Rosa, G.~M., Bonifacio, L., Jeronymo, V., Abonizio, H., Lotufo, R., and Nogueira, R.
\newblock Billions of parameters are worth more than in-domain training data: A case study in the legal case entailment task.
\newblock \emph{ArXiv}, 2022.

\bibitem[Sadiev et~al.(2023)Sadiev, Danilova, Gorbunov, Horvath, Gidel, Dvurechensky, Gasnikov, and Richtarik]{highprobabilityunboundedvariance}
Sadiev, A., Danilova, M., Gorbunov, E., Horvath, S., Gidel, G., Dvurechensky, P., Gasnikov, A., and Richtarik, P.
\newblock High-probability bounds for stochastic optimization and variational inequalities: the case of unbounded variance.
\newblock \emph{International Conference on Machine Learning}, 2023.

\bibitem[Simsekli et~al.(2019)Simsekli, Sagun, and Gurbuzbalaban]{heavytail2}
Simsekli, U., Sagun, L., and Gurbuzbalaban, M.
\newblock A tail-index analysis of stochastic gradient noise in deep neural networks.
\newblock \emph{International Conference on Machine Learning}, 2019.

\bibitem[Simsekli et~al.(2020)Simsekli, Zhu, Teh, and Gurbuzbalaban]{heavytail3}
Simsekli, U., Zhu, L., Teh, Y.~W., and Gurbuzbalaban, M.
\newblock Fractional underdamped langevin dynamics: Retargeting sgd with momentum under heavy-tailed gradient noise.
\newblock \emph{International Conference on Machine Learning}, 2020.

\bibitem[Smith et~al.(2018)Smith, Forte, Ma, Tak{\'a}{\v{c}}, Jordan, and Jaggi]{smith2018cocoa}
Smith, V., Forte, S., Ma, C., Tak{\'a}{\v{c}}, M., Jordan, M.~I., and Jaggi, M.
\newblock Cocoa: A general framework for communication-efficient distributed optimization.
\newblock \emph{Journal of Machine Learning Research}, 18\penalty0 (230):\penalty0 1--49, 2018.

\bibitem[Sriram et~al.(2022)Sriram, Das, Wood, Goyal, and Zitnick]{billion3}
Sriram, A., Das, A., Wood, B.~M., Goyal, S., and Zitnick, L.
\newblock Towards training billion parameter graph neural networks for atomic simulations.
\newblock \emph{International Conference on Learning Representations}, 2022.

\bibitem[Stich(2018)]{stich2018local}
Stich, S.~U.
\newblock Local sgd converges fast and communicates little.
\newblock \emph{arXiv preprint arXiv:1805.09767}, 2018.

\bibitem[Streeter \& McMahan(2010)Streeter and McMahan]{Adagrad_McMahan}
Streeter, M. and McMahan, B.
\newblock Less regret via online conditioning.
\newblock \emph{ArXiv}, 2010.

\bibitem[Sun \& Chen(2024)Sun and Chen]{sun}
Sun, C. and Chen, B.
\newblock Distributed stochastic strongly convex optimization under heavy-tailed noises.
\newblock \emph{2024 IEEE International Conference on Cybernetics and Intelligent Systems (CIS) and IEEE International Conference on Robotics, Automation and Mechatronics (RAM)}, 2024.

\bibitem[Sun et~al.(2023)Sun, Shen, Sun, Ding, and Tao]{sun2023efficient}
Sun, Y., Shen, L., Sun, H., Ding, L., and Tao, D.
\newblock Efficient federated learning via local adaptive amended optimizer with linear speedup.
\newblock \emph{IEEE Transactions on Pattern Analysis and Machine Intelligence}, 45\penalty0 (12):\penalty0 14453--14464, 2023.

\bibitem[Wang et~al.(2019)Wang, Singh, Michael, Hill, Levy, and Bowman]{GLUE}
Wang, A., Singh, A., Michael, J., Hill, F., Levy, O., and Bowman, S.~R.
\newblock Glue: A multi-task benchmark and analysis platform for natural language understanding.
\newblock \emph{International Conference for Learning Representations}, 2019.

\bibitem[Wang et~al.(2020)Wang, Liu, Liang, Joshi, and Poor]{FedNOVA}
Wang, J., Liu, Q., Liang, H., Joshi, G., and Poor, H.~V.
\newblock Tackling the objective inconsistency problem in heterogeneous federated optimization.
\newblock \emph{Advances in Neural Information Processing Systems}, 2020.

\bibitem[Wang et~al.(2021{\natexlab{a}})Wang, Charles, Xu, Joshi, McMahan, Al-Shedivat, Andrew, Avestimehr, Daly, Data, et~al.]{wang2021field}
Wang, J., Charles, Z., Xu, Z., Joshi, G., McMahan, H.~B., Al-Shedivat, M., Andrew, G., Avestimehr, S., Daly, K., Data, D., et~al.
\newblock A field guide to federated optimization.
\newblock \emph{arXiv preprint arXiv:2107.06917}, 2021{\natexlab{a}}.

\bibitem[Wang et~al.(2021{\natexlab{b}})Wang, Xu, Garrett, Charles, Liu, and Joshi]{wang2021local}
Wang, J., Xu, Z., Garrett, Z., Charles, Z., Liu, L., and Joshi, G.
\newblock Local adaptivity in federated learning: Convergence and consistency.
\newblock \emph{arXiv preprint arXiv:2106.02305}, 2021{\natexlab{b}}.

\bibitem[Wang et~al.(2022)Wang, Lin, and Chen]{FedAMS}
Wang, Y., Lin, L., and Chen, J.
\newblock Communication-efficient adaptive federated learning.
\newblock \emph{International Conference on Machine Learning}, 2022.

\bibitem[Wu et~al.(2023)Wu, Zhang, Ju, Huang, Xiao, Huan, Li, Meng, Liang, Zhang, et~al.]{wu2023rethinking}
Wu, C., Zhang, H., Ju, L., Huang, J., Xiao, Y., Huan, Z., Li, S., Meng, F., Liang, L., Zhang, X., et~al.
\newblock Rethinking memory and communication cost for efficient large language model training.
\newblock \emph{arXiv preprint arXiv:2310.06003}, 2023.

\bibitem[Xie et~al.(2020)Xie, Koyejo, Gupta, and Lin]{AdaAlter}
Xie, C., Koyejo, O., Gupta, I., and Lin, H.
\newblock Local adaalter: Communication-efficient stochastic gradient descent with adaptive learning rates.
\newblock \emph{OPT2020: 12th Annual Workshop on Optimization for Machine Learning}, 2020.

\bibitem[Yang et~al.(2025)Yang, Li, Yang, Zhang, Hui, Zheng, Yu, Gao, Huang, Lv, et~al.]{yang2025qwen3}
Yang, A., Li, A., Yang, B., Zhang, B., Hui, B., Zheng, B., Yu, B., Gao, C., Huang, C., Lv, C., et~al.
\newblock Qwen3 technical report.
\newblock \emph{arXiv preprint arXiv:2505.09388}, 2025.

\bibitem[Yang et~al.(2022)Yang, Qiu, and Liu]{fatclip}
Yang, H., Qiu, P., and Liu, J.
\newblock Taming fat-tailed (heavier-tailed with potentially infinite variance) noise in federated learning.
\newblock \emph{Advances in Neural Information Processing Systems}, 2022.

\bibitem[Yu et~al.(2019)Yu, Tang, Renggli, Kassing, Singla, Alistarh, Zhang, and Liu]{yu2019distributed}
Yu, C., Tang, H., Renggli, C., Kassing, S., Singla, A., Alistarh, D., Zhang, C., and Liu, J.
\newblock Distributed learning over unreliable networks.
\newblock In \emph{International Conference on Machine Learning}, pp.\  7202--7212. PMLR, 2019.

\bibitem[Yu et~al.(2024)Yu, Jakovetic, and Kar]{errorfeedback}
Yu, S., Jakovetic, D., and Kar, S.
\newblock Smoothed gradient clipping and error feedback for decentralized optimization under symmetric heavy-tailed noise.
\newblock \emph{Arxiv}, 2024.

\bibitem[Zhang et~al.(2020{\natexlab{a}})Zhang, Jin, Fang, and Wang]{theoretical4}
Zhang, B., Jin, J., Fang, C., and Wang, L.
\newblock Improved analysis of clipping algorithms for non-convex optimization.
\newblock \emph{Advances in Neural Information Processing Systems}, 2020{\natexlab{a}}.

\bibitem[Zhang \& Cutkosky(2022)Zhang and Cutkosky]{CentralClip3}
Zhang, J. and Cutkosky, A.
\newblock Parameter-free regret in high probability with heavy tails.
\newblock \emph{Advances in Neural Information Processing Systems}, 2022.

\bibitem[Zhang et~al.(2020{\natexlab{b}})Zhang, Karimireddy, Veit, Kim, Reddi, Kumar, and Sra]{HeavyTailedNoisePaper}
Zhang, J., Karimireddy, S., Veit, A., Kim, S., Reddi, S., Kumar, S., and Sra, S.
\newblock Why are adaptive methods good for attention models?
\newblock \emph{Advances in Neural Information Processing Systems}, 2020{\natexlab{b}}.

\bibitem[Zhang et~al.(2024)Zhang, Bu, Wu, , and Hong]{Debias2}
Zhang, X., Bu, Z., Wu, Z.~S., , and Hong, M.
\newblock Differentially private sgd without clipping bias: An error-feedback approach.
\newblock \emph{International Conference on Learning Representations}, 2024.

\end{thebibliography}
\bibliographystyle{icml2025}


\newpage
\appendix
\onecolumn

\etocdepthtag.toc{mtappendix}
\etocsettagdepth{mtchapter}{none}
\etocsettagdepth{mtappendix}{subsection}
{
\parskip=0em
\tableofcontents
}

\clearpage

\section{Additional Related Works}\label{OldRelatedWorks}

\paragraph{Clipping for Stabilizing Training Dynamics.}\label{ClippingDistributedRelatedWorksAppendix} 
Due to its success in stabilizing model updates, gradient clipping has been extensively studied empirically~\cite{empirical1,empirical2,empirical3,empirical5} and theoretically~\cite{HighProbAdaGConv,HeavyTailedNoisePaper,theoretical1,theoretical4,GradClipCentralGeomPerspective,RevisitGradClipModern,inprobsupreme,highprobClip}. The majority of results study the centralized setting~\citep[e.g.,][]{CentralClip2, CentralClip3,CentralBddVarClip, CentralBddVarClip1,CentralClip4,CentralClip1plus,CentralClip5}. 
Additionally, it was shown that using a constant clip threshold can induce gradient bias, preventing the algorithm from ever converging~\cite{RevisitGradClipModern,GradClipCentralGeomPerspective}. Therefore, some works have attempted to circumvent this issue by debiasing via error feedback~\cite{Debias1,Debias2}. Other works in distributed optimization have imposed strong distributional stochastic gradient structures in the analysis. For instance,~\citet{angulardependencerestrictive}
assume a well-behaved angular dependence between the stochastic and deterministic gradients throughout training, and~\citet{symmetricnoise_distributed} assume symmetric gradient noise, almost surely bounded stochastic gradients, as well as homogeneous data. In contrast with these works, we do not impose any conditions on the noise nor data distributions except for bounded noise $\alpha$-moments for $\alpha \in (1,2)$. This also sharpens the sensitivity of our bounds to gradient distributions, as $\alpha$ may be selected as the minimal (or close to infimum) $\alpha$-moment value such that the moment is bounded.

There are some recent results studying the dynamics of heavy-tailed clipped-SGD in the distributed setting, without local updates~\cite{sun,highprobabilitydistributed,errorfeedback}. In particular,~\citet{sun} study the convergence of distributed clipped-sgd for strongly-convex objectives in the absence of a central server, where smaller nodes communicate with their neighbors according to a strongly connected graph. 
By contrast,~\citet{errorfeedback} propose `smooth-clipping' the difference between a local gradient estimator and the local stochastic gradient, which is shown to converge under only the integrability condition (finite first moment) for strongly convex objectives when assuming symmetric noise distributions. The work by~\citet{fatclip} studies $L_2$ clipping with local updates (compared against in our paper as the `Avg + $L_2Clip$' baseline in Table~\ref{tab:glue-results}). 
Our proposed clipping mechanism, $BiClip$, fundamentally differs from these approaches by incorporating coordinate-wise clipping from both above and below, bringing about benefits of adaptive optimizers. 
An added advantage of TailOPT is significant communication efficiency, as we do not transmit preconditioners from the inner and outer optimizers under iterative local updates. Our analysis covers both convex and non-convex functions without additional assumptions on the noise distribution except for heavy-tailedness with potentially unbounded variance. It also holds for a variety of adaptive optimizers and different clipping methods.

\paragraph{Convergence Bounds under Heavy-Tailed Gradient Noise.}

There are two primary types of convergence bounds: in-probability bounds~\cite{highprob1,highprob2,highprob3,inprobsupreme,highprobabilityunboundedvariance,highprobClip,CentralClip5,highprobabilitydistributed} and in-expectation bounds~\cite{FedAMS, FedAvgConvOnNonIID, AdaptiveFederatedOptimization, AdaAlter, FedNOVA, MIMEpaper, delayedpreconditioner, adadps}. In-probability bounds provide an upper limit on the number of timesteps required to achieve model parameters $x$ such that $\mathbb{P}\{\mathcal{M}(x) \leq \varepsilon\} \geq 1 - \delta$ for a given evaluation metric $\mathcal{M}(x)$ (e.g., $\min_{t \in {1, \dots, T}} |\nabla F(x_t)|$). As $\delta \to 0^+$, the required communication complexity or number of timesteps may diverge. The key challenge is to mitigate this divergence as effectively as possible through novel algorithm designs or refined mathematical analysis, such as by deriving a polylogarithmic dependence on $\delta$ rather than a more severe inverse power-law dependence. A recent work~\cite{highprobabilityunboundedvariance} provides the first high-probability results under unbounded variance for clipped-SGD applied to star-convex or quasi-convex objectives in a distributed setting without local updates. Their analysis reveals an inverse logarithmic dependence on the confidence level. 

By contrast, in-expectation bounds complement in-probability bounds by ensuring that convergence to an optimal point is guaranteed under expectations, without a confidence level. However, the majority of such analyses assume a bounded noise variance, typically denoted by an upper bound $G$~\cite{inprobsupreme,CentralBddVarClip, CentralBddVarClip1}. 
Due to this dependence, some works (e.g., those studying high-probability results~\cite{highprob2, inprobsupreme, CentralClip5}) argue that in-expectation bounds are insensitive to the underlying distributional structures of the stochastic gradients, due to being compressed or approximated away by $G$.
Relaxing this assumption is particularly challenging because unbounded noise adds significant uncertainty to controlling model updates.  Furthermore, works such as~\cite{EfficientAdaptiveFederatedOptimization} have demonstrated that under stochastic gradient descent, unbounded noise is instantaneously transmitted to the model parameters in both centralized and distributed settings, leading to instability and  divergence in expectation. Such results elucidate the additional difficulties induced by efforts to remove the bounded gradient condition. 
In this paper, we develop a more efficient and general TailOPT framework, and study the dynamics of TailOPT under heavy-tailed stochastic gradient distributions. Specifically, we provide the in-expectation convergence guarantees under infinite variance and local updates for potentially non-convex functions, offering new bounds that are more sensitive to distributional structures of minibatch noise. 

\section{Future Directions and Possible Extensions}\label{Extended_Conclusion_Appendix}

Efficient and effective tuning of the clipping thresholds $d_t$ and $u_t$ in $BiClip$ remains an open avenue for research. One potential approach is to segment the thresholds into coordinate subsets (e.g., row-wise or column-wise), similar to the memory-efficient partitioning strategies employed in approximate optimizers such as SM3~\cite{anil2019memory}. Alternatively, autonomous selection of $u_t$ and $d_t$ based on initial statistics or bespoke estimators could provide practical solutions. Our experiments indicate that coordinate-wise $BiClip$, rather than standard $L_2$ clipping, achieves the benefits of adaptive optimization without incurring any additional memory overhead compared to SGD. Notably, methods like Adam at least double memory usage, whereas $BiClip$ maintains parity with non-adaptive methods. This suggests that uniformly amplifying small updates can contribute to optimization efficiency. Furthermore, layer-wise $BiClip$ can be readily generalized, with proofs extending naturally.


Another intriguing direction for future research is the integration of Adam on top of $BiClip$ to enhance optimization stability in either centralized or distributed training. Notably, when employing the Adam optimizer, some studies apply $L_2$ clipping to gradients prior to plugging in Adam updates to improve stability of the optimization dynamics~\cite{CentralClipAdaptive}. A natural extension of this approach is to substitute $BiClip$ for $L_2$ clipping before passing updates to the Adam adaptive optimizer. This modification could not only enhance stability but also potentially reduce dependence on the hyperparameters of Adam, offering a more robust optimization framework.

\section{Convergence of TailOPT}\label{AdaptiveBiClipSGD}

In this section, we rigorously analyze the convergence of TailOPT under heavy-tailed noise, beginning with the case of Avg-$L_2Clip$ to enhance readability before progressively advancing to more sophisticated TailOPT variants incorporating $BiClip$ and other adaptive outer optimizers. We first establish the convergence proof for Avg-$L_2Clip$ in Appendix~\ref{AppendixSectionforConvexConvergenceCrossSilo}, which serves as the basis for subsequent analyses. The proof for Avg-$L_2Clip$ studies a virtual history of model weights synthesized by inner optimizers, which is inaccessible in real-world settings except when the model updates are communicated to the outer optimizer. However, by analyzing the virtual history, we are able to attain convergence of a moving average of accessible model weights to the optimum, which can be materialized in practice. 
In Appendix~\ref{AppendixSectionforConvexConvergenceCrossDevice}, we extend this proof to settings with partial participation and failing compute nodes, examining the resulting dynamics under heavy-tailed noise. 
In Appendix~\ref{BiSquareAppendix}, we further generalize the analysis to the $Bi^2Clip$ instantiation, where $BiClip$ is applied to both the inner and outer optimizers. Notably, $Bi^2Clip$ encompasses Avg-$BiClip$ as a special case under specific hyperparameter choices, which in turn subsumes Avg-$L_2Clip$. Finally, in Appendices~\ref{AdaGrad_BiClip_Appendix},~\ref{RMSProp_BiClip_Appendix}, and~\ref{AdamBiClipAppendixSubsection}, we investigate the convergence properties of TailOPT when the outer optimizer is instantiated with Adagrad, RMSProp, and Adam, respectively.

\subsection{Convergence of Avg-$L_2Clip$}\label{AppendixSectionforConvexConvergenceCrossSilo}

We aim to model contemporary, large-scale neural network training across multiple powerful compute nodes (datacenters or GPU clusters), in which data is typically preprocessed IID to optimize for training. However, for fullest generality, we conduct our theoretical analysis in the more challenging, non-IID setting. 
Our setup is identical to Section~\ref{problemformulation}, with some added notation. 
We denote $x^*$ to represent the global optimum of $F(x)$ with a minimum value $F^* = F(x^*)$, and additionally, we let $x_i^*$ be the global optimum of $F_i(x) = \mathbb{E}_{\xi} [F_i(x,\xi)]$, with a minimum value $F_i^* = F(x_i^*)$. 


For model weight or stochastic gradient averages, we use the following notation 
$$
\begin{aligned}
    &\overline{x}_t=\sum_{i=1}^N p_i x^t_{i,0}, \quad g_t=\sum_{i=1}^N p_i \cdot Clip(c_t,\nabla F_i\left(x^t_{i,0}, \xi^t_{i,0}\right)), \quad Clip(c,y) := \min \left\{1,\frac{c}{\|y\|}\right\}y.
\end{aligned}
$$
The use of the notation $x_{i,0}^t$ instead of $x_{i}^t$ carefully reflects the flow of the proof, which studies a `virtual synchronization' of the model weights synthesized by the inner optimizer at each time $t \in [T]$ (see Algorithm~\ref{L2ClipSGD}). In other words, we first analyze the virtual average $\overline{x}_t$ which is not materially realized except at outer optimizer synchronization steps, before modifying the proof to procure a moving average of weights which is solely dependent on those communicated to the outer optimizer, which can now be obtained. 

We now present some assumptions used in the convergence analysis for this section. We take the model weight projection domain to be $\mathcal{X} = \mathcal{B}(0,B) \subset \mathbb{R}^d$, where $\mathcal{B}(0,B)$ is the closed ball centered at the origin with radius $B$. Clearly, $B>0$ needs to be large enough to contain $x^{*}, x_{i}^* \in \mathcal{X}$ for convergence. However, we note that the convergence analysis holds for $\mathcal{X}$ any large enough compact, convex set. 
\begin{paragraph}{Assumption 3 ($\mu$-strong convexity).} 
For all $x, y \in \mathcal{X}$ and $i \in [N]$, $F_i(x)$ satisfies $F_i(x) \ge F_i(y) + \langle x-y, \nabla F_i(y) \rangle + \mu_i \|x-y\|^2/2$.
\end{paragraph}



Gradient clipping is a widely adopted technique to stabilize model updates by mitigating the impact of large gradients~\cite{theoretical1,theoretical4,GradClipCentralGeomPerspective,RevisitGradClipModern}. The $Clip(\cdot)$ operator rescales the gradient uniformly to ensure it remains below a predefined threshold. This procedure is mathematically equivalent to applying a dynamically adjusted, lower learning rate when large stochastic gradients are encountered. Another related technique is projection, which operates in the model weight space rather than the gradient space, effectively stabilizing the model parameters themselves instead of acting on the updates. These observations motivate Algorithm~\ref{L2ClipSGD}, which may be interpreted as dynamically modulating the learning rates as well as backtracking toward the model origin $\overline{0}$ when heavy-tailed stochastic gradient updates are realized.

\begin{algorithm}
\caption{Avg-$L_2Clip$}\label{L2ClipSGD} 
\begin{algorithmic}[1]
\REQUIRE Initial model $x_1$, 
learning rate schedule $\eta_t$, clipping schedule $c_t$ \\ Synchronization timestep $z \in \mathbb{Z}_{>0}$, projection domain $\mathcal{X}$
\FOR{$t = 1, \dots, T$} 
    \FOR{each node $i \in [N]$}
        \STATE Draw minibatch gradient $g^t_{i,0} = \nabla F_i(x^t_{i,0}, \xi^t_{i,0})$
        \STATE $x^{t+1}_{i,0} \leftarrow  x^t_{i,0} - \eta_t \cdot Clip(c_t,g^t_{i,0})$ 
    \ENDFOR
    \STATE \textbf{if }$t-1 \in z \cdot \mathbb{Z}_{\ge 0}:$ \ 
    \STATE \ \ \ \ $ x^{t+1}_{i,0} \leftarrow \operatorname{Proj}_{\mathcal{X}}\left(\sum_{i \in [N]} p_i x^{t+1}_{i,0}\right) $, for $\forall i \in [N]$
\ENDFOR
\end{algorithmic}
\end{algorithm}


Theorem~\ref{ConvexConvergenceThmAppendixVer} demonstrates that distributed Avg-$L_2Clip$ converges in expectation under heavy-tailed noise, despite potential clipping-induced bias. We also offer the first proof demonstrating convergence under an extension of these results to accommodate failing nodes for additional utility in Appendix~\ref{AppendixSectionforConvexConvergenceCrossDevice}. To proceed with the analysis, we first provide a proposition.




\begin{proposition}\label{preservesmoothing}
If $F_i(x)$ is $\mu$-strongly convex (or $L$-smooth), then so is $F_i(x,\xi)$ for the identical $\mu$ (or $L$). 
\end{proposition}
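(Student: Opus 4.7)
The plan is to reduce this to the elementary observation that $F_i(x,\xi)$ differs from $F_i(x)$ by a function that is linear in $x$, namely $\langle \xi, x\rangle$, as defined in Section~\ref{problemformulation}. Since a purely linear function is both convex and concave with zero curvature, adding it to a function cannot change its strong convexity modulus or smoothness constant. So the whole proposition reduces to a direct substitution.

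More concretely, I would first write $F_i(x,\xi) = F_i(x) + \langle \xi, x\rangle$ and compute the gradient in $x$: $\nabla_x F_i(x,\xi) = \nabla F_i(x) + \xi$. To verify $\mu$-strong convexity, I would plug these into the target inequality
\begin{equation*}
F_i(x,\xi) \ge F_i(y,\xi) + \langle x-y, \nabla_x F_i(y,\xi)\rangle + \tfrac{\mu}{2}\|x-y\|^2,
\end{equation*}
expand both sides, and observe that the $\xi$-dependent contributions cancel. On the right-hand side, $\langle \xi, y\rangle + \langle x-y, \xi\rangle = \langle \xi, x\rangle$, which is precisely the extra term on the left. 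What remains is exactly the strong convexity inequality for $F_i$ at the same constant $\mu$, which holds by Assumption~3. The $L$-smoothness case is handled by the same bookkeeping applied to the reverse quadratic bound from Assumption~\ref{assum:smoothness}.

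There isn't really a hard step here; the only thing to be careful about is that $\xi$ is treated as a fixed parameter (not a random variable) in this pointwise statement, so that $\langle \xi, x\rangle$ is genuinely linear in $x$ and introduces no additional curvature. Once that is clear, the proof is a three-line substitution, and the conclusion is immediate. This justifies re-using the constants $L$ and $\mu$ interchangeably between $F_i(x)$ and $F_i(x,\xi)$ throughout the subsequent convergence analysis of Algorithm~\ref{L2ClipSGD}.
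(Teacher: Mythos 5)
Your proof is correct and follows essentially the same route as the paper: both arguments observe that $\langle \xi, x\rangle$ is affine in $x$ and therefore satisfies the first-order Taylor identity with equality, so adding it to $F_i$ leaves the quadratic lower/upper bounds (and hence $\mu$ and $L$) unchanged. The paper states the linear-term identity as a pair of inequalities and adds them to the strong-convexity/smoothness bounds, while you substitute directly into the target inequality and cancel, but these are the same three-line computation.
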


While clipping offers the benefit of stabilization, it introduces complexities that complicate the convergence analysis. In particular, clipping induces a non-zero bias on the stochastic gradients, rendering them to be no longer unbiased estimators of the true gradient.  Furthermore, unlike in previous analyses, our work also considers scenarios involving distributions with infinite variance, where the clipping bias is exacerbated by the presence of heavy tails. Despite these challenges, Theorem~\ref{ConvexConvergenceThmAppendixVer} demonstrates that with appropriately chosen (increasing) clipping and (decreasing) learning rate schedules, convergence of Algorithm~\ref{L2ClipSGD} is nevertheless attainable in expectation. 
\begin{theorem}\label{ConvexConvergenceThmAppendixVer}
Let Assumptions 1-3 hold, and the clipping threshold in Avg-$L_2Clip$ (Algorithm~\ref{L2ClipSGD}) satisfy $c_t = c\eta_t^{\gamma}$ for $c > 0$ and $1/2 > \gamma > 0$. Decay the learning rate with schedule $\eta_t = r/(t+1)$ for $r>2/\mu$, where $\mu = \min_{k \in [N]} \mu_k$ and $L = \max_{k \in [N]} L_k$. Then, we have for $\tilde{x}_T:= \sum_{t=1}^T 2t\mathbb{E}[\overline{x}_t]/T(T+1)$ that 
\begin{equation*}
F(\tilde{x}_T) -F(x^*) \le \Psi_1 + \Psi_2 + \Psi_3 + \Psi_4,
\end{equation*}
where
\begin{align*}
    &\Psi_1 = \frac{rc^2 T^{2\gamma + 1}}{(2 \gamma+1) T(T+1)},\\
    &\Psi_2 = \frac{(M^\alpha + B^\alpha)^2 c^{2-2\alpha} (T^{(2-2\alpha)\gamma + 2} + 1)}{(\mu-2/r) ((2-2\alpha)\gamma + 1) T(T+1)}, \\
    &\Psi_3 = \frac{c^{2-\alpha}rzu (M^\alpha + B^\alpha) L T^{(2-\alpha)\gamma + 1} }{(\mu-2/r)((2-\alpha)\gamma + 1)T(T+1)},\\
    &\Psi_4 = \frac{r^2c^2z^2u^2L^2 (T^{2\gamma}+1)}{2\gamma(\mu-2/r)T(T+1)}.
\end{align*}
Here, we have used the notation
\begin{equation*}
M= \sqrt{\max_{k \in [N], x \in \mathcal{\tilde{X}}}\frac{2L^2}{\mu}(F_i(x) - F_i(x_{i}^*))}, \quad \alpha = \min_{k \in [N]} \alpha_k,\quad B = \max_{k \in [N]} B_k, \quad u= \frac{z+1}{2},
\end{equation*}
where $\tilde{\mathcal{X}}$ is a compact domain constructed by a uniformly closed extension of $\mathcal{X}$ with $L_2$ distance $\sum_{t=1}^z rct^{\gamma-1}$. 
\end{theorem}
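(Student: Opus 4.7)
The plan is to establish a one-step contraction for the virtual averaged iterate $\overline{x}_t = \sum_i p_i x^t_{i,0}$ and then telescope against a $t$-weighted average. Starting from the non-expansiveness of projection and expanding the square,
\begin{equation*}
\|\overline{x}_{t+1}-x^*\|^2 \le \|\overline{x}_t - x^*\|^2 - 2\eta_t \langle g_t, \overline{x}_t - x^*\rangle + \eta_t^2 \|g_t\|^2,
\end{equation*}
I would first use that $\|g_t\|\le c_t$ deterministically by definition of $L_2$ clipping, so the last term supplies the summand behind $\Psi_1$. I then split the cross term by inserting $\pm\nabla F(\overline{x}_t)$ and apply $\mu$-strong convexity (Assumption 3) to convert $\langle \nabla F(\overline{x}_t), \overline{x}_t - x^*\rangle$ into $F(\overline{x}_t)-F^* + (\mu/2)\|\overline{x}_t-x^*\|^2$. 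The remainder $\langle \nabla F(\overline{x}_t) - \mathbb{E}[g_t], \overline{x}_t-x^*\rangle$ encodes two sources of error that must be separately bounded: per-node clipping bias and local drift $x^t_{i,0} - \overline{x}_t$.

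For the clipping bias, I would apply the truncation identity $\mathbb{E}[\|g\|\,\mathbf{1}\{\|g\|>c\}] \le \mathbb{E}[\|g\|^\alpha]/c^{\alpha-1}$ to $g = \nabla F_i(x^t_{i,0}) + \xi$, using $(a+b)^\alpha \le 2^{\alpha-1}(a^\alpha+b^\alpha)$ to separate the deterministic gradient from the noise. Assumption 2 controls the $\alpha$-moment of $\xi$, while $L$-smoothness and $\mu$-strong convexity jointly give the uniform bound $\|\nabla F_i(x)\|^2 \le 2L^2(F_i(x)-F_i^*)/\mu \le M^2$ on the compact extension $\tilde{\mathcal{X}}$. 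The extension is necessary because between projection rounds the local iterates can leave $\mathcal{X}$ by up to $\sum_{t=1}^z rct^{\gamma-1}$, and one simply enlarges the domain by that distance. This yields a bias of order $(M^\alpha+B^\alpha) c^{1-\alpha}$. For the local drift, each inner step advances by at most $\eta_t c_t$, so accumulated drift since the last synchronization is at most $u\eta_t c_t$ with $u=(z+1)/2$; $L$-smoothness converts this into a gradient discrepancy of order $L u \eta_t c_t$. Young's inequality at parameter $\mu/2$ absorbs a fraction of the $(\mu/2)\|\overline{x}_t-x^*\|^2$ contraction and produces the squared-bias and squared-drift summands behind $\Psi_2$ and $\Psi_4$, with the bias/drift cross term giving $\Psi_3$.

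At this point the recursion has the form $\mathbb{E}\|\overline{x}_{t+1}-x^*\|^2 \le (1-(\mu-2/r)\eta_t/\text{something})\mathbb{E}\|\overline{x}_t-x^*\|^2 - 2\eta_t \mathbb{E}[F(\overline{x}_t)-F^*] + R_t$, where $R_t$ aggregates the four residuals evaluated at $c_t = c\eta_t^\gamma$. Substituting $\eta_t=r/(t+1)$ with $r>2/\mu$, multiplying by $t$, and telescoping from $t=1$ to $T$ collapses the distance terms and leaves $\sum_t t\cdot \mathbb{E}[F(\overline{x}_t)-F^*]$ on one side divided by $T(T+1)$; Jensen's inequality applied to the convex $F$ then converts the weighted average into $F(\tilde{x}_T)-F^*$. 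Each residual sum $\sum_t t\eta_t^{2\gamma+2}$, $\sum_t \eta_t^{1+(2-2\alpha)\gamma}$, etc., evaluates to the stated powers of $T$ via elementary $p$-series bounds. The main obstacle I expect is the bias estimate under infinite variance: with Assumption 2 permitting $\mathbb{E}\|\xi\|^2 = \infty$, the usual second-moment arguments are unavailable and one must reason directly with the $\alpha$-truncation above, carefully separating deterministic and stochastic contributions so that the clipping radius $c_t$ enters with the correct exponent $\alpha-1$ rather than the sharper $1$ available for light-tailed noise.
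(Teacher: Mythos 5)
Your proposal mirrors the paper's proof essentially line for line: non-expansiveness of projection, the three-way decomposition of the cross term using $\mu$-strong convexity, the $\alpha$-moment truncation estimate $(M^\alpha + B^\alpha)c_t^{1-\alpha}$ for the clipping bias over the enlarged domain $\tilde{\mathcal{X}}$, the drift bound absorbed by a fraction of the strong-convexity contraction via Young's inequality, and the $t$-weighted telescoping followed by Jensen. The only minor slip is in the drift statement: each local step moves by at most $\eta_{t'} c_{t'}$ and there are up to $z$ such steps since the last synchronization, with the factor $u = (z+1)/2$ coming separately from the ratio $\eta_{t_s}/\eta_t$ under the harmonic schedule, so the accumulated drift is of order $zu\,\eta_t c_t$ rather than $u\,\eta_t c_t$; this is consistent with the $z^2u^2$ that correctly appears in your $\Psi_4$, so it reads as an abbreviation rather than a gap.
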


\begin{proof}
Let us bound the distance between the averaged model weights $\overline{x}_t$ and the global optimum $x^*$. Assume that $t \in z \cdot \mathbb{Z}$. We consider the following function 
\begin{equation*}
f(t) = \|x^*-\operatorname{Proj}_{\mathcal{X}}(\overline{x}_t - \eta_t g_t) + t(- \overline{x}_t + \eta_t g_t + \operatorname{Proj}_{\mathcal{X}}(\overline{x}_t - \eta_t g_t)) \|^2,
\end{equation*}
for which 
\begin{equation*}
f^\prime (0) = 2 \langle x^*-\operatorname{Proj}_{\mathcal{X}}(\overline{x}_t - \eta_t g_t), - \overline{x}_t + \eta_t g_t + \operatorname{Proj}_{X}(\overline{x}_t - \eta_t g_t) \rangle.
\end{equation*}
Now, consider the function
\begin{equation*}
g(t) = \| (1-t)\operatorname{Proj}_{\mathcal{X}} (\overline{x}_t - \eta_t g_t) + t\operatorname{Proj}_\mathcal{X}(x^*) - \overline{x}_t + \eta_t g_t\|
\end{equation*}
By the projective property, 
\begin{equation*}
g(t) \ge \| \operatorname{Proj}_{X}(\overline{x}_t - \eta_t g_t) - (\overline{x}_t - \eta_t g_t) \|.
\end{equation*}
holds for $t \in [0,1]$ via convexity of $\mathcal{X}$. Additionally, $g(t)^2$ meets its minimum at $t=0$. Therefore, we have that $\mathrm{d} g(t)^2 / \mathrm{d}t_{t=0} \ge 0$ due to $g(t)^2$ being quadratic with respect to $t$. Noting that $f^\prime(0) = \mathrm{d} g(t)^2 / \mathrm{d}t|_{t=0}$, we have that $f(t)$ is monotonically increasing for $t \ge 0$, again due to properties of a quadratic. Then, $f(1) \ge f(0)$ gives that 
\begin{equation*}
\left\|\operatorname{Proj}_\mathcal{X}\left(\overline{x}_t-\eta_t g_t\right)-x^{*}\right\|^2 \le \left\|\overline{x}_t-\eta_t g_t-x^{*}\right\|^2.
\end{equation*}
Therefore, we may conclude
\begin{equation*}
\begin{aligned}
\left\|\overline{x}_{t+1}-x^{*}\right\|^2  &=\left\|\sum_{i=1}^N p_i\operatorname{Proj}_\mathcal{X}\left(\overline{x}_t-\eta_t g_t\right)-x^{*}\right\|^2=\left\|\operatorname{Proj}_\mathcal{X}\left(\overline{x}_t-\eta_t g_t\right)-x^{*}\right\|^2 \\
&\le \left\|\overline{x}_t-\eta_t g_t-x^{*}\right\|^2=\left\|\overline{x}_t-x^{*}\right\|^2 -2 \eta_t\left\langle\overline{x}_t-x^{*}, g_t\right\rangle + \eta_t^2\left\|g_t\right\|^2\\
& = \left\|\overline{x}_t-x^{*}\right\|^2 \underbrace{-2 \eta_t\left\langle\overline{x}_t-x^{*}, g_t-\nabla F(\overline{x}_t)\right\rangle}_{A_1} \underbrace{-2 \eta_t\left\langle\overline{x}_t-x^{*}, \nabla F(\overline{x}_t)\right\rangle}_{A_2} + \underbrace{\eta_t^2\left\|g_t\right\|^2}_{A_3}.
\end{aligned}
\end{equation*}
Note that the final inequality LHS $\le$ RHS also holds for $t \notin z \cdot \mathbb{Z}$. In bounding $A_2$, we aim to derive a term that decays $\left\|\overline{x}_t-x^{*}\right\|^2$ by inducing a coefficient $(1-\tilde{c}\eta_t)\left\|\overline{x}_t-x^{*}\right\|^2$ for some $\tilde{c} > 0$ to be determined. By $\mu$-strong convexity of $F(x)$,  
\begin{equation*}
\begin{aligned}
& F(x^*) \ge F(\overline{x}_t) - \langle \overline{x}_t - x^*,\nabla F_i(\overline{x}_t)\rangle + \frac{\mu}{2} \|x^* - \overline{x}_t\|^2 \\
& \implies -\left(F(\overline{x}_t) - F(x^*) \right) - \frac{\mu}{2} \|\overline{x}_t - x^*\|^2 \ge - \langle \overline{x}_t - x^*,\nabla F(\overline{x}_t)\rangle.
\end{aligned}
\end{equation*}
To bound $A_1$, we consider conditional expectations
\begin{equation*}
-2 \eta_t\left\langle\overline{x}_t-x^{*}, \mathbb{E}_{t}[g_t]-\nabla F(\overline{x}_t)\right\rangle\le 2 \eta_t \|\overline{x}_t-x^{*} \| \| \mathbb{E}_{t}[g_t]-\nabla F(\overline{x}_t)\|,
\end{equation*}
where $\mathbb{E}_t[\cdot]$ conditions on all realizations up to time $t$. Unraveling definitions gives
\begin{equation}\label{preparatorybiasbound}
\begin{aligned}
\|\mathbb{E}_{t}[g_t]-\nabla F(\overline{x}_t)\| &= \|\sum_{i \in [N]}p_i (\mathbb{E}_t[Clip(c_t, \nabla F_i(x^t_{i,0},\xi^t_{i,0}))] - \nabla F_i(x^t_{i,0}) + \nabla F_i(x^t_{i,0}) - \nabla F_i(\overline{x}_t))\|\\
&\le \sum_{i \in [N]}p_i \| \mathbb{E}_t[Clip(c_t, \nabla F_i(x^t_{i,0},\xi^t_{i,0})) - \nabla F_i(x^t_{i,0},\xi^t_{i,0}))]\| + \sum_{i \in [N]}p_i \|\nabla F_i(x^t_{i,0}) - \nabla F_i(\overline{x}_t)\| \\
&\le \sum_{i \in [N]}p_i  \underbrace{\mathbb{E}_t[\|Clip(c_t, \nabla F_i(x^t_{i,0},\xi^t_{i,0})) - \nabla F_i(x^t_{i,0},\xi^t_{i,0}))\|]}_{A_4} + \sum_{i \in [N]}p_i L \|x^t_{i,0} - \overline{x}_t\|,
\end{aligned}
\end{equation}
where the second line used Jensen and triangle inequality, and the third line used $L$-smoothness as well as Jensen. Now, we note that clipping biases the expectation in $A_4$, and we seek to ease out a measure of the clipping bias. For this purpose, we quantify the $\alpha$-moment of the stochastic gradient:
\begin{equation*}
2^\alpha \mathbb{E}_{t}\left\|\frac{\nabla F_i(x) + \xi^t_{i,0}}{2} \right\|^\alpha \le 2^{\alpha - 1} \left(\mathbb{E}_{t}\left\| \nabla F_i(x)\right\|^\alpha + \mathbb{E}_{t}\left\|\xi^t_{i,0} \right\|^\alpha\right) \le 2^{\alpha - 1} \left(\left\| \nabla F_i(x)\right\|^\alpha + B_i^\alpha\right).
\end{equation*}
Here, we have used the notation $B_i < \infty$ for readability, but strictly speaking this is not identical to the $B_i$ given in Assumption 2 as $\alpha := \min_{i\in [N]}\alpha_i$. Finally, the projection in each outer optimizer synchronization step ensures that the $x^t_{i,0}$ remain in a compact set $\tilde{\mathcal{X}}$. Therefore, to bound gradients, we use $L$-smoothness and $\mu$-strong convexity of $F_i(x)$ as follows: 
\begin{equation*}
\left\| \nabla F_i(x) \right\|^2 \le L^2 \left\| x - x_{i}^*  \right\|^2, 
\end{equation*}
where $x_{i}^*$ is the optimum of $F_i(x)$. Then, convexity gives that 
\begin{equation*}
F_i(x) \ge F_i(x_{i}^*) + \frac{\mu}{2} \| x - x_{i}^*\|^2,
\end{equation*}
from which we conclude
\begin{equation}\label{boundedgrad}
\left\| \nabla F_i(x) \right\|^2 \le \frac{2L^2}{\mu}(F_i(x) - F_i(x_{i}^*)) \le M^2:= \max_{k \in [N], x \in \mathcal{\tilde{X}}}\frac{2L^2}{\mu}(F_i(x) - F_i(x_{i}^*)).
\end{equation}
Piecewise continuity of $F_i(x)$ is clear due to the existence of $\nabla F_i(x)$. Therefore, 
\begin{equation*}
\mathbb{E}_{t}\left\|\nabla F_i(x^t_{i,0}) + \xi^t_{i,0} \right\|^\alpha \le \frac{(M^\alpha + B^\alpha)}{2}.
\end{equation*}

Now, note that if $\|\nabla F_i(x^t_{i,0},\xi^t_{i,0}) \| \le c_t$, clipping has no effect in $A_4$. Thus, we focus on the case $\|\nabla F_i(x^t_{i,0},\xi^t_{i,0}) \| > c_t$. Additionally, clipping only downscales each stochastic gradient by a scalar, which preserves direction. Therefore, 
\begin{equation}\label{clipbiasboundwithalpha}
\begin{aligned}
A_4 & = \mathbb{E}_t\left[\|Clip(c_t, \nabla F_i(x^t_{i,0},\xi^t_{i,0})) - \nabla F_i(x^t_{i,0},\xi^t_{i,0}))\| \cdot \chi \left( \|\nabla F_i(x^t_{i,0},\xi^t_{i,0}) \| > c_t\right)\right]\\
& \le \mathbb{E}_t\left[\|\nabla F_i(x^t_{i,0},\xi^t_{i,0})\| \cdot \chi \left( \|\nabla F_i(x^t_{i,0},\xi^t_{i,0}) \| > c_t\right)\right] \\
& \le \mathbb{E}_t\left[\|\nabla F_i(x^t_{i,0},\xi^t_{i,0})\|^{\alpha} \cdot \|\nabla F_i(x^t_{i,0},\xi^t_{i,0}))\|^{1-\alpha} \cdot \chi \left( \|\nabla F_i(x^t_{i,0},\xi^t_{i,0}) \| > c_t\right)\right] \le (M^\alpha + B^\alpha) c_t^{1-\alpha}. 
\end{aligned}
\end{equation}
Putting these inequalities together, we obtain as an intermediary step for $a > 0$:
\begin{equation*}
\begin{aligned}
A_1 &\le 2 \eta_t \|\overline{x}_t-x^{*} \| ((M^\alpha + B^\alpha) c_t^{1-\alpha} + \sum_{i \in [N]}p_i L \|x^t_{i,0} - \overline{x}_t\|)\\
&\le \mu a \eta_t \|\overline{x}_t-x^{*} \|^2 + \frac{\eta_t}{\mu a} ((M^\alpha + B^\alpha) c_t^{1-\alpha} + L \sum_{i \in [N]}p_i \|x^t_{i,0} - \overline{x}_t\|)^2.
\end{aligned}
\end{equation*}
Thus, our next step is to ease out $\|x^t_{i,0}-\overline{x}_t \| = \mathcal{O}(\eta_t)$. For this purpose, our intuition is that the drift in model weights from local updates are bounded by the update size, as well as by taking a maximum of $z$ local steps after global  synchronization. Therefore, we naturally consider the timestep $t_s(t)$ of the latest synchronization round up to $t$, and observe that if the random variable $X:= x^t_{i,0}-\overline{x}_{t_s}$, then $\mathbb{E}_{k}[X] = \overline{x}_{t}-\overline{x}_{t_s}$. Noting that the variance of $X$ is no greater than its second moment, we proceed as follows via telescoping:
\begin{equation}\label{boundedclientdrift}
\begin{aligned}
\mathbb{E}_{k}[\|x^t_{i,0}-\overline{x}_t \|^2] &= \sum_{i=1}^N p_i \|x^t_{i,0}-\overline{x}_t \|^2 =  \mathbb{E}_{k}[\|X-\mathbb{E}_{k}[X]\|^2] \\
&\le \mathbb{E}_{k}[\|X\|^2] = \sum_{i=1}^N p_i \|x^t_{i,0}-\overline{x}_{t_s} \|^2 \\
&= \sum_{i=1}^N p_i \left\|x^t_{i,0} + \sum_{\tilde{t}=t_s+1}^{t-1} (-x_{\tilde{t}}^{k} + x_{\tilde{t}}^{k}) - \overline{x}_{t_s} \right\|^2 \\
&\le \sum_{i=1}^N p_i (t-t_s-1)^2 \max_{t^{\prime} \in [t_s,t]} \eta_{t^{\prime}}^2 \|Clip(c_t^{\prime}, \nabla F_i(x^t_{i,0},\xi^t_{i,0})) \|^2 \\
&\le \sum_{i=1}^N p_i z^2 \eta_{t_s}^2 c_t^2 = z^2 \eta_{t_s}^2 c_t^2 \le z^2 u^2 \eta_{t}^2 c_t^2.
\end{aligned}
\end{equation}
The final inequality was obtained by noting that $\eta_t \to 0^+$ monotonically from above and that $c_t \ge c_{t-1}$. The above holds for all $t \in \mathbb{Z}_{\ge 0}$, as if $t$ is a synchronization step, $\mathbb{E}_{k}\|x^t_{i,0}-\overline{x}_{t}\|^2 = 0$. The final inequality used that the monotonic near-harmonic decay of $\eta_t$ allows $\eta_{t_s} \le u \eta_t$ for $u = (z+1)/2$.  Finally, by Cauchy-Schwartz, 
\begin{equation*}
\left(\sum_{i=1}^N p_i\|\overline{x}_t-x^t_{i,0} \|\right)^2 \le \left(\sum_{i=1}^N p_i\right)\left(\sum_{i=1}^N p_i\|\overline{x}_t-x^t_{i,0} \|^2\right),
\end{equation*}
from which we conclude
\begin{equation}\label{BoundingA1Term}
A_1 \le \mu a \eta_t \|\overline{x}_t-x^{*} \|^2 + \frac{\eta_t}{\mu a} ((M^\alpha + B^\alpha) c_t^{1-\alpha} + \eta_tc_tzuL)^2
\end{equation}

It now remains to bound $A_3$, which can be done straightforwardly via Jensen:
\begin{equation*}
A_3=\eta_t^2\left\|g_t\right\|^2 \leq \eta_t^2 \sum_{i=1}^N p_i\left\|  Clip(c_t,\nabla F_i\left(x^t_{i,0}, \xi^t_{i,0}\right))\right\|^2 \le \eta_t^2 c_t^2.
\end{equation*}
Collecting all inequalities gathered thus far gives the simple form 
\begin{equation*}
\mathbb{E}_{t}[\left\|\overline{x}_{t+1}-x^{*}\right\|^2] \le (1 - (1-a)\mu\eta_t) \left\|\overline{x}_t-x^{*}\right\|^2 -2\eta_t\left(F(\overline{x}_t) - F(x^*) \right) + \eta_t^2 c_t^2 + \frac{\eta_t}{\mu a} ((M^\alpha + B^\alpha) c_t^{1-\alpha} + \eta_tc_tzuL)^2, 
\end{equation*}
which under tower law of expectations is amenable to telescoping. Intuitively, we want to control the learning rate and form a quadratically decaying average on the LHS, which by Jensen and convexity will give a desired near-optimal point. The rest is a matter of carefully easing out a rate schedule that enables averaging, which also converges. Rearranging gives 
\begin{equation}\label{mainequation}
\begin{aligned}
\mathbb{E}[F(\overline{x}_t)] - F(x^*) &\le \frac{(\eta_t^{-1} - (1-a)\mu)}{2} \mathbb{E}[\left\|\overline{x}_t-x^{*}\right\|^2] - \frac{1}{2\eta_t}\mathbb{E}[\left\|\overline{x}_{t+1}-x^{*}\right\|^2] + \frac{\eta_t c_t^2}{2} \\
&+ \frac{1}{2\mu a} ((M^\alpha + B^\alpha)^2 c_t^{2-2\alpha} + 2(M^\alpha + B^\alpha) c_t^{2-\alpha}\eta_t zuL + \eta_t^2c_t^2z^2u^2L^2).
\end{aligned}
\end{equation}
Letting $\eta_t = r/(t+1)$, $a = 1-2/(r\mu)$ for $r > 2/\mu$, we have 
\begin{equation}\label{impliedbymainequation}
\begin{aligned}
t \mathbb{E}[F(\overline{x}_t)] - t F(x^*) &\le \frac{t(t-1)}{2} \mathbb{E}[\left\|\overline{x}_t-x^{*}\right\|^2] - \frac{(t+1)t}{2}\mathbb{E}[\left\|\overline{x}_{t+1}-x^{*}\right\|^2] + \frac{t\eta_t c_t^2}{2} \\
&+ \frac{t}{2\mu a} ((M^\alpha + B^\alpha)^2 c_t^{2-2\alpha} + 2(M^\alpha + B^\alpha) c_t^{2-\alpha}\eta_t zuL + \eta_t^2c_t^2z^2u^2L^2)
\end{aligned}
\end{equation}
Setting $c_t = c t^{\gamma}$ for $1/2 > \gamma>0$, $c>0$ gives after telescoping
\begin{equation*}
\begin{aligned}
\frac{\sum_{t = 1}^T 2t\mathbb{E}[F(\overline{x}_t)]}{T(T+1)} - F(x^*) &\le \frac{rc^2 \sum_{t = 1}^T t^{2\gamma}}{ T(T+1)} + \frac{(M^\alpha + B^\alpha)^2 c^{2-2\alpha} \sum_{t = 1}^T t^{(2-2\alpha)\gamma + 1}}{(\mu-2/r) T(T+1)} \\
&+ \frac{c^{2-\alpha}rzu (M^\alpha + B^\alpha) L \sum_{t = 1}^T t^{(2-\alpha)\gamma} }{(\mu-2/r)T(T+1)} + \frac{r^2c^2z^2u^2L^2 \sum_{t = 1}^T t^{2\gamma-1}}{(\mu-2/r)T(T+1)}.
\end{aligned}
\end{equation*}
Standard integral bounds give
\begin{equation*}
\begin{aligned}
\frac{\sum_{t = 1}^T 2t\mathbb{E}[F(\overline{x}_t)]}{T(T+1)} - F(x^*) &\le \frac{rc^2 T^{2\gamma + 1}}{(2 \gamma+1) T(T+1)} + \frac{(M^\alpha + B^\alpha)^2 c^{2-2\alpha} (T^{(2-2\alpha)\gamma + 2} + 1)}{(\mu-2/r) ((2-2\alpha)\gamma + 1) T(T+1)} \\
&+ \frac{c^{2-\alpha}rzu (M^\alpha + B^\alpha) L T^{(2-\alpha)\gamma + 1} }{(\mu-2/r)((2-\alpha)\gamma + 1)T(T+1)} + \frac{r^2c^2z^2u^2L^2 (T^{2\gamma}+1)}{2\gamma(\mu-2/r)T(T+1)}.
\end{aligned}
\end{equation*}
Finally, note that by Jensen and convexity, the left hand side is lower bounded by 
\begin{equation*}
0 \le F(\tilde{x}_T) -F(x^*) \le \frac{\sum_{t = 1}^T 2t\mathbb{E}[F(\overline{x}_t)]}{T(T+1)} - F(x^*)    
\end{equation*}
where $\tilde{x}_T:= \sum_{t=1}^T 2t\mathbb{E}[\overline{x}_t]/T(T+1)$ is a quadratically decaying average. This concludes the proof. It is straightforward to extend to the case in which the learning rate is scheduled to decay in each outer optimizer synchronization step instead of at each local step, by letting $\eta_t = r/(\lceil t/z\rceil +1)$ in equation~\eqref{mainequation}. 
\end{proof}

The value of the moment $\alpha$ has a significant impact on the convergence behavior. When $\alpha$ is close to 1, the convergence becomes substantially slower due to the heavy-tailed nature of the induced stochastic gradients and the increased variance they introduce. Conversely, when $\alpha$ approaches 2, the variance is more controlled, leading to faster convergence rates. Importantly, our results demonstrate that even in the presence of infinite variance (i.e., $1<\alpha < 2$), convergence can still be achieved, showcasing the robustness of the clipping approach under extreme heavy-tailed conditions.

The averages $\overline{x}_t$ are virtual constructs used for theoretical analysis of Algorithm~\ref{L2ClipSGD}, which are not accumulated during the execution phase. That is, these quantities are only available at the outer optimizer synchronization steps, $t \in z \cdot \mathbb{Z}_{\ge 0}$, and are not collected otherwise (as models are not saved for every local timestep prior to synchronization). As a result, the application of Avg-$L_2Clip$ creates a virtual history on the compute node models, where the aggregation of ephemeral model weights can theoretically induce convergence. However, in practice, this conflicts with the use of local epochs for communication efficiency, necessitating adjustments to the convergence theorem. This leads to the development of Corollary~\ref{PracticalConvexConvergenceAppendix}.

\begin{corollary}\label{PracticalConvexConvergenceAppendix}
Let the conditions of Theorem~\ref{ConvexConvergenceThmAppendixVer} hold. Then, we have that
\begin{equation*}
\mathbb{E}\left[F\left(\frac{\sum_{t \in Z} (t-1)\overline{x}_t}{\sum_{t \in Z} (t-1)}\right)\right] - F(x^*) \le \frac{(T+1)z}{(T-z)}\left(\psi_1 + \psi_2 + \psi_3 + \psi_4\right),
\end{equation*}
where the $\psi_i$ are defined as in the statement of Theorem~\ref{ConvexConvergenceThmAppendixVer} and $Z$ is the set of all outer optimizer synchronization steps. 
\end{corollary}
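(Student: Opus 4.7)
The plan is to reuse the telescoped inequality that already appears in the proof of Theorem~\ref{ConvexConvergenceThmAppendixVer} and simply restrict the weighted average from all timesteps to the synchronization set $Z$, using non-negativity of the suboptimality gap to absorb the dropped terms. No new analytic machinery is required; the work is all in re-weighting.

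Concretely, I would begin from the intermediate estimate produced by summing equation~\eqref{impliedbymainequation} over $t = 1, \ldots, T$, namely
$$\sum_{t=1}^T t\bigl(\mathbb{E}[F(\overline{x}_t)] - F(x^*)\bigr) \le T(T+1)\sum_{i=1}^4 \Psi_i,$$
which is exactly the inequality that, upon dividing by $T(T+1)$ and invoking Jensen, yielded the statement of Theorem~\ref{ConvexConvergenceThmAppendixVer}. Because $F(\overline{x}_t) \ge F(x^*)$ almost surely by optimality of $x^*$, every summand on the left is non-negative, so the sum can only decrease if I discard indices outside $Z$ and replace the weight $t$ by the smaller weight $t-1$:
$$\sum_{t \in Z}(t-1)\bigl(\mathbb{E}[F(\overline{x}_t)] - F(x^*)\bigr) \le T(T+1)\sum_{i=1}^4 \Psi_i.$$
The crucial point is that each $\overline{x}_t$ with $t \in Z$ is a post-projection consensus iterate genuinely produced by Algorithm~\ref{L2ClipSGD}, so the left-hand side now involves only quantities accessible at runtime rather than the purely virtual averages used inside the theorem's proof.

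Dividing through by $\sum_{t \in Z}(t-1)$ and applying Jensen to the convex $F$ then gives
$$F\bigl(\tfrac{\sum_{t \in Z}(t-1)\mathbb{E}[\overline{x}_t]}{\sum_{t \in Z}(t-1)}\bigr) - F(x^*) \le \frac{T(T+1)}{\sum_{t \in Z}(t-1)}\sum_{i=1}^4 \Psi_i.$$
It remains only to extract the claimed prefactor. Enumerating $Z$ as the arithmetic progression of synchronization indices $\{z+1, 2z+1, \ldots, Kz+1\}$ with $K = \lfloor (T-1)/z \rfloor$, I would compute $\sum_{t \in Z}(t-1) = z\,K(K+1)/2$ directly, and then use the elementary bound $K(K+1) \ge T(T-z)/z^2$ to conclude that the ratio $T(T+1)/\sum_{t \in Z}(t-1)$ is at most the stated $(T+1)z/(T-z)$.

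The hard part is not analytical but combinatorial bookkeeping: carefully parsing the boundary condition ``$t-1 \in z \cdot \mathbb{Z}_{\ge 0}$'' so that the enumeration of $Z$ and the resulting arithmetic-series sum produce exactly $(T+1)z/(T-z)$ rather than a looser multiple, and reconciling any factor-of-two slack inherited from the normalization used for $\tilde{x}_T$ in Theorem~\ref{ConvexConvergenceThmAppendixVer}. Everything else is a one-line application of convexity and the non-negativity $\mathbb{E}[F(\overline{x}_t)] - F(x^*) \ge 0$ to an inequality that is already in hand.
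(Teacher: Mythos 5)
Your proposal follows the paper's own argument essentially verbatim: you start from the telescoped version of equation~\eqref{impliedbymainequation}, use non-negativity of $\mathbb{E}[F(\overline{x}_t)] - F(x^*)$ to discard the non-synchronization indices and reduce the weight from $t$ to $t-1$, apply Jensen to the convex $F$, and lower-bound $\sum_{t\in Z}(t-1)$ by the arithmetic series $z K(K+1)/2$ with $K(K+1)\ge T(T-z)/z^2$. The factor-of-two slack you flag at the end is genuine and is present in the paper's own derivation as well, since the paper's combinatorial lower bound $\sum_{t\in Z}(t-1)\ge (T-z)T/(2z)$ actually yields $2z(T+1)/(T-z)$ rather than the stated $(T+1)z/(T-z)$; this is inherited from the $T(T+1)$ versus $T(T+1)/2$ normalization in the theorem's definition of $\tilde{x}_T$, so flagging it without attempting to resolve it is the right call.
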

\begin{proof}
We may start with equation~\eqref{impliedbymainequation}, where we use the same notation as the proof of Theorem~\ref{ConvexConvergenceThmAppendixVer}. Recall that $0 \le F(x)-F(x^*)$ for all $x$. Therefore, we have for $Z = \{1,z+1, \dots, z\lfloor T/z \rfloor + 1 \} $ for $T \notin z \cdot \mathbb{Z}$ and $Z = \{1,z+1, \dots, z(\lfloor T/z \rfloor-1) + 1 \} $ otherwise,
\begin{equation*}
\begin{aligned}
&\sum_{t \in Z} t\left(\mathbb{E}[F(\overline{x}_t)] - F(x^*)\right) \le \sum_{t \in [T]} \left(\frac{t(t-1)}{2} \mathbb{E}[\left\|\overline{x}_t-x^{*}\right\|^2] - \frac{(t+1)t}{2}\mathbb{E}[\left\|\overline{x}_{t+1}-x^{*}\right\|^2] \right)\\
&+ \sum_{t \in [T]} \frac{t\eta_t c_t^2}{2} + \sum_{t \in [T]} \frac{t}{2\mu a}\left( (M^\alpha + B^\alpha)^2 c_t^{2-2\alpha} + 2(M^\alpha + B^\alpha) c_t^{2-\alpha}\eta_t zuL + \eta_t^2c_t^2z^2u^2L^2\right).
\end{aligned}
\end{equation*}
Noting that 
\begin{equation*}
\sum_{t \in Z} (t-1)\left(\mathbb{E}[F(\overline{x}_t)] - F(x^*)\right) \le \sum_{t \in Z} t\left(\mathbb{E}[F(\overline{x}_t)] - F(x^*)\right), 
\end{equation*}
\begin{equation*}
\frac{(T - z) T}{2z} \le \frac{z(\lceil T/z \rceil - 1)\lceil T/z \rceil}{2} \le  \frac{z(\lfloor T/z \rfloor+1)\lfloor T/z \rfloor}{2} ,
\end{equation*}
we obtain
\begin{equation*}
\mathbb{E}\left[F\left(\frac{\sum_{t \in Z} (t-1)\overline{x}_t}{\sum_{t \in Z} (t-1)}\right)\right] - F(x^*) \le \frac{(T+1)z}{(T-z)}\left(\psi_1 + \psi_2 + \psi_3 + \psi_4\right).
\end{equation*}
As before, extension to the case where the learning rate decays at each outer optimizer synchronization step is straightforward. Therefore, the asymptotic convergence rate is identical that give in Theorem~\ref{ConvexConvergenceThmAppendixVer}.
\end{proof}
In particular, we immediately deduce the following corollary.
\begin{corollary}
Let the conditions of Theorem~\ref{ConvexConvergenceThmAppendixVer} hold. Then, Avg-$L_2Clip$ converges under heavy-tailed noise with rate $\mathcal{O}(T^{-(2\alpha-2)\gamma})$ for $\gamma \in (0,1/2)$. That is, the algorithm recovers a point $\widetilde{x}_T$ which is materialized during training such that 
\begin{equation*}
 \mathbb{E}[F(\widetilde{x}_T)] - F(x^*) \lesssim \mathcal{O}(T^{-(2\alpha-2)\gamma}).
\end{equation*}
\end{corollary}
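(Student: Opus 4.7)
The plan is to specialize the materialized-average bound of Corollary~\ref{PracticalConvexConvergenceAppendix} and tune the single free exponent $\gamma \in (0,1/2)$ so that every component of the bound decays at rate $T^{-1/2}$. First, I would identify the materialized iterate explicitly: letting $Z$ denote the set of outer synchronization rounds in $[T]$, I would set
\begin{equation*}
\widetilde{x}_T \; := \; \frac{\sum_{t \in Z}(t-1)\,\overline{x}_t}{\sum_{t \in Z}(t-1)}.
\end{equation*}
Because $\overline{x}_t = \sum_i p_i x_{i,0}^t$ coincides at every $t \in Z$ with the aggregated model the outer optimizer already receives, the point $\widetilde{x}_T$ can be maintained online via a single running buffer and is genuinely accessible during training, in contrast to the virtual average $\tilde{x}_T$ of Theorem~\ref{ConvexConvergenceThmAppendixVer}.

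Next, I would directly invoke Corollary~\ref{PracticalConvexConvergenceAppendix} to obtain
\begin{equation*}
\mathbb{E}[F(\widetilde{x}_T)] - F(x^*) \;\le\; \frac{(T+1)z}{T-z}\,(\Psi_1 + \Psi_2 + \Psi_3 + \Psi_4),
\end{equation*}
and note that the prefactor $(T+1)z/(T-z) \to z$ as $T \to \infty$, which is $\mathcal{O}(1)$ with $z$ fixed. Canceling the $T(T+1)$ denominators in the formulas from Theorem~\ref{ConvexConvergenceThmAppendixVer}, the asymptotic exponents of the four terms in $T$ read
\begin{equation*}
\Psi_1 = \mathcal{O}(T^{2\gamma - 1}), \quad \Psi_2 = \mathcal{O}(T^{(2-2\alpha)\gamma - 1}), \quad \Psi_3 = \mathcal{O}(T^{(2-\alpha)\gamma - 1}), \quad \Psi_4 = \mathcal{O}(T^{2\gamma - 2}).
\end{equation*}

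I would then exploit the monotone structure of these exponents in $\gamma$: the terms $\Psi_1,\Psi_3,\Psi_4$ are non-decreasing in $\gamma$ while $\Psi_2$ is strictly decreasing (since $2-2\alpha<0$ by Assumption~\ref{assum:bounded_moment}), and among the non-decreasing terms $\Psi_1$ dominates $\Psi_3$ (because $2 > 2-\alpha$) and $\Psi_4$ (shifted by an extra factor of $T^{-1}$). Thus the binding constraint for an $\mathcal{O}(T^{-1/2})$ rate is simply $2\gamma - 1 \le -1/2$, i.e.\ $\gamma \le 1/4$. Fixing $\gamma = 1/4$, which lies strictly inside the admissible range $(0,1/2)$ so that all coefficients (including the $1/(4\gamma)$ inside $\Psi_4$) remain finite, gives $\Psi_1 = \mathcal{O}(T^{-1/2})$ as the dominant rate, with $\Psi_2,\Psi_3,\Psi_4$ all strictly faster since $\alpha > 1$. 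Combining with the $\mathcal{O}(1)$ prefactor yields $\mathbb{E}[F(\widetilde{x}_T)] - F(x^*) \lesssim \mathcal{O}(T^{-1/2})$, as required.

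The only mildly delicate step, and what I would identify as the main obstacle, is the rate-balancing across the four $\Psi_i$: one might hope to drive $\gamma \to 0^+$ to shrink $\Psi_1$ further, but the denominator $4\gamma$ in the coefficient of $\Psi_4$ rules this out, and on the other side no interior balancing with $\Psi_2$ improves on $\gamma = 1/4$ since $\Psi_2$'s exponent is already strictly below $-1$. Everything else reduces to elementary bookkeeping of polynomial exponents under the schedules prescribed by Theorem~\ref{ConvexConvergenceThmAppendixVer}, whose hypotheses ($r > 2/\mu$, $\gamma \in (0,1/2)$) are trivially preserved by the choice $\gamma = 1/4$.
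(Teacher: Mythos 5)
Your proof is correct and follows essentially the same route as the paper: invoke the materialized-average bound of Corollary~\ref{PracticalConvexConvergenceAppendix}, absorb the $\mathcal{O}(1)$ prefactor $(T+1)z/(T-z)$, and read off the polynomial rates of the $\Psi_i$ from Theorem~\ref{ConvexConvergenceThmAppendixVer}. Your explicit choice $\gamma = 1/4$ is in fact a cleaner way to land the claimed rate than the paper's informal "$\gamma \to 0^+$" language, since fixing $\gamma > 0$ is precisely what keeps the $1/(4\gamma)$ coefficient in $\Psi_4$ finite. One small imprecision in your closing discussion: the $1/(4\gamma)$ in $\Psi_4$ does not actually cap the achievable rate at $T^{-1/2}$. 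For any fixed $\gamma \in (0,1/4)$ the bound remains valid, $\Psi_1$ is still dominant, and the rate is $\mathcal{O}(T^{2\gamma - 1})$, which can be made arbitrarily close to $T^{-1}$; the obstruction from $\Psi_4$ only kicks in at $\gamma = 0$ exactly. So $\gamma = 1/4$ suffices for the corollary's $\mathcal{O}(T^{-1/2})$ statement, but it is not the binding choice your final paragraph describes.
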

\begin{proof}
The maximal rate of convergence is attained due to the dominating term $\Psi_2$. We note that in the limit $\alpha \to 1^+$, which revokes the integrability of the heavy-tailed noise, convergence is nullified.
\end{proof}

\subsection{Dynamics of Avg-$L_2Clip$ under Failing Compute Nodes}\label{AppendixSectionforConvexConvergenceCrossDevice} 

Node failures can happen in both datacenter training~\cite{yu2019distributed} or federated learning~\cite{TianReviewPaper}. Consequently, it is crucial to conduct a theoretical performance analysis of Avg-$L_2Clip$ within environments to accommodate the presence of failing compute nodes or partial participation.

In this setting, we modify Line 2 of Avg-$L_2Clip$ to sample a subset of participating nodes, $\mathcal{S} \subset [N]$, rather than selecting $S = [N]$. Additionally, normalized averaging is performed across only the participating compute nodes in Line 7. However, we assume all compute nodes remain active, as described in Algorithm~\ref{SludgeClip-SGD} below. We refer to this algorithm as $SludgeClip$ to emphasize its impracticality, despite being functionally equivalent to a variant of Avg-$L_2Clip$ aggregating over a subset of nodes. By analyzing $SludgeClip$, we are able to establish convergence of Avg-$L_2Clip$ when several datacenters or compute nodes fail to partake in training.

\begin{algorithm}
\caption{$SludgeClip$}\label{SludgeClip-SGD}
\begin{algorithmic}[1]
\REQUIRE Initial model $x_1$, 
learning rate schedule $\eta_t$, clipping schedule $c_t$ \\ Synchronization timestep $z \in \mathbb{Z}_{>0}$, projection domain $\mathcal{X}$
\FOR{$t = 1, \dots, T$} 
    \STATE Sample participating compute nodes $S \subset [N]$ according to $p_i$
    \FOR{each node $i \in [N]$}
        \STATE Draw minibatch gradient $g^{t}_{i,0} = \nabla F_i(x^{t}_{i,0}, \xi^{t}_{i,0})$
        \STATE $x^{k}_{t+1} \leftarrow  x^{k}_t - \eta_t \cdot L_2Clip(c_t,g^{t}_{i,0})$ 
    \ENDFOR
    \STATE \textbf{if }$t-1 \in z \cdot \mathbb{Z}_{\ge 0}:$ \ 
    \STATE \ \ \ \ $ x_{t+1}^k \leftarrow \operatorname{Proj}_{\mathcal{X}}\left((\sum_{i^{\prime} \in S} p_{i^{\prime}})^{-1} \sum_{i^{\prime} \in S} p_{i^{\prime}} x^{i^{\prime}}_{t+1} \right) $, for $\forall k \in [N]$
\ENDFOR
\end{algorithmic}
\end{algorithm}
\begin{theorem}\label{ConvexConvergenceThmCrossDeviceAppendixVer} Let the clipping threshold in $SludgeClip$ (Algorithm~\ref{SludgeClip-SGD}) satisfy $c_t = c\eta_t^{\gamma}$ for $c > 0$ and $1/2 > \gamma > 0$. Decay the learning rate with schedule $\eta_t = r/(t+1)$ for $r>2/\mu$. If the sampling scheme preserves the global objective\footnote{For example, $p_i = 1/N$ satisfies this condition. That is, given any selection of $p_i$ and $F_i(x)$, we may rescale the local objectives $F_i(x)$ such that $p_i = 1/N$ by controlling the influence of each local gradient update.}, that is, $$\mathbb{E}_{S} \left[\sum_{i \in [S]} p_i F_i(x)\right] = \sum_{i \in [N]} p_i F_i(x) = F(x),$$
then we have for $Z$ the set of synchronization steps up to $T$ that 
\begin{equation*}
\mathbb{E}\left[F\left(\tilde{x}_T^\prime\right)\right] - F(x^*) := \mathbb{E}\left[F\left(\frac{\sum_{t \in Z} (t-1)\overline{x}_t}{\sum_{t \in Z} (t-1)}\right)\right] - F(x^*) \le  z \cdot \mathcal{O}\left(t^{-\omega}\right),
\end{equation*}
where now $\omega$ satisfies
\begin{equation*}
\omega = \min\{1-2\gamma, 1 - (2-2\alpha)\gamma, 1-(2-\alpha)\gamma,2- 2\gamma, 2\gamma(\alpha-1)\}.
\end{equation*}
If the subsampling scheme fails to preserve the global objective (e.g., by sampling only a strict subset of avaliable nodes repeatedly), then Algorithm~\ref{SludgeClip-SGD} asymptotes toward biased minimizer points within an increasing region determined by the clipping threshold $\mathbb{E}\left[F\left(\tilde{x}_T^\prime\right)\right] - F(x^*) \lesssim \mathcal{O}(t^{2\gamma})$.
\end{theorem}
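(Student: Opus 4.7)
My plan is to directly extend the proof of Theorem~\ref{ConvexConvergenceThmAppendixVer} by exploiting the observation that $SludgeClip$ maintains the virtual average $\overline{x}_t := \sum_{i \in [N]} p_i x^t_{i,0}$ evolving identically to Avg-$L_2Clip$ during local optimization rounds; the only divergence occurs at synchronization steps, where the broadcast becomes $\hat{x}_{t+1} = (\sum_{i \in S} p_i)^{-1} \sum_{i \in S} p_i x^{i}_{t+1}$ over the sampled subset $S$ rather than $\overline{x}_{t+1}$. I would decompose
\begin{equation*}
\|\hat{x}_{t+1} - x^*\|^2 = \|\overline{x}_{t+1} - x^*\|^2 + 2\langle \hat{x}_{t+1} - \overline{x}_{t+1},\, \overline{x}_{t+1} - x^*\rangle + \|\hat{x}_{t+1} - \overline{x}_{t+1}\|^2
\end{equation*}
and take expectations over both the stochastic gradient noise and the sampling randomness. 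Under the objective-preserving condition, the sampling acts as an unbiased estimator of full participation so that the cross term vanishes in expectation, leaving only a residual sampling variance which can be controlled by the per-node drift $\eta_t c_t z$ accumulated across a local round.

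With this decomposition in hand, the remainder of the proof would replicate the structure of Theorem~\ref{ConvexConvergenceThmAppendixVer}: derive the one-step descent inequality on $\mathbb{E}[\|\overline{x}_t - x^*\|^2]$ featuring clipping bias $\mathcal{O}((M^\alpha + B^\alpha)^2 c_t^{2-2\alpha})$, local drift $\mathcal{O}(\eta_t^2 c_t^2 z^2)$, and stochastic noise $\mathcal{O}(\eta_t^2 c_t^2)$; inject the sampling variance term at each synchronization step; telescope the resulting recursion; and apply the quadratic-weighted averaging argument of Corollary~\ref{PracticalConvexConvergenceAppendix} to pass from virtual averages to the materially accessible synchronization-step weights, which surfaces the factor of $z$ in the final bound. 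The rate exponent $\omega = \min\{1-2\gamma,\, 1-(2-2\alpha)\gamma,\, 1-(2-\alpha)\gamma,\, 2-2\gamma,\, 2\gamma(\alpha-1)\}$ will then emerge by matching the slowest-decaying term among the analogues of $\Psi_1$ through $\Psi_4$ together with the new partial-participation contribution; the $2\gamma(\alpha-1)$ term in particular reflects the interaction between heavy-tailed $\alpha$-moments and the growing clipping threshold combined with the sampling variance.

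For the biased sampling case, $\mathbb{E}_S[\hat{x}_{t+1}] \neq \overline{x}_{t+1}$ and the cross term no longer vanishes; its contribution per round is instead bounded by the clipped update magnitude $c_t$. Since a single synchronization can introduce systematic displacement of order $\mathcal{O}(\eta_t c_t)$ in a biased direction, squaring and accounting for the $\eta_t^{-1}$ factor arising during the rearrangement from~\eqref{mainequation} to~\eqref{impliedbymainequation} yields an irreducible bias of order $\mathcal{O}(c_t^2) = \mathcal{O}(t^{2\gamma})$ that saturates the convergence bound and cannot be averaged out. The principal technical obstacle will be tightly controlling the joint effect of partial participation, clipping-induced bias, and heavy-tailed noise through the $\alpha$-moment bound in~\eqref{clipbiasboundwithalpha}; naive applications of Jensen's inequality over the sampling randomness can easily destroy the $\alpha$-moment sensitivity needed to produce the $2\gamma(\alpha-1)$ exponent, so a careful conditional argument alternating between the gradient-noise and sampling filtrations will be required. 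A secondary check is confirming that projection onto $\mathcal{X}$ does not amplify the sampling variance, which follows from the same non-expansiveness argument used in Theorem~\ref{ConvexConvergenceThmAppendixVer} now applied to $\hat{x}_{t+1}$ in place of $\overline{x}_t - \eta_t g_t$.
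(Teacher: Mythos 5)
Your decomposition is centered at the post-update iterate $\overline{x}_{t+1}$, whereas the paper decomposes $\|\tilde{x}_{t+1}-x^*\|^2$ around the pre-projection point $\overline{x}_t - \eta_t \overline{g}_t$ (i.e., it reuses the full-participation step with $\overline{g}_t$ in the middle and peels off the correction terms $B_2 = 2\langle \overline{x}_t - x^*, \tilde{x}_t - \overline{x}_t\rangle$, $B_3 = 2\eta_t\langle \overline{x}_t - x^*, \overline{g}_t - \tilde{g}_t\rangle$, and $B_4 = \|\tilde{x}_t - \overline{x}_t - \eta_t\tilde{g}_t\|^2$). This is not just cosmetic: in the paper only $B_2$ is eliminated by the sampling condition, while $B_3$ is \emph{not} claimed to vanish. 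Instead, $\mathbb{E}_t[\mathbb{E}_S[\tilde g_t] - \overline{g}_t]$ is bounded through the clipping-bias estimate $\le 2(M^\alpha+B^\alpha)c_t^{1-\alpha}$, and it is precisely the Young-inequality absorption of $B_3$ that generates the $B_5$ residual $\sum_t 2(M^\alpha+B^\alpha)^2 t\, c_t^{2(1-\alpha)}$, which after the $1/T(T+1)$ averaging gives the $2\gamma(\alpha-1)$ exponent in $\omega$.

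This is where your plan has a genuine gap. In your decomposition the sampling influence is concentrated entirely in a cross term that you claim vanishes and a variance term $\|\hat{x}_{t+1}-\overline{x}_{t+1}\|^2$, which by the drift bound is only $\mathcal{O}(\eta_t^2 c_t^2 z^2)$, i.e., the same order as the existing $\Psi_4$-type term and carrying no $\alpha$-moment dependence. So there is no mechanism in your scheme to produce the $2\gamma(\alpha-1)$ term — you acknowledge it must emerge from ``the interaction between heavy-tailed $\alpha$-moments and the growing clipping threshold combined with the sampling variance'' but never exhibit the step that does this, and the structure of your decomposition does not supply one. If your cross term truly vanished you would get a strictly better rate (dropping $2\gamma(\alpha-1)$ from the minimum), but the paper deliberately does not make this vanishing claim for the gradient mismatch $\overline{g}_t - \tilde{g}_t$: because clipping is a nonlinear per-node operation and $\tilde g_t$ carries the $S$-dependent renormalization $1/\sum_{i\in S}p_i$, the objective-preserving hypothesis as written does not give $\mathbb{E}_S[\tilde g_t]=\overline{g}_t$ conditionally on the noise, and the authors instead take the conservative triangle-inequality route through the bias estimate. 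You would need either to strengthen the sampling assumption enough to kill the cross term outright (and then argue the theorem's weaker bound is a corollary), or redo the expansion at time $t$ as the paper does so that $B_3$ surfaces and the $\alpha$-moment bound can be invoked on $\overline{g}_t - \tilde{g}_t$. The biased-sampling branch of your sketch ($\mathcal{O}(c_t^2)=\mathcal{O}(t^{2\gamma})$ asymptotic bias) matches the paper's $B_6$ residual in mechanism and order, and the non-expansiveness-of-projection check is a correct and necessary detail.
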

We note that convergence is not clearly guaranteed when subsampling procedures violate the global objective in expectation. Specifically, we evaluate the algorithm's output relative to $x^*$, the global optimum of the true objective $F(x)$. However, when subsampling alters the objective, the algorithm no longer optimizes for $F(x)$, thereby clearly undermining convergence toward $x^*$. We then measure the propensity of the algorithm output to $x^*$, the global optimum of the true objective $F(x)$ which is no longer the objective of the subsampled algorithm.
\begin{proof}
We first analyze the case in which the subsampling strategy preserves the correct global objective, which allows for convergence to $x^*$. Recall that $SludgeClip$-SGD was constructed to allow the analysis for non-synchronization steps to be analogous to full-participation Avg-$L_2Clip$. Therefore, we focus on outer optimizer synchronization steps while incorporating the elements of the previous analysis for Theorem~\ref{ConvexConvergenceThmAppendixVer}. We now use the following notation for subsampled averages of participating compute node devices:
\begin{equation*}
\tilde{x}_t=\frac{\sum_{i\in S}p_i x^t_{i,0}}{\sum_{i\in S}p_i}, \quad \tilde{g}_t=\frac{\sum_{i\in S} p_i \cdot Clip(c_t,\nabla F_i\left(x^t_{i,0}, \xi^t_{i,0}\right))}{\sum_{i\in S}p_i}.
\end{equation*}
For added clarity, we denote $g_t$ as $\overline{g}_t$ to indicate that normalized averages are taken over all inner compute nodes, and not solely participating nodes as in $\tilde{g}_t$. Then for $t+1$ a synchronization step, we have that 
\begin{equation*}
\begin{aligned}
\left\|\tilde{x}_{t+1}-x^{*}\right\|^2  &\le \left\|\tilde{x}_{t}-x^{*} - \eta_t \tilde{g}_t\right\|^2 = \left\| \overline{x}_{t} + (\tilde{x}_{t} -\overline{x}_{t}) -x^{*} - \eta_t \tilde{g}_t + (\eta_t \overline{g}_t - \eta_t \overline{g}_t)\right\|^2 \\
&= \left\|\overline{x}_t-x^{*}\right\|^2 + 2 \langle \overline{x}_t-x^{*}, \underbrace{\tilde{x}_{t} - \overline{x}_{t} - \eta_t \tilde{g}_t + (\eta_t \overline{g}_t - \eta_t \overline{g}_t)}_{B_1} \rangle + B_1^2
\\
& \le \left\|\overline{x}_t-x^{*}\right\|^2 \underbrace{-2 \eta_t\left\langle\overline{x}_t-x^{*}, \overline{g}_t-\nabla F(\overline{x}_t)\right\rangle}_{A_1} \underbrace{-2 \eta_t\left\langle\overline{x}_t-x^{*}, \nabla F(\overline{x}_t)\right\rangle}_{A_2}\\
&\underbrace{+2 \langle \overline{x}_t-x^{*}, \tilde{x}_{t} - \overline{x}_{t} \rangle }_{B_2}
\underbrace{+ 2\eta_t \langle \overline{x}_t-x^{*}, \overline{g}_t-\tilde{g}_t \rangle }_{B_3}
\underbrace{+\left\| \tilde{x}_{t} - \overline{x}_{t} - \eta_t \tilde{g}_t \right\|^2}_{B_4}.
\end{aligned}
\end{equation*}
In this form, the $A_i$ terms are therefore shared with the previous analysis, and $A_2$ may be bounded by $\mu$-strong convexity as before. This gives that 
\begin{equation*}
A_2 \le - \mu\eta_t \|\overline{x}_t - x^*\|^2 -2\eta_t\left(F(\overline{x}_t) - F(x^*) \right) .
\end{equation*}
$A_1$ is once again bounded under conditional expectations $\mathbb{E}_{t}[\cdot]$ by equation~\eqref{BoundingA1Term}, though with a different value of $a^\prime>0$ than in the previous proof, 
\begin{equation}\tag{\ref{BoundingA1Term}}
A_1 \le \mu a^\prime \eta_t \|\overline{x}_t-x^{*} \|^2 + \frac{\eta_t}{\mu a^\prime} ((M^\alpha + B^\alpha) c_t^{1-\alpha} + \eta_tc_tzuL)^2.
\end{equation}
Now, as $B_2$ is eliminated under expectations under subsampling, we focus on the remaining terms. It is clear that we must bound and $\|  \overline{g}_t-\tilde{g}_t \|$ to proceed. Intuitively, this is controlled by normalized averages and model drift across participating nodes. Therefore, we consider the nearest or most recent synchronization timestep $t_s(t)$ as before and rearrange to incorporate elements of our previous analysis. Assuming interchangeability between the integrals $\mathbb{E}_{S}$ (integrating over the randomness of node subsampling) and $\mathbb{E}_{t}$ (integrating over randomness of $\xi^t_{i,0}$), 
\begin{equation*}
\begin{aligned}
&\|\mathbb{E}_{t}\left[\mathbb{E}_{S}[\tilde{g}_t] - \overline{g}_t \right]\|  = \left\|\mathbb{E}_{t}\left[\mathbb{E}_{S}\left[\sum_{i\in S} \frac{p_i}{\sum_{i^{\prime}\in S}p_{i^\prime}} (Clip(c_t,\nabla F_i\left(x^t_{i,0}, \xi^t_{i,0}\right))-\nabla F_i(\overline{x}_t))\right] - (\overline{g}_t-\nabla F(\overline{x}_t))\right] \right\|\\
&= \left\|\mathbb{E}_{S}\left[\mathbb{E}_{t}\left[\sum_{i\in S} \frac{p_i}{\sum_{i^{\prime}\in S}p_{i^\prime}} (Clip(c_t,\nabla F_i\left(x^t_{i,0}, \xi^t_{i,0}\right))-\nabla F_i(\overline{x}_t, \xi^t_{i,0}))\right]\right] - \mathbb{E}_{t}\left[ \overline{g}_t-\nabla F(\overline{x}_t)\right] \right\|\\
&\le\mathbb{E}_{S} \left[\sum_{i\in S} \frac{p_i}{\sum_{i^{\prime}\in S}p_{i^\prime}} \mathbb{E}_t[\left\| Clip(c_t,\nabla F_i\left(x^t_{i,0}, \xi^t_{i,0}\right))-\nabla F_i(\overline{x}_t, \xi^t_{i,0})\right\|]\right] + \mathbb{E}_{t}[\| \overline{g}_t-\nabla F(\overline{x}_t)\|] \le 2(M^\alpha + B^\alpha) c_t^{1-\alpha}
\end{aligned}
\end{equation*}
where to obtain the final line we used Jensen and an analogous reasoning as in equation~\eqref{clipbiasboundwithalpha}.

Therefore, we have for $b >0$ that 
\begin{equation*}
\begin{aligned}
B_3 &\le b \eta_t \|\overline{x}_t - x^*\|^2 + 4\eta_t(M^\alpha + B^\alpha)^2 c_t^{2(1-\alpha)}.
\end{aligned}
\end{equation*}
It now remains to bound $B_4$, which can be done straightforwardly:
\begin{equation*}
B_4\le 2\left\| \tilde{x}_{t} - \overline{x}_{t}\right\|^2 + 2 \eta_t^2 \left\| \tilde{g}_t \right\|^2 \leq 4z^2u^2\eta_t^2c_t^2 + 2 \eta_t^2 c_t^2.
\end{equation*}
Collecting all inequalities gathered under the tower law of expectation, we have 
\begin{equation*}
\begin{aligned}
\mathbb{E}[\left\|\tilde{x}_{t+1}-x^{*}\right\|^2]  &\le (1 - ((1-a)\mu +b)\eta_t) \mathbb{E}[\left\|\overline{x}_t-x^{*}\right\|^2] -2\eta_t\mathbb{E}\left[F(\overline{x}_t) - F(x^*) \right] 
\\
& + \frac{\eta_t}{\mu a} ((M^\alpha + B^\alpha) c_t^{1-\alpha} + \eta_tc_tzuL)^2 + 4z^2u^2\eta_t^2c_t^2 + 2 \eta_t^2 c_t^2 + 4\eta_t(M^\alpha + B^\alpha)^2 c_t^{2(1-\alpha)}.
\end{aligned}
\end{equation*}
Recall the learning rate schedule $\eta_t = r/(t+1)$, while setting $a^\prime,b$ such that $r((1-a^\prime)\mu + b) = 2$. Then, we have for $Z$ the set of all synchronization steps,
\begin{equation*}
\begin{aligned}
&\sum_{t+1 \in Z} t(\mathbb{E}[F(\overline{x}_t)] - F(x^*)) \le \sum_{t+1 \in Z}\left[ \frac{t(t-1)}{2} \mathbb{E}[\left\|\overline{x}_t-x^{*}\right\|^2] - \frac{(t+1)t}{2}\mathbb{E}[\left\|\tilde{x}_{t+1}-x^{*}\right\|^2] \right]  \\
& \underbrace{+ \sum_{t+1 \in Z} 2(M^\alpha + B^\alpha)^2 t c_t^{2(1-\alpha)}}_{B_5} \underbrace{+ \sum_{t+1 \in Z}  \frac{1}{2\mu a} ((M^\alpha + B^\alpha) c_t^{1-\alpha} + \eta_tc_tzuL)^2}_{\sim \Psi_{2} + \Psi_{3} + \Psi_{4}} + \underbrace{\sum_{t+1 \in Z} t\eta_t c_t^2(2z^2u^2+1)}_{\sim \Psi_1} .
\end{aligned}
\end{equation*}
For $t +1 \notin Z$, we use the standard telescoping sum in equation~\eqref{impliedbymainequation} while noting that $\tilde{x}_{t+1} = \overline{x}_{t+1}$ due to the synchronization step. We do not repeat mechanical calculation steps here to not obscure the intuitions behind the proof, and instead indicate asympototically equivalent terms to $\Psi_i$ under $1/(T^2+T)$ averaging on the right hand side. It remains to bound the residual term $B_5$ under the averaging step, which gives
\begin{equation*}
\frac{B_5}{T(T+1)} \lesssim \mathcal{O}(t^{2\gamma(1-\alpha)}),
\end{equation*}
which concludes the proof for the first case. 

In the setting in which the subsampling procedure fails to preserve the global objective, we bound $\|\tilde{x}_t-\overline{x}_t\|$ as follows:
\begin{equation*}
\begin{aligned}
\|\tilde{x}_t - \overline{x}_t \|  &= \left\|\sum_{i \in [S]} \left(\frac{\sum_{\tilde{k} \notin [S]} p_{\tilde{k}}}{\sum_{i^\prime \in [S]} p_{i^\prime}}\right)p_i x^t_{i,0}  - \sum_{i \notin [S]} p_i x^t_{i,0} \right\| \\
&\le \sum_{i \in [S]} \left(\frac{\sum_{\tilde{k} \notin [S]}p_{\tilde{k}}}{\sum_{i^\prime \in [S]} p_{i^\prime}}\right)p_i \| x^t_{i,0} -\overline{x}_{t_s}\| + \sum_{i \notin [S]} p_i \| x^t_{i,0} - \overline{x}_{t_s} \| \le 2zu\eta_t c_t,
\end{aligned}
\end{equation*}
due to triangle inequality and Jensen. That is, by the synchronization step, we have $x_{t_s}^k = \overline{x}_{t_s}$, $\forall k \in [N]$ via to full available node activation in $SludgeClip$. This gives
\begin{equation*}
\| x^t_{i,0} -\overline{x}_{t_s}\| = \left\| x^t_{i,0} + \sum_{t^{\prime}=t_s+1}^{t-1} (-x_{t^{\prime}}^{k} + x_{t^{\prime}}^{k}) -\overline{x}_{t_s} \right\| \le\sum_{t^{\prime}=t_s+1}^{t-1} \|x_{t^{\prime}}^k - x_{t^{\prime}-1}^k \| \le zu\eta_tc_t
\end{equation*}
as in equation~\eqref{boundedclientdrift}. 
Similarly, we have by Jensen and convexity of the norm that
\begin{equation*}
\|\tilde{g}_t - \overline{g}_t \| \le 2 c_t.
\end{equation*}
Therefore, we obtain for $b_1, b_2 >0$  
\begin{equation*}
\begin{aligned}
B_2 &\le b_1 \eta_t\|\overline{x}_t - x^*\|^2 + \frac{1}{b_1 \eta_t} \|\tilde{x}_t - \overline{x}_t \|^2 \le b_1 \eta_t\|\overline{x}_t - x^*\|^2 + \frac{2z^2u^2c_t^2\eta_t}{b_1}, \\
B_3 &\le b_2 \eta_t \|\overline{x}_t - x^*\|^2 + 4\eta_t c_t^2.
\end{aligned}
\end{equation*}
Following analogous calculations as in the case where the subsampling does not violate the global objective, we arrive at a new residual term
\begin{equation*}
\frac{B_6}{T(T+1)} \lesssim \mathcal{O}(t^{2\gamma}),
\end{equation*}
which controls the expansion of the bias due to the incorrect sampling strategy.
\end{proof}

\subsection{Convergence of $Bi^2Clip$}\label{BiSquareAppendix}

In this section, we analyze the convergence of $Bi^2Clip$ under heavy-tailed noise. By employing $BiClip$ at both the inner and outer optimizers, $Bi^2Clip$ is an efficient algorithm realized by TailOPT that brings about benefits of adaptivity without maintaining gradient (or model update) statistics. Unlike $Adam^2$, which applies Adam at both the inner and outer optimizers, $Bi^2Clip$ achieves comparable or even superior empirical performance while requiring no additional memory or computational overhead (Table~\ref{tab:glue-results}). This highlights its efficiency and practicality, particularly in resource-constrained settings. We begin with the pseudocode for $Bi^2Clip$ in Algorithm~\ref{BiSquare_Clip_SGD}.

\begin{algorithm}
\caption{$Bi^2Clip$}\label{BiSquare_Clip_SGD}
\begin{algorithmic}[1]
\REQUIRE Initial model $x_1$, 
learning rate schedule $\eta_t$, clipping schedules $u_t,$ $d_t$, $\widetilde{u}_t,$ $\widetilde{d}_t$ \\ Synchronization timestep $z \in \mathbb{Z}_{>0}$
\FOR{$t = 1, \dots, T$} 
    \FOR{each node $i \in [N]$ in parallel}
    \STATE $x_{i,0}^t \leftarrow x_t$
    \FOR{each local step $k \in [z]$}
        \STATE Draw minibatch gradient $g^t_{i,k} = \nabla F_i(x^t_{i,k}, \xi^t_{i,k})$
        \STATE $x^{t}_{i,k+1} \leftarrow  x^t_{i,k} - \eta_t \cdot BiClip(u_t,d_t,g^t_{i,k})$ 
        \ENDFOR
    \ENDFOR
    \STATE $\Delta_t = \frac{1}{N} \sum_{i \in [N]} \left(x_{i,z}^{t} - x_{t-1}\right)$, \quad $\widetilde{m}_t \leftarrow \Delta_t$
    \STATE $x_{t} = x_{t-1} + \eta BiClip(\widetilde{u}_t,\widetilde{d}_t,\widetilde{m}_t)$
\ENDFOR
\end{algorithmic}
\end{algorithm}

We carry out the analysis over a sufficiently large, compact domain $\mathcal{X}$. Let $\nabla F(x)$ be the deterministic gradient, obtained by integrating over $\nabla F(x,\xi)$, the stochastic gradient with a heavy-tailed distribution. The existence of $\nabla F(x)$ implies $F(x)$ is continuous, which gives boundedness via the extremal value theorem. Therefore, from now onward, we formally assume $\nabla F(x)$ is coordinatewise bounded by $G$ in absolute value. We have the following theorem. 

\begin{theorem}
Let Assumptions 1-2 hold, and the learning rate and clipping schedules satisfy $\eta_t = \Theta(t^\omega)$, $\eta_\ell^t = \Theta(t^\nu)$, $d_t = \Theta(t^\gamma)$, $u_t = \Theta(t^\zeta)$, $\widetilde{d}_t = \Theta(t^{\widetilde{\gamma}})$, and $u_t = \Theta(t^{\widetilde{\zeta}})$. Imposing $\zeta,\widetilde{\zeta} > 0 > \gamma, \widetilde{\gamma}$, for $\omega, \nu \le 0$, as well as $-1< \omega + \nu$, for $Bi^2Clip$ (Algorithm~\ref{BiSquare_Clip_SGD}), we have that 
\begin{equation*}
\begin{aligned}
\min_{t \in [T]} \mathbb{E}[\|\nabla F(x_{t-1})\|^2] \lesssim \Psi_1 + \Psi_2 + \Psi_3 + \Psi_4 + \Psi_5 + \Psi_6 + \Psi_7,
\end{aligned}
\end{equation*}
where the $\Psi_i$ are given
\begin{equation*}
\begin{aligned}
&\Psi_1 = \mathcal{O}\left( T^{-\omega - \nu - 1}\right), \quad \Psi_2 = \mathcal{O}\left(T^{\omega + 2\widetilde{\zeta} - \nu} \right),\quad \Psi_3 = \mathcal{O}\left(T^{\widetilde{\gamma} - \nu}  \right),\quad \Psi_4 = \mathcal{O}\left(T^\gamma\right),\\
&\quad \Psi_5 = \mathcal{O}\left( T^{(\alpha-1)\nu + (1-\alpha)\widetilde{\zeta}} \right),\quad \Psi_6 = \mathcal{O}\left(T^{(1-\alpha)\zeta}  \right),\quad \Psi_7 = \mathcal{O}\left( T^{\nu + \zeta} \right).
\end{aligned}
\end{equation*}
\end{theorem}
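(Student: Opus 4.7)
The plan is to follow the same high-level blueprint as the proof of Theorem~\ref{RMS_Clip_SGD_Conv_Thm} (and of Avg-$L_2Clip$ in Appendix~\ref{AppendixSectionforConvexConvergenceCrossSilo}), but now with a two-level $BiClip$ decomposition. First I would apply $L$-smoothness to the outer update $x_t - x_{t-1} = \eta_t\, BiClip(\widetilde{u}_t,\widetilde{d}_t,\Delta_t)$ to obtain a one-step descent inequality of the form
\begin{equation*}
\mathbb{E}[F(x_t)] \le \mathbb{E}[F(x_{t-1})] - \eta_t\, \mathbb{E}\langle \nabla F(x_{t-1}), BiClip(\widetilde{u}_t,\widetilde{d}_t,\Delta_t)\rangle + \tfrac{L\eta_t^2}{2}\,\mathbb{E}\|BiClip(\widetilde{u}_t,\widetilde{d}_t,\Delta_t)\|^2.
\end{equation*}
Since $|BiClip(\widetilde{u}_t,\widetilde{d}_t,\cdot)_j| \le \widetilde{u}_t$ coordinate-wise, the quadratic term is bounded by $L d \eta_t^2 \widetilde{u}_t^2/2$, and after dividing the eventual telescoped bound by $\sum_t \eta_t \eta_\ell^t = \Theta(T^{\omega+\nu+1})$, this will produce $\Psi_2 = \mathcal{O}(T^{\omega+2\widetilde{\zeta}-\nu})$, while the initial gap $F(x_0)-F^\star$ divided by the same sum produces $\Psi_1$.

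The crucial step is lower-bounding the inner product $\langle \nabla F(x_{t-1}), BiClip(\widetilde{u}_t,\widetilde{d}_t,\Delta_t)\rangle$ by a positive multiple of $\eta_\ell^t \|\nabla F(x_{t-1})\|^2$ plus residuals. I would write $\Delta_t = -\eta_\ell^t \sum_{k=0}^{z-1} \tfrac{1}{N}\sum_i BiClip(u_t,d_t,g^t_{i,k})$ and further decompose $BiClip(u_t,d_t,g^t_{i,k}) = g^t_{i,k} + E^{\mathrm{in,up}}_{i,k} + E^{\mathrm{in,dn}}_{i,k}$, where $E^{\mathrm{in,up}}$ is the upper-clipping deficit (controlled using Assumption~\ref{assum:bounded_moment} exactly as in equation~\eqref{clipbiasboundwithalpha}, giving $\|\mathbb{E}[E^{\mathrm{in,up}}]\| \lesssim u_t^{1-\alpha}$) and $E^{\mathrm{in,dn}}$ is the lower-thresholding excess (deterministically bounded by $d_t$ coordinate-wise). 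Feeding this into the outer $BiClip$ I apply an analogous coordinate-wise decomposition of the outer operator around its argument: the outer lower threshold contributes a bias of order $\widetilde{d}_t$, which after division by $\eta_\ell^t$ yields $\Psi_3=\mathcal{O}(T^{\widetilde{\gamma}-\nu})$, while the outer upper threshold removes mass of the pseudogradient and its bias is controlled by an $\alpha$-moment bound on $\Delta_t/\eta_\ell^t$ inherited from the noise, producing $\Psi_5=\mathcal{O}(T^{(\alpha-1)(\nu-\widetilde{\zeta})})$. The inner lower and upper biases propagate through the outer operator (which is $1$-Lipschitz coordinate-wise) to give $\Psi_4=\mathcal{O}(T^\gamma)$ and $\Psi_6=\mathcal{O}(T^{(1-\alpha)\zeta})$ respectively.

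Client drift is handled as in equation~\eqref{boundedclientdrift}: because each inner step moves a coordinate by at most $\eta_\ell^t u_t$, we have $\|x^t_{i,k}-x_{t-1}\|\le z \eta_\ell^t u_t$, so replacing $\nabla F_i(x^t_{i,k})$ by $\nabla F(x_{t-1})$ in the inner product introduces via $L$-smoothness a term of order $\eta_\ell^t u_t$ (after the appropriate Young-type splitting with the $\|\nabla F(x_{t-1})\|^2$ we are trying to keep), which telescopes into $\Psi_7=\mathcal{O}(T^{\nu+\zeta})$. Combining all residuals, rearranging to put a positive multiple of $\eta_t \eta_\ell^t \|\nabla F(x_{t-1})\|^2$ on the left, telescoping across $t\in[T]$, taking expectations, dividing by $\sum_t \eta_t \eta_\ell^t$, and replacing the average by the minimum will give the claimed bound. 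The hyperparameter conditions $\omega+\nu>-1$, $\nu+\zeta<0$, and $\max\{\omega+2\widetilde{\zeta},\widetilde{\gamma}\}<\nu$ are exactly what is needed to ensure that each $\Psi_i$ is $o(1)$ and that the descent coefficient on $\|\nabla F\|^2$ stays positive.

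The main obstacle is the joint control of the two-level clipping bias in the presence of heavy-tailed noise with only an $\alpha$-moment for $\alpha\in(1,2)$: the inner upper-clip bias $u_t^{1-\alpha}$ must not be re-amplified when passing through the outer operator, while the outer $BiClip$ must itself be analyzed against a pseudogradient $\Delta_t$ whose $\alpha$-moment is only inherited from the raw noise up to the inner learning-rate scaling $\eta_\ell^t$. Getting the exponent on $\Psi_5$ to come out as $(\alpha-1)\nu+(1-\alpha)\widetilde{\zeta}$ rather than something worse requires doing the $\alpha$-moment estimate on $\Delta_t/\eta_\ell^t$ \emph{after} normalization, and it is in this step that the condition $\nu+\zeta<0$ and the structural fact that $\Delta_t$ concentrates around $-\eta_\ell^t \nabla F$ up to the drift $\Psi_7$ are both essential.
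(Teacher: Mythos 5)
Your proposal follows essentially the same blueprint as the paper's proof in Appendix~\ref{BiSquareAppendix}: apply $L$-smoothness to the outer update, isolate a negative multiple of $\eta_t\eta_\ell^t\|\nabla F(x_{t-1})\|^2$ from the descent inner product, bound the residuals (outer upper/lower $BiClip$ bias, inner upper/lower $BiClip$ bias, and client drift), telescope, and divide by $\sum_t \eta_t\eta_\ell^t = \Theta(T^{\omega+\nu+1})$. All seven exponents you identify coincide with the paper's $\Psi_1,\dots,\Psi_7$ (noting $(\alpha-1)(\nu-\widetilde{\zeta}) = (\alpha-1)\nu + (1-\alpha)\widetilde{\zeta}$), and your observation that the $\alpha$-moment estimate must be done on $\Delta_t/\eta_\ell^t$ is exactly the device the paper uses via $\mathbb{E}_t[\|\Delta_t\|^\alpha] \le (\eta_\ell^t)^\alpha \widetilde{M}$. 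The only cosmetic differences are that you work coordinate-wise from the start (picking up a benign $\sqrt{d}$) whereas the paper proves the $L_2$-clip version and defers the coordinate-wise variant to Remark~\ref{remark}, and that you phrase the decomposition as inserting explicit error operators $E^{\mathrm{in,up}}, E^{\mathrm{in,dn}}$, whereas the paper organizes the same bookkeeping as three terms $B_1,B_2,B_3$ and handles the upper/lower clipping events with indicator functions; these lead to the same bounds. One further small point: for the client-drift term $\Psi_7$ the paper does not need Young's inequality against $\|\nabla F\|^2$ — it simply uses the compact-domain bound $\|\nabla F(x_{t-1})\| \le G$ and Cauchy--Schwarz — but your Young-type route gives the identical rate, so this is also a matter of taste rather than a gap.
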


\begin{proof}
We provide the proof for $L_2$-wise $BiClip(\cdot)$ for illustrative purposes and notational convenience. The extension to coordinate-wise $BiClip(\cdot)$ is straightforward as described in the comments following the proof of Theorem~\ref{ServerAdagradConvergenceTheroem}, Remark~\ref{remark}. For completeness and readability, we formally provide the definition of $L_2$-wise $BiClip(\cdot)$ as
\begin{equation*}
\begin{aligned}
&BiClip(u_t,d_t,x) = x \cdot \frac{d_t}{\|x\|} \ \chi\left(\|x\|\le d_t\right)  + x \cdot \frac{u_t}{\|x\|} \ \chi\left(\|x\|\ge u_t\right) + x \cdot \chi\left(d_t<\|x\|< u_t\right).
\end{aligned}
\end{equation*}
Here, $\chi$ is the indicator function, and $u_t \ge d_t \ge 0$ are the clipping thresholds. By default, we take $a/0 := 0$ for $\forall a \in \mathbb{R}$. Now, we begin by noting that due to $L$-smoothness, we have where $\mathbb{E}_{t}[ \ \cdot \ ]$ takes expectation up to $x_{t-1}$ that
\begin{equation*}
\begin{aligned}
\mathbb{E}_{t}[F(x_t)] - F(x_{t-1}) &\le  \left\langle \nabla F(x_{t-1}), \mathbb{E}_{t}[x_{t} - x_{t-1}] \right\rangle + \frac{L}{2} \mathbb{E}_{t}[\left\| x_{t} - x_{t-1} \right\|^2] \\
&\le \eta_t \underbrace{\left\langle \nabla F(x_{t-1}), -\mathbb{E}_{t}[BiClip(\widetilde{u}_t,\widetilde{d}_t,-\Delta_t)] \right\rangle}_{A_1} + \frac{L\eta_t^2}{2} \mathbb{E}_{t}\left[\left\| BiClip(\widetilde{u}_t,\widetilde{d}_t,\Delta_t) \right\|^2\right].
\end{aligned}
\end{equation*}
Now, we expand to obtain the following form 
\begin{equation*}
\begin{aligned}
A_1 &= -\left\langle \nabla F(x_{t-1}), \mathbb{E}_{t}[BiClip(\widetilde{u}_t,\widetilde{d}_t,-\Delta_t) \pm \Delta_t ]\mp \eta_\ell^t \sum_{i \in [N]}\sum_{\nu \in [K]-1} p_i \mathbb{E}_{t}[\nabla F_{i}(x_{i,\nu}^t)] \mp K\eta_\ell^t \nabla F(x_{t-1}) \right\rangle\\
&=\underbrace{-\left\langle \nabla F(x_{t-1}), \mathbb{E}_{t}[BiClip(\widetilde{u}_t,\widetilde{d}_t,-\Delta_t) + \Delta_t]  \right\rangle}_{B_1}  \underbrace{-\left\langle \nabla F(x_{t-1}),-\eta_\ell^t \sum_{i \in [N]}\sum_{\nu \in [K]-1} p_i \mathbb{E}_{t}[\nabla F_{i}(x_{i,\nu}^t)] - \mathbb{E}_{t}[\Delta_t] \right\rangle}_{B_2} \\
&\underbrace{-\left\langle \nabla F(x_{t-1}), \eta_\ell^t \sum_{i \in [N]}\sum_{\nu \in [K]-1} p_i \mathbb{E}_{t}[\nabla F_{i}(x_{i,\nu}^t)] - K\eta_\ell^t \nabla F(x_{t-1}) \right\rangle}_{B_3} - K\eta_\ell^t \|\nabla F(x_{t-1})\|^2.
\end{aligned}
\end{equation*}
Using the convexity of compositions (via $\alpha \ge 1$) and Jensen, we deduce
\begin{equation*}
\begin{aligned}
\mathbb{E}_{t}[\|\Delta_t\|^\alpha] &=  \mathbb{E}_{t}[\|\eta_\ell^t \sum_{i \in [N]}\sum_{\nu \in [K]-1} p_i \cdot BiClip(u_t,d_t,\nabla F_{i}(x_{i,\nu}^t,\xi_{i,\nu}^t))\|^\alpha] \\
&\le (\eta_\ell^t)^\alpha K^\alpha \mathbb{E}_{t}\left[\left\|\frac{1}{K}\cdot \sum_{i \in [N]}\sum_{\nu \in [K]-1} p_i \cdot BiClip(u_t,d_t,\nabla F_{i}(x_{i,\nu}^t,\xi_{i,\nu}^t))\right\|^\alpha\right] \\
&\le (\eta_\ell^t)^\alpha K^{\alpha-1} \sum_{i \in [N]}\sum_{\nu \in [K]-1} p_i \mathbb{E}_{t}[\|BiClip(u_t,d_t,\nabla F_{i}(x_{i,\nu}^t,\xi_{i,\nu}^t))\|^\alpha] \\
&\le (\eta_\ell^t)^\alpha K^{\alpha-1} \sum_{i \in [N]}\sum_{\nu \in [K]-1} p_i (d_t^\alpha + \mathbb{E}_{t}[\|\nabla F_{i}(x_{i,\nu}^t,\xi_{i,\nu}^t)\|^\alpha])
\\
&\le (\eta_\ell^t)^\alpha K^{\alpha-1} \sum_{i \in [N]}\sum_{\nu \in [K]-1} p_i d_t^\alpha + \underbrace{(\eta_\ell^t)^\alpha K^{\alpha-1} \sum_{i \in [N]}\sum_{\nu \in [K]-1} p_i \mathbb{E}_{t}[\|\nabla F_{i}(x_{i,\nu}^t,\xi_{i,\nu}^t)\|^\alpha]}_{C}.
\end{aligned}
\end{equation*}
Note that the term $C$ can be bounded as
\begin{equation*}
\begin{aligned}
    C &\le (\eta_\ell^t)^\alpha K^{\alpha-1} \sum_{i \in [N]}\sum_{\nu \in [K]-1} p_i 2^\alpha \mathbb{E}_{t}\left[\frac{\left\|\nabla F_i(x_{i,v}^t)\right\|^\alpha}{2} + \frac{\left\|\xi_{i,v}^t\right\|^\alpha}{2}\right]\\
    &\le (\eta_\ell^t)^\alpha K^{\alpha-1} \sum_{i \in [N]}\sum_{\nu \in [K]-1} p_i 2^{\alpha - 1} (M^\alpha + B^\alpha) = (\eta_\ell^t)^\alpha K^{\alpha-1} \sum_{\nu \in [K]-1} 2^{\alpha - 1} (M^\alpha + B^\alpha),
\end{aligned}
\end{equation*}
where $M:= \max_{x\in\mathcal{X}, i \in [N]} \|\nabla F_i (x)\|$ and $B^{\alpha}:= \max_{i\in[N], \ \nu\in[K]-1}\mathbb{E}_{t}[\|\xi_{i,v}^t\|^\alpha] \le \sup_{i \in [N]} (B_{i})^{\alpha_i}$. We note that this results holds also under distribution shift for the stochastic noise $\xi_i^t$, where $t \in [T]$ and $i \in [N]$, as long as the $\alpha$-moment remains universally bounded. Therefore, we conclude
\begin{equation*}
\mathbb{E}_{t}[\|\Delta_t\|^\alpha] \le (\eta_\ell^t)^\alpha K^{\alpha-1} \sum_{\nu \in [K]-1} d_t^\alpha + (\eta_\ell^t)^\alpha K^{\alpha-1}2^{\alpha - 1} \sum_{\nu \in [K]-1}  (M^\alpha + B^\alpha) =: (\eta_\ell^t)^\alpha \widetilde{M}.
\end{equation*}
This gives by the Cauchy-Schwartz inequality that 
\begin{equation*}
\begin{aligned}
B_1 &\le \| \nabla F(x_{t-1})\| \|\mathbb{E}_{t}[BiClip(\widetilde{u}_t,\widetilde{d}_t,-\Delta_t)] + \Delta_t \|\\
&\le G \cdot \mathbb{E}_{t}[\chi (\|\Delta_t\| \le \widetilde{d}_t ) \ \widetilde{d}_t + \chi \left(\widetilde{u}_t\le \|\Delta_t\| \right) \|\Delta_t\|^{\alpha} \|\Delta_t\|^{1-\alpha} ]\\
&\le G \left[\mathbb{P} (\|\Delta_t\| \le \widetilde{d}_t ) \ \widetilde{d}_t + \mathbb{P} \left(\widetilde{u}_t\le \|\Delta_t\| \right) (\eta_\ell^t)^{\alpha} \widetilde{u}_t^{1-\alpha} \widetilde{M} \right].
\end{aligned}
\end{equation*}
Now, $B_2$ may be bounded as follows:
\begin{equation*}
\begin{aligned}
B_2 &\le G \left\| \eta_\ell^t \sum_{i \in [N]}\sum_{\nu \in [K]-1} p_i \mathbb{E}_{t}[\nabla F_{i}(x_{i,\nu}^t)] + \mathbb{E}_{t}[\Delta_t] \right\| \\
&= G \left\| \mathbb{E}_{t}[\eta_\ell^t\sum_{i \in [N]}\sum_{\nu \in [K]-1} p_i \nabla F_{i}(x_{i,\nu}^t, \xi_{i,\nu}^t) + \Delta_t] \right\|\\ 
&\le G \mathbb{E}_{t} \left[\left\| \eta_\ell^t \sum_{i \in [N]}\sum_{\nu \in [K]-1} p_i \nabla F_{i}(x_{i,\nu}^t, \xi_{i,\nu}^t) + \Delta_t \right\| \right], 
\end{aligned}
\end{equation*}
where we used convexity, Jensen, and that the stochastic gradient noise is unbiased. Unraveling the definition of the pseudogradient $\Delta_t$ gives
\begin{equation*}
\begin{aligned}
    B_2 &\le G \eta_\ell^t \mathbb{E}_{t} \left[\left\| \sum_{i \in [N]}\sum_{\nu \in [K]-1} p_i \nabla F_{i}(x_{i,\nu}^t, \xi_{i,\nu}^t) - \sum_{i \in [N]}\sum_{\nu \in [K]-1} p_i BiClip(u_t,d_t,\nabla F_{i}(x_{i,\nu}^t, \xi_{i,\nu}^t)) \right\| \right] \\
    &\le G \eta_\ell^t \sum_{i \in [N]}\sum_{\nu \in [K]-1} p_i \mathbb{E}_{t} \left[\left\|  \nabla F_{i}(x_{i,\nu}^t, \xi_{i,\nu}^t) - BiClip(u_t,d_t,\nabla F_{i}(x_{i,\nu}^t, \xi_{i,\nu}^t)) \right\| \right] \\
    &\le G \eta_\ell^t \sum_{i \in [N]}\sum_{\nu \in [K]-1} p_i \left[d_t \mathbb{P}(\| \nabla F_{i}(x_{i,\nu}^t, \xi_{i,\nu}^t)\| \le d_t) + \mathbb{P} (\|\nabla F_{i}(x_{i,\nu}^t, \xi_{i,\nu}^t) \| \ge u_t) u_t^{1-\alpha} 2^{\alpha - 1} (M^\alpha + B^\alpha) \right] \\
    &\le G \eta_\ell^t \sum_{\nu \in [K]-1} \left[d_t +  u_t^{1-\alpha} 2^{\alpha - 1} (M^\alpha + B^\alpha) \right].
\end{aligned}
\end{equation*}
Additionally, $B_3$ may be bounded via $L$-smoothness and telescoping:
\begin{equation*}
\begin{aligned}
B_3 & \le \eta_\ell^t G \left\| \sum_{i \in [N]}\sum_{\nu \in [K]-1} p_i \nabla F_{i}(x_{i,\nu}^t) - K \nabla F(x_{t-1}) \right\|\\
& \le \eta_\ell^t G \left\| \sum_{i \in [N]}\sum_{\nu \in [K]-1} p_i \nabla F_{i}(x_{i,\nu}^t) - \sum_{i \in [N]} \sum_{\nu \in [K]-1} p_i \nabla F_i(x_{i,0}^{t}) \right\| \\
&\le \eta_\ell^t G \sum_{i \in [N]}\sum_{\nu \in [K]-1} p_i L \|x_{i,\nu}^t -x_{i,0}^t\| \\
&\le \eta_\ell^t G \sum_{i \in [N]}\sum_{\nu \in [K]-1} p_i L \left\|x_{i,\nu}^t + \sum_{r=1}^{v-1} (x_{i,r}^t-x_{i,r}^t) -x_{i,0}^t\right\| \\
&\le \eta_\ell^t G L \sum_{i \in [N]}p_i\cdot \left(\sum_{\nu \in [K]-1}  \sum_{r=1}^{v-1}  \|x_{i,r}^t -x_{i,r-1}^t\|\right) \le \frac{(\eta_\ell^t)^2 G L K^2 u_t}{2}.
\end{aligned}
\end{equation*}
Collecting all inequalities gathered thus far, we have
\begin{equation*}
\begin{aligned}
\mathbb{E}_{t}[F(x_t)] - F(x_{t-1}) &\le \frac{L\eta_t^2 \widetilde{u}_t^2}{2} - K\eta_\ell^t\eta_t \|\nabla F(x_{t-1})\|^2 + G\eta_t \widetilde{d}_t + G \eta_t(\eta_\ell^t)^{\alpha} \widetilde{u}_t^{1-\alpha} \widetilde{M}\\
&+ G \eta_\ell^t \eta_t \sum_{\nu \in [K]-1} \left[d_t +  u_t^{1-\alpha} 2^{\alpha - 1} (M^\alpha + B^\alpha) \right] + \frac{\eta_t(\eta_\ell^t)^2 G L K^2 u_t}{2}.\\
\end{aligned}
\end{equation*}
Telescoping under the law of iterated expectations gives
\begin{equation*}
\begin{aligned}
\sum_{t \in [T]} K\eta_\ell^t\eta_t \mathbb{E}[\|\nabla F(x_{t-1})\|^2]  &\le F(x_0) - \mathbb{E}[F(x_{T})] + \sum_{t \in [T]}\left(\frac{L\eta_t^2 \widetilde{u}_t^2}{2}  + G\eta_t \widetilde{d}_t + G \eta_t(\eta_\ell^t)^{\alpha} \widetilde{u}_t^{1-\alpha} \widetilde{M}\right)\\
&+ G \sum_{t \in [T]}\eta_\ell^t \eta_t \sum_{\nu \in [K]-1} \left[d_t +  u_t^{1-\alpha} 2^{\alpha - 1} (M^\alpha + B^\alpha) \right] + \sum_{t \in [T]}\frac{\eta_t(\eta_\ell^t)^2 G L K^2 u_t}{2}.\\
\end{aligned}
\end{equation*}
Now, we move to the asymptotic regime. Let $\eta_t = \Theta(t^\omega)$, $\eta_\ell^t = \Theta(t^\nu)$, $d_t = \Theta(t^\gamma)$, $u_t = \Theta(t^\zeta)$, $\widetilde{d}_t = \Theta(t^{\widetilde{\gamma}})$, and $\widetilde{u}_t = \Theta(t^{\widetilde{\zeta}})$. This gives after routine calculations that
\begin{equation*}
\begin{aligned}
\min_{t \in [T]} \mathbb{E}[\|\nabla F(x_{t-1})\|^2] \lesssim \mathcal{O}\left(T^{-\omega - \nu - 1} + T^{\omega + 2\widetilde{\zeta} - \nu} +T^{\widetilde{\gamma} - \nu} + T^{(\alpha-1)\nu + (1-\alpha)\widetilde{\zeta}} + T^\gamma + T^{(1-\alpha)\zeta} + T^{\nu + \zeta} \right).
\end{aligned}
\end{equation*}
To attain convergence of the RHS, it is clear that we must impose $\zeta,\widetilde{\zeta} > 0 > \gamma, \widetilde{\gamma}$, for $\omega, \nu \le 0$. Additionally, we have further constrained $-1< \omega + \nu$, which ensures that the LHS diverges at a scale faster than logarithmic, validating the asymptotic regime and concluding the proof. To obtain the rate of convergence, we may let 
\begin{equation}
\begin{array}{lll}
r_1=\omega+\nu+1, & r_2=-\omega-2 \tilde{\zeta}+\nu, & \\
r_3=\nu-\widetilde{\gamma}, & r_4=(\alpha-1)(\tilde{\zeta}-\nu), & \\
r_5=-\gamma, & r_6=(\alpha-1) \zeta, \quad r_7=-\nu-\zeta .
\end{array}
\end{equation}
for $\sigma=\min \left\{r_1, \ldots, r_7\right\}$ and 
$$
\min _{t \in [T]} \mathbb{E}\left\|\nabla F\left(x_{t-1}\right)\right\|^2=\mathcal{O}\left(T^{-\sigma}\right) .
$$
Equalizing $r_6$ and $r_7$ gives that $\zeta=-\nu/\alpha$ as $\alpha>1, \nu<0, \zeta>0$. Similarly, equalizing $r_1$ and $r_2$ gives $\omega=-(1+2 \tilde{\zeta})/2$, yielding $r_1=r_2=1/2+\nu-\tilde{\zeta}$. By letting $\widetilde{\zeta} \to 0^+$, 
$$
\frac{1}{2}+\nu-\tilde{\zeta}=(\alpha-1) \frac{-\nu}{\alpha} \xrightarrow{\tilde{\zeta} \rightarrow 0^{+}} \nu=-\frac{\alpha}{4 \alpha-2}, \quad \zeta=\frac{1}{4 \alpha-2}, \quad \omega=-\frac{1}{2}.
$$
For suitably small $\gamma, \widetilde{\gamma}$, we therefore have
$$
\sigma=\frac{\alpha-1}{4 \alpha-2},
$$
concluding the proof. 
\end{proof}

\begin{remark}
We note that setting $\widetilde{d}_t = 0$, $\widetilde{u}_t = \infty$, and $\eta_t = 1$ at the outer optimizer  recovers a special case of $Bi^2Clip$, i.e., Avg-$BiClip$. Similarly, for specific hyperparameter choices, $Bi^2Clip$ collapses into $BiClip$-SGD, with upper and lower thresholding applied by the outer optimizers only to accumulated model updates from the inner compute nodes.
\end{remark}



Now, in the following subsections, we further analyze the convergence behavior of TailOPT under additional varying adaptive optimizer instantiations. The Adagrad instantiation (Algorithm~\ref{Adagrad_BiClip_SGD}) collects pseudogradients and sums their squares, effectively implementing a form of implicit clipping. However, it aggressively decays coordinate-wise learning rates, which can limit performance. To address this, we introduce RMSProp-$TailClip$ (Algorithm~\ref{RMS_BiClip_SGD}), which relaxes the preconditioning by employing an exponentially decaying moving average of the second moment. In both cases, we prove that the minimum expected gradient converges to $0$. Additionally, by incorporating a moving average of the first pseudogradient moment as a form of momentum, we derive Algorithm~\ref{Adam_BiClip_SGD}. For this variant, we show that the expected minimal gradient does not diverge even under restarting of the algorithm, which in practice translates to the update of any singular step not diverging in expectation. As in the main paper, $TailClip$ refers to either $BiClip$ or $L_2Clip$, and we provide our proofs for $BiClip$ for added generality over $L_2Clip$.

\subsection{Convergence of Adagrad-$TailClip$}\label{AdaGrad_BiClip_Appendix}

We begin by providing the pseudocode of Adagrad-$TailClip$ (Algorithm~\ref{Adagrad_BiClip_SGD}). Then, we have the following result.

\begin{algorithm}
\caption{Adagrad-$TailClip$}\label{Adagrad_BiClip_SGD}
\begin{algorithmic}[1]
\REQUIRE Initial model $x_1$, 
learning rate schedule $\eta_t$, clipping schedules $u_t,$ $d_t$ \\ Synchronization timestep $z \in \mathbb{Z}_{>0}$, adaptivity parameter $\tau > 0$
\FOR{$t = 1, \dots, T$} 
    \FOR{each node $i \in [N]$ in parallel}
    \STATE $x_{i,0}^t \leftarrow x_t$
    \FOR{each local step $k \in [z]$}
        \STATE Draw minibatch gradient $g^t_{i,k} = \nabla F_i(x^t_{i,k}, \xi^t_{i,k})$
        \STATE $x^{t}_{i,k+1} \leftarrow  x^t_{i,k} - \eta_t \cdot TailClip(u_t,d_t,g^t_{i,k})$ 
        \ENDFOR
    \ENDFOR
    \STATE $\Delta_t = \frac{1}{N} \sum_{i \in [N]} \left(x_{i,z}^{t} - x_{t-1}\right)$, \quad $\widetilde{m}_t \leftarrow \Delta_t$
    \STATE $\widetilde{v}_t = \widetilde{v}_{t-1} + \Delta_t^2$ 
    \STATE $x_{t} = x_{t-1} + \eta \frac{\widetilde{m}_t}{\sqrt{\widetilde{v}_t} + \tau}$
\ENDFOR
\end{algorithmic}
\end{algorithm}

\begin{theorem}~\label{ServerAdagradConvergenceTheroem}
Let the clipping and learning rate thresholds satisfy $\eta_t = \Theta(t^\omega)$, $\eta_\ell^t = \Theta(t^\nu)$, $d_t = \Theta(t^\gamma)$, and $u_t = \Theta(t^\zeta)$ for the conditions $\zeta>0$, $\gamma<-1/2$, and $\omega - \zeta - 1/2 > 0$.
Then, we have that
\begin{equation*}
\min_{t \in [T]}\mathbb{E}\left\| \nabla F(x_t)\right\|^2 \le \Psi_1 + \Psi_2 + \Psi_3 + \Psi_4 + \Psi_5 + \Psi_6, 
\end{equation*}
where the $\Psi_i$ are upper bounded by
\begin{equation*}
\begin{aligned}
& \Psi_1 \le \mathcal{O}(T^{-\omega + \zeta - \frac{1}{2}}), \quad \Psi_2 \le \mathcal{O}(T^{\omega + 2\nu + 3\zeta + \frac{1}{2}}), \quad \Psi_3 \le \mathcal{O}(T^{4\zeta + 3\nu + \frac{1}{2}}), \\
& \Psi_4 \le \mathcal{O}(T^{2\nu + 2\zeta + \frac{1}{2}}), \quad \Psi_5 \le \mathcal{O}(T^{\nu + \gamma + \zeta + \frac{1}{2}}), \quad \Psi_6 \le \mathcal{O}(T^{\nu + (2-\alpha)\zeta + \frac{1}{2}}),
\end{aligned}
\end{equation*}
which guarantees convergence via an inversely proportional power law decay with respect to $T$. The maximal convergence rate is given by $\mathcal{O}(T^{(1-\alpha)/2\alpha})$.
\end{theorem}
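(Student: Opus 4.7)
\textbf{Proof plan for Theorem~\ref{ServerAdagradConvergenceTheroem}.} The overall strategy mirrors the proof of Theorem~\ref{BiSquare_Clip_SGD_Conv_Thm} for $Bi^2Clip$, but with the outer $BiClip$ replaced by the Adagrad preconditioned update $x_t - x_{t-1} = \eta\,\widetilde{m}_t / (\sqrt{\widetilde{v}_t}+\tau)$. I would start from Assumption~\ref{assum:smoothness} to obtain the usual descent inequality
\begin{equation*}
\mathbb{E}_{t}[F(x_t)] - F(x_{t-1}) \le \Bigl\langle \nabla F(x_{t-1}),\,\mathbb{E}_{t}[x_t - x_{t-1}]\Bigr\rangle + \tfrac{L}{2}\mathbb{E}_{t}\bigl[\|x_t-x_{t-1}\|^2\bigr],
\end{equation*}
then add and subtract $-K\eta_\ell^t \nabla F(x_{t-1})/(\sqrt{\widetilde{v}_t}+\tau)$ inside the inner product so as to isolate a leading term proportional to $\|\nabla F(x_{t-1})\|^2/(\sqrt{\widetilde{v}_t}+\tau)$ (by coordinates), together with error terms capturing (i) the clipping bias of each local step, (ii) the drift of $x_{i,\nu}^t$ away from $x_{t-1}$, and (iii) the adaptivity-induced mismatch between $\Delta_t$ and the expected scaled gradient.

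The next step is to control the preconditioner. I would combine the deterministic lower bound $\sqrt{\widetilde{v}_t}+\tau \ge \tau$ with a deterministic upper bound built from the $BiClip$/$L_2Clip$ threshold $u_t$: since each coordinate of $\Delta_t$ is bounded by $K\eta_\ell^t u_t$, one obtains $\sqrt{\widetilde{v}_t} \lesssim \sqrt{\sum_{s\le t}(\eta_\ell^s)^2 u_s^2}\cdot\sqrt{d}$, which under $\nu = \Theta(t^\nu)$, $u_s = \Theta(s^\zeta)$ grows polynomially in $t$ at rate $t^{\nu+\zeta+1/2}$. This gives the characteristic $T^{-\nu-\zeta-1/2}$ factor appearing as the signature of Adagrad in the $\Psi_i$. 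I would then reuse the $\alpha$-moment bound on $\mathbb{E}_t[\|\Delta_t\|^\alpha] \le (\eta_\ell^t)^\alpha \widetilde{M}$ from the proof of Theorem~\ref{BiSquare_Clip_SGD_Conv_Thm} to bound the clipping-bias piece, and the telescoped local-step bound $\sum_{\nu<K}\|x_{i,\nu}^t - x_{i,0}^t\| \le K^2 \eta_\ell^t u_t/2$ for the drift piece.

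Taking conditional expectations, rearranging so that the leading term $\sum_t K\eta_\ell^t \eta \|\nabla F(x_{t-1})\|^2/(\sqrt{\widetilde{v}_t}+\tau)$ appears on the left, telescoping over $t\in[T]$, and finally using the deterministic upper bound on $\sqrt{\widetilde{v}_t}+\tau$ to convert the weighted gradient sum into $\min_t \mathbb{E}\|\nabla F(x_{t-1})\|^2$, one obtains six asymptotic residuals whose exponents match $\Psi_1,\dots,\Psi_6$. The constraints on $(\omega,\nu,\zeta,\gamma)$ stated in the theorem are exactly what is needed to make each $\Psi_i$ vanish as $T\to\infty$; setting $\zeta\to 0^+$ and $\omega,\nu$ at the boundary of their admissible ranges then recovers the $\mathcal{O}(1/\sqrt{T})$ rate stated in the corollary.

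The main obstacle is item (iii), the mismatch between the stochastic preconditioner and what would be needed to cleanly isolate $\|\nabla F(x_{t-1})\|^2$: since $\widetilde{v}_t$ depends on $\Delta_t$ itself, the inner product $\langle \nabla F(x_{t-1}),\mathbb{E}_t[\Delta_t/(\sqrt{\widetilde{v}_t}+\tau)]\rangle$ does not factor into the expected gradient times a deterministic scalar. I would resolve this by sandwiching $(\sqrt{\widetilde{v}_t}+\tau)^{-1}$ between $1/(\tau+ \sqrt{\sum_{s\le t}(\eta_\ell^s)^2 u_s^2 d})$ and $1/\tau$ — the former used in the leading gradient term and the latter used to control the error terms via Cauchy--Schwarz — an approach analogous to~\cite{AdaptiveFederatedOptimization} but adapted to the clipped and heavy-tailed setting. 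A secondary technical point is that, unlike in the $Bi^2Clip$ analysis, the extra $\sqrt{t}$ growth of the denominator is what produces the $-1/2$ exponent common to every $\Psi_i$, so the hyperparameter constraints in the theorem statement should be read as enforcing that each raw exponent (prior to the $-1/2$ shift) is strictly negative.
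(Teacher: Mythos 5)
Your plan is essentially the same high-level strategy the paper uses: start from the $L$-smoothness descent inequality with the Adagrad step, isolate a negative term proportional to $\|\nabla F(x_{t-1})\|^2$ weighted by the reciprocal preconditioner, collect error terms for the inner-optimizer clipping bias and the local drift of $x_{i,\nu}^t$, telescope, and convert to $\min_{t}\mathbb{E}\|\nabla F(x_t)\|^2$ via the deterministic growth bound $\sqrt{\widetilde{v}_{t-1}}+\tau = \mathcal{O}(T^{\nu+\zeta+1/2})$, which gives the $-1/2$ shift in every exponent. Two places differ from the paper's route and deserve attention.

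First, the measurability of the denominator. You propose sandwiching $(\sqrt{\widetilde{v}_t}+\tau)^{-1}$ between deterministic bounds directly inside the $\widetilde{v}_t$-normalized inner product, whereas the paper instead inserts
\begin{equation*}
\frac{\Delta_t}{\sqrt{\widetilde{v}_t}+\tau} = \frac{\Delta_t\bigl(\sqrt{\widetilde{v}_{t-1}}-\sqrt{\widetilde{v}_t}\bigr)}{(\sqrt{\widetilde{v}_t}+\tau)(\sqrt{\widetilde{v}_{t-1}}+\tau)} + \frac{\Delta_t}{\sqrt{\widetilde{v}_{t-1}}+\tau},
\end{equation*}
then exploits $\widetilde{v}_t-\widetilde{v}_{t-1}=\Delta_t^2$ to turn the first piece into a cubic-in-$\Delta_t$ residual. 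This makes the remaining denominator $\mathcal{F}_{t-1}$-measurable, so $\mathbb{E}_t[\Delta_t]$ can be pulled past it and the clipping-bias/drift decomposition can be performed cleanly against a frozen preconditioner. Your sandwich strategy is workable and arguably more direct, but the cubic correction is precisely what produces $E_4\lesssim T^{\omega+3\zeta+3\nu+1}$, i.e.\ $\Psi_3 \lesssim T^{4\zeta+3\nu+1/2}$; bypassing it would give a different (not obviously worse, but not matching) set of residual exponents, and you would need to be careful that the random coefficient $\mathbb{E}_t[1/(\sqrt{\widetilde{v}_t}+\tau)]$ multiplying $\|\nabla F(x_{t-1})\|^2$ is lower-bounded uniformly over $t\in[T]$ before extracting the minimum.

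Second, you propose to bound the clipping-bias piece by reusing $\mathbb{E}_t[\|\Delta_t\|^\alpha]\le(\eta_\ell^t)^\alpha\widetilde{M}$ from the $Bi^2Clip$ proof. That estimate is the moment of the \emph{aggregated pseudogradient}, which in the $Bi^2Clip$ analysis controls the \emph{outer} $BiClip$ bias. Adagrad-$TailClip$ has no outer clipping; the relevant bias term is $\mathbb{E}_t\bigl[\nabla F_i(x_{i,v}^t,\xi_{i,v}^t) - BiClip(u_t,d_t,\nabla F_i(x_{i,v}^t,\xi_{i,v}^t))\bigr]$, and it must be bounded by the $\alpha$-moment of the raw stochastic gradient $\nabla F_i + \xi$, yielding the $d_t + u_t^{1-\alpha}2^{\alpha-1}(M^\alpha+B^\alpha)$ factor that becomes $\Psi_5$ and $\Psi_6$. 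Swapping in the $\Delta_t$ moment is the wrong object here and would not reproduce the stated exponents $\nu+\gamma+\zeta+\tfrac12$ and $\nu+(2-\alpha)\zeta+\tfrac12$. Everything else in your outline --- the drift bound $\sum_{\nu<K}\|x_{i,\nu}^t-x_{i,0}^t\|\lesssim K^2\eta_\ell^t u_t$, the deterministic growth of $\widetilde{v}_t$, and the reading of the hyperparameter constraints as forcing each raw exponent negative after the $-1/2$ shift --- matches the paper.
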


\begin{proof}
We analyze the convergence of the global objective, where model weights are updated in a distributed fashion via local $BiClip$ under heavy-tailed noise. By $L$-smoothness, we have 
\begin{align}
F(x_{t}) &\leq F(x_{t-1}) + \left\langle \nabla F(x_{t-1}), x_{t} - x_{t-1} \right\rangle + \frac{L}{2} \left\| x_{t} - x_{t-1} \right\|^2 \nonumber \\
&= F(x_{t-1}) + \underbrace{\eta_{t} \left\langle \nabla F(x_{t-1}), \frac{\Delta_t}{\sqrt{\widetilde{v}_t} + \tau} \right\rangle}_{A_1} + \frac{\eta_{t}^2 L}{2} \left\| \frac{\Delta_t}{\sqrt{\widetilde{v}_t} + \tau} \right\|^2, \nonumber
\end{align}
which we further decompose via noting that 
\begin{equation*}
\begin{aligned}
A_1 &=  \eta_{t} \left\langle \nabla F(x_{t-1}), \frac{\Delta_t(\sqrt{\widetilde{v}_{t-1}} - \sqrt{\widetilde{v}_{t}})}{(\sqrt{\widetilde{v}_t} + \tau)(\sqrt{\widetilde{v}_{t-1}} + \tau)} \right\rangle + \eta_{t} \left\langle \nabla F(x_{t-1}), \frac{\Delta_t}{\sqrt{\widetilde{v}_{t-1}} + \tau} \right\rangle \\
&= \eta_{t} \left\langle \nabla F(x_{t-1}), \frac{-\Delta_t^3}{(\sqrt{\widetilde{v}_t} + \tau)(\sqrt{\widetilde{v}_{t-1}} + \tau)(\sqrt{\widetilde{v}_{t-1}} + \sqrt{\widetilde{v}_{t}})} \right\rangle + \eta_{t} \left\langle \nabla F(x_{t-1}), \frac{\Delta_t}{\sqrt{\widetilde{v}_{t-1}} + \tau} \right\rangle \\
&\le \eta_{t} \left\langle \left| \nabla F(x_{t-1})\right|, \frac{|\Delta_t|^3}{(\sqrt{\widetilde{v}_t} + \tau)(\sqrt{\widetilde{v}_{t-1}} + \tau)(\sqrt{\widetilde{v}_{t-1}} + \sqrt{\widetilde{v}_{t}})} \right\rangle + \underbrace{\eta_{t} \left\langle \nabla F(x_{t-1}), \frac{\Delta_t}{\sqrt{\widetilde{v}_{t-1}} + \tau} \right\rangle}_{B_1}.
\end{aligned}
\end{equation*}
To bound $B_1$, we extract a negative gradient norm
\begin{equation*}
\begin{aligned}
&B_1 = \underbrace{\eta_{t} \left\langle \nabla F(x_{t-1}), \frac{\Delta_t}{\sqrt{\widetilde{v}_{t-1}} + \tau} + \frac{K \eta_\ell^t \nabla F(x_{t-1})}{\sqrt{\widetilde{v}_{t-1}} + \tau} \right\rangle}_{B_2} - K \eta_{t} \eta_\ell^{t} \left\| \frac{\nabla F(x_{t-1})}{\sqrt{\sqrt{\widetilde{v}_{t-1}} + \tau}} \right\|^2 ,
\end{aligned}
\end{equation*}
where $B_2$ decomposes further into
\begin{equation*}
\begin{aligned}
B_2 &= \eta_{t} \left\langle \nabla F(x_{t-1}), \frac{\Delta_t}{\sqrt{\widetilde{v}_{t-1}} + \tau} + \frac{\sum_{i \in [N]} \sum_{v \in [K]-1} p_i \eta_\ell^t(\nabla F_i(x_{i,v}^t) - \nabla F_i(x_{i,v}^t))}{\sqrt{\widetilde{v}_{t-1}} + \tau}  + \frac{K \eta_\ell^t \nabla F(x_{t-1})}{\sqrt{\widetilde{v}_{t-1}} + \tau} \right\rangle
\end{aligned}
\end{equation*}
Here, we use the convention $[K]-1 = \{0,\dots, K-1 \}$, and that summation over null indices are zero (e.g. $\sum_{j = K}^{K-1} [\ \cdot \ ] = 0$). Now, recall  
\begin{equation*}
\begin{aligned}
\Delta_t &:= \sum_{i \in [N]} p_i \Delta_i^t = \sum_{i \in [N]} p_i (x_{i,K}^t - x_{i,0}^t) = -\sum_{i \in [N]} \sum_{v \in [K]-1} p_i \eta_\ell^t \cdot  \hat{g}_{i,v}^t\\
&= -\sum_{i \in [N]} \sum_{v \in [K]-1} p_i \eta_\ell^t \cdot BiClip(u_t,d_t,\nabla F_i(x_{i,v}^t) + \xi_{i,v}^t), 
\end{aligned}
\end{equation*}
which implies $B_2 = C_1 + C_2$ for 
\begin{equation*}
\begin{aligned}
    C_1 &= \eta_{t} \left\langle \nabla F(x_{t-1}), \frac{\sum_{i \in [N]} \sum_{v \in [K]-1} p_i \eta_\ell^t (\nabla F_i(x_{i,v}^t) - BiClip(u_t,d_t,\nabla F_i(x_{i,v}^t) + \xi_{i,v}^t)) }{\sqrt{\widetilde{v}_{t-1}} + \tau} \right\rangle\\
    C_2 &= \eta_{t} \left\langle \nabla F(x_{t-1}), \frac{\sum_{i \in [N]} \sum_{v \in [K]-1} p_i \eta_\ell^t (\nabla F_i(x_{i,0}^t) - \nabla F_i(x_{i,v}^t)) }{\sqrt{\widetilde{v}_{t-1}} + \tau} \right\rangle.
\end{aligned}
\end{equation*}
Letting $\mathbb{E}_{t}[\ \cdot \ ]$ condition over all stochasticity up to global step $t$, we have that $\mathbb{E}_{t}[C_1]$ is equal to 
\begin{equation*}
\begin{aligned}
\eta_{t} \left\langle \nabla F(x_{t-1}), \frac{\sum_{i \in [N]} \sum_{v \in [K]-1} p_i \eta_\ell^t (\mathbb{E}_{t}[\nabla F_i(x_{i,v}^t) + \xi_{i,v}^t - BiClip(u_t,d_t,\nabla F_i(x_{i,v}^t) + \xi_{i,v}^t)]) }{\sqrt{\widetilde{v}_{t-1}} + \tau} \right\rangle.\\
\end{aligned}
\end{equation*}
For $D_1:= \mathbb{E}_{t}[\nabla F_i(x_{i,v}^t) + \xi_{i,v}^t - BiClip(u_t,d_t,\nabla F_i(x_{i,v}^t) + \xi_{i,v}^t)]$, we have by convexity and Jensen that
\begin{equation*}
\begin{aligned}
\|D_1\| &\le \mathbb{E}_{t}[\|\nabla F_i(x_{i,v}^t) + \xi_{i,v}^t - BiClip(u_t,d_t,\nabla F_i(x_{i,v}^t) + \xi_{i,v}^t)\|]\\
&\le d_t \mathbb{P}(\|\nabla F_i(x_{i,v}^t) + \xi_{i,v}^t)\|\le d_t) \\
&+ \underbrace{\mathbb{E}_{t}[\|\nabla F_i(x_{i,v}^t) + \xi_{i,v}^t - BiClip(u_t,d_t,\nabla F_i(x_{i,v}^t) + \xi_{i,v}^t)\|\chi \left( \|\nabla F_i(x_{i,v}^t) + \xi_{i,v}^t\|\ge u_t\right)]}_{D_2}.
\end{aligned}
\end{equation*}
Piecewise continuity of $F_i(x)$ is clear via the existence of $\nabla F_i(x)$. This gives that 
\begin{equation*}
\begin{aligned}
&\mathbb{E}_{t}[\|\nabla F_i(x_{i,v}^t) + \xi_{i,v}^t \|^\alpha \chi \left( \|\nabla F_i(x_{i,v}^t) + \xi_{i,v}^t))\|\ge u_t\right)] \le \mathbb{E}_{t}[\|\nabla F_i(x_{i,v}^t) + \xi_{i,v}^t \|^\alpha]\\
&\le 2^\alpha \mathbb{E}_{t}\left[\left\|\frac{\nabla F_i(x_{i,v}^t) + \xi_{i,v}^t}{2} \right\|^\alpha\right] \le 
2^\alpha \mathbb{E}_{t}\left[\frac{\left\|\nabla F_i(x_{i,v}^t)\right\|^\alpha}{2} + \frac{\left\|\xi_{i,v}^t\right\|^\alpha}{2}\right]
=2^{\alpha - 1} (M^\alpha + B^\alpha),
\end{aligned}
\end{equation*}
where now, $M:= \max_{x\in\mathcal{X}, i \in [N]} \|\nabla F_i (x)\|$. Thus, we may bound $D_2$ via reduction to the $\alpha$-moment:
\begin{equation*}
\begin{aligned}
D_2 &\le 2^{\alpha-1} (M^\alpha + B^\alpha) \mathbb{E}_{t}[\|\nabla F_i(x_{i,v}^t) + \xi_{i,v}^t\|^{1-\alpha} \chi \left( \|\nabla F_i(x_{i,v}^t) + \xi_{i,v}^t))\|\ge u_t\right)]\\
&\le 2^{\alpha-1} (M^\alpha + B^\alpha) u_t^{1-\alpha} \mathbb{P}\left( \|\nabla F_i(x_{i,v}^t;\xi_{i,v}^t))\|\ge u_t\right).
\end{aligned}
\end{equation*}
Collecting inequalities gives 
\begin{equation*}
\|D_1\| \le d_t \mathbb{P}(\|\nabla F_i(x_{i,v}^t; \xi_{i,v}^t))\|\le d_t) + 2^{\alpha-1}(M^\alpha + B^\alpha) u_t^{1-\alpha}\mathbb{P}\left( \|\nabla F_i(x_{i,v}^t;\xi_{i,v}^t))\|\ge u_t\right).
\end{equation*}
Therefore, 
\begin{equation*}
\begin{aligned}
\mathbb{E}_t[C_1] &\le \frac{\eta_{t} G d}{\tau} \sum_{i \in [N]} \sum_{v \in [K]-1} p_i \eta_\ell^t d_t \mathbb{P}(\|\nabla F_i(x_{i,v}^t; \xi_{i,v}^t))\|\le d_t) \\
&+ \frac{2^{\alpha-1} \ \eta_{t} G d}{\tau} \sum_{i \in [N]} \sum_{v \in [K]-1} p_i \eta_\ell^t (M^\alpha + B^\alpha) u_t^{1-\alpha}\mathbb{P}\left( \|\nabla F_i(x_{i,v}^t;\xi_{i,v}^t))\|\ge u_t\right).
\end{aligned}
\end{equation*}
To bound $C_2$, we note that via $L$-smoothness, we have
\begin{equation*}
\begin{aligned}
C_2 &\le \frac{\eta_{t} GLd}{\tau} \sum_{i \in [N]} \sum_{v \in [K]-1} p_i \eta_\ell^t \|x_{i,0}^t - x_{i,v}^t\| \\
&\le \frac{\eta_{t} GLd}{\tau} \sum_{i \in [N]} \sum_{v \in [K]-1} p_i \eta_\ell^t \|x_{i,0}^t + \sum_{r=1}^{v-1} (x_{i,r}^t-x_{i,r}^t) - x_{i,v}^t\|\\
&\le \frac{\eta_{t} GLd}{\tau} \sum_{i \in [N]} \sum_{v \in [K]-1} \sum_{r \in [v]} p_i \eta_\ell^t \|x_{i,r}^t -x_{i,r-1}^t\|
\le \frac{\eta_{t} GLK^2d}{2\tau} (\eta_\ell^t)^2 u_t.
\end{aligned}
\end{equation*}
Noting that $\|\Delta_t\| \le \eta_\ell^t u_t K$, we thus obtain
\begin{align}
\mathbb{E}_{t}[F(x_{t})] &\leq F(x_{t-1}) + \frac{\eta_{t}^2(\eta_\ell^t)^2 u_t^2 K^2L}{2\tau^2} + \frac{\eta_{t} G d K^3 u_t^3 (\eta_\ell^t)^3}{\tau^3} - K \eta_{t} \eta_\ell^t \left\| \frac{\nabla F(x_{t-1})}{\sqrt{\sqrt{\widetilde{v}_{t-1}} + \tau}} \right\|^2 \nonumber \\
&+ \frac{\eta_{t} GLK^2d}{2\tau} (\eta_\ell^t)^2 u_t + \frac{\eta_{t} G d}{\tau} \sum_{i \in [N]} \sum_{v \in [K]-1} p_i \eta_\ell^t d_t \mathbb{P}(\|\nabla F_i(x_{i,v}^t; \xi_{i,v}^t))\|\le d_t) \nonumber \\
&+ \frac{2^{\alpha-1} \ \eta_{t} G d}{\tau} \sum_{i \in [N]} \sum_{v \in [K]-1} p_i \eta_\ell^t(M^\alpha + B^\alpha) u_t^{1-\alpha}\mathbb{P}\left( \|\nabla F_i(x_{i,v}^t;\xi_{i,v}^t))\|\ge u_t\right). \nonumber
\end{align}
Taking expectations on both sides and telescoping gives via the tower law of expectation,
\begin{equation*}
\begin{aligned}
&\underbrace{\sum_{t \in [T]} K \eta_{t} \eta_\ell^t \mathbb{E}\left[\left\| \frac{\nabla F(x_{t-1})}{\sqrt{\sqrt{\widetilde{v}_{t-1}} + \tau}} \right\|^2\right]}_{E_1}  \le  \underbrace{\mathbb{E}[F(x_{T})-F(x_{0})]}_{E_2} \underbrace{+ \sum_{t \in [T]}  \frac{\eta_{t}^2(\eta_\ell^t)^2 u_t^2 K^2L}{2\tau^2}}_{E_3} \underbrace{+ \sum_{t \in [T]} \frac{\eta_{t} G d K^3 u_t^3 (\eta_\ell^t)^3}{\tau^3}}_{E_4}\\
&\underbrace{+\sum_{t \in [T]} \frac{\eta_{t} GLK^2d}{2\tau} (\eta_\ell^t)^2 u_t}_{E_5} \underbrace{+ \sum_{t \in [T]}\frac{\eta_{t} G d}{\tau} \sum_{i \in [N]} \sum_{v \in [K]-1} p_i \eta_\ell^t d_t \mathbb{P}(\|\nabla F_i(x_{i,v}^t; \xi_{i,v}^t))\|\le d_t)}_{E_6} \\
&\underbrace{+ \sum_{t \in [T]}\frac{2^{\alpha -1} \ \eta_{t} G d}{\tau} \sum_{i \in [N]} \sum_{v \in [K]-1} p_i \eta_\ell^t(M^\alpha + B^\alpha) u_t^{1-\alpha}\mathbb{P}\left( \|\nabla F_i(x_{i,v}^t;\xi_{i,v}^t))\|\ge u_t\right)}_{E_7},
\end{aligned}
\end{equation*}
where we have enumerated each term from $E_1$ to $E_7$ for clarity. To simplify notation, we now move to the asymptotic regime. Letting $\eta_t = \Theta(t^\omega)$, $\eta_\ell^t = \Theta(t^\nu)$, $d_t = \Theta(t^\gamma)$, and $u_t = \Theta(t^\zeta)$, we have via standard integral bounds that  
\begin{equation*}
\begin{aligned}
& E_1 \ge \Omega \left( T^{\omega + \nu + 1} \cdot T^{-\zeta - \nu - \frac{1}{2}} \cdot \min_{t \in [T]} \mathbb{E}[\|\nabla F(x_{t})\|^2] \right) = \Omega \left( T^{\omega -\zeta + \frac{1}{2}} \cdot \min_{t \in [T]} \mathbb{E}[\|\nabla F(x_{t})\|^2] \right), \\
& E_2 \le \max_{x \in \mathcal{X}} F(x) - \min_{y \in \mathcal{X}} F(y) = \mathcal{O}(1), \quad E_3 \le \mathcal{O}(T^{2 \omega + 2 \nu + 2 \zeta + 1}), \quad E_4 \le \mathcal{O}(T^{\omega + 3\zeta + 3\nu + 1}), \\
& E_5 \le \mathcal{O}(T^{\omega + 2\nu + \zeta + 1}), \quad E_6 \le \mathcal{O}(T^{\omega + \nu + \gamma + 1}),\quad E_7 \le \mathcal{O}(T^{\omega + \nu + (1-\alpha) \zeta + 1})
\end{aligned}
\end{equation*}
where any $E_i$ residues of $\mathcal{O}(1)$ for $i \ge 2$ have been incorporated into the upper bound for $E_2$. We note that the bound may be sharpened as the probabilistic terms must necessarily decay if $d_t \to 0$, $u_t \to \infty$, which further diminishes $E_6$, $E_7$. Now, to attain convergence of the minimal gradient, we impose the conditions
\begin{equation*}
\begin{aligned}
& \Lambda_1 : \zeta > 0 \quad \text{and} \quad \gamma < 0,  \quad \Lambda_2: \omega - \zeta + \frac{1}{2} > 0, \quad \Lambda_3:  \omega +2\nu + 3\zeta + \frac{1}{2} < 0,   \\
&\Lambda_4 : 4\zeta + 3\nu + \frac{1}{2} < 0, \quad \Lambda_5 : 2\nu + 2\zeta + \frac{1}{2} < 0, \quad \Lambda_6: \nu + \gamma + \zeta + \frac{1}{2} < 0, \\
& \Lambda_7: \nu + (2-\alpha)\zeta + \frac{1}{2} < 0.
\end{aligned}
\end{equation*}
We note that each condition $\Lambda_{i\ge2}$ comes from $E_{i}/E_{1} \to 0$, $T \to \infty$, as any residual terms are subsumed by $\mathcal{O}(1)$, which decays via $\Lambda_2$. To derive the convergence rate, we equalize the bottlenecks by letting $\Lambda_2 = \Lambda_3 = \Lambda_7 = \sigma$. This gives a system of equations:
$$
\begin{aligned}
& \omega-\zeta+\frac{1}{2}=\sigma \\
& -\left(\omega+2 \nu+3 \zeta+\frac{1}{2}\right)=\sigma \\
& -\left(\nu+(2-\alpha) \zeta+\frac{1}{2}\right)=\sigma
\end{aligned}
$$
which gives that
\begin{equation}
\zeta=\frac{1}{2 \alpha}, \quad \nu=-\frac{1}{\alpha}-\sigma, \quad \omega=\sigma-\frac{\alpha-1}{2 \alpha} .
\end{equation}
Letting $\sigma = (\alpha - 1)/2\alpha$ gives the slackness of $\Lambda_4, \Lambda_5,$ and $\Lambda_6$ for $\gamma<-1/2$. Therefore, the minimum expected gradient norm squared stabilizes at rate $\mathcal{O}(T^{-\sigma})$.
\end{proof}

\begin{remark}\label{remark}
In the case of coordinate-wise clipping, all major adjustments up to a scaling factor of $\sqrt{d}$ are made in the terms bounding $\mathbb{E}[C_1]$. In this case, the proof proceeds as follows. 
\end{remark}

Defining $| \cdot |$ to act coordinatewise, $\mathbb{E}_{t}[C_1]$ is now less than or equal to 
\begin{equation*}
\begin{aligned}
\eta_{t} \left\langle \left|\nabla F(x_{t-1})\right|, \frac{\sum_{i \in [N]} \sum_{v \in [K]-1} p_i \eta_\ell^t |\mathbb{E}_{t}[\nabla F_i(x_{i,v}^t) + \xi_{i,v}^t - BiClip(u_t,d_t,\nabla F_i(x_{i,v}^t) + \xi_{i,v}^t)]| }{\sqrt{\widetilde{v}_{t-1}} + \tau} \right\rangle.\\
\end{aligned}
\end{equation*}
Therefore by Jensen, 
\begin{equation*}
\begin{aligned}
\mathbb{E}_{t}[C_1]&\le \frac{\eta_t \eta_\ell^t G}{\tau} \sum_{i \in [N]} \sum_{v \in [K]-1} \sum_{j \in [d]} p_i \mathbb{E}_{t}[\underbrace{|\nabla F_i(x_{i,v}^t) + \xi_{i,v}^t - BiClip(u_t,d_t,\nabla F_i(x_{i,v}^t) + \xi_{i,v}^t)|_j}_{D_{1,j}}].\\
\end{aligned}
\end{equation*}
We note that $\mathbb{E}_{t} [D_{1,j}]$ can be upper bounded by $D_{2,j} + D_{3,j}$ where
\begin{equation*}
\begin{aligned}
D_{2,j} &= \mathbb{E}_t \left[ D_{1,j} \cdot \chi \left( |\nabla F_i (x_{i,v}^t;\xi_{i,v}^t)|_j \le d_t)\right)\right] \le d_t \mathbb{P}\left( |\nabla F_i (x_{i,v}^t;\xi_{i,v}^t)|_j \le d_t)\right) \\
D_{3,j} &= \mathbb{E}_t \left[ |\nabla F_i (x_{i,v}^t;\xi_{i,v}^t)|_j \chi \left( |\nabla F_i (x_{i,v}^t;\xi_{i,v}^t)|_j \ge u_t)\right) \right] .
\end{aligned}
\end{equation*}
It follows that 
\begin{equation*}
\begin{aligned}
D_{3,j} & \le \mathbb{E}_t \left[ |\nabla F_i (x_{i,v}^t;\xi_{i,v}^t)|_j^\alpha |\nabla F_i (x_{i,v}^t;\xi_{i,v}^t)|_j^{1-\alpha} \chi \left( |\nabla F_i (x_{i,v}^t;\xi_{i,v}^t)|_j \ge u_t\right) \right]\\
&\le 2^{\alpha -1} (M^{\alpha} + B^{\alpha}) u_t^{1-\alpha} \mathbb{P} \left( |\nabla F_i (x_{i,v}^t;\xi_{i,v}^t)|_j \ge u_t\right).
\end{aligned}
\end{equation*}
Note that we used coordinate-wise bounded alpha moments for some $\alpha \in (1,2)$, $\mathbb{E}[|\xi_i|_j^\alpha] \le B_{i,j}^\alpha$. We therefore define the $M$ and $B$ to be
\begin{equation*}
M:= \max_{x \in \mathcal{X}, i \in [N], j \in [d]} |\nabla F_{i}(x)|_j \quad \text{and} \quad B = \max_{i \in [N], j\in [d]} B_{i,j}.
\end{equation*}
Comparing terms gives the identical asymptotic order of convergence to $L_2$ clipping in Theorem~\ref{ServerAdagradConvergenceTheroem}.

\subsection{Convergence of RMSProp-$TailClip$}\label{RMSProp_BiClip_Appendix}

\begin{algorithm}
\caption{RMS-$TailClip$}\label{RMS_BiClip_SGD}
\begin{algorithmic}[1]
\REQUIRE Initial model $x_1$, 
learning rate schedule $\eta_t$, clipping schedules $u_t,$ $d_t$ \\ Synchronization timestep $z \in \mathbb{Z}_{>0}$, adaptivity/EMA parameters $\tau > 0$, $\widetilde{\beta}_2 \in [0,1)$
\FOR{$t = 1, \dots, T$} 
    \FOR{each node $i \in [N]$ in parallel}
    \STATE $x_{i,0}^t \leftarrow x_t$
    \FOR{each local step $k \in [z]$}
        \STATE Draw minibatch gradient $g^t_{i,k} = \nabla F_i(x^t_{i,k}, \xi^t_{i,k})$
        \STATE $x^{t}_{i,k+1} \leftarrow  x^t_{i,k} - \eta_t \cdot TailClip(u_t,d_t,g^t_{i,k})$ 
        \ENDFOR
    \ENDFOR
    \STATE $\Delta_t = \frac{1}{N} \sum_{i \in [N]} \left(x_{i,z}^{t} - x_{t-1}\right)$, \quad $\widetilde{m}_t \leftarrow \Delta_t$
    \STATE $\widetilde{v}_t = \widetilde{\beta}_2 \widetilde{v}_{t-1} + (1 - \widetilde{\beta}_2)\Delta_t^2$ 
    \STATE $x_{t} = x_{t-1} + \eta \frac{\widetilde{m}_t}{\sqrt{\widetilde{v}_t} + \tau}$
\ENDFOR
\end{algorithmic}
\end{algorithm}

For Algorithm~\ref{RMS_BiClip_SGD}, we have the following convergence bound.

\begin{theorem}\label{ServerRMSPropConvergenceTheroem}
For clipping and learning rate thresholds satisfying $\eta_t = \Theta(t^\omega)$, $\eta_\ell^t = \Theta(t^\nu)$, $d_t = \Theta(t^\gamma)$, and $u_t = \Theta(t^\zeta)$, let the conditions listed in Theorem~\ref{ServerAdagradConvergenceTheroem} hold. Then, local $BiClip$ with outer optimizer RMSProp stabilizes the expected minimum gradient $\min_{t \in [T]}\mathbb{E}[\|\nabla F(x_t)\|^2] \to 0^+$ with maximal rate $\mathcal{O}(T^{(1-\alpha)/2\alpha})$. Here, the exponential moving average parameter of the second pseudogradient moment is fixed within the range $\widetilde{\beta}_2 \in [0,1)$.
\end{theorem}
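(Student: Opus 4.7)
The plan is to mirror the proof of Theorem~\ref{ServerAdagradConvergenceTheroem} line-by-line, exploiting the fact that that argument uses only two properties of the preconditioner sequence $\{\sqrt{\widetilde{v}_t}+\tau\}$: the uniform lower bound $\sqrt{\widetilde{v}_t}+\tau \ge \tau$, used to bound the clipping-bias term $C_1$ and the client-drift term $C_2$, and a polynomial upper bound of the form $\sqrt{\widetilde{v}_t}+\tau \lesssim t^{\nu+\zeta+1/2}$, used to lower bound the ``useful'' gradient term $E_1$. Both properties turn out to be preserved when the cumulative sum defining $\widetilde{v}_t$ is replaced by the exponential moving average of RMSProp, so the same bookkeeping should carry through.

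First, I would invoke $L$-smoothness on the outer update $x_t = x_{t-1}+\eta_t \widetilde{m}_t/(\sqrt{\widetilde{v}_t}+\tau)$ and perform the identical algebraic decomposition as in the Adagrad proof: split $\Delta_t/(\sqrt{\widetilde{v}_t}+\tau)$ via a $(\sqrt{\widetilde{v}_{t-1}}-\sqrt{\widetilde{v}_t})$ telescope, then insert the $\pm K\eta_\ell^t \nabla F(x_{t-1})/(\sqrt{\widetilde{v}_{t-1}}+\tau)$ and $\pm \sum_{i,v} p_i \eta_\ell^t \nabla F_i(x_{i,v}^t)$ terms to surface a negative-gradient term, a clipping-bias piece $C_1$, and a client-drift piece $C_2$. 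Bounding $C_1$ uses the same $\alpha$-moment argument---the clipping residual is controlled by $d_t\mathbb{P}(\|\nabla F_i\|\le d_t) + 2^{\alpha-1}(M^\alpha+B^\alpha)u_t^{1-\alpha}$---and bounding $C_2$ uses $L$-smoothness together with the telescoped local-step drift $\|x_{i,v}^t - x_{i,0}^t\| \lesssim v\eta_\ell^t u_t$. The quadratic $L$-smoothness residual is handled by $\|\Delta_t/(\sqrt{\widetilde{v}_t}+\tau)\| \le \eta_\ell^t u_t K/\tau$. Each of these manipulations uses only $1/(\sqrt{\widetilde{v}_{t-1}}+\tau) \le 1/\tau$, so they carry over verbatim and produce contributions of order $E_3, E_4, E_5, E_6, E_7$ identical to the Adagrad case.

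The only substantive adjustment lies in upper bounding $\widetilde{v}_t$ to lower bound $E_1$. Here I would unroll the EMA recursion, $\widetilde{v}_t = (1-\widetilde{\beta}_2)\sum_{s=1}^t \widetilde{\beta}_2^{t-s}\Delta_s^2 \le (1-\widetilde{\beta}_2)\sum_{s=1}^t \Delta_s^2$, which shows that the RMSProp accumulator is dominated, up to the factor $(1-\widetilde{\beta}_2)$, by its Adagrad counterpart. Combining with $\|\Delta_s\|\le K\eta_\ell^s u_s$ reproduces $\sqrt{\widetilde{v}_{t-1}}+\tau = \mathcal{O}(t^{\nu+\zeta+1/2})$, so $E_1 \ge \Omega(T^{\omega-\zeta+1/2}\,\min_t \mathbb{E}\|\nabla F(x_t)\|^2)$, exactly as in the Adagrad analysis. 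Telescoping the descent inequality and dividing through by this lower bound then yields the same seven $\Psi_i$ asymptotics, and the same parameter constraints $\Lambda_1,\dots,\Lambda_7$ force every ratio $\Psi_i/E_1 \to 0$. The maximal rate $\mathcal{O}(1/\sqrt{T})$ is attained in the limit $\omega = 0$, $\zeta \to 0^+$, consistent with Corollary~\ref{RMSBiClipCorollary}.

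The main subtlety I expect is conceptual rather than computational: unlike in the Adagrad case, $\widetilde{v}_t$ is no longer monotone nondecreasing under RMSProp, so I must verify that no step of the original argument secretly invokes monotonicity (for instance, in controlling the sign of $\sqrt{\widetilde{v}_{t-1}}-\sqrt{\widetilde{v}_t}$, or when bounding the first error term by the cubic $|\Delta_t|^3/[(\sqrt{\widetilde{v}_t}+\tau)(\sqrt{\widetilde{v}_{t-1}}+\tau)(\sqrt{\widetilde{v}_{t-1}}+\sqrt{\widetilde{v}_t})]$). Re-inspecting the Adagrad proof confirms that only the one-sided bounds $\tau \le \sqrt{\widetilde{v}_t}+\tau \lesssim t^{\nu+\zeta+1/2}$ are ever used, both of which I have preserved via the EMA-to-sum comparison above. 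Consequently $\widetilde{\beta}_2 \in [0,1)$ enters only as a harmless multiplicative constant, and the stated $\mathcal{O}(1/\sqrt{T})$ rate follows for any fixed choice in that range.
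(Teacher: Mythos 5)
Your plan correctly transplants the bias and drift bookkeeping ($C_1$, $C_2$, the pseudogradient magnitude bound, the EMA-to-sum comparison $\widetilde{v}_t \le (1-\widetilde{\beta}_2)\sum_s\Delta_s^2$ for the upper bound on the accumulator). But the central claim that ``only the one-sided bounds $\tau \le \sqrt{\widetilde{v}_t}+\tau \lesssim t^{\nu+\zeta+1/2}$ are ever used'' in the Adagrad argument is false, and this is exactly where your proof breaks. The Adagrad proof also uses the exact recursion $\widetilde{v}_t - \widetilde{v}_{t-1} = \Delta_t^2$ to convert the telescope residual $\Delta_t(\sqrt{\widetilde{v}_{t-1}} - \sqrt{\widetilde{v}_t})$ into the pure cubic
\begin{equation*}
\frac{-\Delta_t^3}{(\sqrt{\widetilde{v}_t}+\tau)(\sqrt{\widetilde{v}_{t-1}}+\tau)(\sqrt{\widetilde{v}_{t-1}}+\sqrt{\widetilde{v}_t})},
\end{equation*}
which is what allows the $\tau^{-3}$ bound that becomes $E_4 = \mathcal{O}(T^{\omega+3\zeta+3\nu+1})$. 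Under RMSProp, $\widetilde{v}_{t-1}-\widetilde{v}_t = (1-\widetilde{\beta}_2)(\widetilde{v}_{t-1}-\Delta_t^2)$, so if you telescope against $\sqrt{\widetilde{v}_{t-1}}$ as you propose, the residual numerator is $(1-\widetilde{\beta}_2)\Delta_t(\widetilde{v}_{t-1}-\Delta_t^2)$, not $-\Delta_t^3$. The extra piece $(1-\widetilde{\beta}_2)\Delta_t\widetilde{v}_{t-1}$ cannot be absorbed into $\tau^{-3}$: after bounding $\widetilde{v}_{t-1}/[(\sqrt{\widetilde{v}_{t-1}}+\tau)(\sqrt{\widetilde{v}_{t-1}}+\sqrt{\widetilde{v}_t})] \le 1$, you are left with a per-step contribution $\mathcal{O}(|\Delta_t|/\tau) = \mathcal{O}(t^{\nu+\zeta}/\tau)$, which accumulates to $\mathcal{O}(T^{\omega+\nu+\zeta+1})$. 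Dividing by $E_1 = \Omega(T^{\omega-\zeta+1/2})$ leaves $\mathcal{O}(T^{\nu+2\zeta+1/2})$, and the conditions $\Lambda_1,\dots,\Lambda_7$ of Theorem~\ref{ServerAdagradConvergenceTheroem} do not force this exponent to be negative --- e.g.\ $\omega=0$, $\zeta=0.1$, $\nu=-0.4$ satisfies all of them yet gives $\nu+2\zeta+1/2 = 0.3>0$.

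The paper's proof avoids this by telescoping against $\sqrt{\widetilde{\beta}_2\widetilde{v}_{t-1}}$ instead of $\sqrt{\widetilde{v}_{t-1}}$: since $\widetilde{v}_t - \widetilde{\beta}_2\widetilde{v}_{t-1} = (1-\widetilde{\beta}_2)\Delta_t^2$ exactly, the numerator of the telescope error becomes the pure cubic $(\widetilde{\beta}_2-1)\Delta_t^3$, and the Adagrad-style bound $\mathcal{O}(\eta_t(u_t\eta_\ell^t)^3/\tau^3)$ goes through unchanged. The $\pm K\eta_\ell^t\nabla F$ insertion and everything downstream then use the denominator $\sqrt{\widetilde{\beta}_2\widetilde{v}_{t-1}}+\tau$, and at the end the trivial bound $\widetilde{\beta}_2\widetilde{v}_{t-1}\le\widetilde{v}_{t-1}$ (plus your EMA-to-sum comparison for the upper bound) recovers the Adagrad asymptotics, absorbing $\widetilde{\beta}_2$ into constants. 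So the correct fix is a one-line change to your telescope, but as written the proof has a genuine gap, and the non-monotonicity concern you flagged is precisely the step that fails.
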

\vspace{-1em}
\begin{proof}
The proof for outer optimizer RMSProp builds on the prior proof for $BiClip$ with outer optimizer Adagrad. We skip repeated details for clarity of exposition, and concisely present only the main steps and ideas central to the proof for readability. $L$-smoothness gives as before  
\begin{align}
F(x_{t}) &\leq F(x_{t-1}) + \left\langle \nabla F(x_{t-1}), x_{t} - x_{t-1} \right\rangle + \frac{L}{2} \left\| x_{t} - x_{t-1} \right\|^2 \nonumber \\
&= F(x_{t-1}) + \eta_{t} \left\langle \nabla F(x_{t-1}), \frac{\Delta_t}{\sqrt{\widetilde{v}_t} + \tau} \right\rangle +  \frac{\eta_{t}^2 L}{2} \left\| \frac{\Delta_t}{\sqrt{\widetilde{v}_t} + \tau} \right\|^2. \label{globalRMSPropInequality}
\end{align}
We note the decomposition 
\begin{equation*}
\begin{aligned}
&\left\langle \nabla F (x_{t-1}), \frac{\Delta_t}{\sqrt{\widetilde{v}_t} + \tau} \right\rangle = \underbrace{\left\langle \nabla F (x_{t-1}), \frac{\Delta_t}{\sqrt{\widetilde{v}_{t}} + \tau} - \frac{\Delta_t}{\sqrt{\widetilde{\beta}_2\widetilde{v}_{t-1}} + \tau} \right\rangle}_{B_1}  +  \underbrace{\left\langle \nabla F (x_{t-1}), \frac{\Delta_t}{\sqrt{\widetilde{\beta}_2\widetilde{v}_{t-1}} + \tau} \right\rangle}_{B_2} .
\end{aligned}
\end{equation*}
To form an upper bound, we use that
\begin{equation*}
\begin{aligned}
& B_2 = \underbrace{\left\langle \nabla F (x_{t-1}), \frac{\Delta_t}{\sqrt{\widetilde{\beta}_2\widetilde{v}_{t-1}} + \tau} + \frac{K\eta_\ell^t \nabla F (x_{t-1})}{\sqrt{\widetilde{\beta}_2\widetilde{v}_{t-1}} + \tau} \right\rangle}_{C_{0}} - K\eta_\ell^t\left\| \frac{ \nabla F (x_{t-1})}{\sqrt{\sqrt{\widetilde{\beta}_2\widetilde{v}_{t-1}} + \tau}} \right\|^2 \\
\end{aligned}
\end{equation*}
where $C_{0} = C_{1} + C_{2}$ for
\begin{equation*}
\begin{aligned}
    C_{1} &= \left\langle \nabla F(x_{t-1}), \frac{\sum_{i \in [N]} \sum_{v \in [K]-1} p_i \eta_\ell^t (\nabla F_i(x_{i,v}^t) - BiClip(u_t,d_t,\nabla F_i(x_{i,v}^t) + \xi_{i,v}^t)) }{\sqrt{\widetilde{\beta}_2\widetilde{v}_{t-1}} + \tau} \right\rangle\\
    C_{2} &= \left\langle \nabla F(x_{t-1}), \frac{\sum_{i \in [N]} \sum_{v \in [K]-1} p_i \eta_\ell^t (\nabla F_i(x_{i,0}^t) - \nabla F_i(x_{i,v}^t)) }{\sqrt{\widetilde{\beta}_2\widetilde{v}_{t-1}} + \tau} \right\rangle.
\end{aligned}
\end{equation*}
By the tower law and conditioning on stochastic realizations up to $t-1$, we have as before
\begin{equation*}
\begin{aligned}
\mathbb{E} [C_{0}] &\le \frac{G d}{\tau} \sum_{i \in [N]} \sum_{v \in [K]-1} p_i \eta_\ell^t d_t \mathbb{P}(\|\nabla F_i(x_{i,v}^t; \xi_{i,v}^t))\|\le d_t) + \frac{GLK^2d}{2\tau} (\eta_\ell^t)^2 u_t\\
&+ \frac{2^{\alpha-1} G d}{\tau} \sum_{i \in [N]} \sum_{v \in [K]-1} p_i \eta_\ell^t (M^\alpha + B^\alpha) u_t^{1-\alpha}\mathbb{P}\left( \|\nabla F_i(x_{i,v}^t;\xi_{i,v}^t))\|\ge u_t\right)\\
&\le \frac{G d}{\tau} K\eta_\ell^t d_t + \frac{GLK^2d}{2\tau} (\eta_\ell^t)^2 u_t+ \frac{2^{\alpha-1} G d}{\tau} K \eta_\ell^t (M^\alpha + B^\alpha) u_t^{1-\alpha}.
\end{aligned}
\end{equation*}
To bound $B_1$, we have
\begin{equation*}
\begin{aligned}
B_1 = \left\langle \nabla F (x_{t-1}), \frac{\Delta_t}{\sqrt{\widetilde{v}_{t} + \tau}} - \frac{\Delta_t}{\sqrt{\widetilde{\beta}_2\widetilde{v}_{t-1}} + \tau} \right\rangle = \left\langle \nabla F (x_{t-1}), \frac{(\widetilde{\beta}_2-1)\Delta_t^3}{\left(\sqrt{\widetilde{v}_{t}} + \tau \right) \left(\sqrt{\widetilde{\beta}_2\widetilde{v}_{t-1}} + \tau\right)\left(\sqrt{\widetilde{v}_{t}}+\sqrt{\widetilde{\beta}_2\widetilde{v}_{t-1}}\right)}  \right\rangle. 
\end{aligned}
\end{equation*}
We prepare the global inequality~\eqref{globalRMSPropInequality} for telescoping. It is straightforward to see that collecting inequalities gives
\begin{equation*}
\begin{aligned}
&\mathbb{E}[F(x_t)] \le \mathbb{E}[F(x_{t-1})] + \frac{\eta_{t}^2 L K^2 u_t^2 (\eta_\ell^t)^2}{2\tau^2} - K\eta_t\eta_\ell^t\left\| \frac{ \nabla F (x_{t-1})}{\sqrt{\sqrt{\widetilde{\beta}_2\widetilde{v}_{t-1}} + \tau}} \right\|^2 \\
&\frac{G d}{\tau} K\eta_t\eta_\ell^t d_t + \frac{GLK^2d}{2\tau} \eta_t(\eta_\ell^t)^2 u_t+ \frac{2^{\alpha-1} G d}{\tau} K \eta_t\eta_\ell^t (M^\alpha + B^\alpha) u_t^{1-\alpha}  + \frac{dG(1-\widetilde{\beta}_2)(u_t\eta_\ell^t)^3}{\tau^3}
\end{aligned}
\end{equation*}
Rearranging and telescoping gives
\begin{equation*}
\begin{aligned}
& \sum_{t=1}^T K\eta_t\eta_\ell^t\mathbb{E} \left[\left\| \frac{ \nabla F (x_{t-1})}{\sqrt{\sqrt{\widetilde{\beta}_2\widetilde{v}_{t-1}} + \tau}} \right\|^2\right] \le \mathbb{E}[F(x_{0})] -\mathbb{E}[F(x_T)] + \sum_{t=1}^T \frac{\eta_{t}^2 L K^2 u_t^2 (\eta_\ell^t)^2}{2\tau^2} \\
&+\sum_{t=1}^T\left(\frac{G d}{\tau} K\eta_t\eta_\ell^t d_t + \frac{GLK^2d}{2\tau} \eta_t(\eta_\ell^t)^2 u_t+ \frac{2^{\alpha-1} G d}{\tau} K \eta_t\eta_\ell^t (M^\alpha + B^\alpha) u_t^{1-\alpha}  + \frac{dG(1-\widetilde{\beta}_2)(u_t\eta_\ell^t)^3}{\tau^3}\right)
\end{aligned}
\end{equation*}
By non-negativity of squared pseudogradients, we immediately obtain $\widetilde{\beta}_2\widetilde{v}_{t-1} \le \widetilde{v}_{t-1}$. Therefore up to constants, the convergence bound collapses to asymptotically equivalent bounds than that of Theorem~\ref{ServerAdagradConvergenceTheroem}, up to constant multiples from the exponentially decaying moving average of the second moment pseudogradient. The modification to coordinate-wise clipping instead of $L_2$ clipping follows analogous steps.
\end{proof}

Incorporating momentum into the first pseudogradient moment further complicates the analysis, and yields the results presented in Section~\ref{AdamBiClipAppendixSubsection}.

\subsection{Convergence of Adam-$TailClip$}\label{AdamBiClipAppendixSubsection}
\begin{algorithm}
\caption{Adam-$TailClip$}\label{Adam_BiClip_SGD}
\begin{algorithmic}[1]
\REQUIRE Initial model $x_1$, 
learning rate schedule $\eta_t$, clipping schedules $u_t,$ $d_t$ \\ Synchronization timestep $z \in \mathbb{Z}_{>0}$, adaptivity/EMA parameters $\tau > 0$, $\widetilde{\beta}_1, \widetilde{\beta}_2 \in [0,1)$
\FOR{$t = 1, \dots, T$} 
    \FOR{each node $i \in [N]$ in parallel}
    \STATE $x_{i,0}^t \leftarrow x_t$
    \FOR{each local step $k \in [z]$}
        \STATE Draw minibatch gradient $g^t_{i,k} = \nabla F_i(x^t_{i,k}, \xi^t_{i,k})$
        \STATE $x^{t}_{i,k+1} \leftarrow  x^t_{i,k} - \eta_t \cdot TailClip(u_t,d_t,g^t_{i,k})$ 
        \ENDFOR
    \ENDFOR
    \STATE $\Delta_t = \frac{1}{N} \sum_{i \in [N]} \left(x_{i,z}^{t} - x_{t-1}\right)$
    \STATE $\widetilde{m}_t = \widetilde{\beta}_1 \widetilde{m}_{t-1} + (1 - \widetilde{\beta}_1) \Delta_t$
    \STATE $\widetilde{v}_t = \widetilde{\beta}_2 \widetilde{v}_{t-1} + (1 - \widetilde{\beta}_2)\Delta_t^2$  
    \STATE $x_{t} = x_{t-1} + \eta \frac{\widetilde{m}_t}{\sqrt{\widetilde{v}_t} + \tau}$
\ENDFOR
\end{algorithmic}
\end{algorithm}

By incorporating a moving average of the first pseudogradient moment as a form of momentum, we derive Algorithm~\ref{Adam_BiClip_SGD}. For this variant, we demonstrate that the expected minimal gradient does not diverge, even when the algorithm undergoes restarts. Practically, this ensures that the located gradient value update of any single step remains bounded in expectation. Investigating the conditions required to guarantee convergence to 0 under this framework presents a promising avenue for future research. Our bound highlights that the dominating terms are influenced by the upper clipping threshold $u_r$, suggesting that the algorithm’s convergence behavior may be closely related the choice of this threshold and can be tuned in practice.

\begin{theorem}\label{AdamBiClipBoundedTheorem}
Let the exponentially decaying moving average parameters satisfy $\widetilde{\beta}_1 \in (0,1)$, $\widetilde{\beta}_2 \in [0,1)$ for the outer optimizer first and second order pseudogradient moments, respectively. Extremize the unbiased stochastic noise such that $\nexists \alpha_k \in (1,2)$ for which $\mathbb{E}[\|\xi_k\|^{\alpha_k}] < B_k^{\alpha_k}$ for integrable $\xi_k$. Then, Algorithm~\ref{Adam_BiClip_SGD} gives under constant upper clipping threshold invariant to global timestep $t$ ($\zeta = 0$) that
\begin{equation*}
\min_{t \in [T]}\mathbb{E}[\|\nabla F(x_t)\|^2] \lesssim \mathcal{O}(1),
\end{equation*}
where for $\eta_t = \Theta(t^\omega)$, $\eta_\ell^t = \Theta(t^\nu)$, and $d_t = \Theta(t^\gamma)$, we impose
\begin{equation}\label{AdamConditions}
\begin{aligned}
\nu \in (-1,0), \quad  -\nu - 1 < \omega \le 0, \quad -(1+\nu + \omega) < \gamma < 0. 
\end{aligned}
\end{equation}
\end{theorem}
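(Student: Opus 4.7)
The plan is to adapt the $L$-smoothness and telescoping template used for Theorems~\ref{ServerAdagradConvergenceTheroem} and~\ref{ServerRMSPropConvergenceTheroem}, modifying it to accommodate the outer first-moment exponential moving average $\widetilde{m}_t = \widetilde{\beta}_1 \widetilde{m}_{t-1} + (1-\widetilde{\beta}_1)\Delta_t$ and the loss of $\alpha$-moment control on $\xi$. I would start from the by-now standard inequality $\mathbb{E}_t[F(x_t)] - F(x_{t-1}) \le \eta_t \langle \nabla F(x_{t-1}), \mathbb{E}_t[\widetilde{m}_t/(\sqrt{\widetilde{v}_t}+\tau)]\rangle + (L\eta_t^2/2)\,\mathbb{E}_t\|\widetilde{m}_t/(\sqrt{\widetilde{v}_t}+\tau)\|^2$, and split the inner product via the EMA recursion. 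The fresh $(1-\widetilde{\beta}_1)\Delta_t$ component is handled exactly as the $B_1+B_2$ decomposition in the proof of Theorem~\ref{ServerRMSPropConvergenceTheroem}, yielding a preconditioner shift from $\sqrt{\widetilde{v}_t}$ to $\sqrt{\widetilde{\beta}_2\widetilde{v}_{t-1}}$, the usual clipping-bias and local-drift residuals, and an extracted negative term of order $-(1-\widetilde{\beta}_1)K\eta_t\eta_\ell^t\|\nabla F(x_{t-1})\|^2/(\sqrt{\widetilde{\beta}_2\widetilde{v}_{t-1}}+\tau)$.

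For the stale $\widetilde{\beta}_1\eta_t\langle\nabla F(x_{t-1}),\widetilde{m}_{t-1}/(\sqrt{\widetilde{v}_t}+\tau)\rangle$ contribution, which is the key new difficulty, I would apply Cauchy--Schwarz followed by Young's inequality with a small parameter $\varepsilon$, so that an $\varepsilon\,\eta_t\eta_\ell^t\|\nabla F(x_{t-1})\|^2$ portion is absorbed into the extracted negative-gradient term, while the remaining $\mathcal{O}(\|\widetilde{m}_{t-1}\|^2/(\varepsilon\tau^2))$ portion is controlled by the uniform bound $\|\widetilde{m}_s\|\le \max_{r\le s} K u_r\eta_\ell^r = \mathcal{O}(1)$, which follows from non-expansiveness of convex combinations and the inner clipping $\|\Delta_s\|\le K u_s\eta_\ell^s$. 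The quadratic curvature term $(L\eta_t^2/2)\|\widetilde{m}_t/(\sqrt{\widetilde{v}_t}+\tau)\|^2$ is bounded in exactly the same way.

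Because the $\alpha$-moment bound is unavailable under extremized tails, the earlier Markov-type estimate $\mathbb{P}(\|\cdot\|\ge u_t)\,u_t^{1-\alpha}$ on the upper-clipping bias must be replaced by the weaker integrability-only bound $\mathbb{E}\|\nabla F_i(x,\xi) - BiClip(u,d_t,\nabla F_i(x,\xi))\|\le d_t + \mathbb{E}[\|\nabla F_i(x,\xi)\|\,\chi(\|\cdot\|\ge u)] + u\,\mathbb{P}(\|\cdot\|\ge u)$, which is finite but non-vanishing once $u_t\equiv u$ is held constant. After taking total expectations and telescoping, the useful left-hand side aggregates to $\sum_{t\in[T]}\eta_t\eta_\ell^t = \Theta(T^{1+\omega+\nu})$, which diverges under $\omega+\nu>-1$, while each residual telescoped term grows at rate at most $T^{1+\omega+\nu}$ precisely under the exponent conditions $\nu\in(-1,0)$, $-\nu-1<\omega\le 0$, and $-(1+\nu+\omega)<\gamma<0$. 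Dividing then yields the $\mathcal{O}(1)$ bound on $\min_t\mathbb{E}\|\nabla F(x_{t-1})\|^2$.

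The main obstacle is the stale momentum term itself: unlike in the RMSProp-$TailClip$ analysis, $\widetilde{m}_{t-1}$ aggregates pseudogradients evaluated at stale iterates $x_{s-1}$, so its inner product with $\nabla F(x_{t-1})$ cannot be driven to zero and only its magnitude can be controlled via Young's inequality. This is exactly why the conclusion is asymptotic boundedness rather than convergence to zero, and also why the restriction $\zeta=0$ is necessary: for any growing $u_t$, the uniform bound on $\|\widetilde{m}_t\|$ would be lost and the Young-based reabsorption step would break down.
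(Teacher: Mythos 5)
Your decomposition is genuinely different from the paper's: the paper fully unrolls the exponential moving average, writing $\widetilde{m}_t = \widetilde{\beta}_1^t\widetilde{m}_0 + (1-\widetilde{\beta}_1)\sum_{r=1}^t\widetilde{\beta}_1^{t-r}\Delta_r$, and then extracts a \emph{negative} squared-gradient term $-K\eta_\ell^r\|\nabla F(x_{r-1})\|^2/(\sqrt{\cdot}+\tau)$ from \emph{every} past index $r$, pairing $\Delta_r$ with $\nabla F(x_{r-1})$ and controlling the mismatch $\nabla F(x_{t-1})-\nabla F(x_{r-1})$ via $L$-smoothness and $\operatorname{diam}(\mathcal{X})$. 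Your plan instead splits $\widetilde{m}_t = \widetilde{\beta}_1\widetilde{m}_{t-1}+(1-\widetilde{\beta}_1)\Delta_t$ at a single step, keeps only the fresh negative term, and dumps the stale piece into a Young residual. This is a legitimate alternative scheme, but as written it has a quantitative gap that breaks the final rate comparison.

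The gap is in the bound $\|\widetilde{m}_s\|\le\max_{r\le s}Ku_r\eta_\ell^r=\mathcal{O}(1)$. This is correct as a uniform bound (with $\nu<0$ the max sits at $r=1$), but it is too loose. To absorb a $\varepsilon\eta_t\eta_\ell^t\|\nabla F(x_{t-1})\|^2$ piece into the fresh negative term $-(1-\widetilde{\beta}_1)K\eta_t\eta_\ell^t\|\nabla F(x_{t-1})\|^2/(\sqrt{\cdot}+\tau)$ you must take the Young parameter $\lambda\sim\varepsilon\eta_\ell^t$, so the leftover residual is $\Theta\bigl(\eta_t\|\widetilde{m}_{t-1}\|^2/(\varepsilon\eta_\ell^t\tau^2)\bigr)$, not $\mathcal{O}(\|\widetilde{m}_{t-1}\|^2/\tau^2)$ as you wrote. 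With $\|\widetilde{m}_{t-1}\|=\mathcal{O}(1)$ this residual is $\Theta(t^{\omega-\nu})$, which sums to $\Theta(T^{1+\omega-\nu})$, strictly dominating the useful LHS $\Theta(T^{1+\omega+\nu})$ because $\nu<0$; dividing yields $T^{-2\nu}\to\infty$, not $\mathcal{O}(1)$. The fix is to use the geometric decay of the EMA weights rather than the crude convex-combination max: since $\widetilde{m}_{s}=\widetilde{\beta}_1^s\widetilde{m}_0 + (1-\widetilde{\beta}_1)\sum_{r\le s}\widetilde{\beta}_1^{s-r}\Delta_r$ and $\|\Delta_r\|\lesssim r^{\nu}$, the mass concentrates at $r$ near $s$ and $\|\widetilde{m}_{s}\|=\mathcal{O}(s^\nu)$, which is exactly the sharpening that the paper obtains via its L'H\^opital estimate for geometrically weighted sums (Eq.~\eqref{asymptoticallysharp}). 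With the $\mathcal{O}(t^\nu)$ bound in hand the Young residual becomes $\Theta(T^{1+\omega+\nu})$, matching the LHS, and your argument closes. Relatedly, your explanation of why $\zeta=0$ is needed (``the uniform bound on $\|\widetilde{m}_t\|$ would be lost'') is not quite the real constraint: any $\zeta$ with $\nu+\zeta<0$ still keeps $\|\widetilde{m}_t\|$ bounded; what actually forces $\zeta\le 0$ is the drift residual of order $T^{1+\omega+\nu+\zeta}$ (the analogue of the paper's $F_7$), which would outgrow the LHS if $\zeta>0$.
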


\begin{proof}
As in the case of outer optimizer Adagrad, we analyze the convergence of the global objective. By $L$-smoothness, we have  
\begin{align}
F(x_{t}) &\leq F(x_{t-1}) + \left\langle \nabla F(x_{t-1}), x_{t} - x_{t-1} \right\rangle + \frac{L}{2} \left\| x_{t} - x_{t-1} \right\|^2 \nonumber \\
&= F(x_{t-1}) + \eta_{t} \left\langle \nabla F(x_{t-1}), \underbrace{\frac{\widetilde{\beta}_1^t \widetilde{m}_0 + (1-\widetilde{\beta}_1)\sum_{r=1}^t \widetilde{\beta}_1^{t-r} \Delta_r}{\sqrt{\widetilde{v}_t} + \tau}}_{A_1} \right\rangle +  \frac{\eta_{t}^2 L}{2} \left\|A_1\right\|^2. 
\end{align}
To proceed with the proof, we note that
\begin{equation*}
\left\langle \nabla F(x_{t-1}),A_1 \right\rangle = \left\langle \nabla F(x_{t-1}),\frac{\widetilde{\beta}_1^t \widetilde{m}_0 }{\sqrt{\widetilde{v}_t} + \tau} \right\rangle + (1-\widetilde{\beta}_1)\sum_{r=1}^t \widetilde{\beta}_1^{t-r} \left\langle \nabla F(x_{t-1}),\frac{\Delta_r}{\sqrt{\widetilde{v}_t} + \tau} \right\rangle,
\end{equation*}
which we further decompose by using
\begin{equation*}
\begin{aligned}
&\left\langle \nabla F (x_{t-1}), \frac{\Delta_r}{\sqrt{\widetilde{v}_t} + \tau} \right\rangle = \underbrace{\sum_{q=0}^{t-r} \left\langle \nabla F (x_{t-1}), \frac{\Delta_r}{\sqrt{\widetilde{\beta}_2^{q}\widetilde{v}_{t-q}} + \tau} - \frac{\Delta_r}{\sqrt{\widetilde{\beta}_2^{q+1}\widetilde{v}_{t-q-1}} + \tau} \right\rangle}_{A_{1,q}} \\
& \underbrace{+  \left\langle \nabla F (x_{t-1}) - \nabla F (x_{r-1}), \frac{\Delta_r}{\sqrt{\widetilde{\beta}_2^{t-r+1}\widetilde{v}_{r-1}} + \tau} \right\rangle}_{B_1} \underbrace{+  \left\langle \nabla F (x_{r-1}), \frac{\Delta_r}{\sqrt{\widetilde{\beta}_2^{t-r+1}\widetilde{v}_{r-1}} + \tau} \right\rangle}_{B_2}.
\end{aligned}
\end{equation*}
We have that
\begin{equation*}
\begin{aligned}
&A_{1,q} =  \sum_{q=0}^{t-r} \left\langle \nabla F (x_{t-1}), \frac{\Delta_r\left(\sqrt{\widetilde{\beta}_2^{q+1}\widetilde{v}_{t-q-1}} - \sqrt{\widetilde{\beta}_2^{q}\widetilde{v}_{t-q}} \right)}{\left(\sqrt{\widetilde{\beta}_2^{q}\widetilde{v}_{t-q}} + \tau\right)\left(\sqrt{\widetilde{\beta}_2^{q+1}\widetilde{v}_{t-q-1}} + \tau\right)} \right\rangle =  \sum_{q=0}^{t-r} B_{1,q} \\
&:=  \sum_{q=0}^{t-r} \left\langle \nabla F (x_{t-1}), \frac{-(1-\widetilde{\beta}_2) \widetilde{\beta}_2^q \Delta_{t-q}^2\Delta_r}{\left(\sqrt{\widetilde{\beta}_2^{q}\widetilde{v}_{t-q}} + \tau\right)\left(\sqrt{\widetilde{\beta}_2^{q+1}\widetilde{v}_{t-q-1}} + \tau\right) \left(\sqrt{\widetilde{\beta}_2^{q+1}\widetilde{v}_{t-q-1}} + \sqrt{\widetilde{\beta}_2^{q}\widetilde{v}_{t-q}} \right)} \right\rangle .
\end{aligned}
\end{equation*}
To upper bound $B_2$, we observe
\begin{equation*}
\begin{aligned}
& B_2 = \underbrace{\left\langle \nabla F (x_{r-1}), \frac{\Delta_r}{\sqrt{\widetilde{\beta}_2^{t-r+1}\widetilde{v}_{r-1}} + \tau} + \frac{K\eta_\ell^r \nabla F (x_{r-1})}{\sqrt{\widetilde{\beta}_2^{t-r+1}\widetilde{v}_{r-1}} + \tau} \right\rangle}_{C_{0,r}} - K\eta_\ell^r\left\| \frac{ \nabla F (x_{r-1})}{\sqrt{\sqrt{\widetilde{\beta}_2^{t-r+1}\widetilde{v}_{r-1}} + \tau}} \right\|^2 \\
\end{aligned}
\end{equation*}
where $C_{0,r} = C_{1,r} + C_{2,r}$ for
\begin{equation*}
\begin{aligned}
    C_{1,r} &= \left\langle \nabla F(x_{r-1}), \frac{\sum_{i \in [N]} \sum_{v \in [K]-1} p_i \eta_\ell^r (\nabla F_i(x_{i,v}^r) - BiClip(u_r,d_r,\nabla F_i(x_{i,v}^r) + \xi_{i,v}^r)) }{\sqrt{\widetilde{\beta}_2^{t-r+1}\widetilde{v}_{r-1}} + \tau} \right\rangle\\
    C_{2,r} &= \left\langle \nabla F(x_{r-1}), \frac{\sum_{i \in [N]} \sum_{v \in [K]-1} p_i \eta_\ell^r (\nabla F_i(x_{i,0}^r) - \nabla F_i(x_{i,v}^r)) }{\sqrt{\widetilde{\beta}_2^{t-r+1}\widetilde{v}_{r-1}} + \tau} \right\rangle.
\end{aligned}
\end{equation*}
Noting that $\mathbb{E} [\ \cdot \ ] = \mathbb{E} [\mathbb{E}_r [\ \cdot \ ]]$ by the tower law, we have as before
\begin{equation*}
\begin{aligned}
\mathbb{E} [C_{0,r}] &\le \frac{G d}{\tau} \sum_{i \in [N]} \sum_{v \in [K]-1} p_i \eta_\ell^r d_r \mathbb{P}(\|\nabla F_i(x_{i,v}^r; \xi_{i,v}^r))\|\le d_r) + \frac{GLK^2d}{2\tau} (\eta_\ell^r)^2 u_r\\
&+ \frac{2^{\alpha-1} G d}{\tau} \sum_{i \in [N]} \sum_{v \in [K]-1} p_i \eta_\ell^r (M^\alpha + B^\alpha) u_r^{1-\alpha}\mathbb{P}\left( \|\nabla F_i(x_{i,v}^r;\xi_{i,v}^r))\|\ge u_r\right)\\
&\le \frac{G d}{\tau} K\eta_\ell^r d_r + \frac{GLK^2d}{2\tau} (\eta_\ell^r)^2 u_r+ \frac{2^{\alpha-1} G d}{\tau} K \eta_\ell^r (M^\alpha + B^\alpha) u_r^{1-\alpha}.
\end{aligned}
\end{equation*}
We retain the $\alpha$ for clarity and to draw comparision to previous proofs, however we note that $\alpha = 1$ as higher moments do not exist. Now, to bound $B_1$, we use $L$-smoothness:
\begin{equation*}
\begin{aligned}
\|B_1\| &\le \frac{L\eta_\ell^r u_r K}{\tau} \|x_{t-1}-x_{r-1} \| \le \frac{L\eta_\ell^r u_r K\operatorname{diam}(\mathcal{X})}{\tau} .
\end{aligned}
\end{equation*}
Collecting all inequalities gathered thus far gives 
\begin{equation*}
\begin{aligned}
&\mathbb{E}[F(x_t)] \le \mathbb{E}[F(x_{t-1})] + \frac{\eta_{t}^2 L}{2} \mathbb{E}[\left\|A_1\right\|^2] +  \widetilde{\beta}_1^t \eta_t \mathbb{E}\left[\left\langle \nabla F(x_{t-1}),\frac{ \widetilde{m}_0 }{\sqrt{\widetilde{v}_t} + \tau} \right\rangle\right]\\
&+(1-\widetilde{\beta}_1)\eta_t\sum_{r=1}^t \widetilde{\beta}_1^{t-r} \left(\sum_{q=0}^{t-r} \mathbb{E}[B_{1,q}] - K\eta_\ell^r\mathbb{E}\left[\left\| \frac{ \nabla F (x_{r-1})}{\sqrt{\sqrt{\widetilde{\beta}_2^{t-r+1}\widetilde{v}_{r-1}} + \tau}} \right\|^2\right] + \frac{L\eta_\ell^r u_r K\operatorname{diam}(\mathcal{X})}{\tau}  \right)\\
&+(1-\widetilde{\beta}_1)\eta_t\sum_{r=1}^t \widetilde{\beta}_1^{t-r} \left(\frac{G d}{\tau} K\eta_\ell^r d_r + \frac{GLK^2d}{2\tau} (\eta_\ell^r)^2 u_r+ \frac{2^{\alpha-1}G d}{\tau} K \eta_\ell^r (M^\alpha + B^\alpha) u_r^{1-\alpha} \right).
\end{aligned}
\end{equation*}
We note the use of Jensen and convexity to ensure $\|\mathbb{E}[B_1]\| \le \mathbb{E}[\|B_1\|]$. We now rearrange and telescope $t \in [1,T]$:
\begin{equation*}
\begin{aligned}
&\underbrace{(1-\widetilde{\beta}_1)\sum_{t=1}^T \eta_t \sum_{r=1}^t \widetilde{\beta}_1^{t-r} \left(K\eta_\ell^r\mathbb{E}\left[\left\| \frac{ \nabla F (x_{r-1})}{\sqrt{\sqrt{\widetilde{\beta}_2^{t-r+1}\widetilde{v}_{r-1}} + \tau}} \right\|^2\right]\right)}_{F_1} \le \underbrace{\mathbb{E}[F(x_{0})] - \mathbb{E}[F(x_T)]}_{F_2} + \underbrace{\sum_{t=1}^T \frac{\eta_{t}^2 L}{2} \mathbb{E}[\left\|A_1\right\|^2]}_{F_3} \\
&+  \underbrace{\sum_{t=1}^T\eta_t\widetilde{\beta}_1^t\mathbb{E}\left[\left\langle \nabla F(x_{t-1}),\frac{ \widetilde{m}_0 }{\sqrt{\widetilde{v}_t} + \tau} \right\rangle\right]}_{F_4} +\underbrace{(1-\widetilde{\beta}_1)\sum_{t=1}^T\eta_t\sum_{r=1}^t \widetilde{\beta}_1^{t-r}}_{F_5} \left(\underbrace{\sum_{q=0}^{t-r} \mathbb{E}[B_{1,q}]}_{F_6} + \underbrace{\frac{L\eta_\ell^r u_r K\operatorname{diam}(\mathcal{X})}{\tau}}_{F_7}  \right)\\
&+\underbrace{(1-\widetilde{\beta}_1)\sum_{t=1}^T\eta_t\sum_{r=1}^t \widetilde{\beta}_1^{t-r}}_{F_5} \left(\underbrace{\frac{G d}{\tau} K\eta_\ell^r d_r}_{F_8} + \underbrace{\frac{GLK^2d}{2\tau} (\eta_\ell^r)^2 u_r}_{F_9}+ \underbrace{\frac{2^{\alpha-1} G d}{\tau} K \eta_\ell^r (M^\alpha + B^\alpha) u_r^{1-\alpha}}_{F_{10}} \right).
\end{aligned}
\end{equation*}
We now aim to bound each term in the left hand side from below, and right hand side from above. Letting $\eta_t = \Theta(t^\omega)$, $\eta_\ell^t = \Theta(t^\nu)$, $d_t = \Theta(t^\gamma)$, and $u_t = \Theta(t^\zeta)$, we move to the asymptotic regime to simplify notation and suppress auxiliary constants for readability. We have that
\begin{equation}\label{rightmosttermthing}
(1-\widetilde{\beta}_1)\sum_{t=1}^T \sum_{r=1}^t \eta_t \widetilde{\beta}_1^{t-r} \eta_\ell^r = (1-\widetilde{\beta}_1)\sum_{t=1}^T \eta_t \widetilde{\beta}_1^{t} \left(\sum_{r=1}^t \widetilde{\beta}_1^{-r} \eta_\ell^r \right) \gtrsim (1-\widetilde{\beta}_1)\sum_{t=1}^T \eta_t \widetilde{\beta}_1^{t} \int_{1}^t \widetilde{\beta}_1^{-r} r^\nu \ \mathrm{d}r.
\end{equation}
Then, L'H\^opital's rule allows us to derive an asymptotically sharp bound as follows:
\begin{equation}\label{asymptoticallysharp}
\int_{1}^t \widetilde{\beta}_1^{-r} r^\nu \ \mathrm{d}r =  \left[\frac{\widetilde{\beta}_1^{-r} r^\nu}{-\log_{e}(\widetilde{\beta}_1)} \right]_{r=1}^{t} -  \int_{1}^t \frac{\nu \widetilde{\beta}_1^{-r} r^{\nu-1}}{-\log_{e}(\widetilde{\beta}_1)} \ \mathrm{d}r \gtrsim  \frac{\widetilde{\beta}_1^{-t} t^\nu}{|\log_{e}(\widetilde{\beta}_1)|}
\end{equation}
Here, we used that $\nu \le 0$ and $0<\widetilde{\beta}_1 < e$. Asymptotic equivalence is verified via
\begin{equation*}
\lim_{t \to \infty} \frac{|\log_{e}(\widetilde{\beta}_1)|(\int_{1}^t \widetilde{\beta}_1^{-r} r^\nu \ \mathrm{d}r)}{\widetilde{\beta}_1^{-t} t^\nu} =  \lim_{t \to \infty} \frac{|\log_{e}(\widetilde{\beta}_1)| \widetilde{\beta}_1^{-t} t^\nu}{-\log_e(\widetilde{\beta}_1)\widetilde{\beta}_1^{-t} t^\nu + \nu \widetilde{\beta}_1^{-t} t^{\nu-1}} = 1.
\end{equation*}
Therefore, the rightmost side of~\eqref{asymptoticallysharp} is an asymptotically sharp approximation, relieving the condition $\nu \le 0$ for validity of the approximation. Within $\widetilde{\beta}_1 \in (0,1)$, the approximation diverges as expected, validating the asymptotic analysis. Recall that $|\Delta_r| \le K \eta_\ell^r u_r$, which now gives via \eqref{asymptoticallysharp}
\begin{equation}\label{LHSfirstpart}
\widetilde{\beta}_2^{t-r+1} \widetilde{v}_{r-1} \lesssim  \sum_{z=1}^{r-1} \widetilde{\beta}_2^{r-1-z} \Delta_z^2 \lesssim \widetilde{\beta}_2^{r-1} \sum_{z=1}^{r-1} \widetilde{\beta}_2^{-z} (\eta_\ell^z)^2 u_z^2 \lesssim \max \left\{\mathcal{O}(1), T^{2(\nu+\zeta)}\right\}.
\end{equation}
Here, we used $\widetilde{\beta}_2 \le 1$ and $r \le T$. We thus obtain
\begin{equation*}
(1-\widetilde{\beta}_1)\sum_{t=1}^T \sum_{r=1}^t \eta_t \widetilde{\beta}_1^{t-r} \eta_\ell^r \gtrsim (1-\widetilde{\beta}_1)\sum_{t=1}^T \eta_t \frac{t^{\nu}}{|\log_e(\widetilde{\beta}_1)|} \gtrsim (1-\widetilde{\beta}_1) \int_1^T \frac{t^{\omega + \nu}}{\log_e(\widetilde{\beta}_1)} \mathrm{d}t \approx  \frac{(1-\widetilde{\beta}_1)T^{\omega + \nu + 1}}{(\omega + \nu + 1)|\log_e(\widetilde{\beta}_1)|}.
\end{equation*}
Therefore as $\nu + \zeta < 0$, we conclude that
\begin{equation*}
F_1 \gtrsim \Omega \left(\frac{(1-\widetilde{\beta}_1)}{(\omega + \nu + 1)\log_e(\widetilde{\beta}_1)} \cdot T^{\omega + \nu + 1} \cdot \min_{t \in [T]} \mathbb{E}[\|\nabla F(x_{t})\|^2] \right).
\end{equation*}
Clearly, $F_2 \lesssim \mathcal{O}(1)$. To bound $F_3$, we have 
\begin{equation*}
\begin{aligned}
F_3 &= \sum_{t=1}^T \frac{\eta_t^2L}{2}\left\|\frac{\widetilde{\beta}_1^t \widetilde{m}_0 + (1-\widetilde{\beta}_1)\sum_{r=1}^t \widetilde{\beta}_1^{t-r} \Delta_r}{\sqrt{\widetilde{v}_t} + \tau}\right\|^2 \lesssim \sum_{t=1}^T \frac{t^{2\omega}}{\tau^2}\left(\widetilde{\beta}_1^{2t} \|\widetilde{m}_0\|^2 + (1-\widetilde{\beta}_1)^2\left\|\sum_{r=1}^t \widetilde{\beta}_1^{t-r} \Delta_r\right\|^2 \right)\\
&\lesssim \frac{\mathcal{O}(1)}{\tau^2} + \frac{(1-\widetilde{\beta}_1)^2 \sum_{t=1}^T t^{2\nu + 2\zeta + 2\omega} }{\tau^2 (\log_e(\widetilde{\beta}_1))^2} \lesssim \frac{\mathcal{O}(1)}{\tau^2} + \frac{(1-\widetilde{\beta}_1)^2 \ T^{2(\nu + \zeta + \omega) + 1} }{\tau^2(\log_e(\widetilde{\beta}_1))^2}.
\end{aligned}
\end{equation*}
$F_4$ is bounded similarly after using Jensen, 
\begin{equation*}
|F_4| \le \sum_{t=1}^T \eta_t \widetilde{\beta}_1^t\mathbb{E}\left[\left\langle |\nabla F(x_{t-1})|,\frac{ |\widetilde{m}_0| }{\sqrt{\widetilde{v}_t} + \tau} \right\rangle\right] \le \sum_{t=1}^T \eta_t \widetilde{\beta}_1^t dG \cdot \max_{j \in [d]} \frac{ |\widetilde{m}_0|_j }{\sqrt{[\widetilde{v}_t]_j} + \tau} \lesssim \mathcal{O}(1).
\end{equation*}
Bounding $F_5$ and $F_6$ is more complex. We begin by noting that
\begin{equation*}
\begin{aligned}
|\mathbb{E}[B_{1,q}]| &\le \sum_{j=1}^d \frac{G(1-\widetilde{\beta}_2) \widetilde{\beta}_2^{\frac{q}{2}}}{\tau^2} \cdot \mathbb{E}\left[\frac{[\Delta_{t-q}^2|\Delta_r|]_j}{\sqrt{[\widetilde{v}_{t-q}]_{j}}}\right]\\
&\le \sum_{j=1}^d \frac{G(1-\widetilde{\beta}_2) \widetilde{\beta}_2^{\frac{q}{2}}}{\tau^2 } \cdot \mathbb{E}\left[\frac{[\Delta_{t-q}^2|\Delta_r|]_j}{\sqrt{\max \{[\widetilde{\beta}_2^{t-q} \widetilde{v}_0 + (1-\widetilde{\beta}_2)\sum_{o=1}^{t-q} \widetilde{\beta}_2^{t-q-o} \Delta_o^2]_{j} ,\tau^2\}}}\right] \\
& \lesssim \sum_{j=1}^d \frac{(1-\widetilde{\beta}_2) \widetilde{\beta}_2^{\frac{q}{2}}}{\tau^3} \cdot \mathbb{E}\left[[\Delta_{t-q}^2|\Delta_r|]_j\right].
\end{aligned}
\end{equation*}
Therefore,
\begin{equation*}
\begin{aligned}
F_5 F_6 &\lesssim (1-\widetilde{\beta}_1)\sum_{t=1}^T\eta_t \sum_{r=1}^t \widetilde{\beta}_1^{t-r}  (1-\widetilde{\beta}_2) \sum_{q=0}^{t-r}\widetilde{\beta}_2^{\frac{q}{2}} \cdot \mathbb{E}\left[\Delta_{t-q}^2|\Delta_r|\right]   \\
& \le (1-\widetilde{\beta}_1)\sum_{t=1}^T \eta_t\sum_{r=1}^t \widetilde{\beta}_1^{t-r}  (1-\widetilde{\beta}_2) \eta_\ell^{r} u_{r} \sum_{q=0}^{t-r} \widetilde{\beta}_2^{\frac{q}{2}} (\eta_\ell^{t-q} u_{t-q})^2.
\end{aligned}
\end{equation*}
Under the substitution $q \leftarrow t- \widetilde{q}$, we have that
\begin{equation*}
\begin{aligned}
F_5 F_6 & \lesssim (1-\widetilde{\beta}_1)\sum_{t=1}^T\eta_t\sum_{r=1}^t \widetilde{\beta}_1^{t-r}  (1-\widetilde{\beta}_2) \eta_\ell^{r} u_{r} \widetilde{\beta}_2^{\frac{t}{2}} \sum_{\widetilde{q}=r}^{t} \widetilde{\beta}_2^{\frac{-\widetilde{q}}{2}} (\eta_\ell^{\widetilde{q}} u_{\widetilde{q}})^2 \\
&\lesssim (1-\widetilde{\beta}_1)\sum_{t=1}^T\eta_t\sum_{r=1}^t \widetilde{\beta}_1^{t-r}  (1-\widetilde{\beta}_2) \eta_\ell^{r} u_{r} \cdot 2^{\nu+\zeta} \frac{t^{2(\nu+\zeta)}}{|\log_e(\widetilde{\beta}_2)|}\\
&\lesssim (1-\widetilde{\beta}_1)\sum_{t=1}^T \frac{t^{\omega+2(\nu+\zeta)}}{|\log_e(\widetilde{\beta}_2)|}\widetilde{\beta}_1^{t}  (1-\widetilde{\beta}_2) \sum_{r=1}^t \widetilde{\beta}_1^{-r} r^{\nu+\zeta} 
\\
&\lesssim (1-\widetilde{\beta}_1)\sum_{t=1}^T (1-\widetilde{\beta}_2) \frac{t^{\omega+3(\nu+\zeta)}}{|\log_e(\widetilde{\beta}_1)||\log_e(\widetilde{\beta}_2)|} \approx \frac{(1-\widetilde{\beta}_1)(1-\widetilde{\beta}_2)}{|\log_e(\widetilde{\beta}_1)||\log_e(\widetilde{\beta}_2)|} \cdot \max\left\{\mathcal{O}(1),T^{\omega +3(\nu+\zeta) +1}\right\}.
\end{aligned}
\end{equation*}
As $\mathcal{O}(1)$ terms are subsumed by $F_4$, $F_5F_7$ is bounded via
\begin{equation*}
\begin{aligned}
(1-\widetilde{\beta}_1)\sum_{t=1}^T \eta_t \sum_{r=1}^t \widetilde{\beta}_1^{t-r} \frac{L\eta_\ell^r u_r K\operatorname{diam}(\mathcal{X})}{\tau} &\lesssim (1-\widetilde{\beta}_1)\sum_{t=1}^T \eta_t \sum_{r=1}^t \widetilde{\beta}_1^{t}\frac{\eta_\ell^r u_r \widetilde{\beta}_1^{-r}}{\tau} \\
&\lesssim (1-\widetilde{\beta}_1)\sum_{t=1}^T  \frac{t^{\nu + \zeta + \omega}}{\tau|\log_e(\widetilde{\beta}_1)|} \lesssim \frac{(1-\widetilde{\beta}_1) T^{\omega  + \nu + \zeta+ 1}}{\tau|\log_e(\widetilde{\beta}_1)|}.
\end{aligned}
\end{equation*}
The remaining terms may also be bounded as follows:  
\begin{equation*}
\begin{aligned}
F_5F_8&\lesssim \frac{(1-\widetilde{\beta}_1)}{\tau}\sum_{t=1}^T\eta_t\sum_{r=1}^t \widetilde{\beta}_1^{t-r} \eta_\ell^r d_r \lesssim \frac{(1-\widetilde{\beta}_1)}{\tau}\sum_{t=1}^T\eta_t\sum_{r=1}^t \widetilde{\beta}_1^{t}\widetilde{\beta}_1^{-r} r^{\nu + \gamma} \\
&\lesssim \frac{(1-\widetilde{\beta}_1)}{|\log_e(\widetilde{\beta}_1)|} \sum_{t=1}^T t^{\omega} t^{\nu + \gamma} \lesssim \frac{(1-\widetilde{\beta}_1)}{|\log_e(\widetilde{\beta}_1)|}\max\{T^{\omega + \nu + \gamma + 1},\mathcal{O}(1)\}
\end{aligned}
\end{equation*}
where $F_9$ and $F_{10}$ can be bounded via
\begin{equation*}
\begin{aligned}
F_5F_9 & \lesssim (1-\widetilde{\beta}_1)\sum_{t=1}^T\sum_{r=1}^t  \frac{\eta_t \widetilde{\beta}_1^{t-r}(\eta_\ell^r)^2 u_r }{\tau} \lesssim \frac{(1-\widetilde{\beta}_1)}{|\log_e(\widetilde{\beta}_1)|} \sum_{t=1}^T\sum_{r=1}^t  \frac{\eta_t \widetilde{\beta}_1^{t-r}r^{2\nu + \zeta} }{\tau} \lesssim \frac{T^{2\nu + \zeta + 1 + \omega}}{\tau},
\end{aligned}
\end{equation*}
\begin{equation*}
\begin{aligned}
F_5F_{10} &\lesssim (1-\widetilde{\beta}_1)\sum_{t=1}^T \sum_{r=1}^t \eta_t \widetilde{\beta}_1^{t-r}  \frac{\eta_\ell^r u_r^{1-\alpha}}{\tau} \lesssim (1-\widetilde{\beta}_1)\sum_{t=1}^T \sum_{r=1}^t t^{\omega} \widetilde{\beta}_1^{t}  \frac{\widetilde{\beta}_1^{-r} r^{\nu + \zeta(1-\alpha)} }{\tau} \\
&\lesssim \sum_{t=1}^T t^\omega \frac{(1-\widetilde{\beta}_1)}{|\log_e(\widetilde{\beta}_1)|}t^{\nu + \zeta(1-\alpha)} \lesssim \frac{(1-\widetilde{\beta}_1)}{|\log_e 
 (\widetilde{\beta}_1)|} T^{\omega + \nu + \zeta(1-\alpha) +1} .
\end{aligned}
\end{equation*}
Standard calculations imply that under the conditions~\eqref{AdamConditions}, the dominating terms are $F_7$, $F_{10}$ with order $\mathcal{O}(1)$. Within the derived upper bound, $\zeta > 0$ destabilizes $F_7$ and decays $F_{10}$ to $0$, while $\zeta<0$ gives the analogous properties with $F_7$ and $F_{10}$ swapped. 
\end{proof}

\newpage
\section{Experiment Setup \& Full Results}\label{ExperimentSetupAppendix}

In this section, we present the experimental setups and results across two primary domains: synthetic data and natural language processing tasks. More precisely, we evaluate the performance of TailOPT instantiations with state-of-the-art benchmarks on convex models (with synthetic data), transformer encoders, as well as generative models. For convex, synthetic experiments, we construct datasets to emulate heavy-tailed stochastic gradients, focusing on linear regression models trained under contaminated label noise. The design includes generating feature matrices and labels while injecting noise from heavy-tailed distributions to study convergence behaviors. Additionally, we introduce the {SynToken} dataset, which models the heavy-tailed distribution of token frequencies observed in natural language processing. For brevity, we only include the results of the SynToken dataset, denoted `Synthetic data', in the main text (Figure~\ref{InjectedNoise}). This allows us to evaluate learning algorithms in controlled settings, easing out and exploring the effects of both common and rare features.

For assessing the optimization of transformer encoders on natural language processing tasks, we evaluate RoBERTa~\cite{RoBERTa} on the General Language Understanding Evaluation (GLUE) benchmark~\cite{GLUE}, which encompasses a diverse range of tasks such as sentiment analysis, paraphrase detection, and natural language inference. By finetuning RoBERTa on GLUE, we assess its generalization capabilities and robustness. The benchmark's inclusion of multiple datasets ensures a comprehensive evaluation of model performance across various linguistic phenomena. Additionally, we also evaluate the capabilities of the T5~\cite{t5} generative model on WMT machine translation tasks~\cite{wmt}. These experiments provide insights into the behavior of optimization algorithms and pretrained models under realistic and challenging conditions. For RoBERTa, we optimize over GLUE across 10 simulated compute nodes, whereas for T5, we model 3 compute node finetuning on WMT benchmark datasets. 

\subsection{Convex Models}\label{AllSyntheticExperimentAppendix}
\subsubsection{Data Generation Process}
To simulate heavy-tailed stochastic gradients in a simple yet controlled linear regression setting, we generated a synthetic dataset as follows. The feature matrix \( X \in \mathbb{R}^{M \times m} \) was constructed with entries drawn independently from a standard normal distribution, \( X_{ij} \sim \mathcal{N}(0, 1) \). The true weight vector \( w_{\text{true}} \in \mathbb{R}^m \) was sampled from \( \mathcal{N}(0, I_m) \), where \( I_m \) is the \( m \times m \) identity matrix.

The true labels were computed using:
\begin{equation*}
y_{\text{true}} = X w_{\text{true}}.
\end{equation*}
To induce heavy-tailed stochastic gradients, we injected noise into the label vector by adding a noise term \( \xi \), resulting in contaminated labels:
\begin{equation*}
\hat{y} = y_{\text{true}} + \xi,
\end{equation*}
where \( \xi \in \mathbb{R}^M \) is a noise vector with entries drawn independently from a heavy-tailed distribution \( \mathcal{D} \). For simplicity, we assume coordinate-wise independence of the noise components.

After generating the dataset, we distributed the data across \( n = 10 \) datacenters in an IID fashion. Notably, the heavy-tailed noise was injected once prior to distribution, and no additional data were generated afterward. This approach ensured that the same contaminated training data are used locally throughout the training process. 

\subsubsection{Linear Regression Model}

We consider a single-layer neural network without biases, parameterized by \( w \in \mathbb{R}^m \), which is equivalent to linear regression. Training is performed using the contaminated labels \( (X, \hat{y}) \) with the mean-squared error (MSE) loss function:
\begin{equation*}
\mathcal{L}(w) = \frac{1}{2} \| \hat{y} - X w \|^2.
\end{equation*}
The gradient of the loss with respect to \( w \) is given by:
\begin{equation*}
\nabla_{w} \mathcal{L}(w) = -X^\top (\hat{y} - X w).
\end{equation*}
Substituting \( \hat{y} = y_{\text{true}} + \xi = X w_{\text{true}} + \xi \), we have:
\begin{equation*}
\nabla_{w} \mathcal{L}(w) = -X^\top (X w_{\text{true}} + \xi - X w) = -X^\top X (w_{\text{true}} - w) - X^\top \xi.
\end{equation*}
Simplifying, we obtain:
\begin{equation*}
\nabla_{w} \mathcal{L}(w) = X^\top X (w - w_{\text{true}}) - X^\top \xi.
\end{equation*}
The term \( -X^\top \xi \) reflects the influence of the heavy-tailed noise on the gradient. Given that \( X \) has Gaussian entries and \( \xi \) follows a heavy-tailed distribution, the stochastic gradients \( \nabla_{w} \mathcal{L}(w) \) are also heavy-tailed.

\subsubsection{The SynToken Dataset}

To model the heavy-tailed nature of token frequencies observed in natural language processing, we created the synthetic {SynToken} dataset. In natural language, word or token usage often follows a heavy-tailed distribution. That is, a small number of tokens appear very frequently, while a large number of tokens appear infrequently but carry significant contextual information.
In our dataset, we partitioned the feature space into common and rare features to reflect this phenomenon. Specifically, we designated the first \( p = 10\% \) of the columns of \( X \) as common features and the remaining \( 90\% \) as rare features. The common features were generated by sampling from a Bernoulli distribution with a high probability of success:
\begin{equation*}
X_{\text{common}} \sim \text{Bernoulli}(0.9),
\end{equation*}
resulting in features that are frequently active. The rare features were sampled from a Bernoulli distribution with a low probability of success:
\begin{equation*}
X_{\text{rare}} \sim \text{Bernoulli}(0.1),
\end{equation*}
introducing sparsity and emulating infrequently occurring tokens.
The complete feature matrix \( X \) was formed by concatenating \( X_{\text{common}} \) and \( X_{\text{rare}} \):
\begin{equation*}
X = \left[ X_{\text{common}},\ X_{\text{rare}} \right].
\end{equation*}
The weight vector \( w \) was sampled from a standard multivariate normal distribution, \( w \sim \mathcal{N}(0, I_m) \), consistent with the previous setup. Noise injection was analogously applied to the labels as before. This approach was taken to mimic the key characteristics of tokenization and word embeddings in natural language processing, via a minimal yet effective model. One benefit of synthetic datasets is that by simulating the distribution of common and rare tokens, the {SynToken} dataset allows us to study the effects of heavy-tailed data distributions on learning algorithms in a controlled setting. Additionally, we note that the problem being studied is $\mu$-strongly convex.

\begin{figure}[h]
\centering
    \begin{subfigure}[b]{0.48\textwidth}
        \centering
        \includegraphics[width=\textwidth]{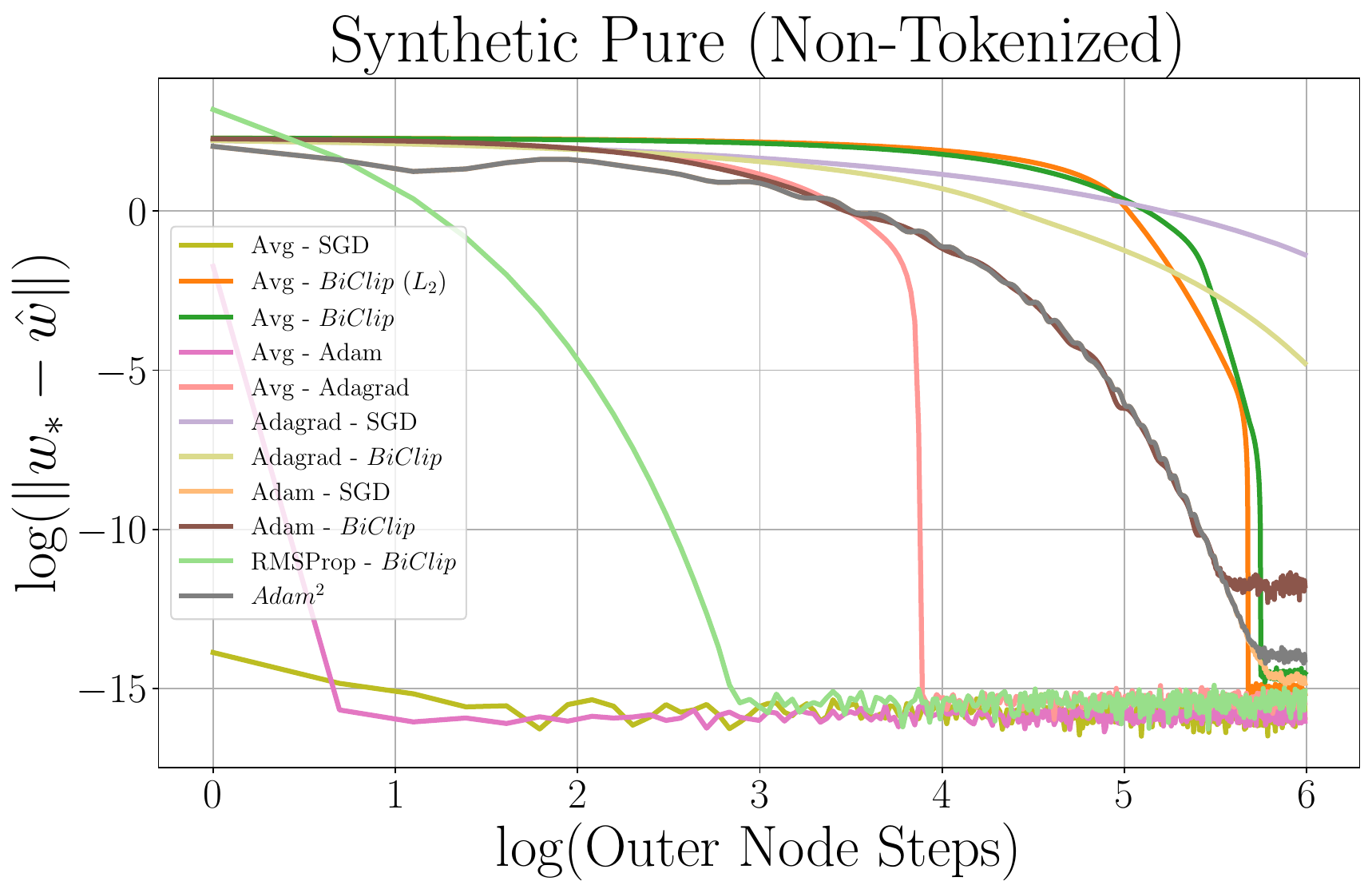}
        \subcaption{Pure non-tokenized task}
    \end{subfigure}
        \begin{subfigure}[b]{0.48\textwidth}
        \centering
        \includegraphics[width=\textwidth]{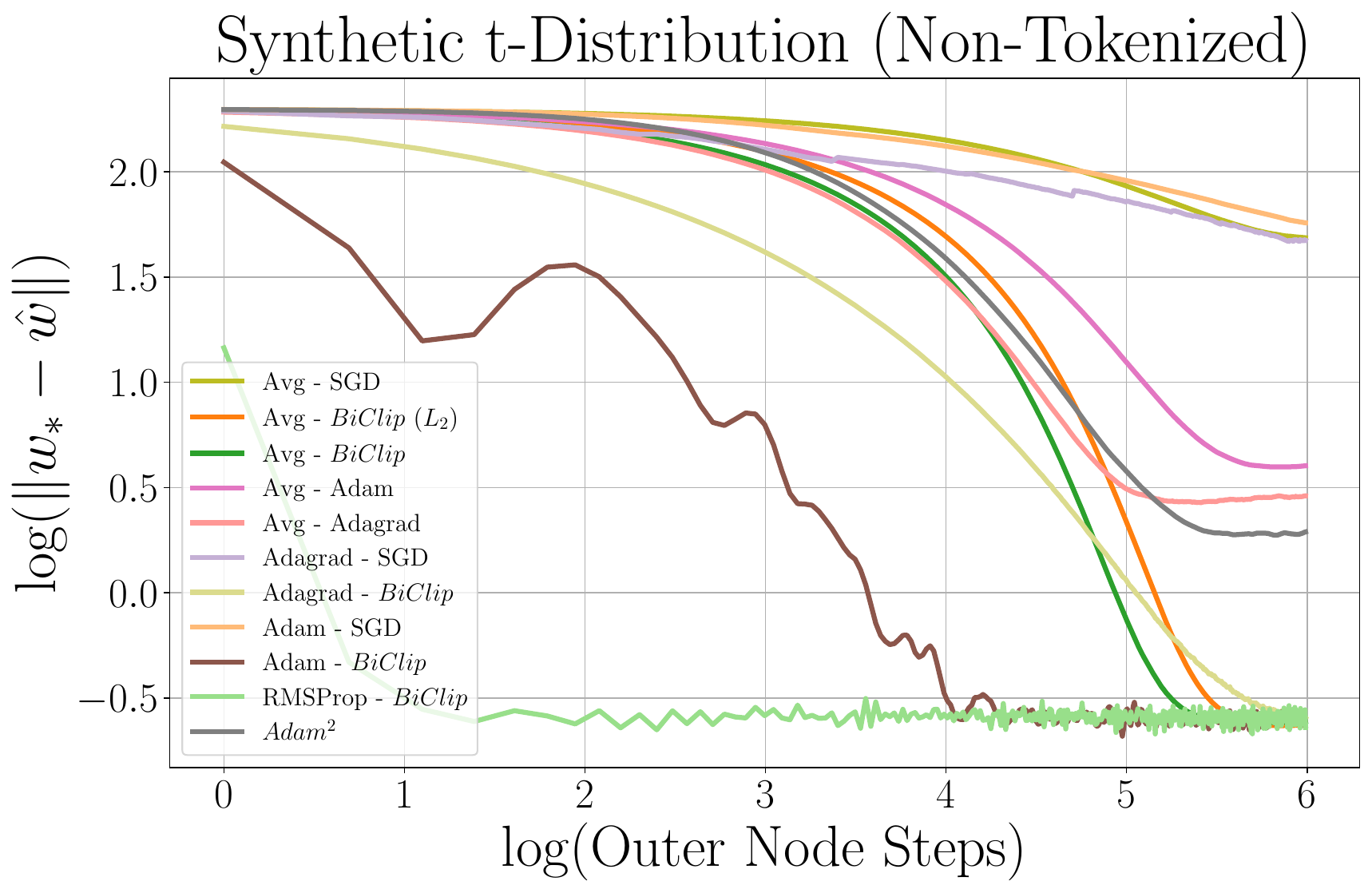}
        \subcaption{Noised non-tokenized task}
    \end{subfigure}
        \begin{subfigure}[b]{0.48\textwidth}
        \centering
        \includegraphics[width=\textwidth]{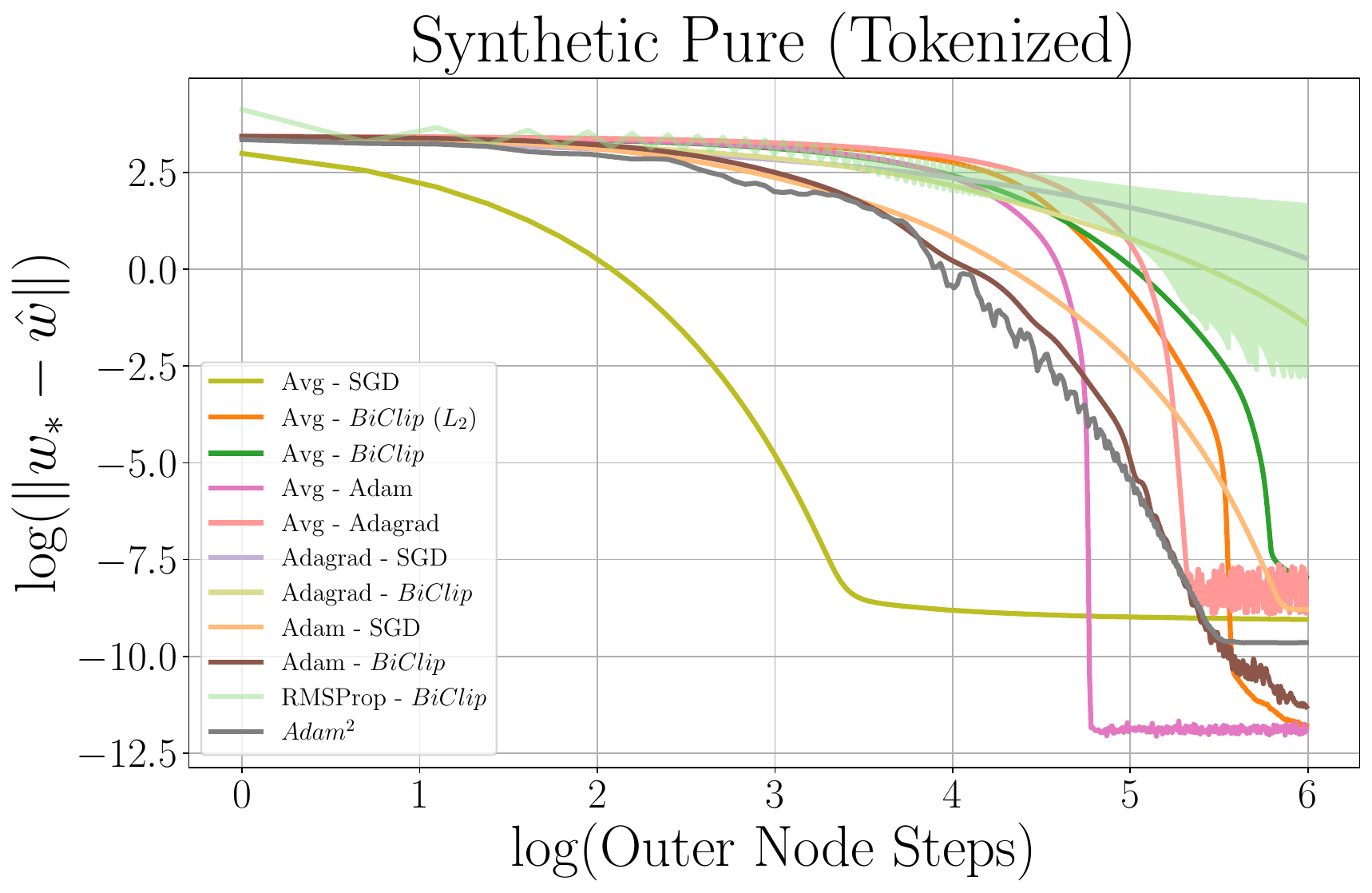}
        \subcaption{Pure tokenized task}
    \end{subfigure}
        \begin{subfigure}[b]{0.48\textwidth}
        \centering
        \includegraphics[width=\textwidth]{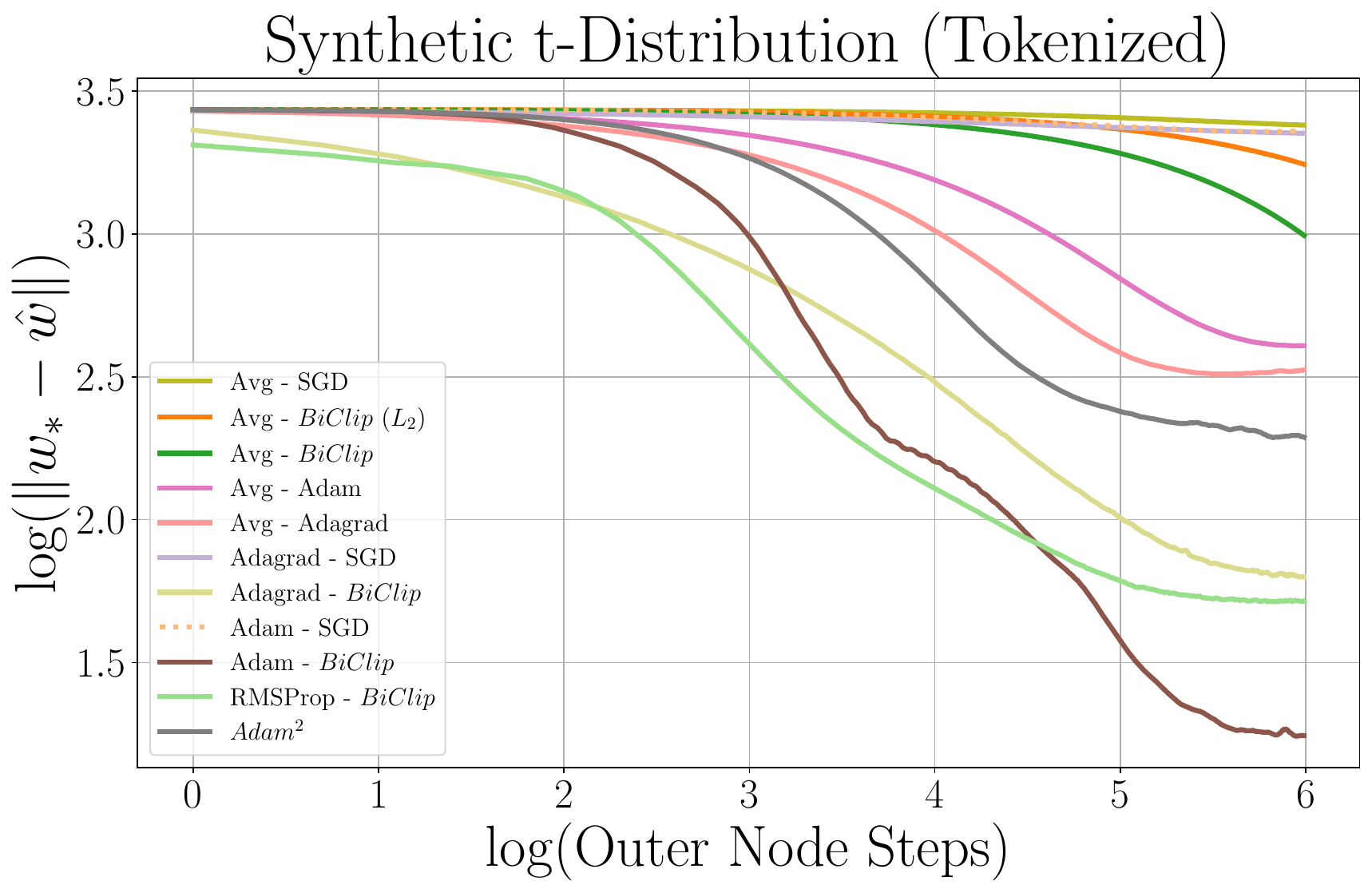}
        \subcaption{Noised tokenized task}
    \end{subfigure}
    \hfill
    \caption{(Top) The results on the non-tokenized synthetic dataset are presented. In the absence of noise injection, Avg-Adam, Avg-SGD, and RMSProp-$BiClip$ demonstrate the most competitive performance. However, under heavy-tailed noise injection, RMSProp-$BiClip$ and Adam-$BiClip$ achieve the highest performance, while Avg-SGD exhibits among the poorest outcomes. Notably, oscillations observed in Adam-$BiClip$ may reflect the impact of amplified update learning rates in the outer optimizer, potentially enabling finer-grained exploration of the optimization landscape. (Bottom) Tokenization drastically alters algorithmic performance. Without noise, Avg-SGD decays the fastest, while Avg-Adam converges to a superior optimum. However, when synthetic, unbiased heavy-tailed noise is introduced, Avg-SGD becomes highly unstable, whereas Adam-$BiClip$ and RMSProp-$BiClip$ consistently deliver the best results.
    }
\label{SynToken}
\end{figure}

\subsection{Synthetic Data Experiments Discussion}

\paragraph{Does the heavy-tailed distribution of covariates matter?} 
Figure~\ref{SynToken} (a) and (c) illustrate that a heavy-tailed distribution of token frequencies has significant impacts on the performance of optimization strategies. In (a), RMSProp-$BiClip$ performs competitively under standard tokenization. However, in (c), heavy-tailed tokenization applied to the feature matrix destabilizes RMSProp-$BiClip$. Interestingly, under tokenized conditions without noise, RMSProp exhibits oscillatory behavior, whereas Adam maintains relative stability. This is consistent with the interpretation of Adam as incorporating an exponentially decaying moving average of the gradient's first moment, which augments optimization stability. Upon noise injection, best performing hyperparameters for RMSProp-$BiClip$ does not show oscillatory behavior, but is larger in terms of distance $\|w_* - \hat{w}\|$ than the case without noise.

\paragraph{Does noise matter?} When noise is injected into the labels, the performance dynamics shift considerably. outer optimizer adaptive or non-adaptive methods \textit{combined} with inner optimizer SGD perform poorly, which may indicate that inner optimizers should take a focal role in addressing the challenges posed by heavy-tailed noise. While the choice of the outer optimizer may appear to a limited impact on the binary question of learnability for this specific synthetic data (i.e., ``Can the algorithm decrease distance to the true $w_*$ or not?''), under tokenized conditions with heavy-tailed noise (Figure~\ref{SynToken}(d)), outer optimizer Adam demonstrates the best performance. Figure~\ref{SynToken} reveals that heavy-tailed noise generally destabilizes all algorithms, including adaptive methods, clipped approaches, and pure SGD (c.f., minimum values in (a) and (c) to (b) and (d)). Notably, coordinate-wise $BiClip$ consistently outperforms $L_2$ clipping, aligning with the results in Table~\ref{tab:glue-results}.

\paragraph{How far should these results generalize?} A word of caution is warranted against overgeneralization. These results are derived from a simplified regression model, limiting the ability to generalize the observed trends. Nevertheless, the experiments underscore the pronounced effects of heavy-tailed noise in a controlled synthetic environment and highlight the noise-mitigating capabilities of optimizers such as Adam, RMSProp, and $BiClip$. Additionally, it is important to note that real-world transformer models often comprise tens of millions to billions of parameters. 

\subsection{Transformer Encoders}\label{GLUEAppendix}


The General Language Understanding Evaluation (GLUE) benchmark~\cite{GLUE} serves as a comprehensive framework for evaluating natural language understanding (NLU) models across a diverse range of tasks. By incorporating datasets that span various linguistic challenges, GLUE provides a rigorous testbed for assessing the generalization capabilities of NLP models. 
RoBERTa is a state-of-the-art transformer-based model designed to enhance the performance of the original BERT architecture through improved pretraining strategies. Proposed by~\citet{RoBERTa}, RoBERTa optimizes BERT by refining its training setup, enabling more robust natural language understanding (NLU) across diverse tasks.

\begin{table*}[h!]
\centering
\caption{Evaluation results on GLUE Benchmark datasets during test time. Metrics: 
CoLA (Matthews Correlation Coefficient, MCC), 
SST-2 (Accuracy), 
MRPC (Accuracy/F1), 
STS-B (Spearman/Pearson), 
QQP (Accuracy/F1), 
MNLI (Accuracy), 
QNLI (Accuracy), 
RTE (Accuracy). Entries marked with $0.0$ indicate the actual metric value (averaged across the granularity of each datapoint in the baseline dataset), which implies random guessing or failure to learn. Top \textcolor{DarkerBlue}{\textbf{first}}, \textcolor{blue}{\textbf{second}}, and \textcolor{cyan}{\textbf{third}} best-performing algorithms are highlighted. We note that nested optimization algorithms utilizing adaptivity or coordinate-wise $BiClip$ on both inner and outer optimizers generally achieve greater than $80\%$ averaged performance (out of $100\%$). For $Adam^2$, preconditioners are transmitted between the inner and outer optimizers, whereas DiLoCo requires maintaining preconditioners on the inner optimizers, both of which incur significant communication or memory overhead. 
} 
\label{tab:glue-results-appendix}
\resizebox{\textwidth}{!}{
\begin{tabular}{lccccccccc}
\toprule
\textbf{Algorithm} & \textbf{MNLI} & \textbf{QNLI} & \textbf{QQP (Acc/F1)} & \textbf{RTE} & \textbf{SST-2} & \textbf{MRPC (Acc/F1)} & \textbf{CoLA} & \textbf{STS-B (S/P)} & \textbf{Average} \\
\midrule

\textbf{Avg-SGD}~\cite{mcmahan2017communication} 
& 81.13 
& 83.21 
& 78.71/78.69 
& 57.40 
& 90.94 
& 67.30/80.52 
& 0.0 
& 26.76/28.20 
& 61.17 \\

\textbf{Avg-$L_2Clip$}~\cite{fatclip} 
& 81.82 
& 85.68 
& 80.00/79.82 
& 54.51 
& 91.97 
& 68.38/81.22 
& 0.0 
& 41.27/40.96 
& 64.15 \\

\textbf{Avg-$BiClip$ ($L_2$)} 
& 81.95 
& 86.16 
& 84.62/79.89 
& 55.59 
& 92.31 
& 68.38/81.23 
& 0.0 
& 36.93/37.22 
& 64.03 \\

\textbf{Avg-Adagrad} 
& 84.70 
& 88.79 
& 87.09/83.34 
& 64.26 
& 93.34 
& 71.56/82.63 
& 27.72 
& 81.93/81.26 
& 76.97 \\

\textbf{Avg-Adam} 
& 84.97 
& 89.47 
& 87.66/84.09 
& 64.62 
& \textcolor{blue}{\textbf{93.80}} 
& 81.86/87.74 
& 41.41 
& \textcolor{cyan}{\textbf{86.21}}/\textcolor{cyan}{\textbf{86.55}} 
& 80.76 \\

\textbf{Avg-$BiClip$} 
& 85.08 
& 89.45 
& 87.83/84.12 
& 66.06 
& \textcolor{DarkerBlue}{\textbf{94.03}} 
& 71.32/82.45 
& 41.40 
& 84.08/84.48 
& 79.12 \\

\textbf{$Bi^2Clip$ ($L_2$)} 
& 84.31 
& 89.20 
& 86.36/82.60 
& 72.20 
& 93.34 
& 86.52/90.23 
& \textcolor{blue}{\textbf{60.02}} 
& 82.41/83.00
& 82.74 \\

\textbf{Adagrad-SGD}~\cite{AdaptiveFederatedOptimization} 
& 82.40 
& 86.61 
& 82.51/77.68 
& 71.48 
& 92.08 
& 85.53/89.52 
& 47.80 
& 40.37/42.24 
& 72.69 \\

\textbf{RMSProp-SGD}~\cite{AdaptiveFederatedOptimization}
& 84.20 
& 88.46 
& 87.12/83.30 
& \textcolor{cyan}{\textbf{72.56}} 
& 91.85 
& 85.50/89.17 
& 52.39 
& 45.72/41.80 
& 74.73 \\

\textbf{Adam-SGD}~\cite{AdaptiveFederatedOptimization}
& 82.93 
& 86.98 
& 85.99/80.87 
& 66.78 
& 90.71 
& 87.01/90.09 
& 49.93 
& 44.48/41.26 
& 73.37 \\

\textbf{Adam-$L_2Clip$} 
& 82.54 
& 86.69 
& 85.88/80.72 
& 59.92 
& 89.67 
& 85.29/89.90 
& 48.54 
& 69.19/67.16 
& 76.86 \\

\textbf{Adagrad-$BiClip$} 
& \textcolor{cyan}{\textbf{85.54}} 
& \textcolor{DarkerBlue}{\textbf{90.02}} 
& 88.60/\textcolor{cyan}{\textbf{85.05}} 
& \textcolor{blue}{\textbf{73.36}} 
& 93.23 
& 85.78/89.86 
& 48.87 
& 84.03/85.90 
& 82.75 \\

\textbf{RMSProp-$BiClip$} 
& \textcolor{blue}{\textbf{85.56}} 
& \textcolor{cyan}{\textbf{89.82}} 
& 88.50/84.44 
& 70.75 
& \textcolor{cyan}{\textbf{93.69}} 
& 84.80/88.92 
& 50.99 
& \textcolor{DarkerBlue}{\textbf{87.65}}/\textcolor{DarkerBlue}{\textbf{87.79}} 
& \textcolor{cyan}{\textbf{82.99}} \\

\textbf{Adam-$BiClip$} 
& 84.26 
& 89.20 
& \textcolor{cyan}{\textbf{88.64}}/84.74 
& 69.67 
& 92.43 
& 86.52/90.09 
& \textcolor{cyan}{\textbf{56.12}} 
& 82.83/79.71 
& 82.20 \\

\textbf{Adam-$BiClip$ ($L_2$)} 
& 83.18 
& 86.47 
& 85.63/80.27 
& 67.50 
& 89.56 
& 86.02/89.65 
& 53.17 
& 74.73/73.48 
& 79.06 \\

\textbf{$Adam^2$}~\cite{wang2021local} 
& 85.11 
& 88.87 
& \textcolor{DarkerBlue}{\textbf{89.04}}/\textcolor{DarkerBlue}{\textbf{85.51}} 
& 71.48 
& 92.66 
& \textcolor{cyan}{\textbf{87.50}}/\textcolor{cyan}{\textbf{91.03}} 
& 52.70 
& 84.47/83.82 
& 82.93 \\

\textbf{DiLoCo}~\cite{DiLoCo} 
& \textcolor{DarkerBlue}{\textbf{85.68}} 
& \textcolor{blue}{\textbf{89.87}} 
& \textcolor{blue}{\textbf{88.78}}/\textcolor{blue}{\textbf{85.19}} 
& 67.87 
& 91.89 
& \textcolor{blue}{\textbf{87.99}}/\textcolor{blue}{\textbf{91.20}} 
& 54.77 
& 85.93/84.76 
& \textcolor{blue}{\textbf{83.08}} \\

\textbf{$Bi^2Clip$} 
& 85.06 
& 89.73 
& 84.93/83.97 
& \textcolor{DarkerBlue}{\textbf{76.53}} 
& \textcolor{blue}{\textbf{93.80}} 
& \textcolor{DarkerBlue}{\textbf{89.21}}/\textcolor{DarkerBlue}{\textbf{92.44}} 
& \textcolor{DarkerBlue}{\textbf{60.08}} 
& \textcolor{blue}{\textbf{87.07}}/\textcolor{blue}{\textbf{86.89}} 
& \textcolor{DarkerBlue}{\textbf{84.52}} \\

\bottomrule
\end{tabular}
}
\end{table*}

\newpage
\subsection{Generative Models}
We additionally evaluate our method using T5~\cite{t5}, a state-of-the-art text-to-text transformer model developed by Google Research. T5 unifies natural language processing tasks under a text-to-text framework, where both inputs and outputs are text strings, making it highly versatile across tasks such as summarization, translation, and classification. The model was pretrained on the Colossal Clean Crawled Corpus (C4) using a span corruption objective and is available in multiple sizes, ranging from T5-Small (60M parameters) to T5-XXL (11B parameters). This unified framework and scalability allow T5 to excel in a wide range of tasks, making it a strong baseline for evaluating our proposed method.

To evaluate machine translation tasks, we utilize the WMT datasets, a widely recognized benchmark for translation research~\cite{wmt}. Specifically, we finetune T5 on the TED Talks and News Commentary datasets. The TED Talks dataset, originally sourced from IWSLT 2017~\cite{IWSLT2017}, provides multilingual translations of TED Talk transcripts, offering diverse linguistic and domain-specific challenges. In contrast, the News Commentary dataset contains parallel text derived from news articles in various languages, presenting a more formal and structured domain. These datasets represent distinct styles and linguistic features, providing a rigorous evaluation of algorithm agility in optimizing across various domains or tasks.

\begin{table*}[h!]
\centering
\caption{Evaluation results on machine translation benchmarks. Metrics reported are BLEU and METEOR scores for various language pairs across the TED Talks and News Commentary datasets. The final column represents the average score across all metrics for each algorithm.}
\label{tab:mt-results-average}
\scalebox{0.9}{
\begin{tabular}{lccccccccc}
\toprule
\textbf{Algorithm} & \multicolumn{2}{c}{\textbf{TED Talks (en-de)}} & \multicolumn{2}{c}{\textbf{TED Talks (en-fr)}} & \multicolumn{2}{c}{\textbf{News Commentary (en-fr)}} & \textbf{Average} \\
\cmidrule(lr){2-3} \cmidrule(lr){4-5} \cmidrule(lr){6-7}
 & BLEU & METEOR & BLEU & METEOR & BLEU & METEOR & \\
\midrule
\textbf{Avg-SGD} 
 & 28.02 & 58.52 
 & 27.48 & 54.67
 & 30.07 & 54.13 
 & 42.15 \\
 
\textbf{Avg-$L_2Clip$} 
 & 28.99 & 58.94 
 & 29.66 & 57.40 
 & 31.02 & 56.73 
 & 43.79 \\
 
\textbf{$Bi^2Clip$} 
 & \textbf{29.41} & \textbf{59.18} 
 & 30.70 & \textbf{58.13} 
 & \textbf{31.79} & \textbf{57.69} 
 & \textbf{44.48} \\
 
\textbf{$Adam^2$} 
 & 28.06 & 58.05 
 & \textbf{30.94} & 57.48 
 & 30.97 & 55.85 
 & 43.56 \\
 
\bottomrule
\end{tabular}
}
\end{table*}

\subsection{Performance under Non-IID Data }\label{Appendix_nonIID_experiments}
\subsubsection{Custom Shakespeare Dataset}
Though not the main focus of this work, in this section, we aim to briefly evaluate the performance of TailOPT and baselines under non-datacenter, distributed environments. We utilized the LEAF repository~\cite{LEAF}, originally a benchmark suite for federated learning, which provides datasets, tools, and baselines to evaluate algorithms under real-world conditions. LEAF emphasizes non-IID data distributions, enabling the study of federated systems where data is naturally heterogeneous across smaller compute nodes. Among the datasets in LEAF, we modified the Shakespeare dataset, originally designed for next-character prediction, where each user now represented a character from Shakespeare’s works. After preprocessing, the dataset contained 1144 inner compute nodes, each corresponding to a character’s dialogue, with substantial variations in sample sizes, vocabulary, and syntax across compute nodes. This structure mirrors the imbalanced, domain-specific data distributions often encountered in federated learning.
We modify the Shakespeare dataset by redefining the prediction task from next-character prediction to next-token predictions. 

\begin{table*}[h!]
\centering
\caption{Perplexity scores on the Federated Shakespeare Next Word Prediction Task at a 0.1\% participation rate, for distillGPT-2 architecture finetuning after $3$ communication rounds. 
}
\vspace{-0.5em}
\label{tab:shakespeare-results}
\scalebox{0.88}{
\begin{tabular}{lcccccc}
\toprule
\textbf{Algorithm} & \textbf{Avg-SGD} & \textbf{Avg-$L_2Clip$} & \textbf{Avg-$BiClip$} & \textbf{RMSProp-$BiClip$}  & \textbf{$Bi^2Clip$} & \textbf{$Adam^2$} \\
\midrule
Perplexity Score & 1.9813 & 2.0126 & \textbf{1.7827} & 2.0054 & 1.9112 & 1.9445 \\
\bottomrule
\end{tabular}
}
\end{table*}

\vspace{-1em}
\subsubsection{Custom Philosopher Dataset}

To mitigate potential data leakage, we constructed a custom dataset, termed the Philosopher Dataset, to evaluate the non-IID setting and facilitate training from scratch. The Philosopher Dataset was synthesized by allocating each literary work to one of eight compute nodes, followed by an 80-20 train-test split. These texts were open sourced from Project Gutenberg\footnote{\url{https://www.gutenberg.org/}}, an extensive online repository offering over 75,000 classic or traditional books while strictly adhering to copyright protections.
\begin{table}[h!]
    \centering
    \caption{Composition of the Philosopher Dataset.}
    \vspace{-0.5em}
    \label{tab:philosopher-dataset}
    \scalebox{0.85}{
    \begin{tabular}{lll}
        \toprule
        \textbf{Title} & \textbf{Author} & \textbf{Translator} \\
        \midrule
        The Critique of Pure Reason & Immanuel Kant & J. Meiklejohn \\
        The Collected Works of William Hazlitt, Volume One & William Hazlitt & - \\
        The Works of Jane Austen & Jane Austen & - \\
        The Republic & Plato & Benjamin Jowett \\
        War and Peace & Leo Tolstoy & - \\
        The Federalist Papers & Alexander Hamilton, John Jay, James Madison & - \\
        The Count of Monte Cristo & Alexandre Dumas & - \\
        The Brothers Karamazov & Fyodor Dostoevsky & Constance Garnett \\
        \bottomrule
    \end{tabular}}
\end{table}

We instantiated a shallower GPT-2 architecture comprising 2 layers, 256 embedding dimensions, and 4 attention heads. This model was trained from scratch on the Philosopher Dataset. The training results are summarized in Table~\ref{tab:gpt2-results}.

\begin{table}[h!]
    \centering
    \caption{Perplexity scores on the Philosopher Next Word Prediction Task at a 100\% participation rate for the compressed GPT-2 architecture after 3 communication rounds. 
    }
    \vspace{-0.5em}
    \label{tab:gpt2-results}
    \scalebox{0.88}{
    \begin{tabular}{lcccccc}
    \toprule
    \textbf{Algorithm} & \textbf{Avg-SGD} & \textbf{Avg-$L_2Clip$} & \textbf{Avg-$BiClip$} & \textbf{RMSProp-$BiClip$}  & \textbf{$Bi^2Clip$} & \textbf{$Adam^2$} \\
    \midrule
    Perplexity Score & 2.6361 & 2.1183 & \textbf{1.6266} & 1.7983 & 2.3488 & 2.5861 \\
    \bottomrule
    \end{tabular}
    }
\end{table}

\paragraph{Discussion.} In the synthesized non-IID setting, we observe that algorithmic instantiations employing joint adaptivity or adaptive approximations--i.e., incorporating adaptivity or its efficient approximations at both the inner and outer optimizers--tend to underperform slightly. This aligns with the theoretical intuition that highly sensitive, rapidly adapting optimizers are more susceptible to unmitigated client drift, effectively overfitting to the biases of local data shards at the inner optimizers. However, Avg-$BiClip$, which integrates a clipping mechanism to regulate noise variance and stabilize optimization dynamics, exhibits notably robust performance. In particular, Avg-$BiClip$ achieves the strongest results in settings with high data heterogeneity across compute nodes, suggesting that $BiClip$ mitigates not only noise variance but also client drift. We further compare these findings to results on the synthetic dataset (Appendix~\ref{AllSyntheticExperimentAppendix}) where noise-injected data were distributed IID across nodes, contrasting with the Shakespeare and Philosopher datasets. 
We note that the perplexities obtained are lower compared to those achieved on larger text datasets, such as WikiText-103 or large-scale Common Crawl subsets (e.g., distillGPT reportedly achieves a perplexity of around 16 on the WikiText-103 benchmark, a long-term dependency language modeling dataset)\footnote{\url{https://github.com/huggingface/transformers/tree/main/examples/research_projects/distillation}}. This arises from the smaller size of the Shakespeare and Philosopher datasets in comparison to larger benchmarks. We provide the optimal hyperparameters for the non-IID experiments in Table~\ref{tab:hyperparam-grid-noniid}.

\subsection{Gradient Distributions}
Figure~\ref{GradientDistributionComparison} highlights how gradient distributions can be distinctly altered by adaptive or clipping operations. 

\begin{figure*}[h!]
\centering
    \begin{subfigure}[b]{0.17\textwidth}
        \centering
        \includegraphics[width=\textwidth]{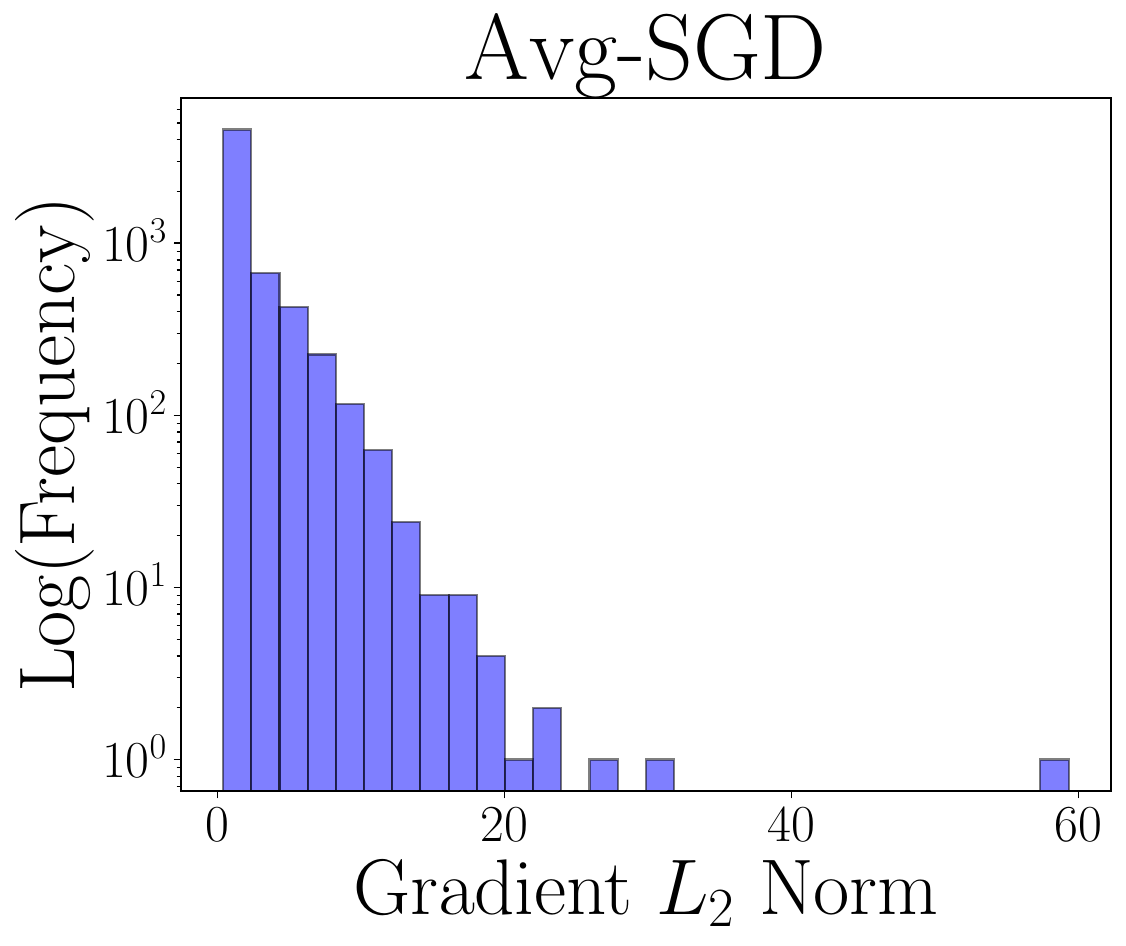}
    \end{subfigure}
    \begin{subfigure}[b]{0.17\textwidth}
        \centering
        \includegraphics[width=\textwidth]{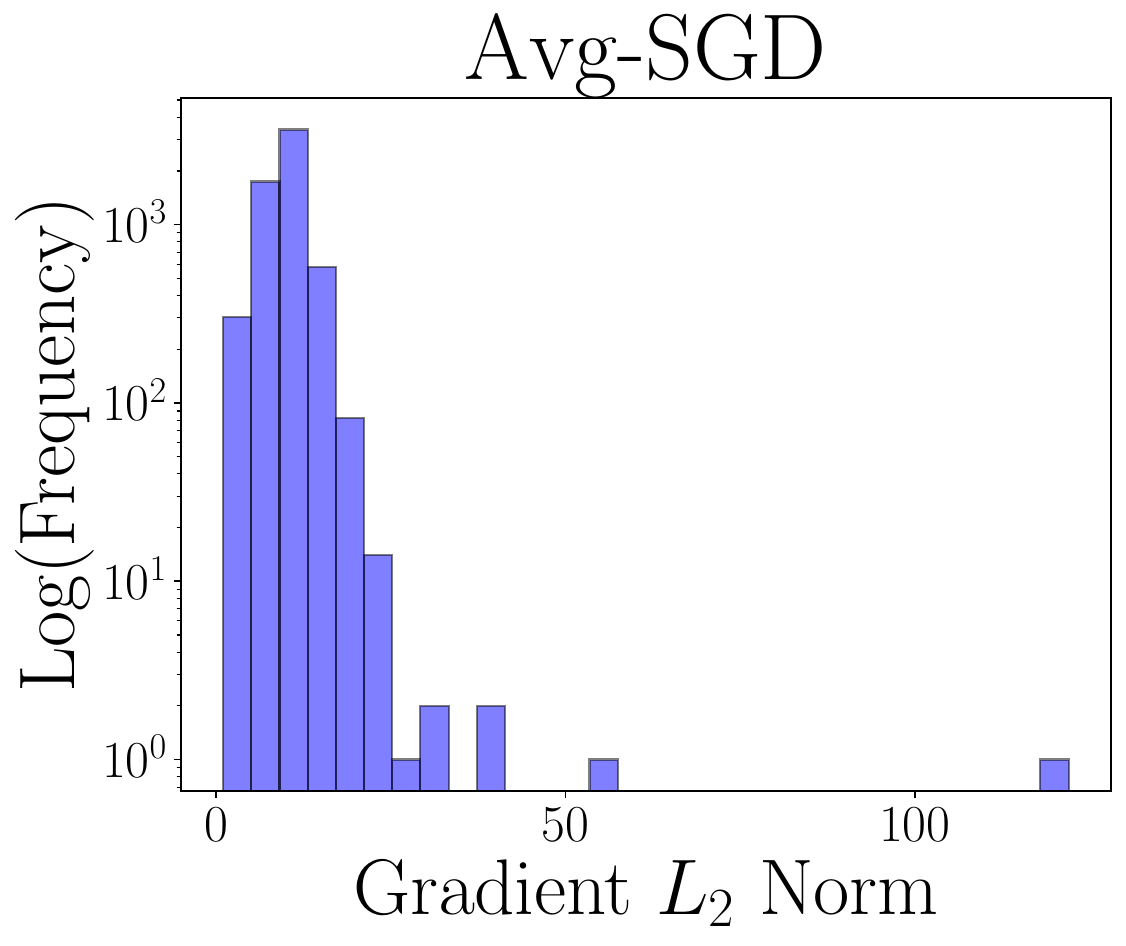}
    \end{subfigure}
        \begin{subfigure}[b]{0.17\textwidth}
        \centering
        \includegraphics[width=\textwidth]{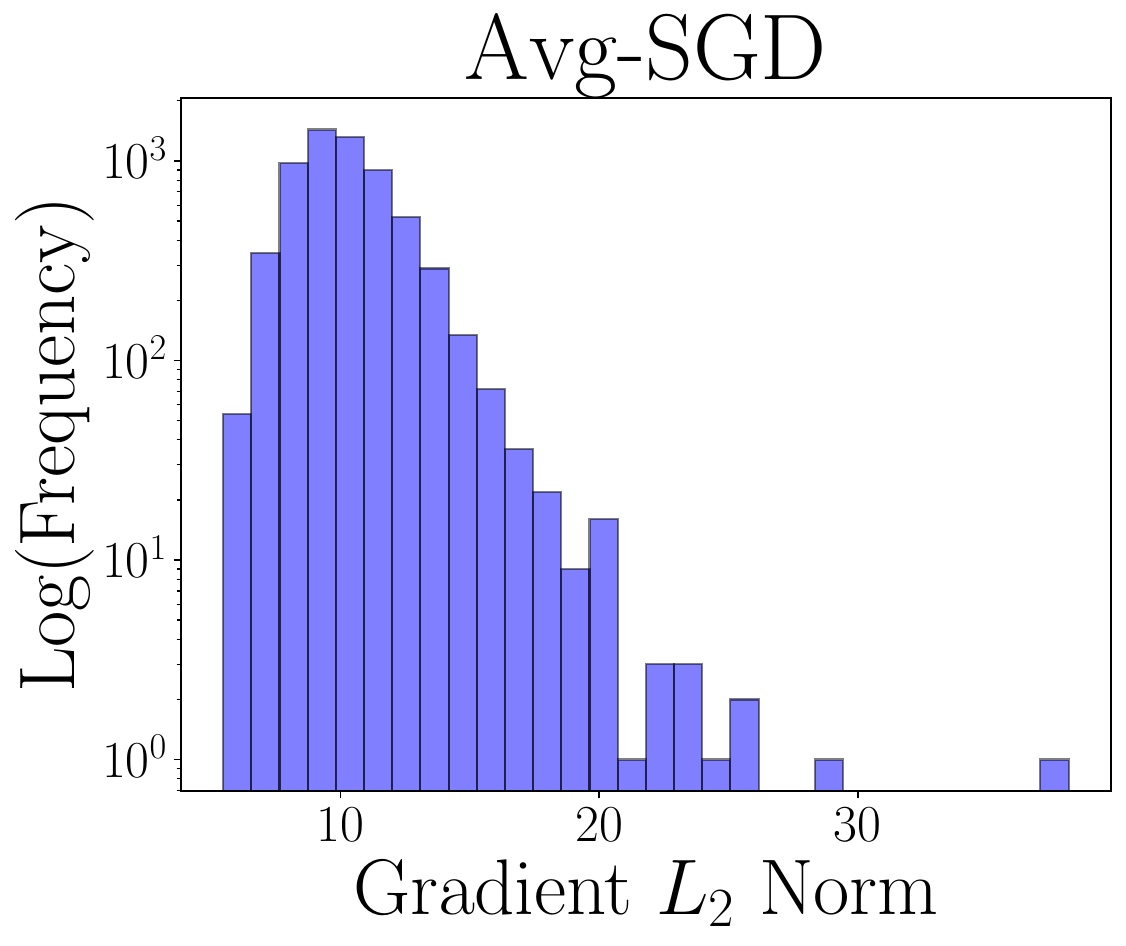}
    \end{subfigure}
        \begin{subfigure}[b]{0.17\textwidth}
        \centering
        \includegraphics[width=\textwidth]{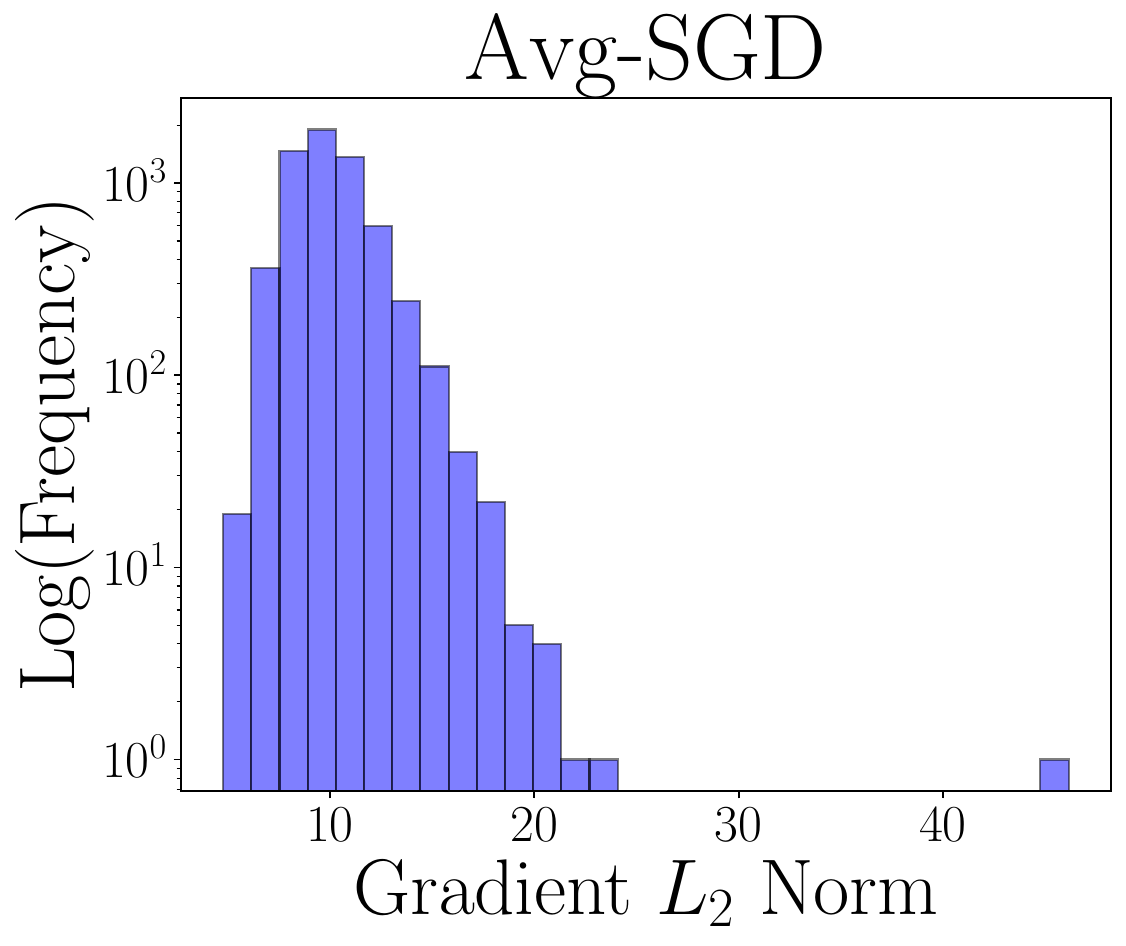}
    \end{subfigure}
        \begin{subfigure}[b]{0.17\textwidth}
        \centering
        \includegraphics[width=\textwidth]{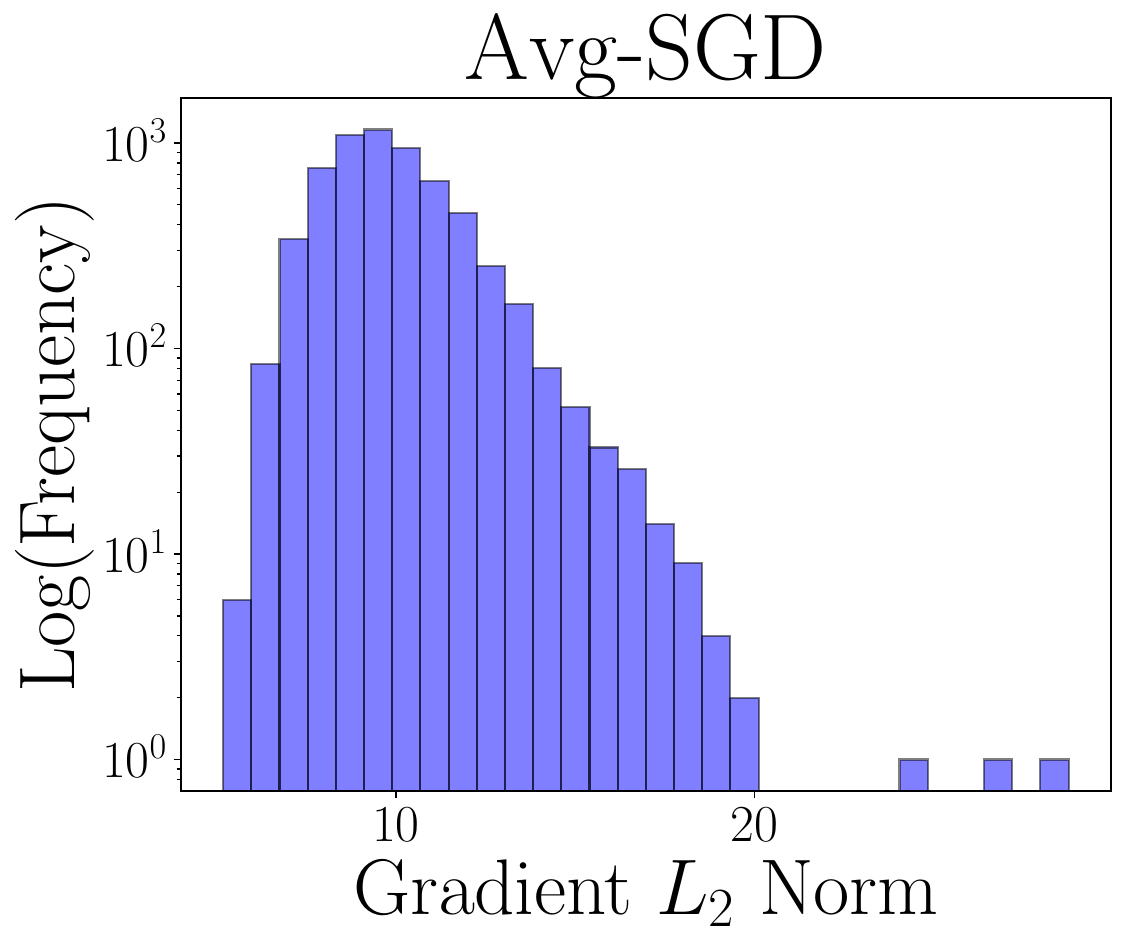}
    \end{subfigure}
        \begin{subfigure}[b]{0.17\textwidth}
        \centering
        \includegraphics[width=\textwidth]{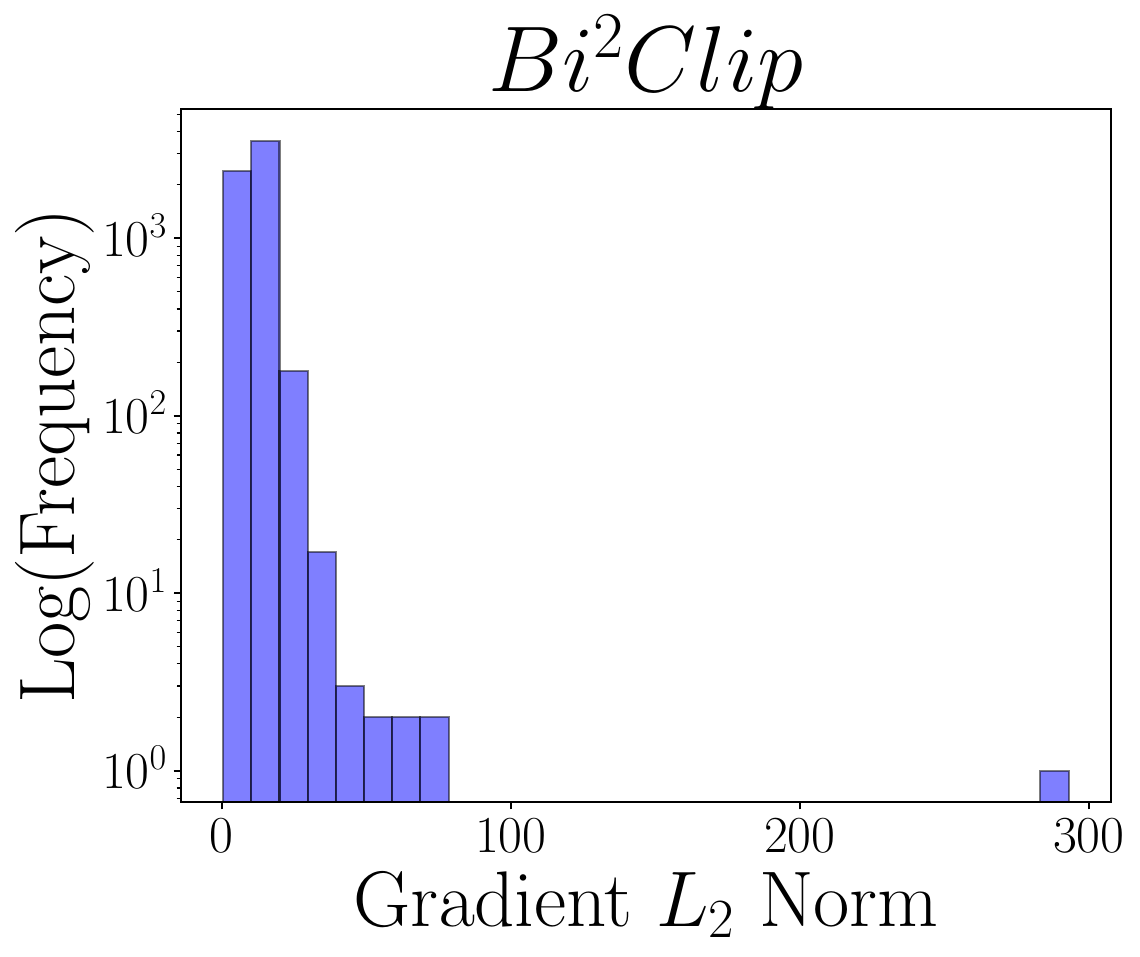}
    \end{subfigure}
        \begin{subfigure}[b]{0.17\textwidth}
        \centering
        \includegraphics[width=\textwidth]{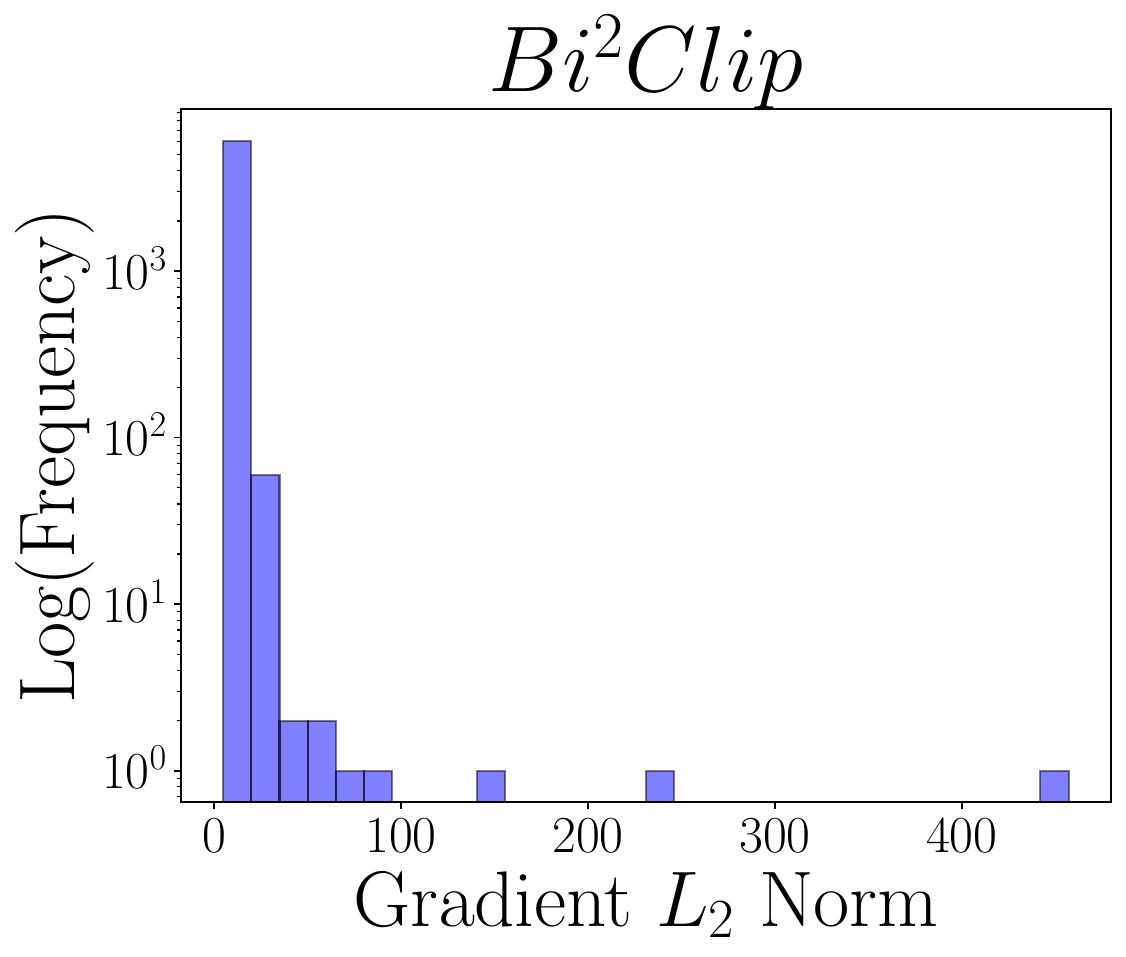}
    \end{subfigure}
        \begin{subfigure}[b]{0.17\textwidth}
        \centering
        \includegraphics[width=\textwidth]{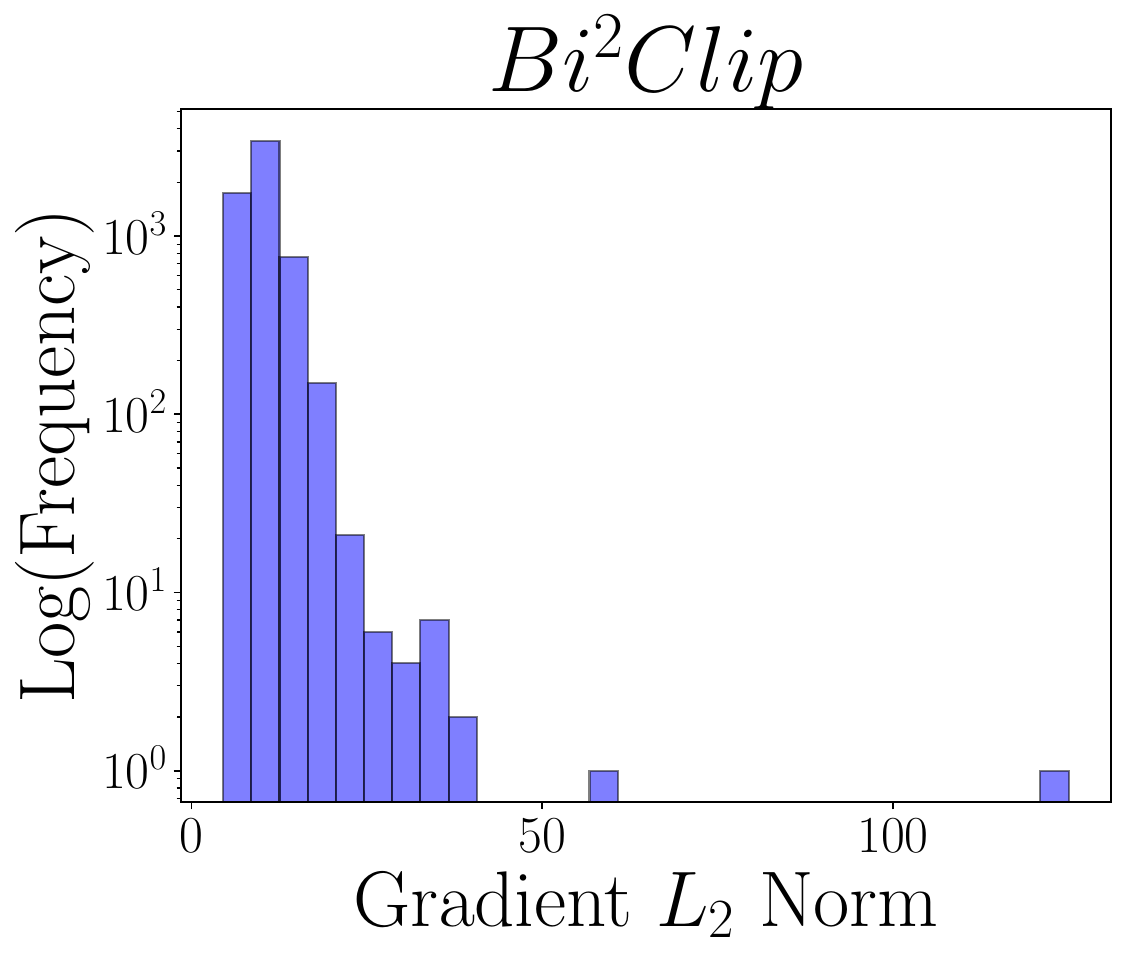}
    \end{subfigure}
        \begin{subfigure}[b]{0.17\textwidth}
        \centering
        \includegraphics[width=\textwidth]{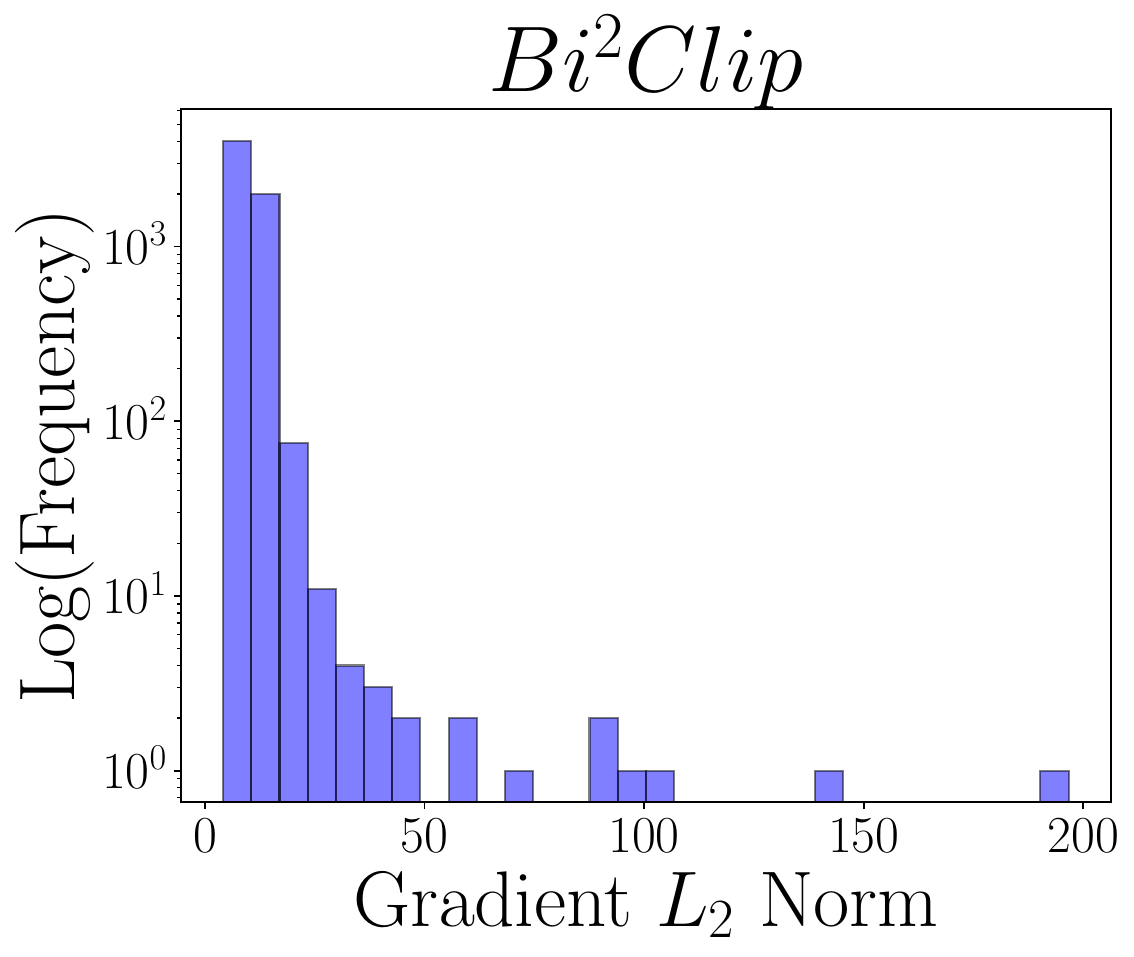}
    \end{subfigure}
        \begin{subfigure}[b]{0.17\textwidth}
        \centering
        \includegraphics[width=\textwidth]{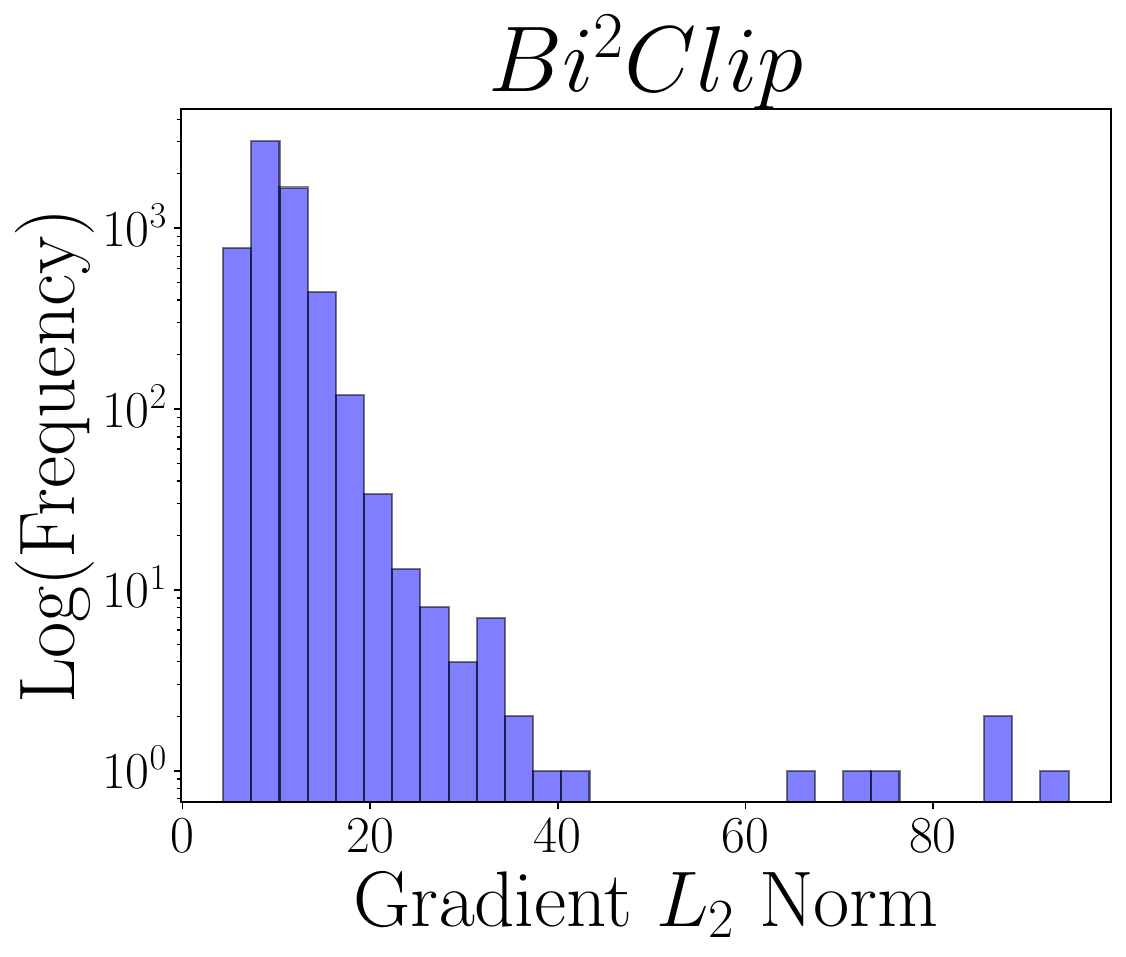}
    \end{subfigure}
            \begin{subfigure}[b]{0.17\textwidth}
        \centering
        \includegraphics[width=\textwidth]{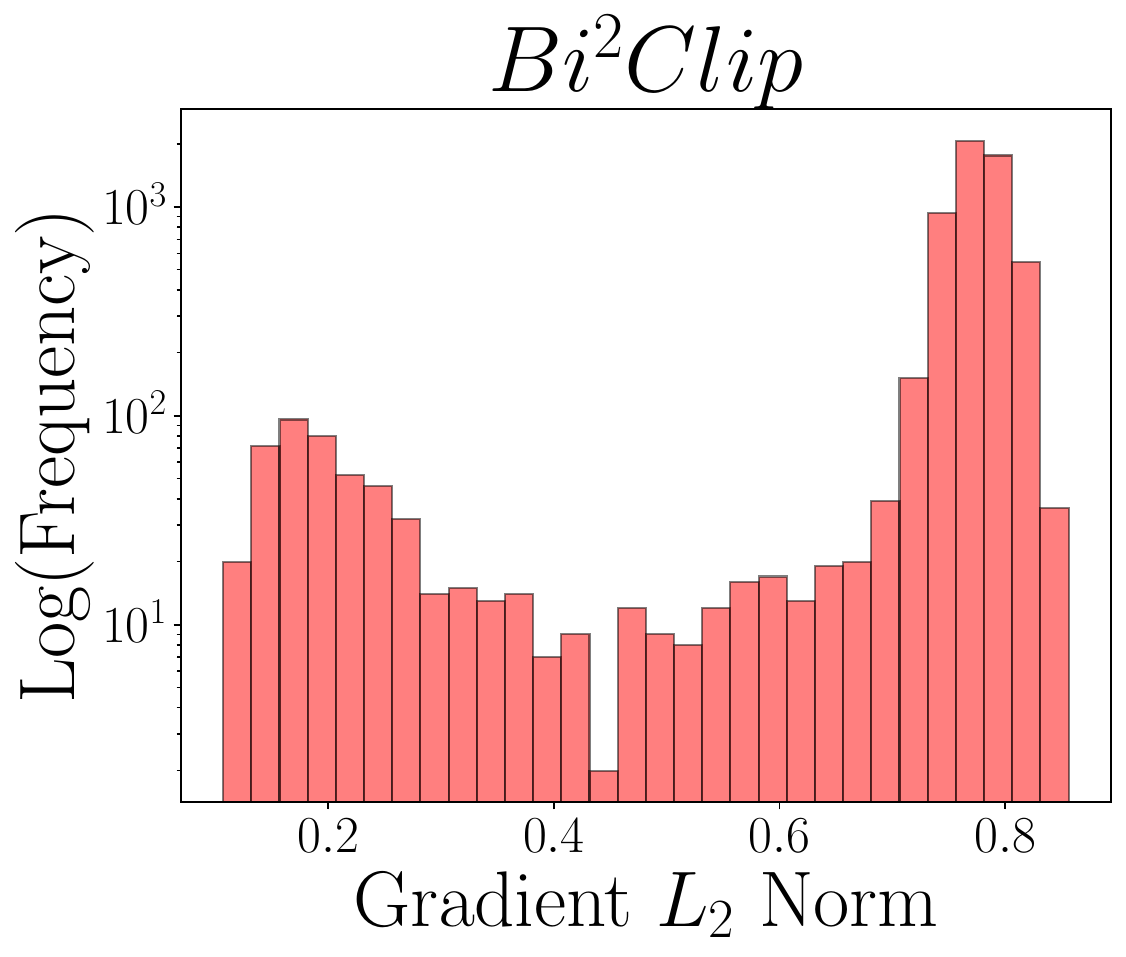}
    \end{subfigure}
        \begin{subfigure}[b]{0.17\textwidth}
        \centering
        \includegraphics[width=\textwidth]{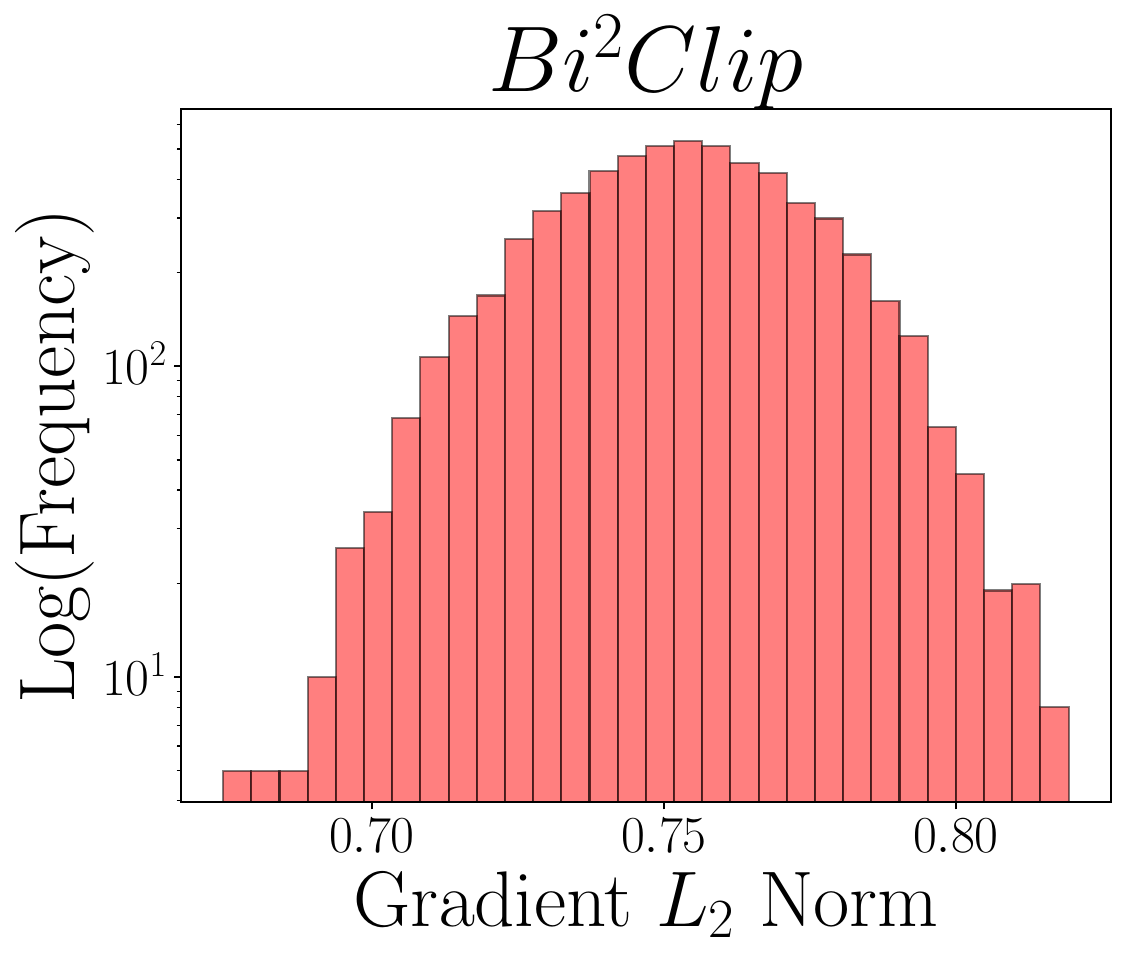}
    \end{subfigure}
        \begin{subfigure}[b]{0.17\textwidth}
        \centering
        \includegraphics[width=\textwidth]{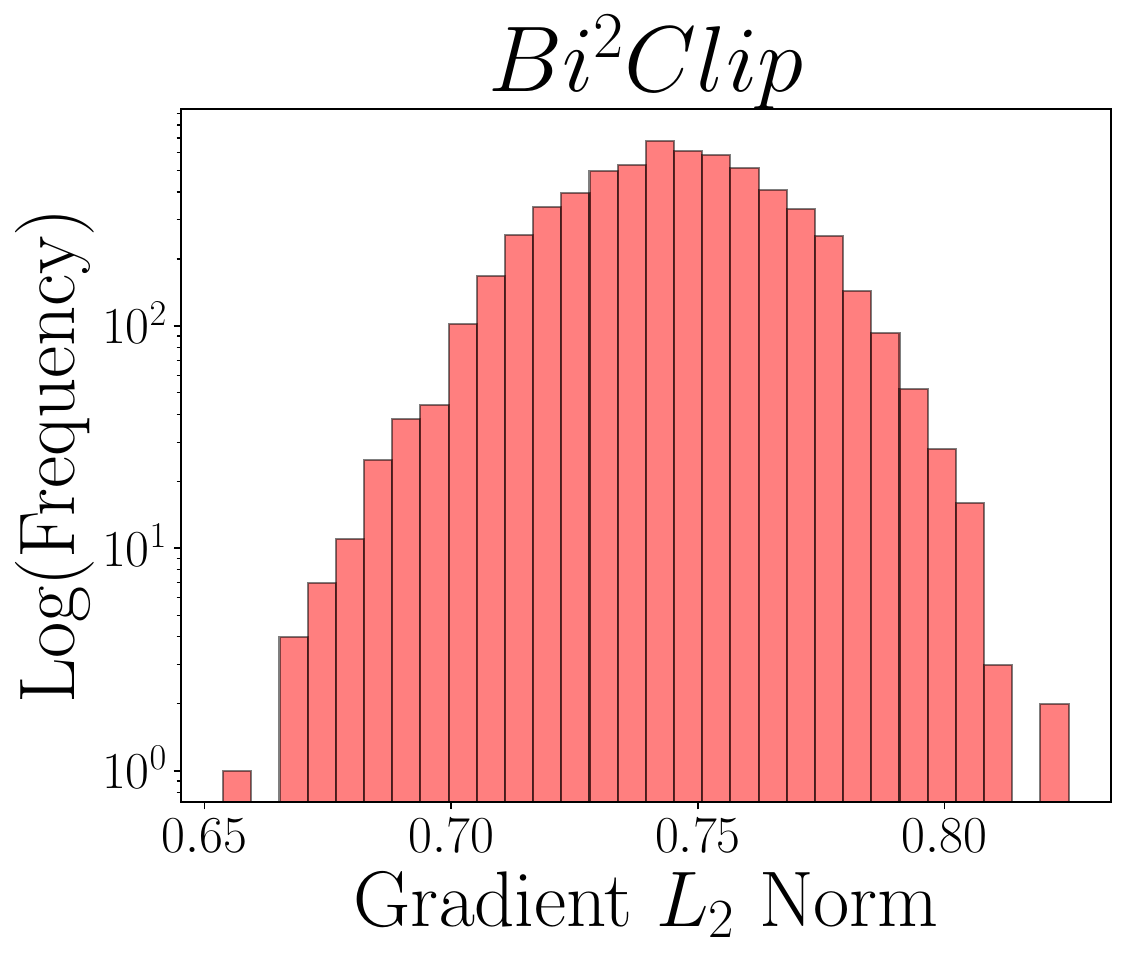}
    \end{subfigure}
        \begin{subfigure}[b]{0.17\textwidth}
        \centering
        \includegraphics[width=\textwidth]{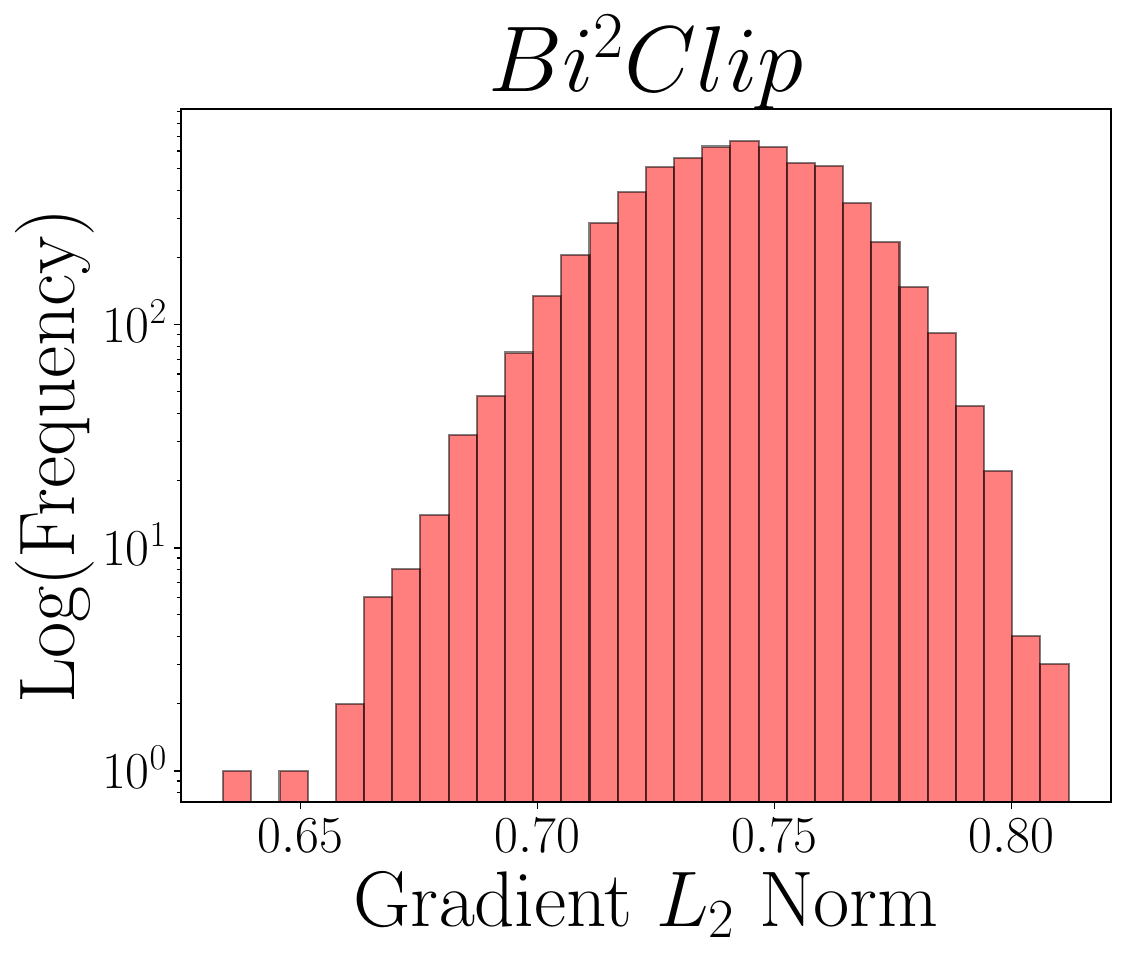}
    \end{subfigure}
        \begin{subfigure}[b]{0.17\textwidth}
        \centering
        \includegraphics[width=\textwidth]{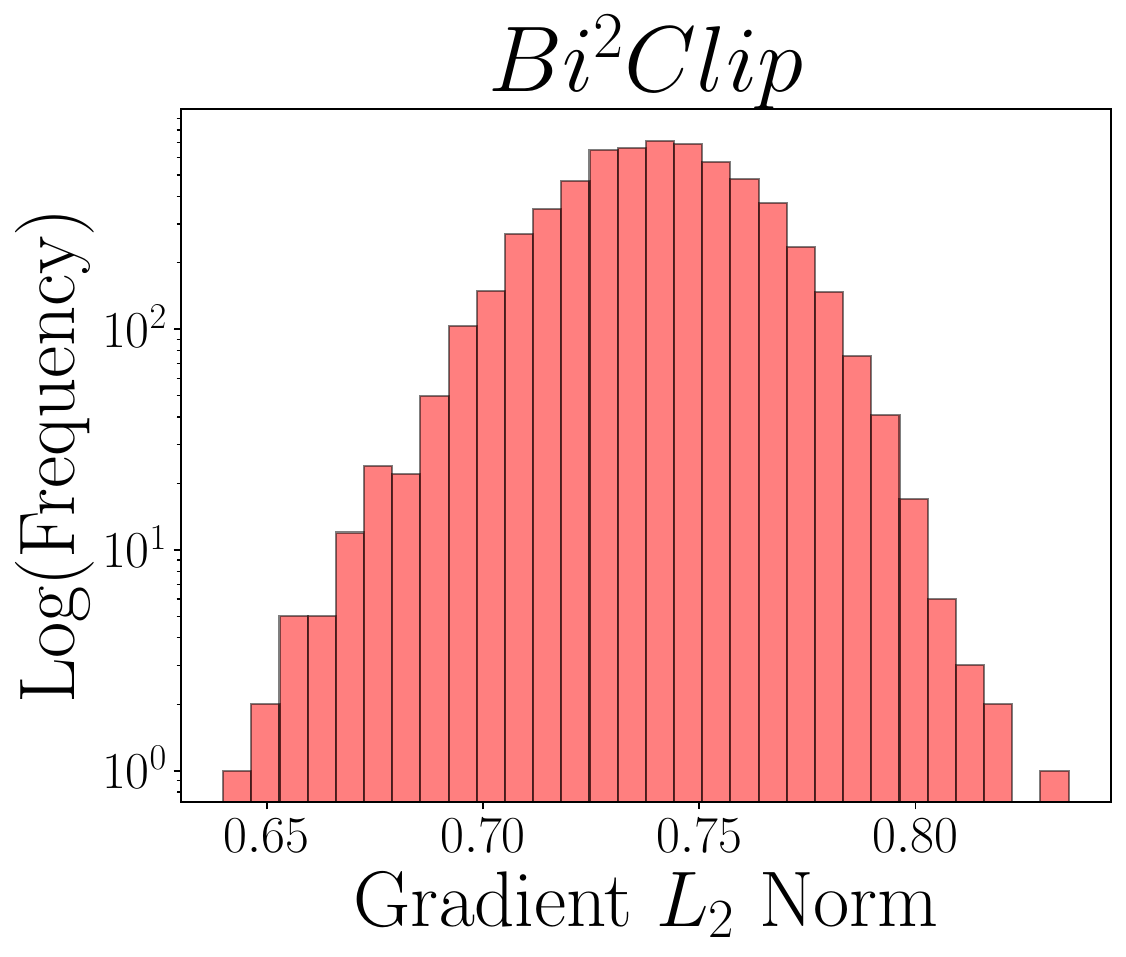}
    \end{subfigure}
            \begin{subfigure}[b]{0.17\textwidth}
        \centering
        \includegraphics[width=\textwidth]{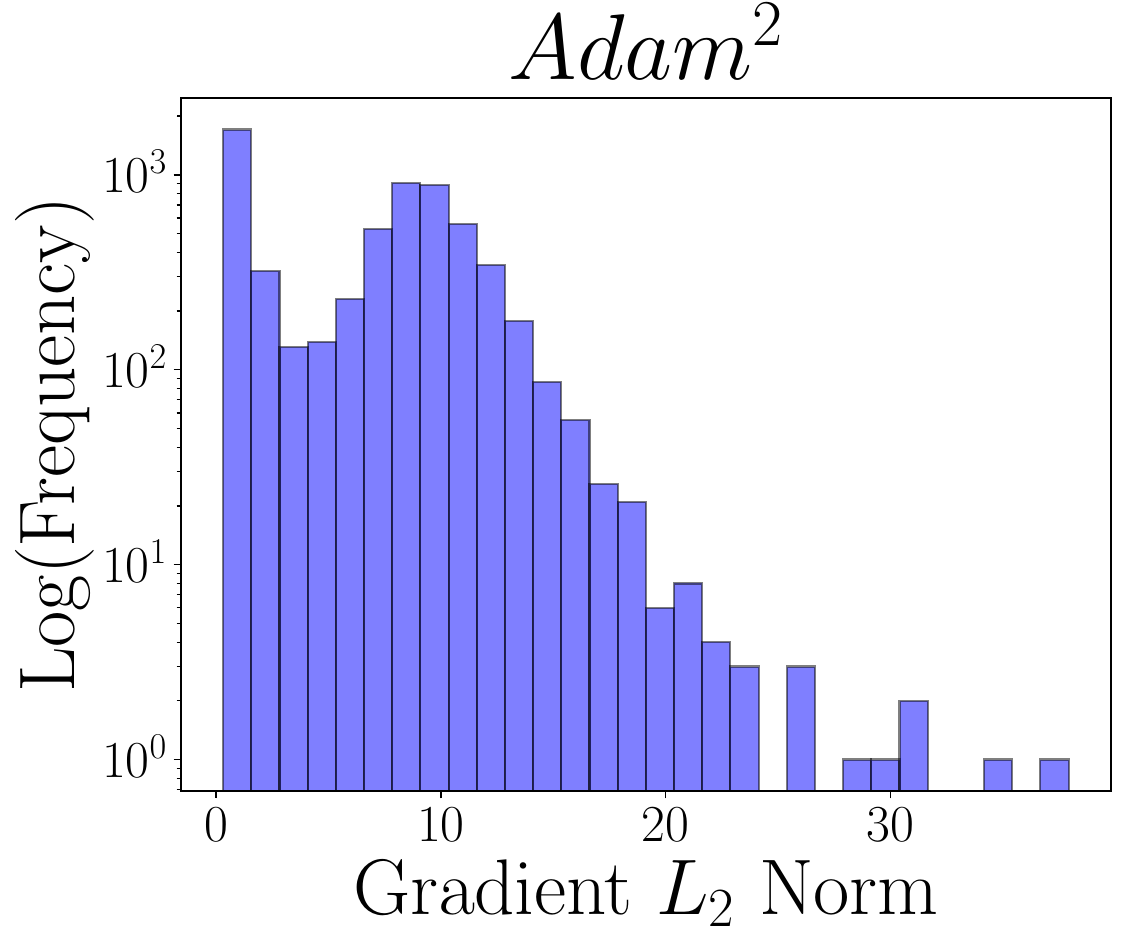}
    \end{subfigure}
        \begin{subfigure}[b]{0.17\textwidth}
        \centering
        \includegraphics[width=\textwidth]{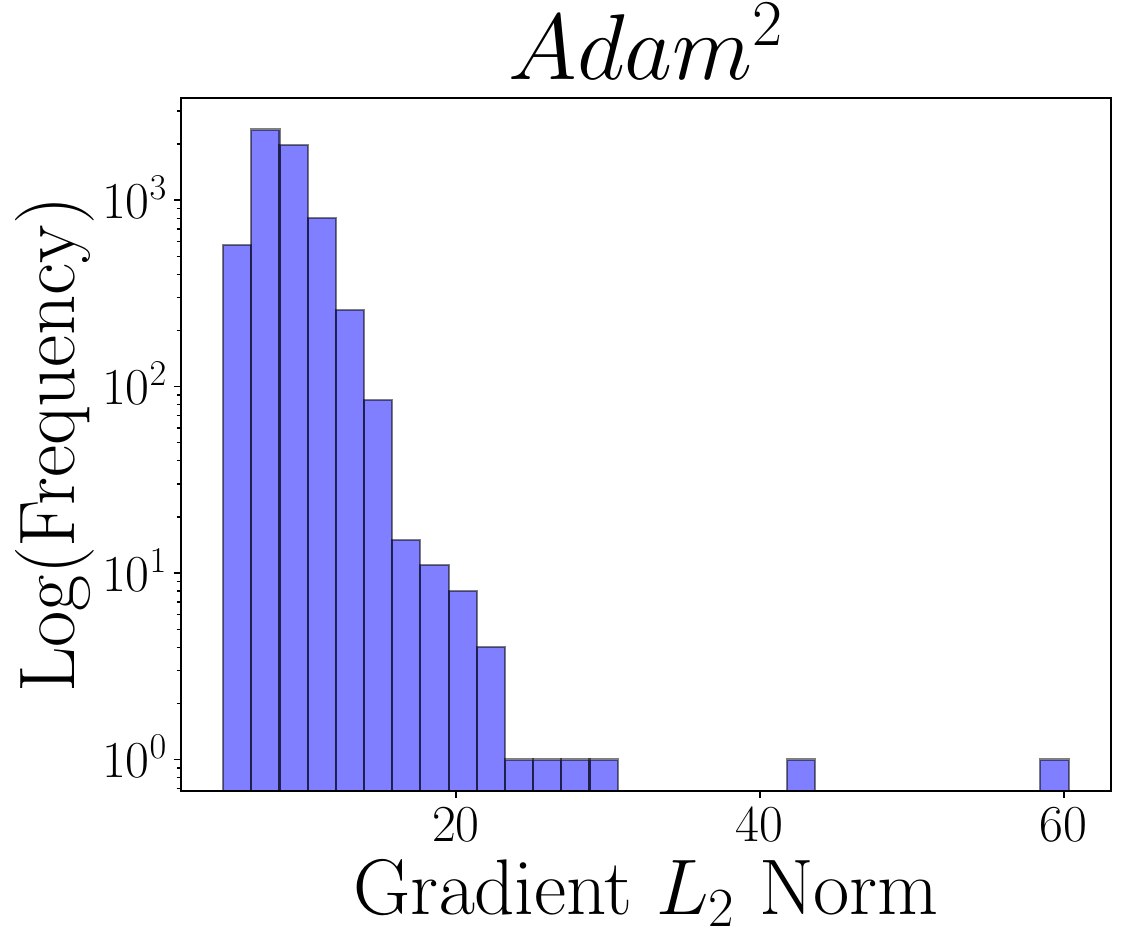}
    \end{subfigure}
        \begin{subfigure}[b]{0.17\textwidth}
        \centering
        \includegraphics[width=\textwidth]{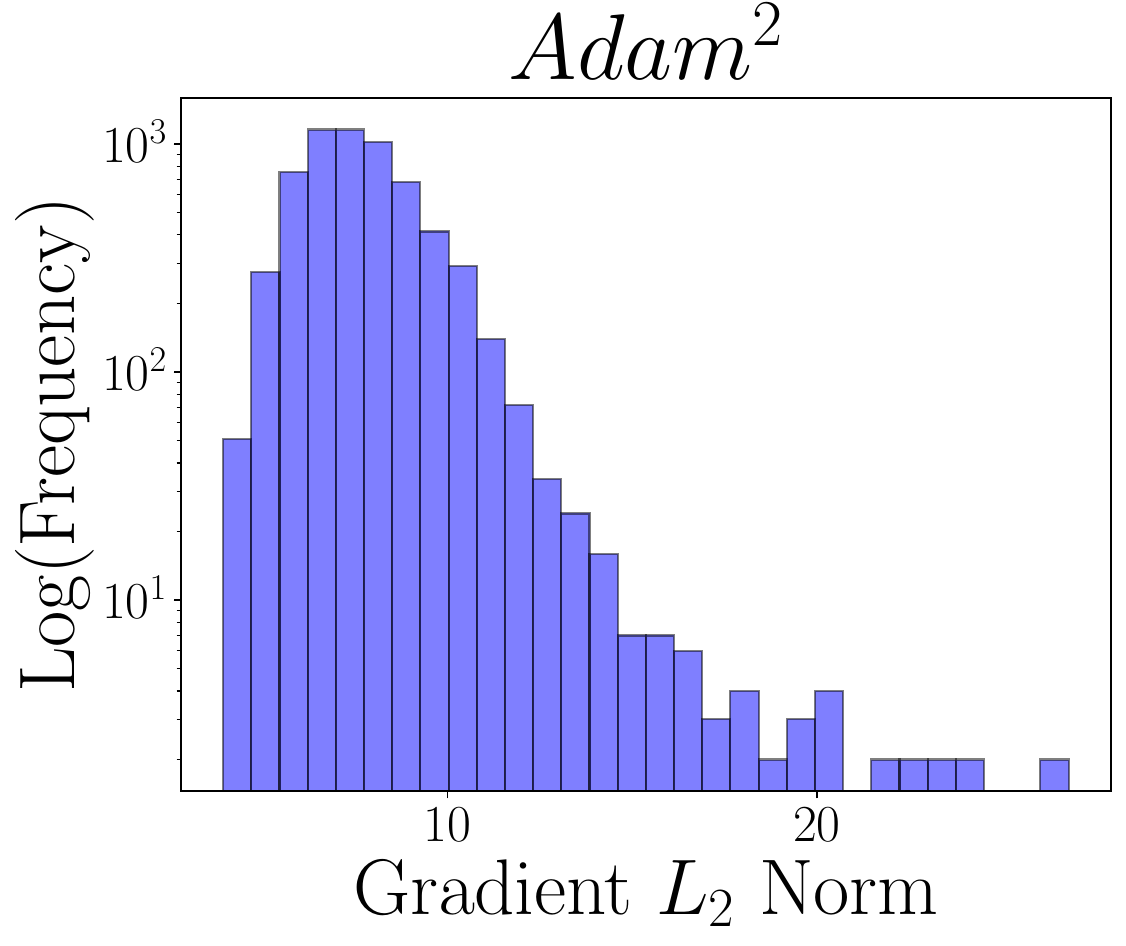}
    \end{subfigure}
        \begin{subfigure}[b]{0.17\textwidth}
        \centering
        \includegraphics[width=\textwidth]{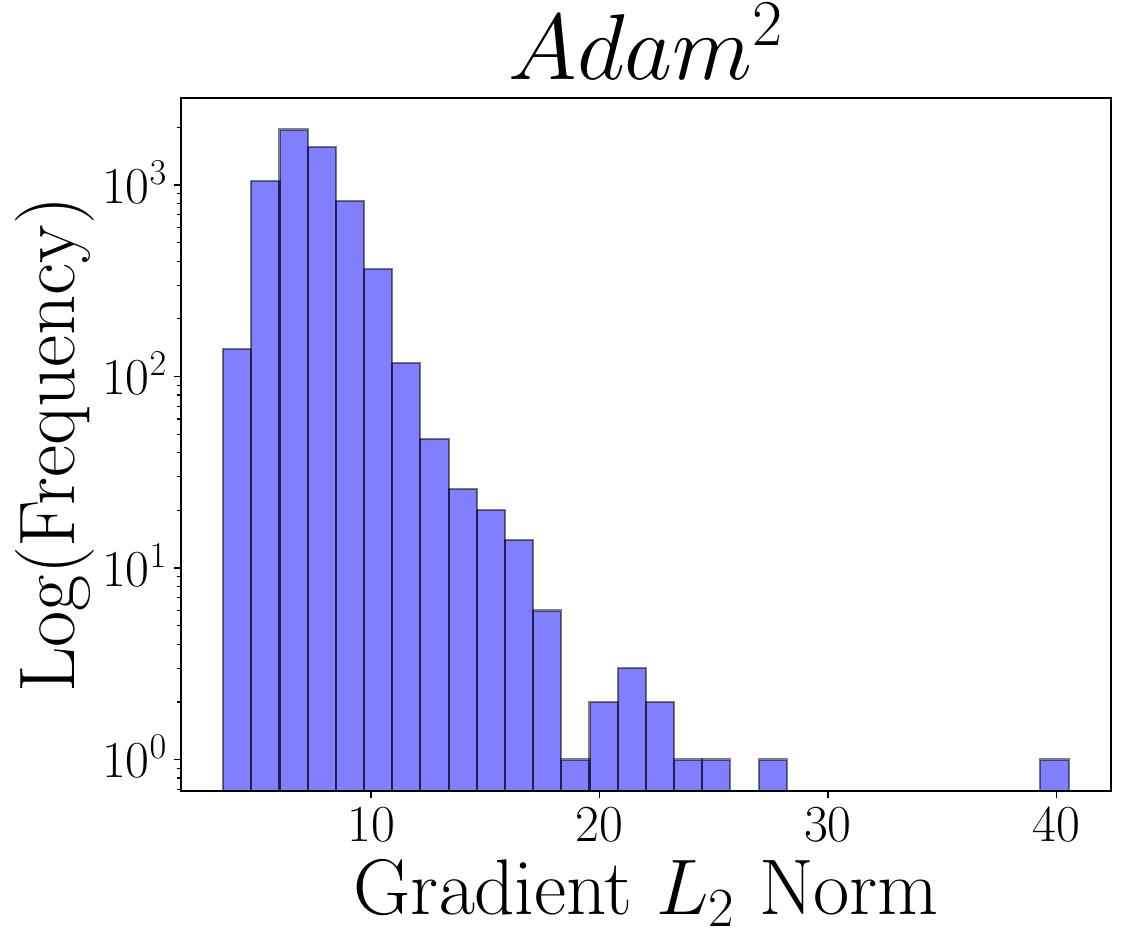}
    \end{subfigure}
        \begin{subfigure}[b]{0.17\textwidth}
        \centering
        \includegraphics[width=\textwidth]{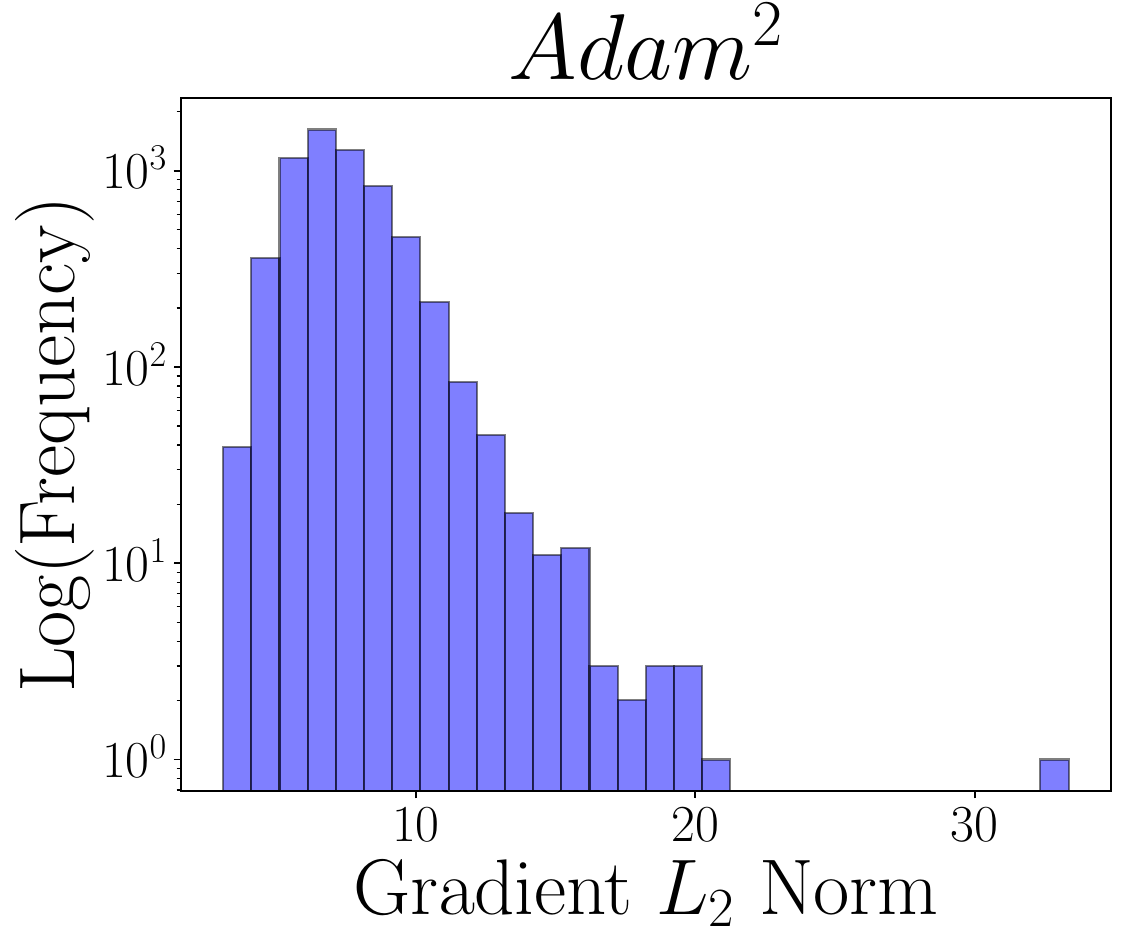}
    \end{subfigure}
        \begin{subfigure}[b]{0.17\textwidth}
        \centering
        \includegraphics[width=\textwidth]{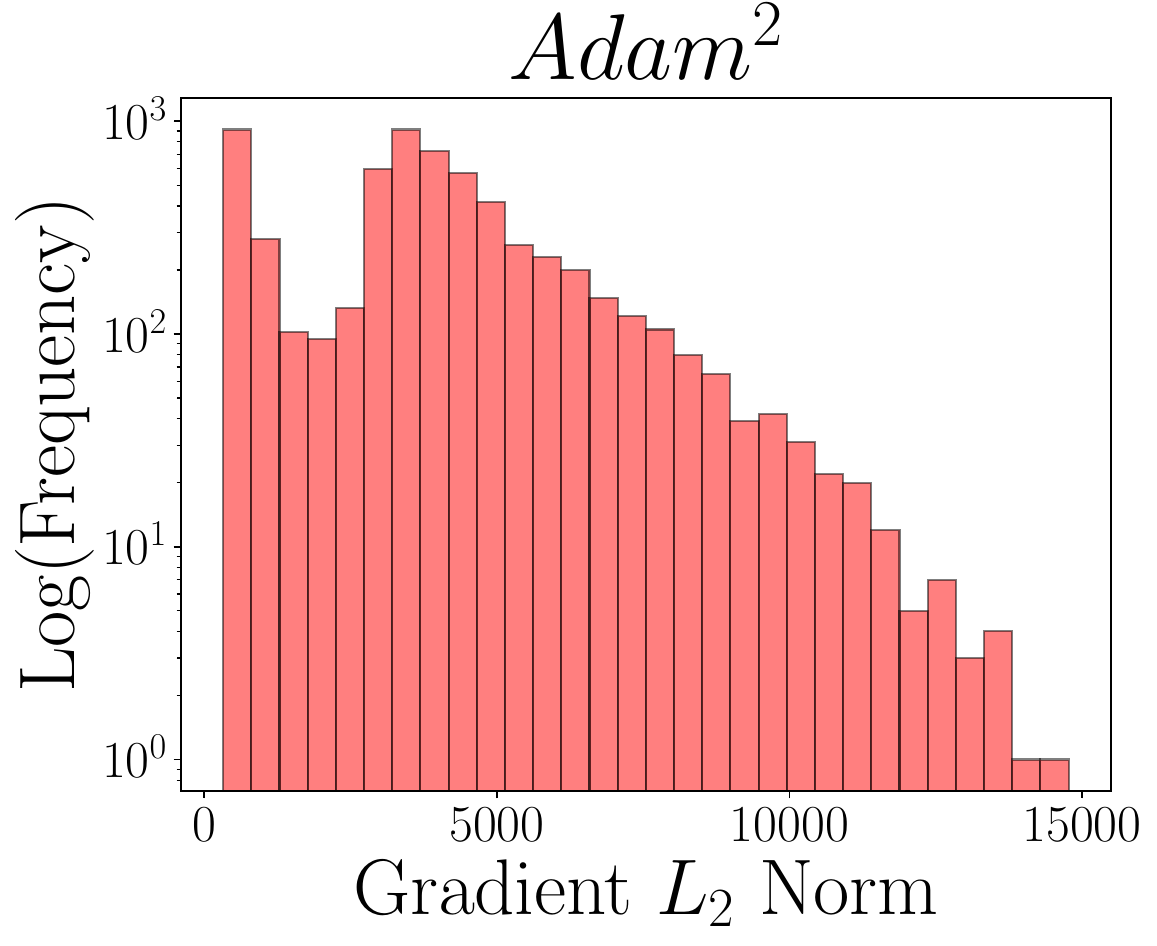}
    \end{subfigure}
        \begin{subfigure}[b]{0.17\textwidth}
        \centering
        \includegraphics[width=\textwidth]{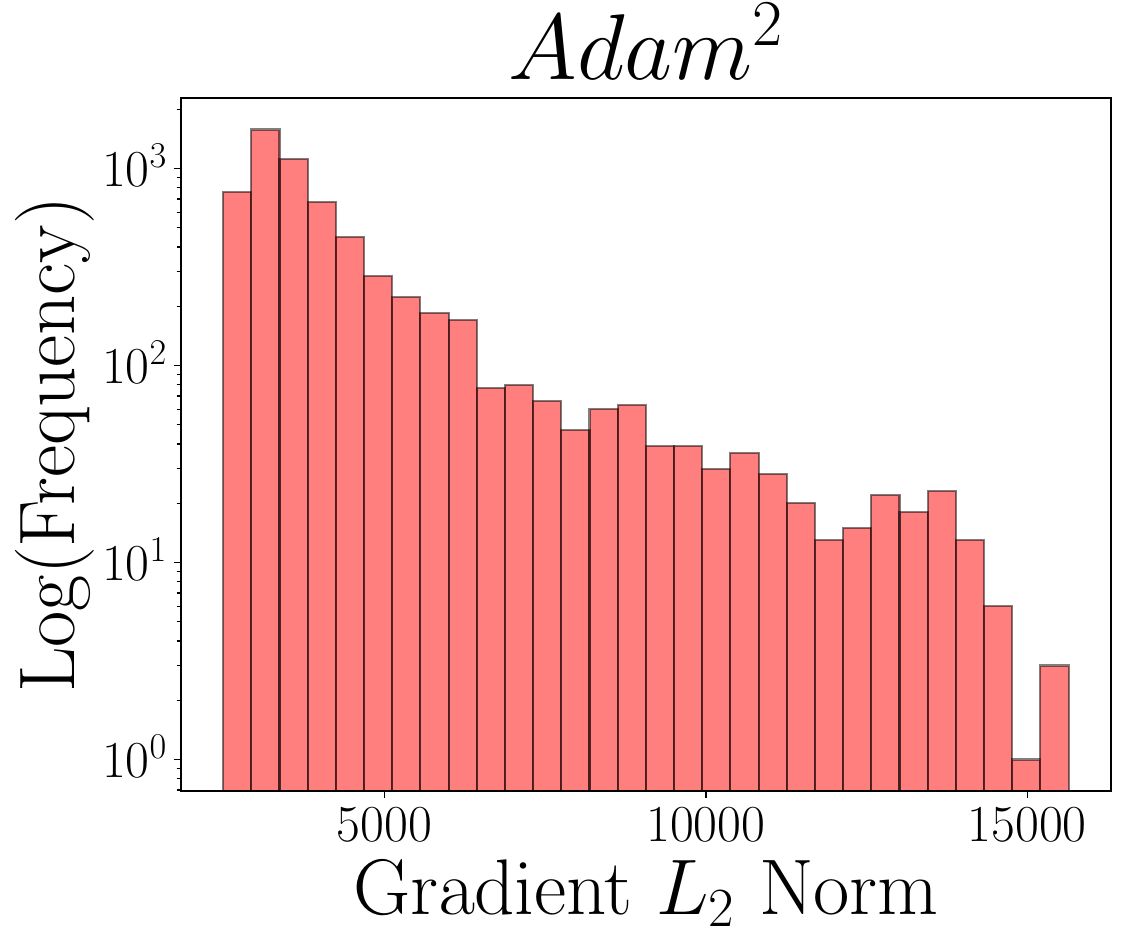}
    \end{subfigure}
        \begin{subfigure}[b]{0.17\textwidth}
        \centering
        \includegraphics[width=\textwidth]{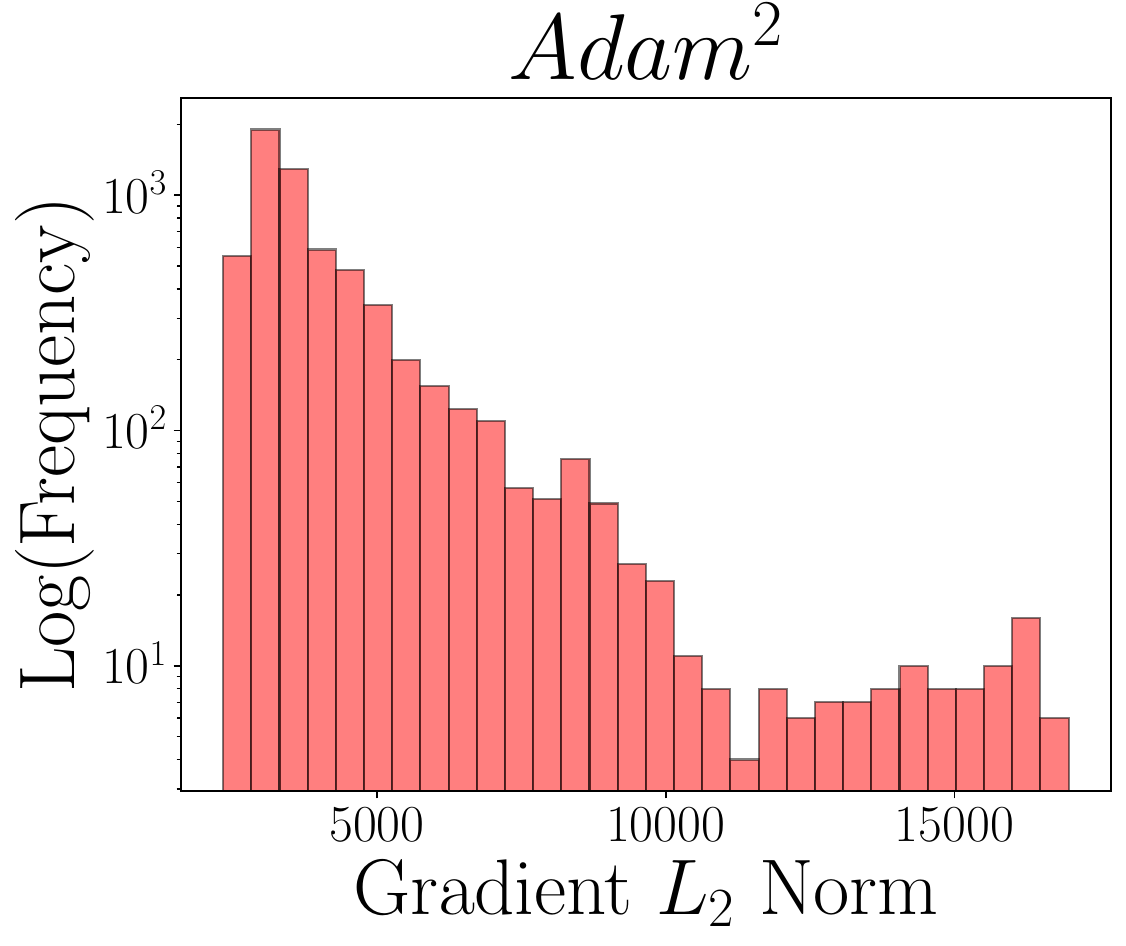}
    \end{subfigure}
        \begin{subfigure}[b]{0.17\textwidth}
        \centering
        \includegraphics[width=\textwidth]{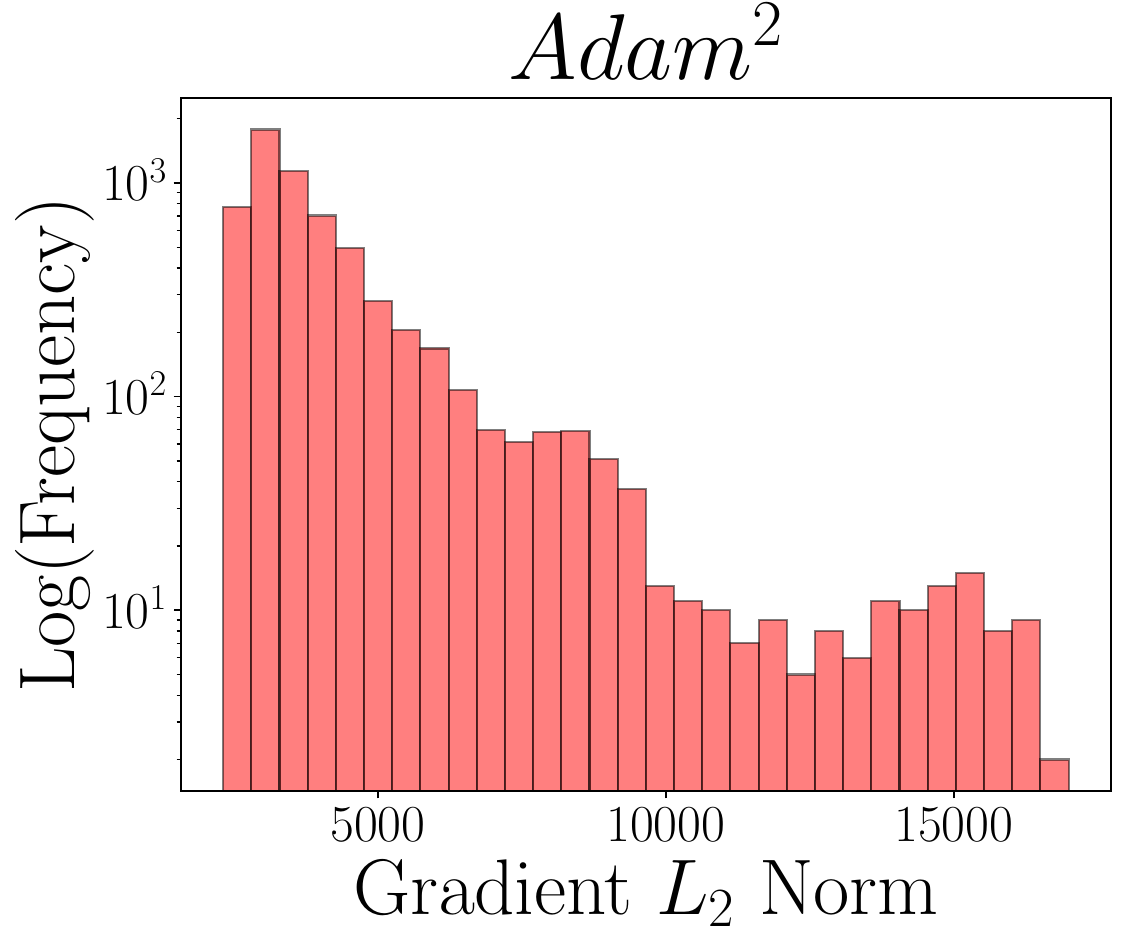}
    \end{subfigure}
        \begin{subfigure}[b]{0.17\textwidth}
        \centering
        \includegraphics[width=\textwidth]{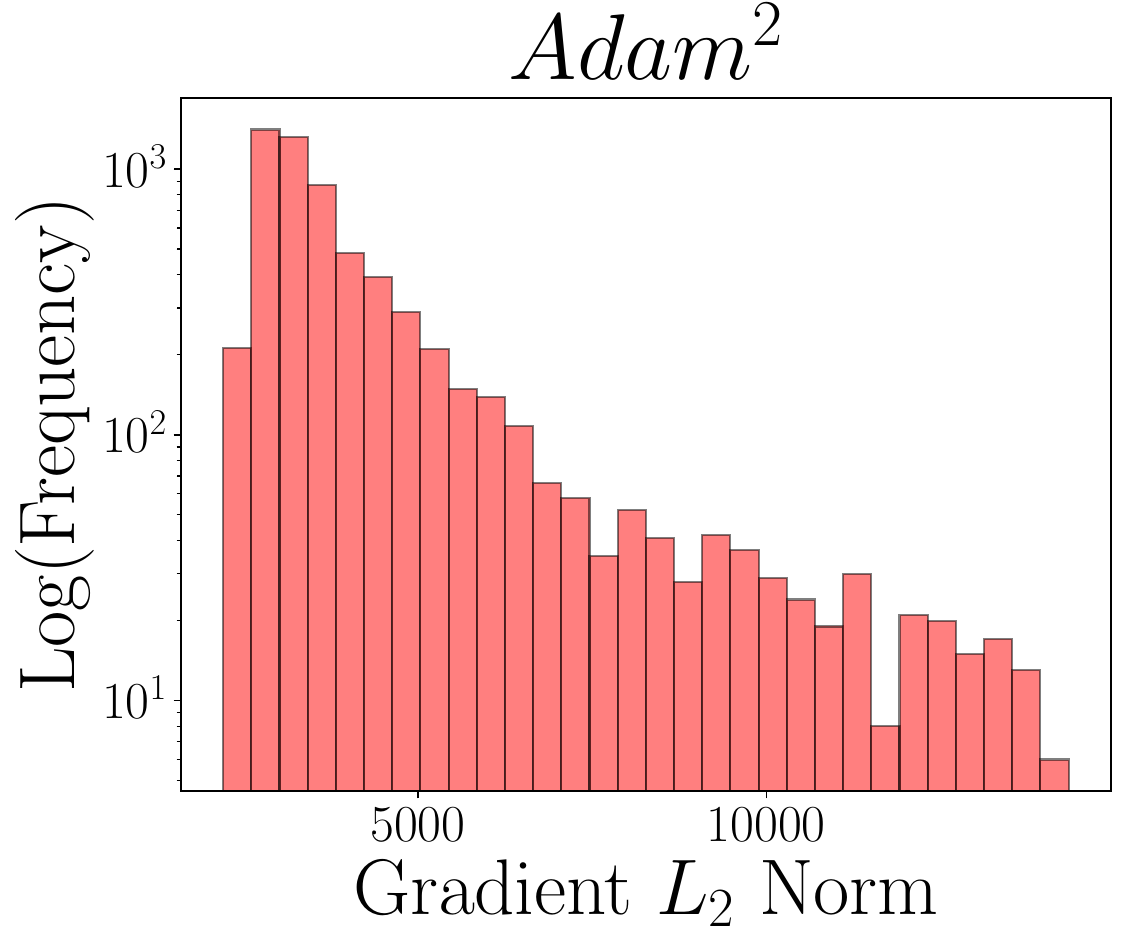}
    \end{subfigure}

    \caption{Gradient statistics for MNLI across different algorithms for the first 5 communication rounds, where rounds increase from left to right. (Top) We visualize local minibatch stochastic gradient distributions, where the outliers can dominate model updates upon outer aggregation. The $BiClip$ and Adam optimizers mitigate this phenomenon. (Middle) Row 2 displays the local gradients accumulated from all inner optimizers during $Bi^2Clip$ prior to clipping, which uncovers the presence of outliers akin to those visible in Avg-SGD. In Row 3, the identical gradients are plotted after applying the coordinate-wise $BiClip$ operation. We observe that $BiClip$ stabilizes updates by rescaling large and small gradient coordinates, constraining model update lengths within a defined range. (Bottom) Similar to above, Row 4 shows the accumulated gradient lengths across all inner optimizers while training via $Adam^2$. Optimal inner optimizer learning rates are $0.0059$, $0.5$, and $1.8e$-$5$ for Avg-SGD, $Bi^2Clip$, and $Adam^2$, respectively, with corresponding outer optimizer learning rates of $1$ and $3.2e$-$4$ for the latter two algorithms. Test-time results show that $Bi^2Clip$ outperforms $Adam^2$, which in turn outperforms Avg-SGD (Table~\ref{tab:glue-results}). Finally, we note that upon centering, the aggregate update gradient histograms in red depict the stochastic gradient noise distributions upon application of the optimizer strategy. $BiClip$ attenuates the pure gradient noise (in blue) by projecting the noise distribution to an almost bell-shaped curve (in red), while Adam implicitly samples gradient noise from a skewed distribution.
    }
\label{GradientDistributionComparison}
\vspace{-10pt}
\end{figure*}

\newpage

\subsection{Hyperparameter Sweep Grids}\label{hyperparametergridappendix}
The sweep grids in Tables~\ref{tab:hparam_sweeps_clipping},~\ref{tab:hparam_sweeps_lr_eps} were determined by first performing a coarser sweep using an approximate grid, then localizing near the discovered well-performing hyperparameters.
\begin{table*}[ht]
\centering
\caption{Hyperparameter sweeps on gradient clipping parameters. 
\texttt{i\_u}, \texttt{i\_d} = inner optimizer $u$, $d$, 
\texttt{o\_u}, \texttt{o\_d} = outer optimizer $u$, $d$.}
\vspace{-1em}
\label{tab:hparam_sweeps_clipping}
\scalebox{0.8}{
\begin{tabular}{lcccc}
\toprule
\textbf{Algorithm} & \textbf{\texttt{i\_u}} & \textbf{\texttt{i\_d}} & \textbf{\texttt{o\_u}} & \textbf{\texttt{o\_d}} \\
\midrule
\textbf{Avg-SGD} 
    & - 
    & - 
    & - 
    & - 
    \\
\midrule
\textbf{Avg-$L_2$Clip SGD}
    & \(\text{np.linspace}(10^{-4},\,1.5,\,12)\) 
    & 0.0
    & - 
    & - 
    \\
\midrule
\textbf{Avg-$BiClip$}
    & \(\text{np.linspace}(10^{-4},\,1.5,\,4)\) 
    & \(\text{np.linspace}(10^{-7},\,\texttt{i\_u},\,4)\) 
    & - 
    & - 
    \\
\midrule
\textbf{Avg-$BiClip$ ($L_2$)}
    & \(\text{np.linspace}(10^{-4},\,1.5,\,4)\) 
    & \(\text{np.linspace}(10^{-7},\,\texttt{i\_u},\,4)\) 
    & - 
    & - 
    \\
\midrule
\textbf{Avg-Adagrad} 
    & - 
    & - 
    & - 
    & - 
    \\
\midrule
\textbf{Avg-Adam} 
    & - 
    & - 
    & - 
    & - 
    \\
\midrule
\textbf{Adagrad-SGD}
    & - 
    & - 
    & - 
    & - 
    \\
\midrule
\textbf{RMSProp-SGD}
    & - 
    & - 
    & - 
    & - 
    \\
\midrule
\textbf{Adam-SGD}
    & - 
    & - 
    & - 
    & - 
    \\
\midrule
\textbf{Adagrad-$BiClip$}
    & \(\text{np.linspace}(10^{-4},\,1.5,\,3)\) 
    & \(\text{np.linspace}(10^{-7},\,\texttt{i\_u},\,3)\) 
    & - 
    & - 
    \\
\midrule
\textbf{RMSProp-$BiClip$}
    & \(\text{np.linspace}(10^{-4},\,1.5,\,3)\) 
    & \(\text{np.linspace}(10^{-7},\,\texttt{i\_u},\,3)\) 
    & - 
    & - 
    \\
\midrule
\textbf{Adam-$L_2$Clip} 
    & \(\text{np.linspace}(10^{-4},\,1.5,\,12)\) 
    & 0.0 
    & - 
    & -
    \\
\midrule
\textbf{Adam-$BiClip$}
    & \(\text{np.logspace}(-2,\,1,\,5)\) 
    & \(\text{np.linspace}(10^{-7},\,\texttt{i\_u},\,3)\) 
    & - 
    & - 
    \\
\midrule
\textbf{Adam-$BiClip$ ($L_2$)}
    & \(\text{np.linspace}(10^{-4},\,1.5,\,3)\) 
    & \(\text{np.linspace}(10^{-7},\,\texttt{i\_u},\,3)\) 
    & - 
    & - 
    \\
\midrule
\textbf{$Adam^2$}
    & - 
    & - 
    & - 
    & - 
    \\
\midrule
\textbf{$Bi^2Clip$ (Coordinate-wise)}
    & \(\text{np.linspace}(10^{-4},\,1.5,\,3)\) 
    & \(\text{np.linspace}(10^{-7},\,\texttt{i\_u},\,3)\)  
    & \(\text{np.linspace}(10^{-4},\,1.5,\,3)\) 
    & \(\text{np.linspace}(10^{-7},\,\texttt{o\_u},\,3)\) 
    \\
\midrule
\textbf{$Bi^2Clip$ ($L_2$)}
    & \(\text{np.logspace}(-1,\,0.5,\,3)\)  
    & \(\text{np.linspace}(10^{-7},\,\texttt{i\_u},\,3)\)
    & \(\text{np.logspace}(-1,\,0.5,\,3)\) 
    & \(\text{np.linspace}(10^{-7},\,\texttt{o\_u},\,3)\) 
    \\
\midrule
\textbf{DiLoCo}
    & - 
    & - 
    & - 
    & -
    \\
\bottomrule
\end{tabular}
}
\end{table*}

\begin{table*}[h!]
\centering
\caption{Hyperparameter sweeps.
\texttt{ilr} = inner optimizer learning rate, 
\texttt{olr} = outer optimizer learning rate, 
\texttt{ieps} = inner optimizer $\varepsilon$, 
\texttt{oeps} = outer optimizer $\varepsilon$. Additionally, DiLoCo swept over the nesterov learning rates $(0.9, 0.95)$, and inner optimizer weight decay parameters $(10^{-1}, 10^{-4})$, as reported in prior works.}
\vspace{-1em}
\label{tab:hparam_sweeps_lr_eps}
\scalebox{0.83}{
\begin{tabular}{lcccc}
\toprule
\textbf{Algorithm} & \textbf{\texttt{ilr}} & \textbf{\texttt{olr}} & \textbf{\texttt{ieps}} & \textbf{\texttt{oeps}} \\
\midrule
\textbf{Avg-SGD} 
    & \(\text{np.logspace}(-9,\,1,\,100)\) 
    & - 
    & - 
    & - 
    \\
\midrule
\textbf{Avg-$L_2$Clip SGD}
    & \(\text{np.linspace}(10^{-9},\,1,\,10)\) 
    & - 
    & - 
    & - 
    \\
\midrule
\textbf{Avg-$BiClip$}
    & \(\text{np.linspace}(10^{-9},\,1,\,10)\) 
    & -
    & - 
    & - 
    \\
\midrule
\textbf{Avg-$BiClip$ ($L_2$)}
    & \(\text{np.linspace}(10^{-9},\,1,\,10)\) 
    & -
    & - 
    & - 
    \\
\midrule
\textbf{Avg-Adagrad} 
    & \(\text{np.linspace}(10^{-9}, 1, 30)\) 
    & - 
    & \(\{10^{-8},10^{-6},10^{-4},10^{-3}\}\) 
    & - 
    \\
\midrule
\textbf{Avg-Adam} 
    & \(\text{np.linspace}(10^{-9}, 1, 30)\) 
    & - 
    & \(\{10^{-8},10^{-6},10^{-4},10^{-3}\}\) 
    & - 
    \\
\midrule
\textbf{Adagrad-SGD}
    & \(\text{np.linspace}(10^{-5},\,0.1,\,7)\) 
    & \(\text{np.logspace}(-5,\,-1,\,7)\) 
    & - 
    & \(\{10^{-7},\,10^{-5},\,10^{-3}\}\) 
    \\
\midrule
\textbf{RMSProp-SGD}
    & \(\text{np.linspace}(10^{-5},\,0.1,\,7)\) 
    & \(\text{np.linspace}(10^{-5},\,0.1,\,7)\) 
    & - 
    & \(\{10^{-7},\,10^{-5},\,10^{-3}\}\)  
    \\
\midrule
\textbf{Adam-SGD}
    & \(\text{np.linspace}(10^{-5},\,0.1,\,7)\) 
    & \(\text{np.logspace}(-5,\,-1,\,7)\) 
    & - 
    & \(\{10^{-7},\,10^{-5},\,10^{-3}\}\) 
    \\
\midrule
\textbf{Adagrad-$BiClip$}
    & \(\text{np.linspace}(10^{-5},\,0.1,\,4)\) 
    & \(\text{np.logspace}(-5,\,-1,\,4)\) 
    & - 
    & \(\{10^{-7},\,10^{-5},\,10^{-3}\}\) 
    \\
\midrule
\textbf{RMSProp-$BiClip$}
    & \(\text{np.linspace}(10^{-5},\,0.1,\,4)\) 
    & \(\text{np.logspace}(-5,\,-1,\,4)\) 
    & - 
    & \(\{10^{-7},\,10^{-5},\,10^{-3}\}\) 
    \\
\midrule
\textbf{Adam-$L_2$Clip}
    & \(\text{np.linspace}(10^{-5},\,0.1,\,4)\) 
    & \(\text{np.linspace}(10^{-5},\,0.1,\,4)\) 
    & - 
    & \(\{10^{-7},\,10^{-5},\,10^{-3}\}\) 
    \\
\midrule
\textbf{Adam-$BiClip$}
    & \(\text{np.logspace}(-6,\,-1,\,5)\) 
    & \(\text{np.logspace}(-6,\,-1,\,5)\) 
    & - 
    & \(\{10^{-7},\,10^{-5},\,10^{-3}\}\) 
    \\
\midrule
\textbf{Adam-$BiClip$ ($L_2$)}
    & \(\text{np.linspace}(10^{-5},\,0.1,\,4)\) 
    & \(\text{np.linspace}(10^{-5},\,0.1,\,4)\) 
    & - 
    & \(\{10^{-7},\,10^{-5},\,10^{-3}\}\) 
    \\
\midrule
\textbf{$Adam^2$}
    & \(\text{np.logspace}(-6,\,-1,\,5)\) 
    & \(\text{np.logspace}(-6,\,-1,\,5)\) 
    & \(\{10^{-7},\,10^{-5},\,10^{-3}\}\) 
    & \(\{10^{-7},\,10^{-5},\,10^{-3}\}\) 
    \\
\midrule
\textbf{$Bi^2Clip$ (Coordinate-wise)}
    & \(\text{np.linspace}(10^{-9},\,1,\,3)\) 
    & \(\text{np.linspace}(10^{-9},\,1,\,3)\)
    & - 
    & - 
    \\
\midrule
\textbf{$Bi^2Clip$ ($L_2$)}
    & \(\text{np.logspace}(-1,\,0.5,\,3)\)
    & \(\text{np.logspace}(-1,\,0.5,\,3)\) 
    & - 
    & - 
    \\
\midrule
\textbf{DiLoCo}
    & \(\text{np.logspace}(-5,\,-1,\,5)\)
    & \(\{1, 0.7, 0.5, 10^{-1}, 10^{-2} \}\) 
    & - 
    & \(\{10^{-7},\,10^{-5},\,10^{-3}\}\) 
    \\
\bottomrule
\end{tabular}
}
\end{table*}

\newpage
\subsection{Optimal Hyperparameters}
In this subsection, we display the optimal hyperparameters located during our extensive sweep. 

\begin{table*}[h!]
\centering
\caption{Best hyperparameter selection over a sweep of various parameter grids. 
`ilr' = inner optimizer learning rate, `olr' = outer optimizer learning rate, 
`ieps' = inner optimizer $\varepsilon$, `oeps' = outer optimizer $\varepsilon$, 
`o\_u', `o\_d' = outer optimizer $u$, $d$, 
`i\_u', `i\_d' = inner optimizer $u$, $d$. Here, $\varepsilon$ is the adaptivity parameter in the denominator of adaptive optimizers to enhance stability of learning dynamics. }
\label{tab:hyperparam-grid1}
\scalebox{0.90}{\begin{tabular}{llcccccccc}
\toprule
Algorithm & Dataset & ilr & olr & ieps & oeps & o\_u & o\_d & i\_u & i\_d \\
\midrule
\textbf{Avg-SGD} & STS-B & 0.019 & - & - & - & - & - & - & - \\
 & RTE & 0.095 & - & - & - & - & - & - & - \\
 & QNLI & 0.0059 & - & - & - & - & - & - & - \\
 & QQP & 0.0074 & - & - & - & - & - & - & - \\
 & CoLA & 0.019 & - & - & - & - & - & - & - \\
 & SST-2 & 0.0074 & - & - & - & - & - & - & - \\
 & MRPC & 0.038 & - & - & - & - & - & - & - \\
 & MNLI & 0.0059 & - & - & - & - & - & - & - \\
\midrule
\textbf{Avg-$L_2$Clip} & STS-B & 0.56 & - & - & - & - & - & 1.5 & 0.0 \\
 & RTE & 1 & - & - & - & - & - & 0.14 & 0.0 \\
 & QNLI & 0.33 & - & - & - & - & - & 0.14 & 0.0 \\
 & QQP & 0.44 & - & - & - & - & - & 0.14 & 0.0 \\
 & CoLA & 0.33 & - & - & - & - & - & 0.14 & 0.0 \\
 & SST-2 & 0.11 & - & - & - & - & - & 0.27 & 0.0 \\
 & MRPC & 0.22 & - & - & - & - & - & 0.41 & 0.0 \\
 & MNLI & 0.11 & - & - & - & - & - & 0.41 & 0.0\\
\midrule
\textbf{Avg-$BiClip$} & STS-B & 0.44 & - & - & - & - & - & 0.0001 & 0.0001 \\
 & RTE & 1 & - & - & - & - & - & 0.0001 & 6.7e-5 \\
 & QNLI & 0.44 & - & - & - & - & - & 0.0001 & 6.7e-5 \\
 & QQP & 0.56 & - & - & - & - & - & 0.0001 & 3.3e-5 \\
 & CoLA & 0.89 & - & - & - & - & - & 0.0001 & 0.0001 \\
 & SST-2 & 0.56 & - & - & - & - & - & 0.0001 & 6.7e-5 \\
 & MRPC & 0.89 & - & - & - & - & - & 0.0001 & 6.7e-5 \\
 & MNLI & 0.56 & - & - & - & - & - & 0.0001 & 3.3e-5 \\
\midrule
\textbf{Avg-$BiClip$} ($L_2$) & STS-B & 0.067 & - & - & - & - & - & 0.75 & 0.75 \\
 & RTE & 1 & - & - & - & - & - & 0.0001 & 6.7e-5 \\
 & QNLI & 0.067 & - & - & - & - & - & 0.75 & 0.75 \\
 & QQP & 0.11 & - & - & - & - & - & 0.5 & 0.33 \\
 & CoLA & 0.067 & - & - & - & - & - & 0.75 & 0.75 \\
 & SST-2 & 0.1 & - & - & - & - & - & 0.75 & 0.38 \\
 & MRPC & 0.11 & - & - & - & - & - & 1 & 1 \\
 & MNLI & 0.033 & - & - & - & - & - & 1.5 & 1.5 \\
\midrule
$Bi^2Clip$ & STS-B & 0.5 & 0.5 & - & - & 0.0001 & 0.0001 & 0.0001 & 1e-7 \\
 & RTE & 1 & 1 & - & - & 0.0001 & 0.0001 & 0.001 & 5e-5 \\
 & QNLI & 0.5 & 1 & - & - & 0.0001 & 0.0001 & 0.0001 & 5e-5 \\
 & QQP & 0.5 & 1 & - & - & 1.5 & 1e-7 & 0.0001 & 5e-5 \\
 & CoLA & 0.5 & 1 & - & - & 0.0001 & 0.0001 & 0.0001 & 0.0001 \\
 & SST-2 & 0.5 & 1 & - & - & 0.75 & 1e-7 & 0.0001 & 1e-7 \\
 & MRPC & 1 & 1 & - & - & 0.0001 & 0.0001 & 0.0001 & 1e-7 \\
 & MNLI & 0.5 & 1 & - & - & 0.75 & 1e-7 & 0.0001 & 1e-7 \\
\midrule
$Bi^2Clip$ ($L_2$) & STS-B & 0.56 & 3.2 & - & - & 0.1 & 0.05 & 0.1 & 0.05 \\
 & RTE & 0.1 & 0.56 & - & - & 0.1 & 0.1 & 0.56 & 0.56 \\
 & QNLI & 0.1 & 0.1 & - & - & 3.2 & 3.2 & 0.56 & 1e-7 \\
 & QQP & 0.1 & 3.2 & - & - & 0.56 & 1e-7 & 0.56 & 0.56 \\
 & CoLA & 0.1 & 3.2 & - & - & 0.1 & 0.05 & 0.56 & 1e-7 \\
 & SST-2 & 0.56 & 0.1 & - & - & 3.2 & 3.2 & 0.1 & 1e-7 \\
 & MRPC & 0.56 & 0.1 & - & - & 0.56 & 0.56 & 0.1 & 0.1 \\
 & MNLI & 0.1 & 0.56 & - & - & 3.2 & 1.6 & 0.56 & 1e-7 \\
\bottomrule
\end{tabular}}
\end{table*}

\begin{table*}[ht]
\centering
\caption{Best hyperparameter selection over a sweep of various parameter grids. 
`ilr' = inner optimizer learning rate, `olr' = outer optimizer learning rate, 
`ieps' = inner optimizer $\varepsilon$, `oeps' = outer optimizer $\varepsilon$, 
`o\_u', `o\_d' = outer optimizer $u$, $d$, 
`i\_u', `i\_d' = inner optimizer $u$, $d$. Here, $\varepsilon$ is the adaptivity or $\varepsilon$-smoothing parameter employed in the denominator of adaptive optimizers to enhance stability of learning dynamics. }
\label{tab:hyperparam-grid}
\begin{tabular}{llcccccccc}
\toprule
Algorithm & Dataset & ilr & olr & ieps & oeps & o\_u & o\_d & i\_u & i\_d \\
\midrule
\textbf{Adam-SGD} & STS-B & 0.017 & 4.6e-5 & - & 1e-7 & - & - & - & - \\
 & RTE & 0.033 & 4.6e-5 & - & 1e-7 & - & - & - & - \\
 & QNLI & 0.017 & 2.2e-4 & - & 1e-7 & - & - & - & - \\
 & QQP & 0.017 & 2.2e-4 & - & 1e-7 & - & - & - & - \\
 & CoLA & 0.033 & 0.001 & - & 1e-5 & - & - & - & - \\
 & SST-2 & 0.017 & 4.6e-5 & - & 1e-7 & - & - & - & - \\
 & MRPC & 0.017 & 4.6e-5 & - & 1e-7 & - & - & - & - \\
 & MNLI & 0.017 & 2.2e-4 & - & 1e-7 & - & - & - & - \\
\midrule
\textbf{Adam-$L_2Clip$} & STS-B & 0.067 & 0.033 & - & 0.001 & - & - & 0.75 & 0.0 \\
 & RTE & 0.033 & 1e-5 & - & 1e-7 & - & - & 1.5 & 0.0 \\
 & QNLI & 0.067 & 0.067 & - & 0.001 & - & - & 0.75 & 0.0 \\
 & QQP & 0.067 & 0.033 & - & 0.001 & - & - & 1.5 & 0.0 \\
 & CoLA & 0.1 & 0.033 & - & 0.001 & - & - & 0.75 & 0.0 \\
 & SST-2 & 0.1 & 0.033 & - & 0.001 & - & - & 1.5 & 0.0 \\
 & MRPC & 0.033 & 0.033 & - & 0.001 & - & - & 0.75 & 0.0 \\
 & MNLI & 0.067 & 0.033 & - & 0.001 & - & - & 0.75 & 0.0 \\
\midrule
\textbf{Adam-$BiClip$} & STS-B & 0.0056 & 3.2e-4 & - & 1e-5 & - & - & 0.01 & 0.0067 \\
 & RTE & 3.2e-4 & 1.8e-5 & - & 1e-7 & - & - & 0.01 & 0.0067 \\
 & QNLI & 0.0056 & 3.2e-4 & - & 1e-7 & - & - & 0.01 & 0.0067 \\
 & QQP & 0.0056 & 0.00032 & - & 1e-7 & - & - & 0.01 & 0.0033 \\
 & CoLA & 0.0056 & 1.8e-5 & - & 1e-7 & - & - & 0.01 & 0.01 \\
 & SST-2 & 0.0056 & 1.8e-5 & - & 1e-7 & - & - & 0.01 & 0.0067 \\
 & MRPC & 0.0056 & 0.0056 & - & 0.001 & - & - & 0.056 & 0.019 \\
 & MNLI & 0.0056 & 3.2e-4 & - & 1e-5 & - & - & 0.01 & 0.0033 \\
\midrule
\textbf{Adam-$BiClip$} ($L_2$) & STS-B & 0.033 & 0.033 & - & 0.001 & - & - & 1.5 & 0.75 \\
 & RTE & 0.033 & 0.067 & - & 0.001 & - & - & 0.75 & 0.38 \\
 & QNLI & 0.033 & 0.067 & - & 0.001 & - & - & 1.5 & 0.75 \\
 & QQP & 0.067 & 0.033 & - & 0.0001 & - & - & 0.75 & 0.38 \\
 & CoLA & 0.033 & 0.033 & - & 0.001 & - & - & 1.5 & 0.75 \\
 & SST-2 & 0.067 & 0.033 & - & 0.001 & - & - & 1.5 & 1e-7 \\
 & MRPC & 0.033 & 0.033 & - & 0.001 & - & - & 1.5 & 1e-7 \\
 & MNLI & 0.067 & 0.033 & - & 0.001 & - & - & 1.5 & 0.75 \\
\midrule
$Adam^2$ & STS-B & 1.8e-5 & 1.8e-5 & 1e-5 & 1e-7 & - & - & - & - \\
 & RTE & 1.8e-5 & 1.8e-5 & 1e-5 & 1e-7 & - & - & - & - \\
 & QNLI & 1.8e-5 & 3.2e-4 & 1e-5 & 1e-5 & - & - & - & - \\
 & QQP & 1.8e-5 & 3.2e-4 & 1e-5 & 1e-7 & - & - & - & - \\
 & CoLA & 1.8e-5 & 0.0056 & 1e-5 & 0.001 & - & - & - & - \\ 
 & SST-2 & 1.8e-5 & 1.8e-5 & 0.001 & 1e-7 & - & - & - & - \\
 & MRPC & 1.8e-5 & 1.8e-5 & 1e-5 & 1e-7 & - & - & - & - \\
 & MNLI & 1.8e-5 & 3.2e-4 & 1e-5 & 1e-7 & - & - & - & - \\
\bottomrule
\end{tabular}
\end{table*}
\clearpage
\begin{table*}[ht]
\centering
\caption{The notational setup is analogous to Table~\ref{tab:hyperparam-grid}. For DiLoCo$^*$, we provide the Nesterov learning rate and weight decay parameter in the i\_u, i\_d entries, respectively.}
\label{tab:hyperparam-grid2}
\begin{tabular}{llcccccccc}
\toprule
Algorithm & Dataset & ilr & olr & ieps & oeps & o\_u & o\_d & i\_u & i\_d \\
\midrule
\textbf{Adagrad-SGD} & STS-B & 0.017 & 0.0046 & - & 0.001 & - & - & - & - \\
 & RTE & 0.033 & 0.001 & - & 1e-5 & - & - & - & - \\
 & QNLI & 0.017 & 0.001 & - & 1e-5 & - & - & - & - \\
 & QQP & 0.017 & 0.0001 & - & 1e-5 & - & - & - & - \\
 & CoLA & 0.017 & 2.2e-4 & - & 1e-7 & - & - & - & - \\
 & SST-2 & 0.017 & 2.2e-4 & - & 1e-5 & - & - & - & - \\
 & MRPC & 0.017 & 2.2e-4 & - & 1e-7 & - & - & - & - \\
 & MNLI & 0.017 & 0.0001 & - & 1e-7 & - & - & - & - \\
\midrule
\textbf{RMSProp-SGD} & STS-B & 0.017 & 1e-5 & - & 1e-7 & - & - & - & - \\
 & RTE & 0.017 & 1e-5 & - & 1e-7 & - & - & - & - \\
 & QNLI & 0.033 & 0.001 & - & 1e-5 & - & - & - & - \\
 & QQP & 0.017 & 1e-5 & - & 1e-7 & - & - & - & - \\
 & CoLA & 0.017 & 1e-5 & - & 1e-7 & - & - & - & - \\
 & SST-2 & 0.017 & 1e-5 & - & 1e-7 & - & - & - & - \\
 & MRPC & 0.033 & 1e-5 & - & 1e-7 & - & - & - & - \\
 & MNLI & 0.017 & 1e-5 & - & 1e-7 & - & - & - & - \\
\midrule
\textbf{Adagrad-$BiClip$} & STS-B & 1e-5 & 2.2e-4 & - & 1e-7 & - & - & 1.5 & 1.5 \\
 & RTE & 0.033 & 2.2e-4 & - & 1e-7 & - & - & 1.5 & 1e-7 \\
 & QNLI & 1e-5 & 0.0046 & - & 0.001 & - & - & 1.5 & 1.5 \\
 & QQP & 1e-5 & 0.0046 & - & 0.0001 & - & - & 1.5 & 1.5 \\
 & CoLA & 0.1 & 2.2e-4 & - & 1e-7 & - & - & 0.0001 & 5e-5 \\
 & SST-2 & 1e-5 & 0.0046 & - & 0.001 & - & - & 1.5 & 1.5 \\
 & MRPC & 1e-5 & 2.2e-4 & - & 1e-7 & - & - & 1.5 & 0.75 \\
 & MNLI & 1e-5 & 0.0046 & - & 0.001 & - & - & 1.5 & 1.5 \\
\midrule
\textbf{RMSProp-$BiClip$} & STS-B & 1e-5 & 1e-5 & - & 1e-7 & - & - & 1.5 & 1.5 \\
 & RTE & 0.067 & 1e-5 & - & 1e-7 & - & - & 0.0001 & 5e-5 \\
 & QNLI & 0.1 & 1e-5 & - & 1e-7 & - & - & 0.0001 & 0.0001 \\
 & QQP & 0.1 & 0.0046 & - & 1e-7 & - & - & 0.0001 & 5e-5 \\
 & CoLA & 0.1 & 0.0046 & - & 0.001 & - & - & 0.0001 & 1e-7 \\
 & SST-2 & 0.1 & 1e-5 & - & 1e-7 & - & - & 0.0001 & 0.0001 \\
 & MRPC & 1e-5 & 0.0046 & - & 0.001 & - & - & 0.75 & 0.75 \\
 & MNLI & 0.1 & 0.0046 & - & 0.001 & - & - & 0.0001 & 0.0001 \\
\midrule
\textbf{DiLoCo}$^*$ & STS-B & 1.8e-5 & 0.7 & 1e-5 & - & - & - & 0.9 & 0.1 \\
 & RTE & 1.8e-5 & 1 & 1e-5 & - & - & - & 0.95 & 0.0001 \\
 & QNLI & 1.8e-5 & 1 & 1e-5 & - & - & - & 0.9 & 0.0001 \\
 & QQP & 1.8e-5 & 1 & 1e-5 & - & - & - & 0.95 & 0.0001 \\
 & CoLA & 1.8e-5 & 1 & 1e-5 & - & - & - & 0.95 & 0.1 \\
 & SST-2 & 1.8e-5 & 0.1 & 1e-5 & - & - & - & 0.9 & 0.0001 \\
 & MRPC & 1.8e-5 & 0.7 & 1e-5 & - & - & - & 0.9 & 0.1 \\
 & MNLI & 1.8e-5 & 1 & 1e-5 & - & - & - & 0.9 & 0.1 \\
\bottomrule
\end{tabular}
\end{table*}

\begin{table*}[ht]
\centering
\caption{Best hyperparameter selection over a sweep of various parameter grids for GLUE tasks. The notation is analogous to Table~\ref{tab:hyperparam-grid}. }
\label{tab:hyperparam-grid3}
\begin{tabular}{llcccccccc}
\toprule
Algorithm & Dataset & ilr & olr & ieps & oeps & o\_u & o\_d & i\_u & i\_d \\
\midrule
\textbf{Avg-Adagrad} & STS-B & 3e-5 & - & 1e-8 & - & - & - & - & - \\
 & RTE & 1.5e-4 & - & 1e-6 & - & - & - & - & - \\
 & QNLI & 3.3e-4 & - & 0.001 & - & - & - & - & - \\
 & QQP & 3.3e-4 & - & 0.001 & - & - & - & - & - \\
 & CoLA & 6.7e-5 & - & 1e-6 & - & - & - & - & - \\
 & SST-2 & 3.3e-4 & - & 0.001 & - & - & - & - & - \\
 & MRPC & 1.5e-4 & - & 1e-6 & - & - & - & - & - \\
 & MNLI & 3.3e-4 & - & 0.001 & - & - & - & - & - \\
\midrule
\textbf{Avg-Adam} & STS-B & 1.4e-5 & - & 1e-6 & - & - & - & - & - \\
 & RTE & 3e-5 & - & 1e-8 & - & - & - & - & - \\
 & QNLI & 6.2e-6 & - & 1e-8 & - & - & - & - & - \\
 & QQP & 1.4e-5 & - & 1e-8 & - & - & - & - & - \\
 & CoLA & 6.2e-6 & - & 1e-8 & - & - & - & - & - \\
 & SST-2 & 6.2e-6 & - & 1e-8 & - & - & - & - & - \\
 & MRPC & 3e-5 & - & 1e-8 & - & - & - & - & - \\
 & MNLI & 3e-5 & - & 0.0001 & - & - & - & - & - \\

\bottomrule
\end{tabular}
\end{table*}

\begin{table*}[h!]
\centering
\caption{Best hyperparameter selection over a sweep of various parameter grids for WMT. The conventions are identical with Tables~\ref{tab:hyperparam-grid}-\ref{tab:hyperparam-grid3}.}
\label{tab:hyperparam-grid_WMT}
\begin{tabular}{llcccccccc}
\toprule
Algorithm & Dataset & ilr & olr & ieps & oeps & o\_u & o\_d & i\_u & i\_d \\
\midrule
\textbf{Avg-SGD} & TED-T (en-de) & 0.03 & - & - & - & - & - & - & - \\
 & TED-T (en-fr)  & 0.015 & - & - & - & - & - & - & - \\
 & NewsComm (en-fr) & 0.015 & - & - & - & - & - & - & - \\
\midrule
\textbf{Avg-$L_2Clip$} & TED-T (en-de) & 0.89 & - & - & - & - & - & 1.4 & 0.0 \\
 & TED-T (en-fr) & 0.89 & - & - & - & - & - & 0.55 & 0.0 \\
 & NewsComm (en-fr) & 0.78 & - & - & - & - & - & 0.41 & 0.0 \\
\midrule
$Bi^2Clip$ & TED-T (en-de) & 1 & 1 & - & - & 0.0001 & 0.0001 & 0.75 & 1e-7 \\
 & TED-T (en-fr) & 1 & 1 & - & - & 0.0001 & 0.0001 & 0.75 & 1e-7 \\
 & NewsComm (en-fr) & 0.5 & 1 & - & - & 1.5 & 1e-7 & 0.0001 & 5e-5 \\
\midrule
$Adam^2$ & TED-T (en-de) & 3.2e-4 & 0.0056 & 1e-7 & 0.001 & - & - & - & - \\
 & TED-T (en-fr) & 1.8e-5 & 1.8e-5 & 1e-5 & 1e-7 & - & - & - & - \\
 & NewsComm (en-fr) & 3.2e-4 & 0.0056 & 1e-5 & 0.001 & - & - & - & - \\
\bottomrule
\end{tabular}
\end{table*}

\clearpage

\begin{table*}[h!]
\centering
\caption{Best hyperparameter selection over a sweep of various parameter grids. The conventions are identical with Tables~\ref{tab:hyperparam-grid}-\ref{tab:hyperparam-grid_WMT}. 
}
\label{tab:hyperparam-grid-noniid}
\begin{tabular}{llcccccccc}
\toprule
Algorithm & Dataset & ilr & olr & ieps & oeps & o\_u & o\_d & i\_u & i\_d \\
\midrule
\textbf{Avg-SGD} & Shakespeare & 0.012 & - & - & - & - & - & - & - \\
         & Philosopher & 0.15 & - & - & - & - & - & - & - \\
\midrule
\textbf{Avg-$L_2Clip$} & Shakespeare & 0.56 & - & - & - & - & - & 0.55 & 0 \\
                & Philosopher & 1 & - & - & - & - & - & 0.41 & 0 \\
\midrule
\textbf{Avg-$BiClip$} & Shakespeare & 1 & - & - & - & - & - & 0.0001 & 3.3e-5 \\
           & Philosopher & 1 & - & - & - & - & - & 0.0001 & 3.3e-5\\
\midrule
\textbf{RMSProp-$BiClip$} & Shakespeare & 0.067 & 2.2e-4 & - & 1e-5 & - & - & 0.75 & 1e-7 \\
               & Philosopher & 0.067 & 0.0046 & - & 0.001 & - & - & 0.75 & 1e-7 \\
\midrule
$Bi^2Clip$ & Shakespeare & 1 & 1 & - & - & 1.5 & 1e-7 & 0.0001 & 0.0001 \\
            & Philosopher & 1 & 1 & - & - & 1.5 & 1e-7 & 0.0001 & 5e-5 \\
\midrule
$Adam^2$ & Shakespeare & 1.8e-5 & 0.0056 & 1e-7 & 0.001 & - & - & - & - \\
           & Philosopher & 1.8e-5 & 0.0056 & 1e-5 & 1e-5 & - & - & - & - \\
\bottomrule
\end{tabular}
\end{table*}

\end{document}